\documentclass[letterpaper,12pt,oneside,final]{book}  % two modes: "draft" and "final"
%%
%%  Gabarit bilingue de mémoire de maîtrise ou thèse de doctorat.
%%  Bilingual template for dissertations and theses @ Polytechnique Montreal.

%%  Normalement, il n'est pas nécessaire de modifier ce document
%%  sauf pour établir le langage (français ou anglais) et pour changer les noms des fichiers à inclure.
%%  Usually, this document needs to be modified only to set up the language (French or English) and to change the names of the files to include.
%%
%%  Version: 2023-01-20
%%
%% Accepte les caractères accentués dans le document (UTF-8).
%% Supports accented characters in the document (UTF-8).

\makeatletter
\def\bstctlcite{\@ifnextchar[{\@bstctlcite}{\@bstctlcite[@auxout]}}
\def\@bstctlcite[#1]#2{\@bsphack
 \@for\@citeb:=#2\do{%
   \edef\@citeb{\expandafter\@firstofone\@citeb}%
   \if@filesw\immediate\write\csname #1\endcsname{\string\citation{\@citeb}}\fi}%
 \@esphack}
\makeatother

%% LA COMMANDE SUIVANTE ÉTABLIT LE LANGAGE DE LA THÈSE : ÉCRIRE french POUR UNE THÈSE EN FRANÇAIS
%% THE NEXT COMMAND DETERMINES THE LANGUAGE OF THE THESIS: WRITE english FOR A THESIS IN ENGLISH
\newcommand\Langue{english}            

\usepackage{ifthen}
\usepackage[utf8]{inputenc}

%%
%% Support pour l'anglais et le français (français par défaut).
%% Support for English and French (French by default).

%\usepackage[cyr]{aeguill}
\usepackage{lmodern}      % Police de caractères plus complète et généralement indistinguable visuellement de la police standard de LaTeX (Computer Modern). / A more complete and generally visually indistinguishable font from the standard LaTeX font (Computer Modern).
\usepackage[T1]{fontenc}  % Bon encodage des caractères pour qu'Acrobat Reader reconnaisse les accents et les ligatures telles que ffi. / Good character encoding so that Acrobat Reader recognizes accents and ligatures such as ffi.

% MODIFIED BY ME:
\ifthenelse{\equal{\Langue}{english}}{
	\usepackage[english]{babel}
}{
	\usepackage[english]{babel} 
}
% ORIGINAL:
% \ifthenelse{\equal{\Langue}{english}}{
% 	\usepackage[french,english]{babel}
% }{
% 	\usepackage[english,french]{babel} 
% }

% ADDED BY ME:
% -----------------------------------------------
% abstract styling
\newenvironment{abstract}%
{%
  \vskip 0.1in%
  \centering
  \begin{minipage}{0.97\textwidth} % Adjust width as needed
  \centerline%
  {\bfseries Abstract}%
  \vspace{-1mm}%
  \begin{quote}%
  % \itshape
  \small
}
{
  \end{quote}%
  \end{minipage}%
  \par%
  \vskip 1ex%
}

% Defining the smallsection command

% Warning filters
\usepackage{silence}
\WarningFilter{caption}{The option `hypcap=true'}
\WarningFilter{hyperref}{Token not allowed in a PDF string}

% For ASAF:
\usepackage{enumitem}
\usepackage{wrapfig}
% For DBS:
\usepackage{booktabs}
\usepackage[export]{adjustbox}
% For GCGFN:
\usepackage{colortbl}
\usepackage{multirow}
\usepackage{makecell}
\usepackage{xcolor}
% CMADDPG
\usepackage{diagbox}

\newcommand{\bo}{\textbf{o}}
\newcommand{\ba}{\textbf{a}}
\newcommand{\br}{\textbf{r}}

% To exclude from table of contents
% \newcommand{\nocontentsline}[3]{}
% \newcommand{\tocless}[2]{\bgroup\let\addcontentsline=\nocontentsline#1{#2}\egroup}
% Alternative definition for tocless command
\makeatletter
\newcommand{\tocless}[2]{
  \bgroup
  \addtocontents{toc}{\protect\setcounter{tocdepth}{0}}
  #1{#2}
  \addtocontents{toc}{\protect\setcounter{tocdepth}{2}}
  \egroup
}
\makeatother
% -----------------------------------------------

%%
%% Charge le module d'affichage graphique. / Loads the graphics package.
\usepackage{graphicx}
\usepackage{epstopdf}  % Permet d'utiliser des .eps avec pdfLaTeX. / Allows using .eps with pdfLaTeX.
%%
%% Recherche des images dans les répertoires. / Search for images in folders.
% \graphicspath{{./figures/}}
%%
%% Un float peut apparaître seulement après sa définition, jamais avant. / A float can appear only after its definition, never before.
\usepackage{flafter,placeins}
%%
%% Autres modules. / Other packages.
\usepackage{amsmath,color,soulutf8,longtable,colortbl,setspace,xspace,url,pdflscape}
% COMMENTED OUT BY ME:
% \usepackage{cite}
%%
%% Support des acronymes. / Support for acronyms.
\usepackage[nolist]{acronym}
\onehalfspacing                % Interligne 1.5. / Line spacing = 1.5.
%%
%% Définition d'un style de page avec seulement le numéro de page à
%% droite. On s'assure aussi que le style de page par défaut soit
%% d'afficher le numéro de page en haut à droite. / Definition of a page 
%% style with only the page number on the right. We also make sure that the 
%% default page style is to display the page number at the top right.
\usepackage{fancyhdr}
\fancypagestyle{pagenumber}{\fancyhf{}\fancyhead[R]{\thepage}}

\makeatletter
\let\ps@plain=\ps@pagenumber
\makeatother
%%
%% Module qui permet la création des bookmarks dans un fichier PDF. / Package that allows the creation of bookmarks in a PDF file.
%\usepackage[dvipdfm]{hyperref}
\usepackage{hyperref}
\usepackage{caption}  % Hyperlien vers la figure plutôt que son titre. / Hyperlink to the figure rather than its title.
\makeatletter
\providecommand*{\toclevel@compteur}{0}
\makeatother

%% Modules ajoutés (2022) / packages added (2022)
\usepackage{subcaption} % figures & sous figures / figures & subfigures
\usepackage{siunitx} % unites SI / SI units
\usepackage{amssymb} % autres symboles mathematiques / other mathematical symbols
\usepackage[bottom]{footmisc} % pour avoir les notes de bas de page en dessous des figures... / to have the footnotes below the figures
%\usepackage{listings} % Si on veut ajouter des lignes de codes dans le texte / If you want to add lines of code to the text

%%
%% Définitions spécifiques au format de rédaction de Poly.
%% Here we define the Poly formatting.
\RequirePackage[\Langue]{MemoireThese}
%%
%% Définitions spécifiques à l'étudiant.
%% Student-specific definitions.

% --------------------------------------------------------
% ADDED BY ME
% --------------------------------------------------------
\setlength{\headheight}{14.49998pt}  % to deal with warning: \headheight is too small (12.0pt): Make it at least 14.49998pt
  
%%%%% NEW MATH DEFINITIONS %%%%%

\usepackage{amsmath,amsfonts,bm,mathtools}

% Mark sections of captions for referring to divisions of figures

% Highlight a newly defined term

% Figure reference, lower-case.

% Figure reference, capital. For start of sentence

% Section reference, lower-case.

% Section reference, capital.

% Reference to two sections.

% Reference to three sections.

% Reference to an equation, lower-case.
\def\eqref#1{equation~\ref{#1}}
% Reference to an equation, upper case

% A raw reference to an equation---avoid using if possible

% Reference to a chapter, lower-case.

% Reference to an equation, upper case.

% Reference to a range of chapters

% Reference to an algorithm, lower-case.

% Reference to an algorithm, upper case.

% Reference to an Appendix

% Reference to Thm

% Reference to Lem

% Reference to a part, lower case

% Reference to a part, upper case

\def\1{\bm{1}}

% Random variables

% rm is already a command, just don't name any random variables m

% Random vectors

% Elements of random vectors

% Random matrices

% Elements of random matrices

% Vectors

% Elements of vectors

% Matrix

% Tensor
\DeclareMathAlphabet{\mathsfit}{\encodingdefault}{\sfdefault}{m}{sl}
\SetMathAlphabet{\mathsfit}{bold}{\encodingdefault}{\sfdefault}{bx}{n}

% Graph
\def\gA{{\mathcal{A}}}

\def\gS{{\mathcal{S}}}
\def\gT{{\mathcal{T}}}

% Sets

% Don't use a set called E, because this would be the same as our symbol
% for expectation.

\def\sR{{\mathbb{R}}}

% Entries of a matrix

% entries of a tensor
% Same font as tensor, without \bm wrapper

% The true underlying data generating distribution

% The empirical distribution defined by the training set

% The model distribution

% Stochastic autoencoder distributions

 % Laplace distribution

\newcommand{\E}{\mathbb{E}} % expectation

\newcommand{\KL}{D_{\mathrm{KL}}}
\newcommand{\JS}{D_{\mathrm{JS}}}

% Wolfram Mathworld says $L^2$ is for function spaces and $\ell^2$ is for vectors
% But then they seem to use $L^2$ for vectors throughout the site, and so does
% wikipedia.

 % See usage in notation.tex. Chosen to match Daphne's book.

\DeclareMathOperator*{\argmax}{arg\,max}
\DeclareMathOperator*{\argmin}{arg\,min}

% ADDED BY ME --------------------------------------------------
% --------------------------------------------------------------

\DeclareMathOperator{\indep}{\perp}
\DeclareMathOperator{\grad}{\nabla}

% \DeclarePairedDelimiter{\br}{[}{]}
% \DeclarePairedDelimiter{\pr}{(}{)}

% See: https://tex.stackexchange.com/questions/198771/align-in-substack/198806
% Essentially replicates substack but with the ability to control alignment
\makeatletter
\newcommand{\subalign}[1]{%
  \vcenter{%
    \Let@ \restore@math@cr \default@tag
    \baselineskip\fontdimen10 \scriptfont\tw@
    \advance\baselineskip\fontdimen12 \scriptfont\tw@
    \lineskip\thr@@\fontdimen8 \scriptfont\thr@@
    \lineskiplimit\lineskip
    \ialign{\hfil$\m@th\scriptstyle##$&$\m@th\scriptstyle{}##$\hfil\crcr
      #1\crcr
    }%
  }%
}
\makeatother

% ASAF
% =============================================================
% =============================================================

\newcommand{\pie}{\pi_{\!_E}}
\newcommand{\pig}{\pi_{\!_G}}

\newcommand{\pe}{p_{\!_E}}
\newcommand{\pg}{p_{\!_G}}

\newcommand{\pei}{p_{\!_{E}, i}}
\newcommand{\pgi}{p_{\!_{G}, i}}

\newcommand{\pigi}{\pi_{\!_{G}, i}}

\newcommand{\tildep}{\tilde{p}}
\newcommand{\tildepi}{\tilde{\pi}}
\newcommand{\tildef}{\tilde{f}}

% =============================================================
% =============================================================
% ASAF-end

% For additional maths symbols, align, and others
\usepackage{amsmath, amsthm, amssymb, bm, bbm, cancel, mathtools}
\usepackage{algorithmic}
\usepackage[ruled]{algorithm2e}
\allowdisplaybreaks % Allow equations align to span across pages
\newtheorem{thm}{Theorem}
\newtheorem{lem}{Lemma}

% MY NOTATION CONVENTION
\newcommand{\vfparams}{\phi}  % parameters for the action value function
\newcommand{\piparams}{\theta}  % parameters for the policy
\newcommand{\rparams}{\psi}
\newcommand{\pzero}{P_0}  % initial state distribution
\newcommand{\ptrans}{P}  % state transition distribution
\newcommand{\RF}{R}
% --------------------------------------------------------
% --------------------------------------------------------

% --------------------------------------------------------
% BIBLIOGRAPHY (ADDED BY ME)
% --------------------------------------------------------
% Choices: number, author
\newcommand\CiteStyle{author}

\ifthenelse{\equal{\CiteStyle}{number}}{
	\bibliographystyle{bst_files/IEEEtran}
         
        \newcommand{\citep}{\cite}
        \newcommand{\citet}{\cite}
}{
	\usepackage{natbib}
        \renewcommand{\cite}{\citep}
        \bibliographystyle{apalike}
}
% --------------------------------------------------------
% --------------------------------------------------------

% TITLE OPTIONS:
% - Exploring Reward Specification in Reinforcement Learning
% - Practical Reward Specification in Reinforcement Learning
% - On Practical Reward Specification in Reinforcement Learning
% - Towards Practical Reward Specification in Reinforcement Learning
% - Towards Practical Reward Design in Reinforcement Learning

%% -----------------------------------
%% ---> À MODIFIER PAR L'ETUDIANT / TO BE MODIFIED BY THE STUDENT <---
%% -----------------------------------
%%
%% Commandes qui affichent le titre du document, le nom de l'auteur, etc.
% Commands that display the document title, the author's name, etc.
\newcommand\monTitre{Effective Reward Specification in Deep Reinforcement Learning}
% Title ideas:
% - Incorporating human intuitions in task specification for sequential decision making
\newcommand\monPrenom{Julien}
\newcommand\monNom{Roy}
\newcommand\monDepartement{génie informatique et génie logiciel}  % Department
\newcommand\maDiscipline{Génie informatique}
\newcommand\monDiplome{D}        % (M)aîtrise ou (D)octorat / (M)aster or Ph(D)
\newcommand\anneeDepot{2024}    % Year
\newcommand\moisDepot{Avril}       % Month
           % "M" ou "F" = Gender
\newcommand\PageGarde{N}         % "O" ou "N" = Yes or No
\newcommand\AnnexesPresentes{O}  % "O" ou "N". Indique si le document comprend des annexes. / If the thesis includes annexes = O; if it does not N = No.
\newcommand\mesMotsClef{Liste,de,mot-clés,séparés,par,des,virgules}
%%
%%  DEFINITION DU / OF JURY
%%
%%  Pour la définition du jury, les macros suivantes sont definies: / For the definition of the jury, the following macros are defined:
%%  \PresidentJury, \DirecteurRecherche, \CoDirecteurRecherche, \MembreJury, \MembreExterneJury
%%
%%  Toutes les macros prennent 3 paramètres: Sexe (M/F), Nom, Prénom
%%  All the macros have 3 parameters: Gender (M/F), Last name, First name
\newcommand\monJury{\PresidentJury{M}{Aloise}{Daniel}\\
\DirecteurRecherche{M}{Pal}{Christopher J.}\\
\CoDirecteurRecherche{M}{Bacon}{Pierre-Luc}\\
\MembreJury{F}{Dagdougui}{Hanane}\\
\MembreExterneJury{F}{Vamplew}{Peter}}
% ----------------------------------------------
% ----------------------------------------------

\ifthenelse{\equal{\monDiplome}{M}}{
\newcommand\monSujet{Mémoire de maîtrise}
\newcommand\monDipl{Maîtrise ès sciences appliquées}
}{
\newcommand\monSujet{Thèse de doctorat}
\newcommand\monDipl{Philosophi\ae{} Doctor}
}
%%
%% Informations qui sont stockées dans un fichier PDF.
%% Information that is stored in a PDF file.
\hypersetup{
  pdftitle={\monTitre},
  pdfsubject={\monSujet},
  pdfauthor={\monPrenom{} \monNom},
  pdfkeywords={\mesMotsClef},
  bookmarksnumbered,
  pdfstartview={FitV},
  hidelinks,
  linktoc=all
}

%% Ajoute en 2022 (ajout des titres complets des tables et figure et alignement)
%% Added in 2022 (added full table and figure titles and alignment)
\usepackage[titles]{tocloft}
   % dotted chapter leaders

\ifthenelse{\equal{\Langue}{english}}{

}{

}
%

%%
%% Il y a un document par chapitre du mémoire ou thèse.
%% There is one document per chapter of the thesis or dissertation.

\begin{document}
% \bstctlcite{IEEEexample:BSTcontrol}  % COMMENTED OUT: not sure what the purpose was.

%%
%% Page de titre du mémoire ou de la thèse.
%% Title page of the dissertation or thesis.
\frontmatter
%% Compte optionellement la page de garde dans la pagination.
%% Optionally counts the cover page in the pagination.
\ifthenelse{\equal{\PageGarde}{O}}{\addtocounter{page}{1}}{}
\thispagestyle{empty}%
\begin{center}%
\vspace*{\stretch{0.1}}
\textbf{POLYTECHNIQUE MONTRÉAL}\\
affiliée à l'Université de Montréal\\
\vspace*{\stretch{1}}
\textbf{\monTitre}\\
\vspace*{\stretch{1}}
\textbf{\MakeUppercase{\monPrenom~\monNom}}\\
Département de~{\monDepartement}\\
\vspace*{\stretch{1}}
\ifthenelse{\equal{\monDiplome}{M}}{Mémoire présenté}{Thèse présentée} en vue de l'obtention du diplôme de~\emph{\monDipl}\\
\maDiscipline\\
\vskip 0.4in
\moisDepot~\anneeDepot
\end{center}%
\vspace*{\stretch{1}}
\copyright~\monPrenom~\monNom, \anneeDepot.
%%
%% Identification des membres du jury.
%% Jury members.
\newpage\thispagestyle{empty}%
\begin{center}%

\vspace*{\stretch{0.1}}
\textbf{POLYTECHNIQUE MONTRÉAL}\\
affiliée à l'Université de Montréal\\
\vspace*{\stretch{2}}
Ce\ifthenelse{\equal{\monDiplome}{M}}{~mémoire intitulé}{tte thèse intitulée} :\\
\vspace*{\stretch{1}}
\textbf{\monTitre}\\
\vspace*{\stretch{1}}
présenté\ifthenelse{\equal{\monDiplome}{M}}{}{e}
par~\textbf{\mbox{\monPrenom~\MakeUppercase{\monNom}}}\\
en vue de l'obtention du diplôme de~\emph{\mbox{\monDipl}}\\
a été dûment accepté\ifthenelse{\equal{\monDiplome}{M}}{}{e} par le jury d'examen constitué de :\end{center}
\vspace*{\stretch{2}}
\monJury
\pagestyle{pagenumber}%

\ifthenelse{\equal{\Langue}{english}}{
	\chapter*{ACKNOWLEDGEMENTS}\thispagestyle{headings}
	\addcontentsline{toc}{compteur}{ACKNOWLEDGEMENTS}
}{
	\chapter*{REMERCIEMENTS}\thispagestyle{headings}
	\addcontentsline{toc}{compteur}{REMERCIEMENTS}
}
First, I wish to express my gratitude to my advisors, Christopher Pal and Pierre-Luc Bacon, for taking a chance on me, introducing me to Mila, this beautiful ecosystem, and granting me the freedom to explore the subjects that caught my interest. I also wish to thank Farida Cheriet, Fantin Girard, Kipton Barros and Nicholas Lubbers who gave me my first opportunities back in undergrad.
Thank you Olivier Delalleau, Joshua Romoff, Pedro Pinheiro, Joseph Viviano, and Emmanuel Bengio for being such great mentors and for inspiring me to be an ever more investigative, efficient, and confident researcher. Thank you to all my collaborators, Derek Nowrouzezahrai, Wonseok Jeon, Joelle Pineau, and Roger Girgis. It has been a great pleasure to work with each and everyone of you.

Thank you, Félix G. Harvey, for first sparking my fascination with these mysterious neural networks. You have been like a big brother to me on this journey, and I might never have embarked on this career path without you. Thank you, Paul Barde, for being such a great companion during our first years in the program, and such a great friend afterwards. I have learned a lot working with you. Thank you, Samuel Lavoie, Maude Lizaire, and David Kanaa, my close friends from the lab. I look forward to many more coffee breaks with you, chatting about movies, chess, and the meaning of life.

Thank you to my family for always encouraging me in my studies. In particular, to my dad, Alain Roy, for making me a curious child, and my mom, Martine Ducharme, for making me a meticulous one! Thanks to my sisters, Laurence and Flavie, for your presence and your colors; you both make me proud. Thank you, Noémie, for your patience, your kindness, and your unwavering support.

Finally, a deep thanks to my dear friends outside the lab. I am extremely lucky to be surrounded by such wonderful human beings. 
Friends are the family we choose, and I am grateful for every one of you.

% ---------------------------------------------
% ---------------------------------------------
% ----- SECTION
% ---------------------------------------------
% ---------------------------------------------
% Résumé du mémoire.
% Abstract in French.
%
\chapter*{RÉSUMÉ}\thispagestyle{headings}
\addcontentsline{toc}{compteur}{RÉSUMÉ}

% Le résumé est un bref exposé du sujet traité, des objectifs visés, des hypothèses émises, des méthodes expérimentales utilisées et de l'analyse des résultats obtenus. On y présente également les principales conclusions de la recherche ainsi que ses applications éventuelles. En général, un résumé ne dépasse pas trois pages.
% Le résumé doit donner une idée exacte du contenu du mémoire ou de la thèse. Ce ne peut pas être une simple énumération des parties du manuscrit. Le but est de présenter de façon précise et concise la nature, l’envergure de la recherche, les sujets traités, les questions de recherche ou les hypothèses soulevées, les méthodes utilisées, les principaux résultats ainsi que les conclusions retenues. Un résumé ne doit jamais comporter de références ou de figures.

Au cours de la dernière décennie, les progrès dans le domaine de l'apprentissage par renforcement profond en ont fait l'un des outils les plus efficaces pour résoudre les problèmes de prise de décision séquentiels. Cette approche combine l'excellence de l'apprentissage profond à traiter des signaux complexes avec l'adaptabilité de l'apprentissage par renforcement (RL) pour s'attaquer à une panoplie de problèmes de contrôle. Lorsqu'il effectue une tâche, un agent de RL reçoît des récompenses ou des pénalités en fonction de ses actions. Cet agent cherche à maximiser la somme de ses récompenses, permettant ainsi aux algorithmes d'IA de découvrir des solutions novatrices dans plusieurs domaines. Cependant, cette focalisation sur la maximisation de la récompense introduit également une difficulté importante: une spécification inappropriée de la fonction de récompense peut considérablement affecter l'efficacité du processus d'apprentissage et entraîner un comportement indésirable de la part de l'agent.

Dans cette thèse, nous présentons des contributions au domaine de la spécification de récompense en apprentissage par renforcement profond sous forme de quatre articles. Nous commençons par explorer l'apprentissage par renforcement inverse, qui modélise la fonction de récompense à partir d'un ensemble de démonstrations d'experts, et proposons un algorithme permettant une implémentation et un un processus d'optimisation efficaces. Ensuite, nous nous penchons sur le domaine de la composition de récompense, visant à construire des fonctions de récompense efficaces à partir de plusieurs composantes. Nous prenons le cas de la coordination multi-agent, et proposons des tâches auxiliaires qui ajoutent des signaux de récompense sous forme de biais inductifs qui permettent de découvrir des politiques performantes dans des environnements coopératifs. Nous investiguons également l'utilisation de l'optimisation sous contrainte et proposons un cadre pour une spécification plus directe et intuitive de la fonction de récompense. Finalement, nous nous tournons vers le problème de l'apprentissage par renforcement pour la découverte de nouveaux médicaments et présentons une approche multi-objectif conditionnée permettant d'explorer tout l'espace des objectifs.

Ci-après, nous commençons par présenter une revue la littérature sur les stratégies de spécification, identifions les limitations de chacune de ces approches et proposons des contributions originales abordant le problème de l'efficacité et de l'alignement en apprentissage par renforcement profond. 
La spécification de récompense représente l'un des aspects les plus difficiles de l'application de l'apprentissage par renforcement dans les domaines réels. Pour le moment, il n'existe pas de solution universelle à ce défi complexe et nuancé; sa résolution nécessite la sélection des outils les plus appropriés pour les exigences spécifiques de chaque application.

% ---------------------------------------------
% ---------------------------------------------
% ----- SECTION
% ---------------------------------------------
% ---------------------------------------------
%% Abstract
%%
%% Traduction anglaise fidèle et de qualité du résumé de la recherche écrit en français et non une traduction littérale. 
%%

\chapter*{ABSTRACT}\thispagestyle{headings}
\addcontentsline{toc}{compteur}{ABSTRACT}
% (max 3 pages)
\begin{otherlanguage}{english}

% The abstract is a brief statement of the subject matter, objectives, research questions or hypotheses, experimental methods and analysis of results. It also presents the main research conclusions and their possible applications.
% The abstract should provide an exact idea of the thesis or dissertation’s contents and it cannot be a simple enumeration of the manuscript’s parts. The goal is to precisely and concisely present the nature and scope of the research. An abstract should never include references or figures.

In the last decade, Deep Reinforcement Learning has evolved into a powerful tool for complex sequential decision-making problems. It combines deep learning's proficiency in processing rich input signals with reinforcement learning's adaptability across diverse control tasks. At its core, an RL agent seeks to maximize its cumulative reward, enabling AI algorithms to uncover novel solutions previously unknown to experts. However, this focus on reward maximization also introduces a significant difficulty: improper reward specification can result in unexpected, misaligned agent behavior and inefficient learning. The complexity of accurately specifying the reward function is further amplified by the sequential nature of the task, the sparsity of learning signals, and the multifaceted aspects of the desired behavior.

In this thesis, we present contributions to the field of reward specification in deep reinforcement learning in the form of four articles. We start by exploring inverse reinforcement learning, which models the reward function from a set of expert demonstrations, and introduce an algorithm allowing for an efficient implementation and training procedure. Then, we delve into the realm of reward composition, aiming to construct effective reward functions from various components. We take the case of multi-agent coordination and propose auxiliary tasks that augment the reward signal with inductive biases leading to high-performing policies in cooperative multi-agent environments. We also investigate the use of constrained optimization and propose a framework for direct reward specification when using a specific constraint family. Lastly, we turn our attention to the problem of RL for drug discovery and present a goal-conditioned, multi-objective approach to explore the entire objective space of molecular candidates.

Throughout this document, we survey the literature on effective reward specification strategies, identify core challenges relating to each of these approaches, and propose original contributions addressing the issue of sample efficiency and alignment in deep reinforcement learning. Reward specification represents one of the most challenging aspects of applying reinforcement learning in real-world domains. Our work underscores the absence of a universal solution to this complex and nuanced challenge; solving it requires selecting the most appropriate tools for the specific requirements of each unique application.

\end{otherlanguage}

% ---------------------------------------------
% ---------------------------------------------
% ----- SECTION
% ---------------------------------------------
% ---------------------------------------------

{\setlength{\parskip}{0pt}
%%
%% Table des matières 
%% Table of contents
\ifthenelse{\equal{\Langue}{english}}{
	\renewcommand\contentsname{TABLE OF CONTENTS}
}{
	\renewcommand\contentsname{TABLE DES MATIÈRES}
}
\tableofcontents
%%
%% Liste des tableaux
%% List of tables

% COMMENTED OUT
\ifthenelse{\equal{\Langue}{english}}{
	\renewcommand\listtablename{LIST OF TABLES}
}{
	\renewcommand\listtablename{LISTE DES TABLEAUX}
}\listoftables

%%
%% Liste des figures
%% List of figures

% COMMENTED OUT
\ifthenelse{\equal{\Langue}{english}}{
	\renewcommand\listfigurename{LIST OF FIGURES}
}{
	\renewcommand\listfigurename{LISTE DES FIGURES}
}\listoffigures

%%
%% Liste des annexes au besoin.
%% List of appendices, if needed.
}

% ---------------------------------------------
% ---------------------------------------------
% ----- SECTION
% ---------------------------------------------
% ---------------------------------------------

% Liste des sigles et abbréviations / List of symbols and acronyms
\ifthenelse{\equal{\Langue}{english}}{
	\chapter*{LIST OF ACRONYMS}
        \addcontentsline{toc}{compteur}{LIST OF ACRONYMS}
}{
	% \chapter*{LISTE DES SIGLES ET ABRÉVIATIONS}
 %        \addcontentsline{toc}{compteur}{LISTE DES SIGLES ET ABRÉVIATIONS}
}
\pagestyle{pagenumber}
%
% \begin{acronym}
%   \acro{IETF}{Internet Engineering Task Force}
%   \acro{OSI}{Open Systems Interconnection}
% \end{acronym}
%
\begin{longtable}{lp{5in}}
AI      & Artificial Intelligence                     \\
AIL     & Adversarial Imitation Learning               \\
AIRL    & Adversarial Inverse Reinforcement Learning   \\
ASAF    & Adversarial Soft-Advantage Fitting           \\
BCE     & Binary Cross-Entropy                         \\
CMDP    & Constrained Markov Decision Processes        \\
CNN     & Convolutional Neural Network                 \\
CRL     & Constrained Reinforcement Learning           \\
CTDE    & Centralized Training and a Decentralized Execution \\
CUD     & Categorical Uniform Distributions            \\
DAG     & Directed Acyclic Graph                       \\
DDPG    & Deep Deterministic Policy Gradient           \\
DPG     & Deterministic Policy Gradient                \\
DQN     & Deep Q-Networks                              \\
GAN     & Generative Adversarial Network               \\
GFN     & Generative Flow Network                      \\
GNN     & Graph Neural Networks                         \\
GP      & Gradient Penalty                             \\
GVF    & General Value Function                      \\
HiL     & Human-in-the-Loop                            \\
IGD     & Inverted Generational Distance               \\
IL      & Imitation Learning                           \\
IRL     & Inverse Reinforcement Learning               \\
LLM    & Large Language Model                        \\
MADDPG  & Multi-Agent Deep Deterministic Policy Gradient \\
MARL    & Multi-Agent Reinforcement Learning           \\
MC      & Monte Carlo                                  \\
MCMC        & Monte Carlo Markov Chain \\
MDP     & Markov Decision Processes                    \\
ML      & Machine Learning                             \\
MLE     & Maximum Likelihood Estimation                \\
MOMDP  & Multi-Objective MDP                         \\
MOO     & Multi-Objective Optimization                 \\
MORL    & Multi-Objective Reinforcement Learning       \\
MSE     & Mean Squared Error                           \\
NLP     & Natural Language Processing                  \\
NPC     & Non-Player Characters                        \\
PBRS    & Potential-Based Reward Shaping               \\
PC-ent  & Pareto-Clusters Entropy                      \\
PCC     & Pearson Correlation Coefficient              \\
PPO     & Proximal Policy Optimization                 \\
PbRL    & Preference-based Reinforcement Learning      \\
QED     & Quantitative Estimate of Drug-likeness       \\
RL      & Reinforcement Learning                       \\
RLHF    & Reinforcement Learning from Human Feedback   \\
SAC     & Soft Actor-Critic                            \\
SE      & Standard Error                               \\
SGD         & Stochastic Gradient Descent \\
SQIL    & Soft-Q Imitation Learning                    \\
TD      & Temporal Difference                          \\
TRPO    & Trust Region Policy Optimization             \\
Tab-GS  & Tabular Goal-Sampler                         \\
\end{longtable}

% COMMENTED OUT
\ifthenelse{\equal{\AnnexesPresentes}{O}}{\listofappendices}{}

% ---------------------------------------------
% ---------------------------------------------
% ----- SECTION
% ---------------------------------------------
% ---------------------------------------------

\mainmatter

% ---------------------------------------------
% ---------------------------------------------
% ----- SECTION
% ---------------------------------------------
% ---------------------------------------------
% Dans l'introduction, on présente le problème étudié et les buts
% poursuivis. L'introduction permet de faire connaître le cadre de la
% recherche et d'en préciser le domaine d'application. Elle fournit
% les précisions nécessaires en ce qui concerne le contexte de
% réalisation de la recherche, l'approche envisagée, l'évolution de
% la réalisation. En fait, l'introduction présente au lecteur ce
% qu'il doit savoir pour comprendre la recherche et en connaître la
% portée.

\chapter[INTRODUCTION]{\\INTRODUCTION}\label{sec:Introduction}

From the implementation of the first perceptron to modern neural network architectures, Machine Learning (ML) has been at the center of great leaps in artificial intelligence and keeps pushing its boundaries today like never before. Starting with notable achievements in digit recognition and learning to play Backgammon \citep{lecun1998gradient, tesauro1994td}, ML has since asserted its dominance in fields requiring high-dimensional data processing such as computer vision, speech synthesis and natural language processing \citep{chai2021deep, oord2016wavenet, otter2020survey}. These methods are now showing great capability in complex control problems, setting new benchmarks in playing strategic board games, flying stratospheric balloons, and advancing robotics \citep{silver2017mastering, bellemare2020autonomous, akkaya2019solving}, each time pushing the boundaries of what computers can do.

The ML revolution in information processing has been made possible by stepping away from traditional programming to embrace a radically different paradigm. Instead of requiring a person to directly encode a precise sequence of instructions into a computer program, ML combines powerful optimization methods and flexible models to distill vast amounts of data into complex functional behaviors. The ``program'' now consists of a set of connection weights that form an artificial neural network, converting inputs into outputs to achieve our goals. Crucially, these weights are not explicitly designed, but rather \textit{discovered} by an optimization algorithm, and the designers of such systems are now responsible for defining objective functions, curating datasets, and crafting algorithms that guide the learning process \citep{karpathy2017software2}. Not only are such systems easily scalable and can be optimized on dedicated hardware, but more importantly, they allow to address problems of a complexity beyond the reach of conventional logic-based programming.

This new paradigm provides a powerful framework for problem solving, but also presents unique challenges in the realm of Reinforcement Learning (RL), particularly in how agents are instructed to achieve desired outcomes. We use the term \textit{reward specification} to describe the act of providing an agent with a reward function \citep{taylor2023reinforcement, bowling2023settling, abel2021expressivity, singh2009rewards, icarte2022reward}. Although central to the field, reward specification is not a problem that all RL researchers have faced. Most research efforts report results on established benchmarks
% such as Mujoco \citep{todorov2012mujoco}, Atari \citep{bellemare2013arcade}, and OpenAI gym \citep{brockman2016openai} 
using pre-defined reward functions. However, when developing an RL solution for a real-life application, how the reward is specified becomes a major factor pertaining to the success or failure of the project \citep{knox2023reward}.

This thesis aims to address the fundamental question: How can we specify a reward function that effectively captures human intentions, ensuring that an agent can learn efficiently and that its behavior aligns with our objectives? We start by motivating how reinforcement learning is uniquely positioned to help us solve difficult problems in artificial intelligence and highlight two of its own challenges.
% and then review the literature surrounding reward specification and propose novel approaches that integrate human intuition into the learning process.

\section{The Three Paradigms of Machine Learning}

The field of machine learning encompasses a wide range of approaches, each with distinct methods for directing algorithmic learning. These methods are often categorized into three learning paradigms. Supervised learning, perhaps the most widely used, operates under a strict framework where algorithms learn from labeled data, mapping inputs to known outputs through a clear supervisory signal. This guidance clearly establishes the task that the model should perform. However, its dependency on large volumes of labeled data can be a limitation, as obtaining such data is often costly and labor intensive \citep{mathewson2022brief}.

Unsupervised learning, on the other hand, does not require labels, leaving algorithms to discern patterns and structures within the data autonomously. This approach thus operates under a much lower supervisory burden. However, the absence of explicit guidance in unsupervised learning is both its strength and weakness; it takes advantage of readily available unlabeled data but lacks a definitive direction for problem solving. For this reason, it is often used as a pre-training procedure for other downstream tasks \citep{erhan2010does}.

Reinforcement Learning (RL) strikes a unique balance between the structured guidance of supervised learning and the autonomous nature of unsupervised learning. In RL, an agent learns to perform a task by interacting with its environment, guided by a reward function rather than explicit data labels. This reward function provides a scalar value as feedback for the actions taken, similar to receiving a score in a game, rather than a precise road map to solve the task. The agent's objective is to maximize its cumulative reward, which indirectly shapes its behavior. This form of learning is thus less prescriptive than supervised learning, avoiding the need for exhaustive labeling, yet it is more directed than unsupervised learning since the reward function encodes the task objectives.

By specifying the task using a reward function rather than collecting data, the RL paradigm opens up new possibilities. First, to implement a reward function which captures the desired behavior, a system designer does not need to know the solution to the problem but simply to be able to rate solutions, significantly reducing the amount of expert knowledge that must be held to tackle a particular problem. Second, precisely because one does not need to be able to solve the problem to design its reward function, the performance of the model is unbounded. This, for example, allowed a computer to surpass the best human player at Go for the first time in 2016 \citep{silver2016mastering}. In certain AI applications, such as autonomous driving or object classification, we simply seek to have computers emulate human behavior to free up time and resources to other needs. However, the world is full of challenges such as energy grid management, traffic light optimization, and molecular design, for which the ability to go beyond human performance and reach the best possible solution would be extremely valuable. Its lower supervision requirements and superhuman potential make reinforcement learning uniquely positioned to tackle some of the most important problems that machine learning can face \citep{leike2018scalable}.

\section{Characteristic Challenges of Reinforcement Learning}

The unique freedom of strategy given to the agent to maximize its rewards comes with important challenges. First, there is the problem of \textit{exploration}. The space of behaviors that can be produced in an environment is often exponential in trajectory length, making it intractable to evaluate by performing an exhaustive sweep. How to efficiently explore this environment is a fundamental question in RL \citep{amin2021survey}. To tackle this issue, most algorithms alternate between taking decisions that are already known to lead to good outcomes, to avoid wasting time on completely unfit strategies, and taking random actions to discover whether the current strategy can be improved. This is known as the exploitation-exploration dilemma and no single solution has been found to offer the perfect balance in every environment, leaving practitioners to experiment with different hyperparameters for their specific application.
% While no definitive answer has been found, explicitly guiding the agent's exploration using the reward function seems essential to learn in complex environments \citep{berner2019dota}, and methods that exploit additional information could help identify high-performance policies without requiring an exhaustive sweep in behavior space.

Secondly, there is the problem of \textit{alignment} \citep{amodei2016concrete, leike2018scalable}. Specifying the task with a reward function is advantageous in reducing the burden of expert knowledge. However, this reward function typically fails to capture all of the aspects of behavior that its designer values. This is because when designing a reward function, a practitioner might envision specific scenarios in which the current function would incentivize the desired behavior, but when exploring vast and complex environments, the agent will inevitably find itself in circumstances that were unforeseen and where this same reward function can become counterproductive \citep{hadfield2017inverse}. Trying to correct this mistake is often complicated since the consequences of modifying the reward function or adding new elements to it can be very difficult to foresee due to the unintuitive nature of the reward accumulation through time, the effect of discounting, and of the competition between reward components \citep{septon2022integrating, booth2023perils}.

% \newpage
To address exploration and alignment, the reward function must allow the agent to both learn efficiently and converge to a behavior that is consistent with the intentions of its designers \citep{sorg2011optimal}. However, these two objectives are often conflictual. In their seminal textbook, \citet{sutton2018rlTextbook} underscore the indirect nature of behavior specification in RL:

\begin{quote}
\textit{``The agent always learns to maximize its reward. If we want it to do something for us, we must provide rewards to it in such a way that in maximizing them the agent will also achieve our goals.''}
\end{quote}

This convoluted mapping from reward to behavior makes the design of an effective reward function a surprisingly arduous task \citep{singh2010separating}. The propensity of RL to develop strategies that unexpectedly exploit its reward function has been reported in several works \citep{randlov1998learning, clark2016faulty, knox2023reward, pan2022effects}, and this challenge may be even more familiar to practitioners outside of academic circles \citep{gupta2022unpacking}. Designing effective reward functions thus represents one of the most challenging aspects of RL \citep{leike2018scalable, daniel2014active, christiano2017deep, hu2020learning, vamplew2022scalar}, and this difficulty appears to increase as RL algorithms become more capable \citep{dewey2014reinforcement, pan2022effects}, making the question of reward specification ever more critical. 
% Developing new methods to incorporate additional insights into an agent's reward function constitutes an important research direction to address the characteristic challenges of reward specification in RL.

\section{Outline}

In this thesis, we present contributions to different algorithmic families that aim to incorporate human intuition in the task specification process of RL agents in the form of demonstrations, auxiliary tasks, behavioral constraints and goal-conditioning.
The next sections are organized as follows. Chapter~\ref{chap:background} presents the technical background that lays the foundations of deep learning and reinforcement learning. Chapter~\ref{chap:lit_review} presents a review of the relevant literature surrounding the problem of reward specification in RL. Chapters~\ref{chap:preamble}~to~\ref{chap:article4_gcgfn} present four original contributions in the form of peer-reviewed articles. Finally, Chapter~\ref{chap:discussion} presents a discussion of the limitation of our methods, the additional opportunities offered by environment design, agent monitoring, and human interventions, and some fundamental obstacles to effective objective specification.

% ---------------------------------------------
% ---------------------------------------------
% ----- SECTION
% ---------------------------------------------
% ---------------------------------------------
\chapter[TECHNICAL BACKGROUND]{\\TECHNICAL BACKGROUND}\label{chap:background}

The contributions presented in this thesis focus on various approaches for reward specification in deep reinforcement learning. In this chapter, we present essential elements of deep learning (Section~\ref{sec:deep_learning}) and dive into more details on the foundations of reinforcement learning (Section~\ref{sec:reinforcement_learning}) to establish how these two fields can be brought together to tackle challenging, high-dimensional sequential decision making problems.

\section{Deep Learning}
\label{sec:deep_learning}

Deep learning \citep{goodfellow2016deep} is a subfield of machine learning \citep{murphy2012machine} which focuses on the design of architectures and training algorithms for neural networks. 

\subsection{Neural Networks}
\label{sec:neural_architectures}

Neural networks are a powerful family of parametric models for function approximation. While a plethora of architectures have been developed for different applications, all neural networks at their core are composed of artificial neurons, which simply consist of a weighted linear combination $a(\cdot)$ of their input $x$ followed by a nonlinearity $z(\cdot)$. These units, sometimes called \textit{perceptron} \citep{rosenblatt1958perceptron}, can be assembled into a \textit{layer} of $d_a$ such neurons, giving the model its \textit{width}. Layers can also be composed in sequence, giving the model its \textit{depth}. All but the final layer are called \textit{hidden} layers, and allow to build increasingly more abstract representations of the data \citep{lecun2015deep}. For example, a simple model could be built from one hidden layer followed by a linear output: $f(x) := b^o + w^{o\top} z\big(a(x)\big)$. Here, the variables $x$ and $a$ represent vectors, and nonlinearity functions denoted by $z$ are applied element-wise:
\begin{equation}
    f(x) := b^o + \sum_{i=1}^{d_a} w_i^o z(a_i) \quad , \quad a_{j} := b^a_j + \sum_{i=1}^{d_x} W^a_{ij} x_i
\end{equation}
where $w^o$  (the output weight vector), $W^a$ (the input weight matrix) and $b$ (the bias terms) are the parameters of the model. The output can be augmented with any differentiable function to satisfy the task at hand. For example, binary classification tasks typically suggest the use of the sigmoid output function, while regression tasks often use the network's output as-is. This basic framework has led to the development of several specialized architectures to accommodate different data types. For example, Convolutional Neural Networks (CNNs) are specialized for grid-like data, such as images, and reuse artificial neurons with local connectivity across the entire grid to enforce spatial equivariance of the function it learns \citep{lecun1998gradient}. Graph Neural Networks (GNNs) generalize this idea to node permutations in graph structures \citep{xu2018powerful}.

These building blocks give all neural networks two characteristics crucial to their success. First, neural networks are easily differentiable, allowing the use of the backpropagation algorithm \citep{rumelhart1986learning} to compute parameter updates. This procedure not only allows an efficient and parallelizable way of training large models, but it has become simple to implement with the advent of modern automatic differentiation software \citep{paszke2019pytorch, abadi2016tensorflow}, contributing to its popularity. Second, the capacity of neural networks can be adjusted by increasing the number of parameters (typically the width and depth of the hidden layers). The universal approximation theorem \citep{cybenko1989approximation} states that, given certain assumptions, even a single-layered neural network can, in principle, represent any continuous function arbitrarily precisely given enough capacity. Although this result does not guarantee that such functions can be easily found using first-order optimization processes, it speaks of the scalability of this model family and its ability to represent very complex functions \citep{kaplan2020scaling}. Together, these properties have allowed neural networks to take advantage of the increasing availability of data and computational power to achieve remarkable performance across a wide array of tasks and disciplines.

\subsection{Training Objectives}
\label{sec:training_objectives}

To learn from data, a machine learning model needs a training objective. The foundation of training objectives for deep learning models often begins with the concept of Maximum Likelihood Estimation (MLE) \citep{goodfellow2016deep}. Given a dataset $\mathcal{D}$ of $N$ samples $\mathcal{D}:=\{x^{(i)}\}_{i=1}^N$ from some true data distribution $p_d$, this statistical approach aims to find the parameters that maximize the likelihood of the data under the model distribution $p_\theta$. Under the assumption that the samples $x^{(i)}$ are independently and identically distributed (i.i.d.), the probability over the entire dataset decomposes into a product which can be expressed as the average log-likelihood (the $\log$ function is monotonous and $\frac{1}{N}$ does not depend on $\theta$):
\begin{equation}
\label{eq:maximum_likelihood_objective}
    \theta^*_{\text{MLE}} := \argmax_\theta P\big(\{x^{(i)}\}_{i=1}^N;\theta \big) = \argmax_\theta \prod_{i=1}^N p_\theta(x^{(i)}) = \argmax \frac{1}{N}\sum_{i=1}^N \log p_\theta(x)
\end{equation}
where $\theta$ represents the parameters of the model (e.g. the weights of a neural network). By the law of large numbers, this sum recovers the expectation of the true data distribution as the number of samples increases:
\begin{equation}
    \lim_{N \rightarrow \infty} \frac{1}{N}\sum_{i=1}^N \log p_\theta(x^{(i)}) = \E_{x\sim p_d}\big[ \log p_\theta(x) \big]
\end{equation}

In the context of supervised learning, for each datapoint $x^{(i)}$, a label $y^{(i)}$ is provided, and the goal of the model is to predict the correct output for each sample. In a classification task, maximizing the likelihood of the data is equivalent to minimising the well-known cross-entropy loss:
\begin{equation}
    \theta^*_{\text{MLE}} := \argmax_\theta \E_{(x,y)\sim p_d}\big[ \log p_\theta(y|x) \big] = \argmin_\theta - \E_{(x,y)\sim p_d}\big[ \log p_\theta(y|x) \big]
\end{equation}
For a regression task, it is common to assume a gaussian distribution of the label with fixed standard deviation, and where the model predicts the mean of the distribution. Maximizing the likelihood then uncovers the popular mean-squared error:
\begin{align}
    \theta^*_{\text{MLE}} &:= \argmax_\theta \, \E_{(x,y)\sim p_d}\big[ \log p_\theta(y|x) \big] \quad , \quad \text{with} \quad p_\theta(y|x):= \mathcal{N}\big(y; f_\theta(x), \sigma^2 \big) \\
    &= \argmax_\theta \, \E_{(x,y)\sim p_d}\left[ \log \Big(\frac{1}{\sigma \sqrt{2 \pi}} \Big) -\frac{1}{2\sigma^2} ||y - f_\theta(x)||^2 \right] \\
    &= \argmin_\theta \, \E_{(x,y)\sim p_d}\left[ \, \frac{1}{2} ||y - f_\theta(x)||^2_2 \, \right]
\end{align}
where $\mathcal{N}(y)$ denotes the gaussian probability density function and $||\cdot||_2^2$ is the L2-norm.

In unsupervised learning, we generally seek to learn a model of the data distribution which can be used for a variety of purposes including compression, anomaly detection, or generating new datapoints. Interestingly, the maximum likelihood objective can be seen as minimizing the KL-divergence between our model's distribution $p_\theta$ and the target distribution $p_d$:
\begin{equation}
    \argmin D_{KL}(p_d || p_\theta) := \argmin - \E_{x\sim p_d}\left[ \log \frac{p_\theta(x)}{p_d(x)} \right] = \argmax \E_{x\sim p_d}\big[ \log p_\theta(x) \big] % - \cancel{\E_{x\sim p_d}\big[ \log p_d(x) \big]}
\end{equation}
The expectation over $\log p_d(x)$ is independent from $\theta$ and can be ignored, leaving us with the likelihood term alone. Although KL-divergence is not an exact distance measure (it is not reversible), it is still informative of the progress that $p_\theta$ is making towards $p_d$ as its optimum $D_{KL}(p_d||p_\theta)=0$ is reached only when $p_\theta=p_d$. The maximum likelihood objective in Equation~\ref{eq:maximum_likelihood_objective} can thus be optimized directly to model $p_d$.

A possible approach is to parameterize $p_\theta$ as an energy-based model, which can capture arbitrarily complex distributions. However, this family of models does not scale well to high-dimensional data as it requires computing a normalization constant $Z$ that involves an intractable marginalization step. Several methods have thus been developed to avoid computing the partition function. For example, autoregressive models use the chain rule of probability to decompose the joint \citep{van2016conditional}, flow networks leverage the change of variable to model the likelihood as an invertible transformation of a simpler distribution \citep{rezende2015variational}, and variational autoencoders introduce latent variables to optimize a lower bound on the likelihood \citep{kingma2013auto}. Another approach, called implicit modeling, avoids modeling the likelihood function entirely and has led to great success in generating high-quality samples. It consists of learning from a likelihood ratio between the true data distribution and the model's distribution, leading to a min-max game where the loss function $D_\phi$ is learned from binary classification along with a generative model trained to generate plausible samples \citep{goodfellow2014generative, mohamed2016learning}:
\begin{equation}
    \min_\theta \max_\phi \, \, \E_{x\sim p_d} \big[ \log D_\phi(x) \big] + \E_{x\sim p_\theta} \big[ \log (1 - D_\phi(x)) \big]
\end{equation}

Through various applications, deep learning has proven to be an effective and versatile tool in the realm of machine learning, offering a wealth of architectures and training paradigms to suit both supervised and unsupervised settings. Its strength lies in its ability to learn hierarchical representations from data, enabling the extraction of complex patterns and relationships that may not be readily apparent. In the next section, we lay the foundations of reinforcement learning, a task specification paradigm that departs from using data itself as the main objective and instead captures the goal of an agent as a reward function to maximize.  We then discuss how deep learning can be leveraged to provide powerful function approximators for reinforcement learning in vast and complex environments.

\section{Reinforcement Learning}
\label{sec:reinforcement_learning}

Reinforcement Learning (RL) is a very distinct paradigm, naturally suited for sequential decision making problems. From a supervision perspective, it finds itself in between supervised and unsupervised learning. As opposed to unsupervised learning, it has a precisely defined objective through the use of a user-specified reward function. However, contrary to supervised learning, it does not require labeling every datapoint with the correct output, making this approach to task specification much less reliant on expert knowledge. In this section, we describe Markov Decision Processes (Section~\ref{sec:MDPs}), the Bellman equations (Section~\ref{sec:BellmanEquations}) and a variety of foundational approaches to learn optimal policies (Sections~\ref{sec:tabular_rl}~and~\ref{sec:model_free_rl}).

\subsection{Markov Decision Processes and the RL objective}
\label{sec:MDPs}

\begin{figure}[t]
    \centering
    \includegraphics[width=0.9\textwidth]{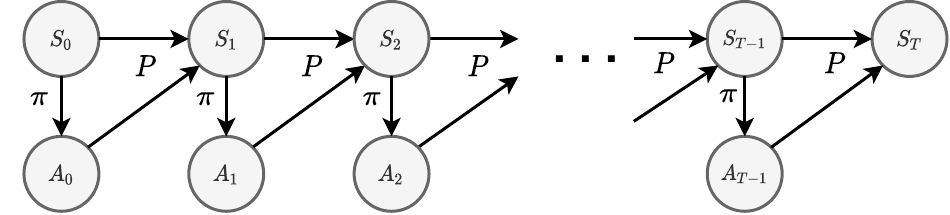}
    \caption[Markov Chain over state-action pairs]{Markov Chain over state-action pairs.}
    \label{fig:markov_chain}
\end{figure}

Markov Decision Processes (MDPs) are a mathematical framework for sequential decision making \citep{puterman1990markov}. An MDP is defined by the tuple $(\ptrans, \pzero, \mathcal{S}, \mathcal{A}, \RF, \gamma)$. The ensemble $\mathcal{S}$ denotes the \textit{state space} and $\mathcal{A}(s)$ denotes the \textit{action space}, both of which can be discrete or continuous. The process initiates by sampling an initial state $s_0$ from the \textit{initial state distribution} $\pzero:\mathcal{S} \rightarrow [0,1]$, $\sum_{s \in \mathcal{S}} \pzero(s) = 1$. At time-step $t$, the agent finds itself in state $s_t \in \mathcal{S}$ and must choose an action $a_t \in \mathcal{A}(s_t)$. The environment then returns the next state $s_{t+1} \in \mathcal{S}$ sampled from the \textit{transition distribution} $\ptrans(\cdot|s_t, a_t)$, with $\ptrans:\mathcal{S} \times \mathcal{S} \times \mathcal{A} \rightarrow [0,1]$ and $\sum_{s' \in \mathcal{S}} \ptrans(s'|s,a)=1$. This transition distribution, also called the ``environment dynamics'', is a central component, as it defines how the agent can influence its environment through the selected action $a$. This interaction between agent and environment induces a Markov chain over state and action pairs that unfolds over time, as depicted in Figure~\ref{fig:markov_chain}, where $S_0, A_0, ..., S_T$ are random variables representing the state and action at each time-step. Note that, for simplicity, we often omit the detailed reference to the random variable itself, such as $S_t$, and instead directly refer to its realization, represented by $s_t$.

Importantly, in an MDP, we assume the \textit{Markov property}, namely, that the environment dynamics $\ptrans$ only depends on the current state and action $(s_t, a_t)$. This implies that we assume the conditional independences $s_{t+1} \indep s_{<t}, a_{<t} \, | \, s_t$ which allow us to safely use a \textit{stationary} policy $\pi(a_t|s_t)$, i.e. a policy that depends only on $s_t$. Thus, a stationary policy represents a mapping from a state to a probability distribution over the possible actions in that state, $\pi: \mathcal{S} \times \mathcal{A} \rightarrow [0,1]$ and $\sum_{a \in \mathcal{A}(s)} \pi(a|s) = 1$.

A given realization of this chain is called a \textit{trajectory} or \textit{rollout}, denoted $\tau=(s_0, a_0, ..., s_T)$. The probability of a trajectory is given by the joint probability of the entire chain under the trajectory distribution $p_\pi$ induced by the policy $\pi$:
\begin{equation}
\label{eq:probability_of_trajectory}
    p_{\pi}(\tau) 
    = p_{\pi}(s_0, a_0, ..., s_{T-1}, a_{T-1}, s_T)
    = \pzero(s_0)\prod_{t=0}^{T-1}\ptrans(s_{t+1}|s_t, a_t)\pi(a_t|s_t)
\end{equation}
It is also convenient to define the marginals $p_{\pi}(s_t)$ and $p_{\pi}(s_t, a_t)$:
\begin{equation}
    p_{\pi, t}(s_t) := \sum_{s_{0:T}\setminus \{s_t\}}\sum_{a_{0:T}} p_{\pi}(\tau)\, , \quad
    p_{\pi, t}(s_t, a_t) := \sum_{s_{0:T}\setminus \{s_t\}}\sum_{a_{0:T}\setminus \{a_t\}} p_{\pi}(\tau)
\end{equation}
which can be interpreted as the probability of the union over all trajectories containing $S_t=s_t$ (or both $S_t=s_t$ and $A_t=a_t$). 

They can also be written as functions of one another using the product rule:
\begin{equation}
    p_{\pi, t}(s_t, a_t)
    = \mathbb{P}(S_t\!=\!s_t, A_t\!=\!a_t|\pi)
    = \mathbb{P}(A_t\!=\!a_t|S_t\!=\!s_t, \pi)\mathbb{P}(S_t\!=\!s_t|\pi)
    = \pi(a_t|s_t) p_{\pi, t}(s_t)
\end{equation}

\paragraph{Supervised learning in MDPs} Describing MDPs sets the table for reasoning about sequential decision making. Before diving into reinforcement learning, it is important to note that RL is not the only method for handling sequential decision making problems. For example, assuming access to a dataset of $N$ trajectories $\mathcal{D}=\{\tau_i\}_{i=1}^N$ solving the task, supervised learning can be applied to learn a policy that captures this behavior. In particular, behavioral cloning is the simplest such method. It consists of searching for the policy $\pi$ that maximizes the likelihood of $\mathcal{D}$ under the induced trajectory distribution $p_\pi$, reducing the problem of learning a policy to a classification problem (for discrete control) or a regression problem (for continuous control). From Equations~\ref{eq:maximum_likelihood_objective}~and~\ref{eq:probability_of_trajectory}, we have:
\begin{align}
    & \argmax_{\pi} \quad \frac{1}{N} \sum_{i=1}^N \log p_{\pi}(\tau_i) \\
    % = \argmax_{\pi} \quad \frac{1}{N} \sum_{i=1}^N \log \left[  \pzero(s_{0,i})\prod_{t=0}^{T-1}\ptrans(s_{t+1,i}|s_{t,i}, a_{t,i})\pi(a_{t,i}|s_{t,i})\right] \\
    &= \argmax_{\pi} \quad \frac{1}{N} \sum_{i=1}^N \Big( \log \pzero(s_{0,i}) + \sum_{t=0}^{T-1}\log \ptrans(s_{t+1,i}|s_{t,i}, a_{t,i}) + \sum_{t=0}^{T-1} \log \pi(a_{t,i}|s_{t,i}) \Big) \\
    &= \argmax_{\pi} \quad \frac{1}{N}\sum_{i=1}^N \sum_{t=0}^{T-1}\log \pi(a_{t,i}|s_{t,i})
\end{align}
The terms associated with the environment dynamics are independent from $\pi$ and can be removed. We thus seek to maximize the probability of selecting the action $a_i$ associated with the state $s_i$ in the dataset. However, this method requires having access to a set of demonstrations $\mathcal{D}$, which limits its applicability to problems where the target behavior has already been observed. Furthermore, due to its lack of exploration, the method famously suffers from the problem of compounding error \citep{ross2011reduction, laskey2017dart}. It would be much more flexible to be able to define the task without having to demonstrate it, in a way that lets the agent interact with the environment and find the best strategy autonomously.

\paragraph{The RL objective} The reinforcement learning paradigm is based on a drastically different approach for specifying the desired behavior. Instead of using labels $a_i$ for each state $s_i$, we assume that the environment also emits a bounded scalar reward $r_t = \RF(s_t, a_t)$ at each time-step, using the reward function $\RF:\mathcal{S} \times \mathcal{A} \rightarrow [r_{\text{min}}, r_{\text{max}}]$. The reward can also be stochastic or depend on $S_{t+1}$, but for simplicity, we assume here the common case in which it is deterministic and depends only on $(s_t, a_t)$. The sum of rewards $r_1 + r_2 + \dots + r_T$ is called the \textit{return} and the goal in reinforcement learning is to learn a policy that maximizes this sum in expectation. This objective is called the \textit{expected total reward} (or expected return) and can be expressed both under the trajectory distribution or the state-action marginal:
\vspace{-3mm}
\begin{equation}
\label{eq:expected_total_reward}
    J_R(\pi) := \E_{\tau \sim p_\pi} \left[ \sum_{t=0}^T R(s_t, a_t) \right] = \sum_{t=0}^T \E_{(s_t, a_t)\sim p_{\pi, t}} \big[ R(s_t, a_t) \big]
\end{equation}
An \textit{optimal} policy is then defined as the policy $\pi^*$ that maximizes the expected total reward:
\begin{equation}
\label{eq:rl_objective}
    \pi^* = \argmax_{\pi} J_\RF(\pi)
\end{equation}
This objective is appropriate in the \textit{episodic} case, where a trajectory is guaranteed to terminate after $T$ time-steps. However, some problems involve an infinite-horizon for which $T\rightarrow \infty$. In such cases, to keep this infinite sum bounded, we usually introduce a discount factor $\gamma \in [0,1)$, and the RL objective becomes the expected total \textit{discounted} reward:
\begin{equation}
\label{eq:expected_total_discounted_reward}
    J_\RF(\pi) := \E_{\tau \sim p_{\pi}}\left[ \sum_{t=0}^\infty \gamma^t R(s_t, a_t) \right]
    = \sum_{t=0}^\infty \gamma^t \E_{(s_t, a_t)\sim p_{\pi, t}}\big[ R(s_t, a_t) \big]
\end{equation}
Since $\RF(\cdot)$ is bounded by $r_\text{max}$, we can see that $J_\RF(\pi)$ is bounded by a geometric series which evaluates to $\sum_{t=0}^\infty \gamma^t r_{\text{max}} = \frac{r_{\text{max}}}{1 - \gamma} \, \forall \, \gamma \in [0,1)$. Note that in practice, a discount factor is often used even in the episodic case. In addition to keeping the sum bounded, it can be interpreted as a way to trade-off instantaneous and future rewards, with $\gamma=0$ putting weight on the next reward only and $\gamma \rightarrow 1$ considering all rewards equally, or as the probability of transitioning to an \textit{absorbing state}, after which the reward is null forever after.
To gain some intuition about the impact of a given value on the effective horizon of the agent, one can use the rule of thumb that the number of time-steps considered to compute the return is of the order of $\frac{1}{1-\gamma}$, after which the remaining discounted rewards become very small \citep{tallec2019making}. For example $\gamma=0.99$ represents an effective horizon of about 100 time-steps.
% \footnote{The finite and infinite-horizon formulations can be unified by using state-dependent or transition-dependent discount factors \citep{white2017unifying}. In this thesis we use the more common constant discounting.}.

\paragraph{Visitation distributions} Finally, another useful distribution is called the state (or state-action) visitation distribution (also called the normalized occupancy measure), defined as:
\begin{equation}
    d_{\pi}(s) := \frac{1}{Z(\gamma, T)}\sum_{t=0}^T\gamma^t p_{\pi, t}(S_t=s)\, , \quad
    d_{\pi}(s,a) := \frac{1}{Z(\gamma, T)}\sum_{t=0}^T\gamma^t p_{\pi, t}(S_t=s, A_t=a)
\end{equation}
where $Z(\gamma, T) = \sum_{t=0}^T \gamma^t$ is a normalizing constant e.g. $Z(1, T) = T$, $Z(\gamma, \infty)=\frac{1}{1-\gamma}$. Much like for the state and state-action marginals, these two distributions can also be written in terms of one another:
\begin{equation}
    d_{\pi}(s)\pi(a|s)
    = \frac{\pi(a|s)}{Z(\gamma, T)}\sum_{t=0}^T\gamma^t p_{\pi, t}(S_t=s)
    = \frac{1}{Z(\gamma, T)}\sum_{t=0}^T\gamma^t\pi(a|s) p_{\pi, t}(S_t=s)
    = d_{\pi}(s,a) 
\end{equation}
The expected total discounted reward can also be written as an expectation over the state-action visitation distribution. Starting from Equation~\ref{eq:expected_total_discounted_reward}, we can use the fact that the state and action space are the same along the trajectory to fully remove the dependency over $t$, yielding:
\begin{align}
\label{eq:expected_total_discounted_reward_from_occupancy_measure}
    J_\RF(\pi)
    % &=\sum_{t=0}^T \gamma^t \E_{(s_t, a_t)\sim p_{\pi}}[R_t]
    &= \sum_{t=0}^T \gamma^t \sum_{s,a}p_{\pi, t}(s,a)\RF(s,a)
    &= \sum_{s,a}\RF(s,a)\underbrace{\sum_{t=0}^T\gamma^t p_{\pi, t}(s,a)}_{Z(\gamma, T) d_{\pi}(s,a)} 
    % &= Z(\gamma, T)\sum_{s,a}d_{\pi}(s,a)\RF(s,a) 
    &= Z(\gamma, T)\mathbb{E}_{(s,a)\sim d_{\pi}}[\RF(s,a)]
\end{align}

\subsection{The Bellman Equations}
\label{sec:BellmanEquations}

Many reinforcement learning algorithms estimate value functions to evaluate an agent's policy and to improve it. Two types of value function are often used: the state value function $v^{\pi}(s)$ and the state-action value function $q^{\pi}(s,a)$. $v^{\pi}$ evaluates how desirable it is for an agent to find itself in state $s$ whereas $q^{\pi}$ evaluates how desirable it is to take action $a$ when finding itself in state $s$ \citep{sutton2018rlTextbook}. More formally, for any state $s \in \mathcal{S}$, $v^{\pi}(s)$ is defined as the expected total discounted reward assuming that we start in state $S_t=s$ and then follow $\pi$ for the rest of the interaction. Similarly, for any state-action pair $s, a \in \mathcal{S} \times \mathcal{A}$, $q^{\pi}(s,a)$ is defined as the expected total discounted reward assuming that we start in state $S_t=s$, take action $A_t=a$, and then continue on by following $\pi$:
\begin{equation}
    \label{eq:value_functions}
    v^{\pi}(s) := \E_{p_{\pi}}\left[ \sum_{k=0}^\infty \gamma^k r_{t+k} \Big| s_t=s \right]\, , \quad
    q^{\pi}(s,a) := \E_{p_{\pi}}\left[ \sum_{k=0}^\infty \gamma^k r_{t+k} \Big| s_t=s, a_t=a \right]
\end{equation}
The objective we seek to maximize, the expected return, can be expressed in terms of $v^\pi$ and the initial state distribution $P_0$:
\vspace{-2mm}
\begin{equation}
    J_R(\pi) = \E_{s\sim P_0}\big[ v^\pi (s) \big]
\end{equation}
A very important property of value functions is that they can be written recursively. By expanding the sum over the rewards $\sum_{k=0}^\infty \gamma^k r_{t+k} = r_t + \gamma \sum_{k=0}^\infty \gamma^k r_{t+k+1}$ and pushing the expectation to the right, we get:
\begin{equation}
\label{eq:bellman_equations}
    v^{\pi}(s) 
    = \E_{p_{\pi}}\big[ \RF(s,a_t) + \gamma v^{\pi}(s_{t+1}) \big] \, , \quad
    q^{\pi}(s,a)
    = \RF(s,a) + \gamma \E_{p_{\pi}}\big[ q^{\pi}(s_{t+1}, a_{t+1}) \big]
\end{equation}
The Equations~\ref{eq:bellman_equations} for $v^{\pi}$ and $q^{\pi}$ are the famous \textit{Bellman equations}. A useful identity is to write them in terms of each other. Starting from their definition, one can again push the expectations to uncover their mixed forms ($v^\pi$ from $q^\pi$ and vice-versa):
\begin{equation}
    v^{\pi}(s) = \E_{p_{\pi}}\big[q^{\pi}(s_t,a_t) | s_t = s\big] \, , \quad
    q^{\pi}(s,a) = \RF(s,a) + \gamma \E_{p_{\pi}}\big[v^{\pi}(s_{t+1})\big]
\end{equation}
The advantage function is another useful quantity defined as the expected gain from taking action $a$ in state $s$ instead of following the policy:
\begin{align}
    a^{\pi}(s,a) 
    &= q^{\pi}(s,a) - v^{\pi}(s) \\
    &= \RF(s,a) + \gamma \E_{p_{\pi}}\big[v^{\pi}(s_{t+1})\big] - v^{\pi}(s) \label{eq:advantage_fct_vf}
\end{align}

We can use value functions to define an ordering on policies \citep{sutton2018rlTextbook}. We say that a policy $\pi'$ is better than a policy $\pi$ if $v^{\pi'}(s) \geq v^{\pi}(s) \, \forall s \in \mathcal{S}$ and strictly superior for at least one state $s$. An \textit{optimal policy} $\pi^*$ is a policy that is better than or equal to all policies. The state and state-action value functions of an optimal policy are called \textit{optimal value functions} $v^{*}$ and $q^{*}$
\begin{equation}
    \label{eq:optimal_value_function}
    v^{*}(s) := \max_{\pi} v^{\pi}(s) \, , \quad
    q^{*}(s,a) := \max_{\pi} q^{\pi}(s,a)
\end{equation}
Note that the optimal value functions can be written independently of any policy, only as a function of the optimal value function at the next state
\begin{align}
    \label{eq:bellman_optimality_equations}
    v^{*}(s) 
    &= \max_{a \in \mathcal{A}(s)} q^{*}(s,a) \\
    &= \max_{a \in \mathcal{A}(s)} \E_{p_{\pi}}\big[ \RF(s,a) + \gamma v^{*}(s_{t+1}) \big] \\ \notag \\
    q^{*}(s,a)
    &= \RF(s,a) + \gamma \E_{p_{\pi}} \big[v^{*}(s_{t+1}) \big] \\
    &= \RF(s,a) + \gamma \E_{p_{\pi}}\big[ \max_{a' \in \mathcal{A}(s_{t+1})} q^{*}(S_{t+1}, a') \big]
\end{align}
This relationship shows that an optimal policy can be very simply expressed as acting greedily on the optimal value function. Since $q^{*}(s,a)$ by definition already accounts for the long-term effect of taking action $a$ in state $s$, a one-step look-ahead on $q^{*}$ yields optimal behavior. While defining an optimal policy is important, in most problems, computing the exact optimal policy is prohibitively expensive. Luckily, the Policy Improvement Theorem tells us how the definition of optimal policy can be used to at least improve the policy that we currently have \citep{sutton2018rlTextbook}. Given an accurate estimate of the value function $q^{\pi}$, one can obtain an improved policy $\pi'$ by increasing in all states the probability of selecting the action $a^{*}$ that yields the highest value, that is, $a^{*}(s) = \argmax_{a\in \mathcal{A}(s)} q^{\pi}(s,a) \, \forall \, s \in \mathcal{S}$. If there is equality (multiple optimal actions), any partitioning of the probability mass among them will yield the same performance, as long as no probability mass is added to suboptimal actions.

The problem of estimating value functions is referred to as \textit{policy evaluation} while the step involving modifying the policy is referred to as \textit{policy improvement}. These two procedures are, in one form or another, at the core of most reinforcement learning algorithms.

\subsection{Tabular RL with known environment dynamics}
\label{sec:tabular_rl}

In small environments for which state values $v^\pi(s)$ can be enumerated in a table, several methods have been developed to build accurate value estimate which cover the entire state space by leveraging known environment dynamics.

\subsubsection*{Solving the System of Equations}

Given perfect knowledge of the dynamics of the environment $\ptrans$ and for sufficiently small MDPs, one can directly solve the system of $|\mathcal{S}|$ equations and $|\mathcal{S}|$ unknowns (one for each $s \in \mathcal{S}$) given by the Bellman equation for $v^{\pi}$ (Equation~\ref{eq:bellman_equations}). This can be better seen in the vector-matrix form. By defining some arbitrary ordering $1, 2, ... \, , |\mathcal{S}|$ over the states $s \in \mathcal{S}$, we can define the vector of unknowns $v^{\pi}$ where $v^{\pi}_i = v^{\pi}(s_i)$. We can also express the reward function in vector form and the environment-policy dynamics in matrix form by marginalizing out the effect of actions in the reward function and transition distribution, respectively:
\begin{equation}
    r^{\pi}(s) 
    := \E_{a \sim \pi(\cdot|s)}\big[ \RF(s, a) \big]
    \, , \quad 
    \ptrans^{\pi}(s'|s) := \E_{a \sim \pi(\cdot|s)}\big[ \ptrans(s'|s,a) \big]
\end{equation}
By expanding the sum from the definition of the value function we can then obtain our system of Bellman equations:
\begin{align}
    v^{\pi} 
    := \sum_{t=0}^{\infty}\left( \gamma \ptrans^{\pi} \right)^t r^{\pi}
    \, \, = \, \,  r^{\pi} + \sum_{t=1}^{\infty}\left( \gamma \ptrans^{\pi} \right)^{t} r^{\pi}
    \, \, = \, \,  r^{\pi} + \gamma \ptrans^{\pi}\sum_{t=0}^{\infty}\left( \gamma \ptrans^{\pi} \right)^{t} r^{\pi}
    \, \, = \, \,  r^{\pi} + \gamma \ptrans^{\pi} v^{\pi}
\end{align}
From there, we can solve for $v^{\pi}$ to get the unique solution of the policy evaluation problem:
\begin{equation}
    v^{\pi} = (I - \gamma \ptrans^{\pi})^{-1}r^{\pi}
\end{equation}

\subsubsection*{Linear Programming}

The problem of finding the optimal value function $v^{*}$ can also be formulated as a Linear Program (LP). Let a $\gamma$-superharmonic vector be any vector $v$ satisfying
\begin{equation}
    v \geq r^{\pi} + \gamma \ptrans^{\pi}v
\end{equation}
One can show that the optimal value function $v^{*}$ is the smallest such $\gamma$-superharmonic vector \citep{kallenberg2011mdps}. This result is at the root of the LP formulation. We want to minimize our value function candidate $v$ as much as possible but such that it remains $\gamma$-superharmonic. Using this constraint, $v^{*}$ can be defined as the solution to the Linear Program:
\begin{equation}
\begin{array}{ll@{}ll}
\text{minimize}  & p_0^{\top} v  &\\
\text{subject to}& v(s) - \gamma \sum_{s' \in \mathcal{S}} \ptrans(s'|s,a)v(s
')&\quad \geq \quad r(s,a),  &\forall \, s,a \in \mathcal{S}\times \mathcal{A}
\end{array}
\end{equation}
where $p_0$ is the vectorized initial state distribution. Any LP solver can thus be used in order to recover the optimal value function by solving this constrained optimization problem. The LP formulation also highlights an interesting relationship between the value function $v^\pi(s)$ and the (unormalized) state-action occupancy measure, since $d_{\pi}(s,a)$ is in fact the solution to the dual problem of this Linear Program. 

\subsubsection*{Dynamic Programming}

A very popular family of methods that scales better than the Linear Programming approach (but which is, however, still computationally expensive) are Dynamic Programming algorithms \citep{bellman1966dynamic, rust2008dynamic}. The main mechanism behind such methods is to use the Bellman equation as an iterative update rule and evaluate it exactly using the known transition distribution $P$:
\begin{equation}
    \label{eq:policy_iteration_v_update}
    v_{k+1}(s) := \sum_{a \in \mathcal{A}(s)} \pi(a|s) \left( \RF(s,a) + \gamma \sum_{s'\in \mathcal{S}} P(s'|s,a)v_{k}(s')\right)
\end{equation}
One can show that starting from any arbitrary estimate $v_0$, the sequence $\{v_k\}_{k=1, 2, \, \dots}$ converges to the true value function of the policy $v^{\pi}$ as $k\rightarrow \infty$. The \textit{Policy Iteration} algorithm uses such updates. Specifically, it alternates between policy evaluation and policy improvement steps until the optimal value function $v^*$ is recovered. During the policy evaluation step, we keep updating the value of each state $s \in \mathcal{S}$ until they converge to the true $v^{\pi}$. Then, the policy improvement step (greedification) consists of greedily improving the deterministic policy by selecting the best-performing action according to the expectation over the next states, i.e. $\pi(s) = \argmax_{a\in\mathcal{A}(s)} \RF(s,a) + \mathbb{E}_{s'\in\mathcal{S}}[v^{\pi}(s')]$. 

The \textit{Value Iteration} algorithm instead fuses these two steps together by using the Bellman optimality equation as an update rule:
\begin{equation}
    \label{eq:value_iteration_v_update}  
    v_{k+1}(s) := \max_{a \in \mathcal{A}} \, \, \left( \RF(s,a) + \gamma \sum_{s'\in \mathcal{S}} P(s'|s,a)v_{k}(s') \right)
\end{equation}
which is equivalent to performing Policy Iteration but interrupting the policy evaluation step after a single sweep over $\mathcal{S}$ instead of waiting until convergence to the true $v^\pi$ at each step. While a more accurate estimate of the value function means better knowledge of how to improve the policy, when considering deterministic policies, only the ordinality of state-action values really matters, and we can speed up the algorithm by improving the policy based on the correct action ordering without necessarily having the exact values of each action in hand. Hence, the policy iteration and value iteration algorithms represent two extreme answers to an interesting question: since a more accurate estimate of the policy value does not always lead to a revised policy improvement step, how long should we keep updating the value estimates before modifying the policy? The former optimizes the value estimate as much as possible, while the latter performs a single update before moving on. For most problems, the best trade-off is somewhere in the middle. \textit{Generalized Policy Iteration} refers the whole spectrum of algorithms that put different levels of emphasis on the value function accuracy before switching to policy improvement.

\subsection{Model-free RL from experience}
\label{sec:model_free_rl}

The methods presented in the previous section are very restrictive in practice, as they require complete knowledge of the environment's transition distribution. They also assume that we can query the reward function directly with any given state and action. While these quantities are known in some simple cases, such as board games, many setups allow only for sampling $\RF(s,a)$ and $P(s'|s,a)$ by interacting with the environment. One approach is to attempt to model these distributions using samples, an often challenging task that leads to a whole family named \textit{model-based RL algorithms}. In this thesis, we focus on \textit{model-free RL algorithms}, the alternative approach that seeks to directly solve the control problem from interaction samples (experience), without learning its dynamics.

\subsubsection*{Monte Carlo methods} 

A foundational family of algorithms is called Monte Carlo (MC) methods. They seek to estimate $q^\pi(s,a)$ or $v^\pi(s)$ directly from Equation~\ref{eq:value_functions} by collecting i.i.d. \textit{ sample episodes} and averaging complete returns \citep{sutton2018rlTextbook}. By the law of large numbers, these unbiased estimates converge to the true values as the number of samples tends to infinity. Although restricted to the episodic case ($T \in \mathbb{N}$), MC methods can be particularly useful when trying to estimate the value of a subset of states only, as they allow each state value estimate to be completely independent (unlike bootstrap methods).

An important consideration that arises for control without a model is the need to maintain sufficient exploration throughout learning. Indeed, to perform policy improvement without a state-transition model, one needs to evaluate $q^\pi$ rather than $v^\pi$. However, when evaluating $q^\pi(s,a)$ from interaction samples collected by a deterministic policy $\pi$, only one action will be chosen for each state $s \in \mathcal{S}$, making it impossible to know whether a different action would have led to higher returns. Two options are available to allow for exploration. The first option is to learn \textit{on-policy} the value function of a \textit{stochastic} policy, often carried out by implementing an $\epsilon$-greedy policy which attributes every action a probability of $\frac{\epsilon}{|\mathcal{A}(s)|}$ to be selected, and the rest of the probability mass to the greedy action. The same principle of generalized policy iteration can be proven to lead to the optimal $\epsilon$-greedy policy \citep{sutton2018rlTextbook} (which is, we hope, very close in performance to the optimal deterministic policy). The second option is to learn \textit{off-policy} the value function of a deterministic \textit{target} policy $\pi$ using samples collected by a different \textit{behavior} policy $\beta$. The use of importance sampling estimators allows us to correct for the fact that the trajectories are now sampled from $p_\beta$ rather than from $p_\pi$. Indeed, to estimate the value $v^\pi(s)$ under the target policy $\pi$, we have:
\begin{equation}
    v^{\pi}(s) := \E_{\tau\sim p_{\pi}}\left[ \sum_{k=0}^{T-t} \gamma^k r_{t+k} \Big| s_t=s \right]
    = \E_{\tau \sim p_{\beta}}\left[ \frac{p_{\pi}(\tau)}{p_{\beta}(\tau)} \sum_{k=0}^{T-t} \gamma^k r_{t+k} \Big| s_t=s \right]
\end{equation}

To compute the importance sampling weights $\frac{p_{\pi}(\tau)}{p_{\beta}(\tau)}$, we assume that the behavior policy $\beta(a|s)$ can be evaluated and is positive in all state-action pairs where the target policy is positive (assumption of coverage). Off-policy methods tend to converge more slowly as the importance weights increase the variance of the value estimates, but are more general as they include the on-policy methods as a special case where $\beta = \pi$.

\subsubsection*{Temporal Difference methods} 

A second family of experience-based algorithms is called Temporal Difference (TD) methods. Instead of learning directly from the expected return definition of Equation~\ref{eq:value_functions} like in the case of Monte Carlo methods, they use the recursive property of the expected return by using an approximation of the Bellman equations of Equation~\ref{eq:bellman_equations} as their update rule. In particular, TD(0) takes a one-sample estimate of the expectation in the Bellman equation for $v^\pi$, and SARSA does the same using the Bellman equation for $q^\pi$. Both MC and TD methods admit online implementations with the following update rules respectively:
\begin{align}
    v_{k+1}^\pi(s) &= v_{k}^\pi(s) + \alpha \big(\hat{q}^{\pi}(s,a) - v_{k}^\pi(s)\big) \tag{MC online update} \\
    v_{k+1}^\pi(s) &= v_{k}^\pi(s) + \alpha \big(\RF(s,a) + \gamma v_{k}^\pi(s') - v_{k}^\pi(s)\big) \tag{TD(0) online update}
\end{align}
where $\alpha$ is a step-size hyperparameter, $a$ and $s'$ are the sampled action and next-state and $\hat{q}^{\pi}(s,a)$ is the discounted return \textit{collected} from $(s,a)$ until the end of the trajectory. It is clear from these equations that while MC methods use the true sampled return as a target to update their value estimate of a given state $s$, TD methods instead use a single sampled reward and then use the current estimate of the return at the next state $v_{k}^\pi(s')$. The term in parentheses $\delta := \RF(s,a) + \gamma v_{k}^\pi(s') - v_{k}^\pi(s)$ is often referred to as the TD error and quantifies how much the current estimate should change to respect the Bellman equation. Learning an estimate of the value at the current state from an estimate of the value at the next state is referred to as \textit{bootstrapping}, and while it might appear ambitious, TD(0) and SARSA have been shown to converge to the true value function assuming that all states (and actions) are visited infinitely many times. 

The most popular TD method for control is the Q-learning algorithm \citep{watkins1992q}. Like SARSA, it estimates the state-action values, however it instead implements the Bellman \textit{optimality} equation as its learning rule:
\begin{equation}
    q_{k+1}^{\pi}(s,a) = q_{k}^{\pi}(s,a) + \alpha \big(\RF(s,a) + \gamma \max_{a' \in \mathcal{A}(s')}q_{k}^\pi(s',a') - q_{k}^{\pi}(s,a) \big)
\end{equation}
Therefore, instead of estimating the values of any given policy $\pi$, the Q-learning algorithm learns the state-action values of the greedy-policy defined over the current value estimates, and has been shown to converge to the optimal value function $q^*$ assuming that all pairs continue to be updated. Because the Bellman optimality equation does not depend on any particular policy, the Q-learning algorithm can be used for off-policy control \textit{as is}, without the need to use importance sampling weights, making it one of the most widely used reinforcement learning algorithms to this day.

Two important points explain why experience-based algorithms scale so much better than dynamic programming algorithms. The first one is statistical. Dynamic programming methods compute full expectations in their update rules (expected updates) which involves considering all successor states, whereas experience-based methods need only a single sample obtained through interaction (sampled updates). The second is computational. Experience-based methods inherently focus on improving their value estimates of regions of the state space that are more visited, therefore avoiding lost computation on improving estimation of states that the agent will never encounter.

Methods based on finding a solution to the Bellman equations, called \textit{value-based} methods, only need to implicitly represent the policy through the learned value function. While these techniques are useful for \textit{discrete} control -- problems for which the size of action space $|\mathcal{A}(s)|$ is finite for all states $s \in \mathcal{S}$ -- they are not well suited for \textit{continuous} control tasks, in which the action space is a continuous domain and consequently the number of possible actions is infinite. One possibility for such cases is to discretize the action space by partitioning it into quantiles. However, this poses a precision vs. complexity trade-off; larger bins prohibit fine control whereas a large number of bins becomes intractable as the number of possible actions grows exponentially with the number of action dimensions.

\subsubsection*{Policy Gradient methods} 

A better suited alternative to continuous control than value-based methods is to learn the policy directly by parameterizing it separately from the value function, a family of algorithms called \textit{policy-based} methods. As long as this policy $\pi_{\piparams}$ is differentiable w.r.t. its parameters $\piparams$, it can be trained to directly maximize the expected total discounted reward using a likelihood ratio estimator of its gradient. This result is known as the Policy Gradient Theorem \citep{sutton2000policy} and algorithms that take this approach are called \textit{policy gradient methods}. Starting with $J_\RF(\pi_{\piparams})$ from Equation~\ref{eq:expected_total_discounted_reward}, we have:
\begin{align}
&\hspace{-2mm}\grad_{\piparams} J_\RF(\pi_{\piparams}) 
    := \grad_{\piparams} \E_{\tau \sim p_{\pi_{\piparams}}}\left[\sum_{t=0}^T \gamma^t r_t \right] \\
    &\hspace{-2mm}= \int_{\tau} \grad_{\piparams} p_{\pi_{\piparams}}(\tau) \Big(\sum_{t=0}^T \gamma^t r_t \Big) d\tau \\
    &\hspace{-2mm}= \int_{\tau} p_{\pi_{\piparams}}(\tau) \grad_{\piparams} \log p_{\pi_{\piparams}}(\tau) \Big(\sum_{t=0}^T \gamma^t r_t \Big) d\tau \\
    &\hspace{-2mm}= \E_{\tau \sim p_{\pi_{\piparams}}} \left[ \Big( \grad_{\piparams} \log \pzero(s_0) + \grad_{\piparams} \sum_{t=0}^{T} \log \ptrans(s_{t+1}|s_t,a_t) + \grad_{\piparams} \sum_{t=0}^{T} \log \pi_{\piparams}(a_t|s_t) \Big) \Big( \sum_{t=0}^T \gamma^t r_t \Big) \right]  \\
    \label{eq:policy_gradient}
    &\hspace{-2mm}= \E_{\tau \sim p_{\pi_{\piparams}}} \left[ \Big( \sum_{t=0}^{T} \grad_{\piparams} \log \pi_{\piparams}(a_t|s_t) \Big) \Big( \sum_{t=0}^T \gamma^t r_t \Big) \right]
\end{align}
This last expectation can be approximated using a Monte Carlo estimator to yield our policy gradient estimate using collected trajectories. We can then use it to update the parameters of our policy through gradient ascent, i.e. $\piparams \leftarrow \piparams + \alpha \grad_{\piparams} J_\RF(\pi_{\piparams})$ where $\alpha$ is the learning rate. The algorithm that uses this particular gradient estimate is known as REINFORCE \citep{williams1992simple}. It uses the actual expected total discounted reward $\sum_{t=0}^T \gamma^t r_t$ to weigh the gradient $\grad_{\piparams} \log \pi_{\piparams}(a_t|s_t)$ at each time-step, making it a Monte Carlo policy gradient method. One advantage of using the actual discounted return is that this policy gradient estimate does not make use of the Markov assumption; we could use it even in the case of partial observability where our policy is conditioned on some observation $o_t$ rather than the true state of the environment $s_t$. However, it also means that this Monte Carlo estimate is only well defined in the episodic case (an episode must terminate before learning can start) and that the estimate generally has a high variance.

Two techniques are commonly used to reduce the variance of this gradient estimator. First, to weigh the gradient term $\grad_{\piparams} \log \pi_{\piparams}(a_t|s_t)$, we can omit all rewards that occurred before time-step $t$ and instead use the discounted return $\hat{q}^{\pi}=\sum_{t'=t}^T \gamma^{t'-t} r_{t'}$ collected from $(s_t, a_t)$. This comes from the fact that, by temporal causality, we have $R_t \indep A_{<t} | (S_t, A_t)$, yielding a simpler version of Equation~\ref{eq:policy_gradient}:
\begin{equation}
    \grad_{\piparams} J_\RF(\pi_{\piparams}) = \E_{\tau \sim p_{\pi_{\piparams}}} \left[ \sum_{t=0}^{T} \grad_{\piparams} \log \pi_{\piparams}(a_t|s_t) \hat{q}^{\pi}(s_t,a_t) \right]
\end{equation}

Second, we can also subtract a state-dependent \textit{baseline} $b(s)$ from this weighting term, yielding:
\begin{equation}
    \grad_{\piparams} J_\RF(\pi_{\piparams}) = \E_{\tau \sim p_{\pi_{\piparams}}} \left[ \sum_{t=0}^{T} \grad_{\piparams} \log \pi_{\piparams}(a_t|s_t) \big(\hat{q}^{\pi}(s_t,a_t) - b(s_t) \big) \right]
\end{equation}
A typical choice is to use an estimate of the state value function $v^{\pi_{\piparams}}$ as baseline. By developing the term with $b(s_t)$, one can show that the use of a baseline does not bias the gradient in expectation:
\begin{equation}
    \sum_{t=0}^{T} \E_{(s_t, a_t) \sim p_{\pi_{\piparams}}} \Big[ \grad_{\piparams} \log \pi_{\piparams}(a_t|s_t)b(s_t)\Big]
    = \sum_{t=0}^{T} \E_{s_t \sim p_{\pi_{\piparams}}} \left[ b(s_t) \grad_{\piparams} \sum_{a \in \mathcal{A}(s_t)} \pi_{\piparams}(a|s_t) \right]
    = 0
\end{equation}

Finally, most policy gradient algorithms do not use the actual return $\hat{q}^{\pi_\theta}$ to weigh the gradient, but instead use a parameterized estimate $q_\phi$ of the true state-action value function $q^{\pi_{\piparams}}$. While the collected return $\hat{q}^{\pi_\theta}$ is already a one-sample estimate of the expected state-action value, it is computed online for each episode. At the price of introducing a bias in the gradient estimate \citep{sutton2000policy}, using a parameterized estimate $q_\phi$ allows to evaluate the return at any time-step without requiring us to wait for the episode to terminate before updating. When combined with $v_{\vfparams}(s_t) = \E_{a_t \sim \pi_{\piparams}}[q_{\vfparams}(s_t, a_t)]$ as baseline, we get the policy gradient computed by the Advantage Actor-Critic (A2C) algorithm, which weighs the gradient with the advantage function $a_{\vfparams}(s_t, a_t)$
\begin{equation}
    \label{eq:a2c_policy_gradient}
    \grad_{\piparams} J_\RF(\pi_{\piparams}) = \E_{\tau \sim p_{\pi_{\piparams}}} \left[ \sum_{t=0}^{T} \grad_{\piparams} \log \pi_{\piparams}(a_t|s_t) a_{\vfparams}(s_t, a_t) \right]
\end{equation}

Finally, note that the policy gradient is on-policy because the expectation is taken w.r.t. the trajectory distribution $p_{\pi_{\piparams}}$ induced by the current policy $\pi_{\piparams}$. Computing the gradient using trajectories collected by a different (or past) policy would yield a biased gradient estimate. In such cases, an unbiased but higher variance estimate can be derived using importance sampling  \citep{degris2012off}.

\section{Deep Reinforcement Learning}
\label{sec:DeepRL}

In the last section, we reviewed foundational RL algorithms which are designed to operate on MDPs with discrete state space $\mathcal{S}$ that contain a small enough number of states to be stored in a table.  Such tasks are useful for testing algorithms and developing the theory. However, most real-life problems involve a number of possible states that is so large that they would not fit in any computer's memory. For such cases, function approximators must be used to represent policies and value functions over these gigantic spaces using a compact number of parameters. Today, neural networks are the most commonly used function approximators in RL due to their representation power and their ability to be trained efficiently using gradient-based methods (see Section~\ref{sec:deep_learning}). Such a combination of deep learning and RL is often referred to as deep Reinforcement Learning (deep RL) algorithms.

\subsection[Deep Q-Learning]{Deep Q-Learning} Neural networks had already been used in the 1990s in successful applications of reinforcement learning with function approximation \citep{tesauro1994td, lin1993reinforcement}. 
In the last decade however, the DQN\footnote{Although the algorithm is called Deep Q-Learning and \textit{DQN} only stands for Deep Q-Networks, members of the community often use the DQN acronym to refer to the algorithm as a whole.} algorithm \citep{mnih2013playing, mnih2015human} stood out by tackling the challenging task of learning to play Atari arcade games directly from raw pixel images \citep{bellemare2013arcade}, thus successfully applying reinforcement learning to a much higher-dimensional input space.

At its core, the DQN algorithm essentially consists of training a deep neural network parameterized by $\vfparams$ that takes a state $s \in \mathcal{S}$ as input and maps it to the q-value estimate $Q_{\vfparams}(s,a)$ of each action $a \in \mathcal{A}(s)$. This model is trained using the Q-Learning algorithm presented in Section~\ref{sec:model_free_rl} by backpropagating through each layer to correct for the TD error. However, the combined use of bootstrapping, off-policy learning, and function approximators (sometimes called the \textit{deadly triad} \citep{sutton2015introduction}) is known to destabilize RL algorithms. The authors of DQN alleviate this issue by making two main changes to the Q-learning algorithm. 

The first and most important is the use of \textit{experience replay}. For each interaction with the environment, a transition $(s,a,r,s')$ is collected and stored in a replay buffer $\mathcal{D}$. After a fixed number of interactions, a minibatch of transitions is uniformly sampled from the buffer and used to compute the stochastic gradient update. Experience replay allows to improve data efficiency as the collected data can be re-used several times and because sampling across the entire buffer effectively decorrelates the samples used for computing the updates as opposed to using consecutive transitions. It also allows the model to maintain its accuracy in estimating the value of long-past states and actions and to avoid oscillations due to drastic changes in the collected data distribution after a parameter update \citep{mnih2013playing}. The second modification is the use of \textit{target networks} in the Q-learning update \citep{mnih2015human}. Denoted $Q_{\bar{\vfparams}}$, the target network is a copy of the main deep Q-network $Q_{\vfparams}$ that is used to compute the value of the next state and actions $(s', a')$ of the target $y = \RF(s,a) + \gamma \max_{a'\in \mathcal{A}(s')} Q_{\bar{\vfparams}}(s',a')$. The interest lies in the fact that $Q_{\bar{\vfparams}}$ always lags behind $Q_{\vfparams}$, making the target relatively constant for a few updates of the parameters $\vfparams$. This is achieved either by \textit{hard updates} $\bar{\vfparams}\leftarrow \vfparams$ after a fixed number of parameter steps, or more frequently using \textit{soft updates} of the form $\bar{\vfparams}\leftarrow \epsilon \vfparams + (1-\epsilon) \bar{\vfparams}$ with $\epsilon \in [0,1]$.

Finally, several follow-up works have provided improvements to the original DQN algorithm. \citet{hessel2018rainbow} provides a detailed evaluation of some of these. Among them, Double-DQN (DDQN) \citep{van2016ddqn} proposes to reduce the overestimation bias in Q-learning by decoupling the action selection and the state-action evaluation when computing the TD target, yielding $y = \RF(s,a) + \gamma Q_{\bar{\vfparams}}\Big(s', \argmax_{a' \in \mathcal{A}(s')} Q_{\vfparams}(s',a')\Big)$ where $Q_{\bar{\vfparams}}$ is the target network presented above. Prioritized Experience Replay \citep{schaul2015prioritized} aims to sample with higher probability transitions from which the agent can learn the most by assigning a sampling probability $p_i$ to each transition $(s,a,r,s')_i$ proportionally to its TD error ($p_i \propto |\delta_i| + \epsilon$) and uses weighted importance sampling \citep{mahmood2014weighted} to correct for the introduced bias. Dueling networks \citep{wang2016dueling} propose a different network architecture that represents the state value estimate and the advantage estimate separately before recombining them into Q-values to allow all action values to quickly benefit from an updated state estimate. Distributional approaches to deep RL \citep{bellemare2017distributional, dabney2018distributional} aim to learn the approximate value distributions of states and actions rather than their expected value alone, allowing more stable learning and an explicit specification of risk aversion within agents.

\subsection{Deep Deterministic Policy Gradients}

Value-based methods are well suited for discrete control because a policy is implicitly defined by taking the $\argmax$ on the Q-values. In continuous action spaces, this operation comes down to an optimization over $\mathcal{A}(s)$ at every time-step, which is generally computationally prohibitive. We must use policy gradient methods instead. 

One option is to represent the policy using an analytic continuous distribution (e.g., Gaussian) and to learn a mapping from the input state to the parameters of that distribution (e.g., mean and variance) using the (stochastic) policy gradient presented in Section~\ref{sec:model_free_rl}. Another approach is to use the deterministic policy gradient (DPG) formulation that is specifically derived for continuous control \citep{silver2014deterministic}.

Intuitively, the DPG moves the policy parameters in the direction that maximizes the action value function $Q_{\vfparams}$ when averaged over all states. The key element on which the deterministic policy gradient is derived lies in the fact that because the action space is continuous, and assuming that the Q-function is parameterized using a differentiable function approximator $\vfparams$, we can backpropagate the signal from  $Q_\vfparams$ through the selected action $a$ to compute the derivative of $J_R$ w.r.t. the parameters $\piparams$ of the continuous policy $\mu_\piparams$:
\begin{align}
    \grad_{\piparams} J_\RF(\mu_{\piparams}) 
    &\approx \E_{p_{\beta}}[\grad_{\piparams}Q_{\vfparams}(s,a)|_{a=\mu_{\piparams}(s)}] \\
    &= \E_{p_{\beta}}[\grad_{\piparams}\mu_{\piparams}(s)\grad_{a}Q_{\vfparams}(s,a)|_{a=\mu_{\piparams}(s)}]
\end{align}
The critic $Q_{\vfparams}$ is learned using the Q-Learning algorithm described above. But rather than computing the explicit $\argmax$ over actions for the action selection of the TD target, the next action $a'$ is computed using the target policy $\mu_{\bar{\piparams}}$, which is trained to maximize the critic's action value. In this sense, the DPG algorithm can be seen as an approximate Q-learning algorithm that uses a parameterized approximate action maximizer to handle large or continuous action spaces. Importantly, because it eliminates the integral over actions, the deterministic policy gradient does not require importance sampling correction ratios when evaluating it using a distinct (stochastic) behavioral policy $\beta(a|s)$ \citep{silver2014deterministic}, making it an off-policy policy gradient algorithm.

\citet{lillicrap2015continuous} essentially extend DPG with the same techniques used by the DQN algorithm \citep{mnih2015human} (replay buffer, target networks, gradient clipping) to improve stability when used with nonlinear function approximators. The original work also uses batch normalization \citep{ioffe2015batch} to eliminate scale differences between state variables and consequently reduce the need to adjust the hyperparameter for every environment. They call this algorithm Deep Deterministic Policy Gradient (DDPG).

TD3 (Twin Delayed DDPG) \citep{fujimoto2018addressing} further improves on DDPG by introducing three modifications that greatly improve performance. First, inspired by the success of Double DQN (DDQN), the authors empirically show that the overestimation bias also affects actor-critic methods. However, contrary to DDQN, their results suggest that in this framework, the target Q-network is too dependent on the main critic to correct for overestimation. Instead, they propose to train two instances of the critic $Q_{\vfparams_1}$ and $Q_{\vfparams_2}$ (both of which also have a corresponding target network $Q_{\bar{\vfparams}_1}$ and $Q_{\bar{\vfparams}_2}$) and to use the smallest Q-value for their target $y = \RF(s,a) + \gamma \min_{i=1,2} Q_{\bar{\vfparams}_i}(s',a')$. Second, they recommend updating the policy $\mu_\piparams$ at a lower frequency than value networks to allow to obtain better value estimates before taking a policy improvement step. Third, they propose to perturb the target action using random noise $a'=\mu_{\bar{\piparams}}(s') + \epsilon \, , \, \epsilon \sim \text{clip}(\mathcal{N}\big(0,\sigma), -c, c\big)$ to smooth out the value function along the action dimensions and allow bootstrapping from similar state-action value estimates.

\subsection{Maximum Entropy Reinforcement Learning}

Although the optimal value function for a finite MDP is unique, there might exist several optimal deterministic policies. In principle, these optimal deterministic policies could be combined into a single stochastic policy that captures many different modes of optimal behavior. In general, there are several reasons why one would prefer learning a stochastic policy. For example, problems with partial observability might only allow for a stochastic optimal policy. Stochastic policies might also be more robust to adapt to a sudden change in the environment. Finally, they allow for a smooth exploration mechanism, as opposed to $\epsilon$-greedy policies. The Maximum Entropy framework for reinforcement learning (MaxEnt RL) \citep{ziebart2010modeling} aims at learning a policy that maximizes both the expected discounted return and the expected discounted entropy of the policy:
\begin{equation}
    \label{eq:maxent_rl_objective}
    J_{\text{MaxEnt}}(\pi) :=
    \sum_{t=0}^\infty \gamma^t \E_{(s_t, a_t)\sim p_{\pi}}\Big[\RF(s_t, a_t) + \alpha \mathcal{H} \big(\pi(\cdot|s_t)\big)\Big]
\end{equation}
 with $\alpha \geq 0$ and where $\alpha \to 0$ recovers the original RL objective (Equation~\ref{eq:expected_total_discounted_reward}). A similar approach which uses entropy maximization to prevent an early collapse of the policy (i.e., maintain exploration) is sometimes used with policy gradient approaches \citep{o2016combining}. Crucially, in the case of MaxEnt RL (Equation~\ref{eq:maxent_rl_objective}), the entropy term is found inside the expectation in the main objective rather than as a regularization term on the policy updates, which will push the policy not only to maximize its entropy in any given state, but to seek and navigate to states in which high entropy is aligned with high return, thus maximizing the entropy of the entire trajectory.
 
 The MaxEnt RL objective allows for a smooth-equivalent of the Bellman Equations often referred to as the soft-Bellman Equations \citep{haarnoja2017reinforcement}:
 \begin{align}
    \label{eq:soft_bellman_equations_for_v}
    v^{*}_{\text{soft}}(s)
    &= \alpha \log \sum_{a \in \mathcal{A}(s)} \exp\left( \frac{1}{\alpha} q^{*}_{\text{soft}}(s,a) \right) \\
    \label{eq:soft_bellman_equations_for_q}
    q^{*}_{\text{soft}}(s,a)
    &= \RF(s,a) + \gamma \E_{p_\pi}[v^{*}_{\text{soft}}(s_{t+1})]
 \end{align}
 where the $\max$ operator over actions for $v^{*}$ has essentially been replaced by a smooth-max operator (log-sum-exp) which approaches a hard-max as $\alpha \to 0$. The optimal policy w.r.t. the MaxEnt RL objective is proportional to the exponential of the soft q-values and $v_{\text{soft}}$ can be seen as the partition function:
 \begin{align}
    \label{eq:optimal_maxent_policy}
    \pi^*_{\text{MaxEnt}}(a|s) = \exp\Big( \frac{1}{\alpha} \big( q^{*}_{\text{soft}}(s,a) - v^{*}_{\text{soft}}(s) \big) \Big)
    &= \frac{\exp \big(\frac{1}{\alpha} q^{*}_{\text{soft}}(s,a)\big)}{\sum_{a' \in \mathcal{A}(s)} \exp \big( \frac{1}{\alpha} q^{*}_{\text{soft}}(s,a')\big)}
 \end{align}
 where $q^{*}_{\text{soft}}(s,a) - v^{*}_{\text{soft}}(s)$ is also referred to as the soft-advantage function. This optimal policy puts equal probability mass on two actions yielding the same expected return and exponentially less mass as the advantage decreases.
 
In the discrete control setting, we see from Equation~\ref{eq:optimal_maxent_policy} that the optimal policy can simply be represented by a softmax over the optimal soft q-function. Thus, we can learn a policy by parameterizing the soft q-function only using a function approximator $Q_{\vfparams}^{\text{soft}}(s,a)$ and evaluating $V_{\vfparams}^{\text{soft}}(s)$ exactly using Equation~\ref{eq:soft_bellman_equations_for_v} to produce a one-sample estimate of the target from Equation~\ref{eq:soft_bellman_equations_for_q}. This model can then be learned by minimizing the error in a DQN-like fashion, an approach that can be referred to as Soft Q-Learning \citep{haarnoja2017reinforcement}. In the continuous control case, the summation of actions in Equations~\ref{eq:soft_bellman_equations_for_v}~and~\ref{eq:optimal_maxent_policy} turn into integrals and one cannot simply represent the policy using a q-function only. In this case, the Soft Actor-Critic (SAC) algorithm \citep{haarnoja2018soft} can be used. Using this approach, the policy $\pi_\piparams$, the soft q-function $Q_{\vfparams}^{\text{soft}}$ and the soft value function $V_{\psi}^{\text{soft}}$ are parameterized separately. $Q_{\vfparams}^{\text{soft}}$ and $V_{\psi}^{\text{soft}}$ are trained by minimizing Bellman residuals, whereas $\pi_\piparams$ is trained to maximize a DDPG-like policy gradient using the reparameterization trick. Note that since the trade-off between maximizing entropy and return depends on the scale of the reward function, \citet{haarnoja2018soft} also propose an entropy temperature adjustment method for learning $\alpha$ automatically in order to avoid having to tune this hyperparameter when applying the algorithm to a different task.

\subsection{Generative Flow Networks}

Generative Flow Networks (GFlowNets, GFNs) are a class of generative models originally designed for compositional object generation \citep{bengio2021flow}. In this setting, the generated objects are assembled step by step by taking actions corresponding to adding new elements to the current state. Although closely linked to energy-based models and Monte Carlo Markov Chain sampling methods \citep{bengio2023gflownet}, the framework learns to model a distribution from a reward function, which also positions it as a suitable approach for some reinforcement learning problems.

GFlowNets generally operate on a particular subclass of MDP defined by some key characteristics. First, the action space is discrete, allowing for a finite set of actions in each state. Second, the state-space forms a Directed Acyclic Graph (DAG), meaning that all states must always eventually lead to a terminal state, without the possibility for any loop, and the transition function is deterministic (i.e., each action $a_t$ leads to only one successor state $s_{t+1}$). Finally, the reward function is positive in terminal states $R(s_T) > 0$ and null on all other states $R(s)=0$. Note that several tasks of interest such as molecular generation \citep{bengio2021flow}, causal discovery \citep{deleu2022bayesian} and sequence generation \citep{jain2022biological} present this type of MDP. 

The central property of GFlowNets lies in the terminal state distribution that characterises the target policy. Here, the model seeks to learn a policy $\pi$ such that the probability that a trajectory ends in a particular terminal state $s_T$ is proportional to its reward:
\begin{equation}
\label{eq:gfn_property}
    \pi^*_{\text{GFN}} \quad : \quad p_{\pi^*_{\text{GFN}}}(s_T) \propto R(s_T)
\end{equation}
Similarly to MaxEnt RL, this behavior is desirable as it allows to capture \textit{all} of the modes defined by the reward function rather than uncovering a single high-performing behavior using traditional RL methods. However GFlowNets operate from a different framework. Here, we model the problem as learning a flow of probability particles that operates in the state space $\mathcal{S}$. The flow starts from a unique initial state $s_0$ and spreads itself across the transitions $s\rightarrow s'$ until reaching a terminal state $s_T$. It is constrained by border conditions which state that the amount of flow to a terminal state must equal its reward $R(s_T)$, and that the total flow $Z$ in the network (departing from $s_0$) must equal the sum of rewards $Z:=\sum_{s_T}R(s_T)$. Enforcing these constraints alongside the principle of \textit{conservation of flow} throughout the network allows to recover a policy that samples terminal states proportionally to their reward. 

Several objectives have been shown to be sufficient to meet these conditions \citep{madan2023learning}. The most common is the Trajectory-Balance objective \citep{malkin2022trajectory}, which states that any given trajectory $\tau$ should yield the same probability when going forward as when going backward in the MDP which introduces the corresponding forward and backward policies $\pi^{F}$ and $\pi^{B}$. Recalling the MDP is deterministic, from Equation~\ref{eq:probability_of_trajectory} we have:
\begin{align}
    P_0(s_0)&p_{\pi^F}(\tau|s_0) = \mathbb{P}(s_T)p_{\pi^B}(\tau|s_T) \\
    &\Leftrightarrow \quad \prod_{t=0}^{T-1} \pi^{F}(s_{t+1}|s_t) = \frac{R(s_T)}{Z} \prod_{t=0}^{T-1} \pi^{B}(s_t|s_{t+1})
\end{align}
with $P_0(s_0)=1$. Concretely, both $\pi^F$ and $\pi^B$ can be parameterized by neural networks while $Z_\theta$ can be kept as a free parameter, and this equation can be turned into a loss function $L_{TB}$ using a squared log-ratio:
\begin{equation}
    L_{TB}(\theta) := \left(\log \frac{Z_\theta \, \prod_{t=0}^{T-1} \pi_\theta^{F}(s_t|s_{t+1})}{R(s_T) \prod_{t=0}^{T-1} \pi_\theta^{B}(s_{t+1}|s_t)}\right)^2
\end{equation}
As in MaxEnt RL, the result is a stochastic policy which seeks to capture all modes of the reward function.  However, both methods lead to different behaviors as the likelihood of sampling each mode is not the same. MaxEnt RL converges to a policy which samples trajectories proportionally to their exponential return along the entire path, whereas GFlowNets samples trajectories proportionally to their final reward. These behaviors are equivalent when the state space is represented as a tree (i.e., there is only one path leading to any terminal state) but differ in general DAGs where multiple paths may lead to the same terminal state. In this case, the MaxEnt RL objective of maximizing the \textit{diversity of trajectories} will favor final states $s_T$ that can be reached from many paths, while a GFlowNet will sample terminal states strictly based on their terminal rewards, thus enforcing a \textit{diversity of outcomes}. Whether diversity of trajectories or diversity of outcomes is preferable will depend on the context. For example, a policy for character control in a video game might benefit from trajectory diversity to embody all possible styles of locomotion. In contrast, in a drug discovery application where only the quality of the finished molecule matters, the diversity of outcomes would be preferred. All in all, GFlowNets add another tool to a growing collection of deep reinforcement learning methods for complex sequential decision making in the real world.

% ---------------------------------------------
% ---------------------------------------------
% ----- SECTION
% ---------------------------------------------
% ---------------------------------------------
\chapter[LITERATURE REVIEW]{\\LITERATURE REVIEW}\label{chap:lit_review}

The reward function is a central component of reinforcement learning algorithms. It defines the task to be solved in a given MDP. Designing a reward function is traditionally a manual and iterative process, guided by the user's intuition and refined through trial and error \citep{knox2023reward, booth2023perils, hayes2022practical}. Since each iteration requires training an RL agent to convergence, this process is slow and costly, posing a major challenge for RL deployment in real-world applications. Numerous studies have focused on the issue of reward specification, creating a range of paradigms aimed at guiding exploration and ensuring alignment through advanced design techniques for the reward function. In this chapter, we review the most established areas of reinforcement learning that address this challenge. We group these methods into two distinct categories. First, in Section~\ref{sec:reward_composition}, we present Reward Composition approaches, which take account of the multi-faceted nature of the reward function and provide it as a set of reward components to the learning algorithm. Then, in Section~\ref{sec:reward_modeling}, we cover Reward Modeling methods, which leverage various supervision signals to learn the reward function instead of explicitly specifying it.

\section{Reward Composition}
\label{sec:reward_composition}

\begin{figure}
    \centering
    \includegraphics[width=0.9\textwidth]{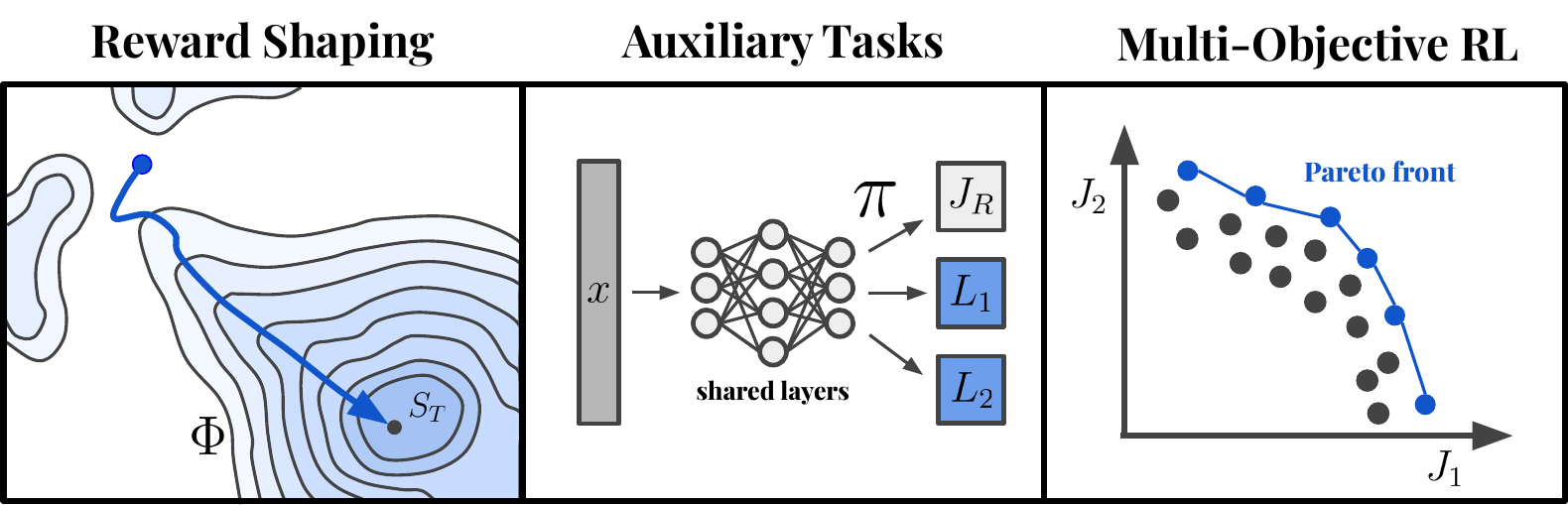}
    \caption[Overview of reward composition strategies]{Overview of reward composition strategies. Reward composition consists in integrating different components into a reward function. (Left) Potential-based reward shaping aims at hinting the agent toward its goal by augmenting its sparse reward function with a denser signal defined from a potential function $\Phi$ on the state space. (Middle) Auxiliary tasks (here $L_1$ and $L_2$) are optimised simultaneously with the main objective $J_R$ to help the agent learn a useful representation of the input $x$. (Right) Multi-objective RL treats each reward component as a distinct objective and uses various techniques to select for different solutions on the Pareto front.}
    \label{fig:reward_composition}
\end{figure}

% The Reward Hypothesis states that ``\textit{all of what we mean by goals and purposes can be well thought of as maximization of the expected value of the cumulative sum of a received scalar signal (reward)}'' \citep{sutton2018rlTextbook}. 
The Reward Hypothesis, posited by \citet{sutton2018rlTextbook}, states that any goals that we wish an agent to accomplish can be thought of as the maximization of the expected total reward.
In other words, a sufficiently complex scalar reward function \textit{could}, in principle, be used to specify any conceivable task. While the exactitude of this hypothesis relies on careful theoretical assumptions \citep{bowling2023settling}, the large number of RL benchmark environments developed for research in the last decade \citep{todorov2012mujoco, bellemare2013arcade, brockman2016openai, juliani2018unity, dosovitskiy2017carla, vinyals2017starcraft, wydmuch2018vizdoom} and the numerous applications of RL in the real world \citep{deepmind2016cooling, yu2019deep, bellemare2020autonomous, shahidinejad2020joint} represent strong evidence that rewards can indeed capture a very wide variety of tasks. Nevertheless, despite their scalar nature, most reward functions usually encompass a number of distinct elements that are simply weighted against each other before being aggregated into a single number. For example, a self-driving car should aim to reach its destination while respecting traffic laws and avoiding obstacles. A grasping robot should learn to move objects around while minimizing its energy consumption and without damaging its environment. 

\textit{Reward composition} here refers to the idea of assembling a reward function from multiple distinct components. In addition to making the agent's behavior more interpretable \citep{anderson2019explaining}, a component-centric treatment of the reward function allows the utilization of these components for various purposes, such as aiding exploration, enhancing alignment, or balancing diverse goals. In the next sections, we survey prior work that frame these components either as shaping rewards, auxiliary tasks, or competing objectives.

% -----------------------
%   SECTION
% -----------------------

\subsection{Potential-Based Reward Shaping}
\label{sec:reward_shaping}

The challenge of specifying good reward functions is as old as reinforcement learning. One of its central dilemmas is that of sparse vs. dense rewards. Sparse rewards are often natural to specify but difficult to learn from. For example, we could reward an agent only when reaching its final destination in a navigation task. However, when learning from a sparse reward only, an agent needs to explore its environment in the total absence of feedback until it happens to stumble on the solution. Only then can it learn from this experience and reinforce the successful behavior. A denser reward signal could potentially help guide the agent towards the solution. For example, we could give a smaller reward when the agent navigates closer to its destination. While this approach can dramatically improve the sample efficiency of the algorithm, a naive implementation of such breadcrumbs can also lead to degenerate solutions and reward hacking by exploiting cycles in the reward function \citep{amodei2016concrete}. For example, the agent could learn to run in circles to repeatedly get rewarded for ``making progress'' without ever actually reaching the goal \citep{randlov1998learning}.

\textit{Potential-Based Reward Shaping} (PBRS) \citep{ng1999policy} is a reward composition strategy allowing to augment a sparse reward function $R$ with a denser signal $R_{\text{shaping}}$ without changing the set of optimal policies. It eliminates the risk of reward cycles by restricting the form of the shaping reward to the discounted difference between the potential of the next state $\Phi(s')$ and that of the current state $\Phi(s)$:
\begin{equation}
      R' := R + R_{\text{shaping}} \, \, , \quad R_{\text{shaping}} := \gamma \Phi(s') - \Phi(s) \, \, , \quad \argmax_{\pi \in \Pi} \, \, J_{R'}(\pi) = \argmax_{\pi \in \Pi} \, \, J_{R}(\pi)
\end{equation}
The potential function $\Phi$ can be thought of as a topographic map guiding the agent towards higher peaks of the reward function (see Figure~\ref{fig:reward_composition}). Since it does not depend on actions, the cumulative discounted return for $R_{\text{shaping}}$ is independent of $\pi$. Consequently, $R_{\text{shaping}}$ does not affect the ordering of the Q-values for $R$ and the set of optimal policies under $J_R$, but allows the agent to reach the critical regions of the MDP with fewer exploration steps \citep{laud2003influence}. Empirically, this approach has been shown to greatly accelerate learning and is still being used in challenging high-dimensional environments \citep{berner2019dota}.

In subsequent work, \citet{wiewiora2003principled} extend potential-based shaping to potential functions of both states and actions $\Phi(s,a)$ and shows that using it for shaping reward is equivalent to initializing the Q-values of the main reward $R$ to this potential. \citet{devlin2012dynamic} instead augment potential functions with a time dependence $\Phi(s,t)$ to allow dynamic adaptation of the shaping reward. \citet{harutyunyan2015expressing} proposes to use the value function of any arbitrary shaping signal as a potential over states to be used for potential-based reward shaping, allowing the PBRS framework to be extended to a larger family of shaping functions. Finally, \citet{hu2020learning} cast dynamic reward shaping as a bilevel optimization problem in which they automatically learn a weighting coefficient to reduce the impact of harmful reward shaping once detected.

% -----------------------
%   SECTION
% -----------------------

\subsection{Auxiliary Tasks in RL}
\label{sec:auxiliary_tasks}

Reinforcement learning is a very general approach to artificial intelligence. \citet{silver2021reward}  even argue that the act of maximizing a reward signal by trial and error in a rich environment could be sufficient to develop any attribute of intelligence that is required to solve a particular task. However, this process can be highly inefficient due to the sparsity of the reward function and the informational complexity of the environment \citep{yu2018towards}. Although potential-based reward shaping offers a principled solution to the problem of reward sparsity, it is limited to specific functional forms \citep{harutyunyan2015expressing}. A common alternative consists of using \textit{auxiliary tasks} to leverage additional learning signal from the environment. An auxiliary task often involves predicting quantities that are useful to complete the main control task. One or even several such tasks can be optimized simultaneously with the RL objective $J_R$ by scalarizing them using fixed weighting coefficients $\lambda_k$ for each of the $K$ auxiliary losses $L_k$:
\begin{equation}
    J_{tot}(\pi) := J_R(\pi) - \sum_{k=1}^K \lambda_k L_{k}(\pi)
\end{equation}
This is typically implemented by training a single neural network model sharing the first layers across all tasks while being equipped with different heads for the policy and the auxiliary task predictions (see Figure~\ref{fig:reward_composition}). The shared encoding layers thus benefit from the learning signal of both objective types, allowing to perform informative updates even in the absence of external reward. Contrary to model-based approaches \citep{moerland2023model}, which seek to learn the transition distribution of the environment to perform planning, auxiliary tasks are only applicable with deep parameterisations and seek to improve the representations learned by an agent \citep{vincent2008extracting}, 
% to introduce inductive biases to align the agent's behavior \citep{leike2018scalable} 
or to accelerate the optimization process by helping avoid large portions of the policy space \citep{gupta2022unpacking}.

One of the first uses of auxiliary tasks is presented in the work of \citet{suddarth1990rule}, which used so-called \textit{hints} to accelerate the learning of neural networks trained to solve simple logical problems. In reinforcement learning, \citet{sutton2011horde} introduces generalized value functions (GVFs), which extend the concept of state-action value function to measuring different properties of a policy $\pi$ beyond its return $J_R(\pi)$ on the main task. The idea has since been applied more generally to deep reinforcement learning in a variety of contexts, often achieving important improvements in learning speed and performance. For example, in addition to its main task, \citet{jaderberg2016reinforcement} train an agent to correctly predict incoming rewards and maximize perceptual changes in its environment. \citet{shelhamer2016loss} and \citet{laskin2020curl} use self-supervised successive-states prediction and contrastive losses to learn useful representations for the policy and value function. \citet{mirowski2016learning} use depth and loop-closure prediction as auxiliary objectives for navigation tasks. \citet{lample2017playing} task the agent to predict the presence of enemies or weapons when learning to play a video game. \citet{kartal2019terminal} train the agent to predict whether it is close to the end of the episode. \citet{fedus2019hyperbolic} predict the return for multiple time horizons. \citet{hernandez2019agent} use action prediction as an auxiliary task in multi-agent settings. \citet{song2021multimodal} perform velocity estimation to improve the representations learned by a policy controlling mobile indoor robots.

Interestingly, auxiliary tasks seem to provide benefits that exceed the information contained in the target of such task. For example, \citet{mirowski2016learning} compare predicting depth as an auxiliary task with simply providing the agent with a depth-map as additional input, and report significantly improved performance with the former approach, supporting the wider effect of representation learning provided by the act of solving these tasks, as opposed to being provided with their answer. A growing body of work focuses on uncovering the mechanisms by which auxiliary tasks are so effective in improving sample efficiency in RL, investigating their role in preventing overfitting \citep{dabney2021value} and representation collapse \citep{lyle2021effect}.

Finally, while scalarization and parameter sharing are often used to integrate auxiliary tasks in the training pipeline, alternative approaches have been developed to address the challenges of competing gradient updates and learning instabilities \citep{teh2017distral, yu2020gradient, rosenbaum2017routing}. In particular, some works investigate the automatic adaptation of task coefficients to automatically detect and tune down the influence of auxiliary tasks that would become harmful to the main objective \citep{du2018adapting, lin2019adaptive}. 

% -----------------------
%   SECTION
% -----------------------

\subsection{Multi-Objective RL}

As seen previously, some problems such as playing chess can be broken down into a main sparse objective, e.g. winning the game, and denser rewards which are often correlated with the main objective and meant to guide the agent towards successful policies, e.g. capturing opponent pieces. However, for other tasks, the target behavior encapsulates truly distinct or even conflicting objectives which are equally important and should not be freely traded-off by the agent \citep{vamplew2022scalar}. For example, avoiding obstacles in a navigation task is not enforced merely to guide the agent towards reaching its goal; both preserving the integrity of its surroundings and reaching its target location are nonnegotiable criteria for a successful policy. The question of how to best handle multiple objectives has been identified as one of the main limitations preventing RL from being applied in more real world domains \citep{dulac2021challenges}, and multi-objective RL (MORL) algorithms are designed specifically to address this question. 

At their core, multi-objective approaches treat each reward component as a distinct criterion. They are formalized as Multi-Objective MDPs (MOMDPs) \citep{roijers2013survey}, defined by the tuple $(\ptrans, \pzero, \mathcal{S}, \mathcal{A}, \{\RF\}_{k=1}^K, \gamma)$. For the most part, they are identical to their regular MDP counterpart, defined in Section~\ref{sec:MDPs}. However, instead of a single reward function, we now have a set of $K$ reward functions $\{\RF_k\}_{k=1}^K$ that map a state-action pair to a \textit{vector} of rewards. MOMDPs thus generalize regular MDPs since $K=1$ brings us back to the single-objective case. Each component is defined as $\RF_k:\mathcal{S} \times \mathcal{A} \rightarrow [r_{k\text{-min}}, r_{k\text{-max}}]$ and $J_{\RF_k}(\pi)$ represents the expected total discounted reward for the reward component $k$ as defined in Equation~\ref{eq:expected_total_discounted_reward}. The problem now becomes:
\begin{equation}
    \argmax_{\pi \in \Pi} \quad (J_{\RF_1}, \dots, J_{\RF_K})
\end{equation}
Importantly, with $K > 1$, the notion of \textit{optimality} is not directly transferable to the multi-objective case. In single-reward MDPs, the RL objective induces a total order over policies. However, with multiple objectives, a policy $\pi'$ may perform better than $\pi$ on $J_{\RF_1}$ but worse on $J_{\RF_2}$, thus requiring a different criterion to compare potential solutions. A policy $\pi'$ is said to \textit{dominate} $\pi$ if it is superior on at least one objective and at least equal on the others:
\begin{equation}
    \pi' \succ \pi \quad \Leftrightarrow \quad
\begin{cases}
  \forall \, k: & J_{\RF_k}(\pi') \geq J_{\RF_k}(\pi) \\
  \exists \, k*: & J_{\RF_{k*}}(\pi') > J_{\RF_{k*}}(\pi)
\end{cases}
\end{equation}
A solution $\pi$ is said to be \textit{Pareto-optimal} if there are no other solution that dominates it, and the set of Pareto-optimal solutions forms the \textit{Pareto front} in objective space (see Figure~\ref{fig:reward_composition}). In most problems, some of the objectives will conflict and an optimal solution $\pi^*$ which dominates all other policies will not exist; we will have to settle for a solution that strikes an acceptable trade-off between the objectives. MORL algorithms can generally be categorized into \textit{single-policy} and \textit{multi-policy} methods \citep{vamplew2011empirical} depending on whether they seek to generate a single policy or an approximation of the entire Pareto front. 

Single-policy approaches seek to find the policy that best captures the desired trade-off. They typically assume that the \textit{utility function} $u$ of the user is known and use it to combine the objectives and recover a total order over the policies in $\Pi$ \citep{hayes2022practical}. 
% Note that for nonlinear utility functions, this aggregation can be done inside the expectation i.e. $J_{u(R_1, \dots, R_K)}(\pi)$, or outside of it i.e. $u\big(J_{R_1}(\pi), \dots, J_{R_K}(\pi)\big)$. These objectives are respectively called the expected scalarized return and the scalarized expected return. 
A simple and popular approach to utility-based MORL is to extend existing RL algorithms by learning a set of value functions $\{Q_k^\pi\}_{k=1}^K$ and adapting the action selection process by taking the action that maximizes utility. \citet{aissani2008efficient} employ this technique with the SARSA algorithm and a linear utility function that leads to an action selection of the form: $\argmax_a \sum_k Q_k^\pi(s,a)$. To capture preferences on some of the objectives, linear utility functions can employ different weighting coefficients $\{w_k\}_{k=1}^K \text{with} \sum_k w_k=1$ \citep{castelletti2002reinforcement}. However, the effective coverage of this approach is limited to only the convex parts of the Pareto fronts \citep{das1997closer, vamplew2008limitations}. To overcome this limitation, other methods employ nonlinear utility functions \citep{van2013hypervolume, van2013scalarized}, but nonlinear scalarization is difficult to combine with value-based RL algorithms, as they break the additivity property required for Q-decomposition \citep{russell2003q}. Therefore, these approaches have been extended to policy-gradient actor-critic architectures \citep{siddique2020learning, reymond2023actor}.

In the absence of a known utility function, other methods recover a total order over the policies in $\Pi$ by assuming a hierarchy of priority among the objectives to optimize \citep{gabor1998multi}. A \textit{lexicographic ordering} implies that the ordinality of the objectives now reflects their rank in importance. In the same spirit as Asimov's famous ``laws of robotics'', this means that an optimal policy $\pi^*$ should maximize $J_{R_1}$ in priority, \textit{then} $J_{R_2}$, \textit{then} $J_{R_3}$, and so on. 
Let us define $\Pi_k^*$ the set of optimal policies w.r.t. $J_{R_1}$ through $J_{R_k}$. 
A lexicographically optimal policy is defined as a policy that maximizes each objective in lexicographic order without worsening the previous ones:
\begin{equation}
    \pi^* := \argmax_{\pi \in \Pi_{K-1}^*} J_{R_{K}}(\pi) \quad , \quad \Pi_{k}^* := \left\{ \pi: J_{R_{k}}(\pi) = \max_{\pi' \in \Pi_{k-1}^* } J_{R_{k}}(\pi')\right\} \quad , \quad \Pi_0^* := \Pi
\end{equation}
where the recursion occurs in the constraint established by the optimal policy-sets $\Pi_k^*$. Since this hierarchy severely restricts the set of remaining optimal policies at each step, a strong lexicographic ordering would not scale to a large number of conflicting objectives. Instead, it is common to relax this ordering by introducing slack variables to specify how much we can deviate from $J_{R_k}^*$ to improve $J_{R_{k+1}}$ \citep{wray2015multi, skalse2022lexicographic, pineda2015revisiting}. This method has been applied in various contexts, including autonomous driving \citep{li2018urban} and multi-agent RL \citep{hayes2020dynamic}.

Another related single-policy approach called Constrained Reinforcement Learning (CRL) picks one of the objectives as the main reward to optimize and defines a set of constraints using the other reward functions. While several types of constraints have been used (probabilistic, instantaneous) \citep{liu2021policy}, the most common approach is to define a cumulative constraint on each objective $J_{R_k}$ for $k>1$ by specifying thresholds $d_k$. Formally, the problem becomes:
\begin{align}
    \pi^* := \argmax_{\pi \in \Pi} J_{R_1}(\pi) \quad \text{s.t.} \quad  J_{R_k}(\pi) \geq d_k \, , \, \, k = 2 , \dots , K
\end{align}
A policy that satisfies the constraint set is said to be a \textit{feasible} policy, and we seek to find the best-performing feasible policy $\pi \in \Pi_C$ over $J_{R_1}$. Note that, while equivalent, the constraints are typically re-labeled as cost functions $C_k$ and their threshold revsersed i.e. $J_{C_k}(\pi) \leq d_k$ \citep{altman1999constrained}. A popular approach to solving such problems is to use a Lagrangian relaxation \citep{bertsekas1997nonlinear} to fold the constraints and $J_{R_1}$ into a single objective addressed by a bilevel optimization process \citep{tessler2018reward, chow2017risk, liang2018accelerated, stooke2020responsive, bohez2019value}. Other alternatives based on trust-regions \citep{achiam2017constrained, yang2020projection}, Lyapunov functions \citep{chow2018lyapunov, chow2019lyapunov} and the interior-point method \citep{liu2020ipo} have also been proposed for Safe RL applications in which respecting the constraints is important throughout the entire search.

In general, the choice of approach will depend on which formulation best captures the true preferences of the user. Some problems are naturally formulated with a hierarchy of importance over objectives, while for others, a distinction between objectives to maximize and constraints to satisfy makes the most sense. Finally, when the user is unsure about their own preferences, multi-policy methods can be employed. These approaches focus on generating a variety of solutions striking different trade-offs on the Pareto front to allow the user to select one of these options later on \citep{hayes2022practical}. Inner-loop methods aim at maintaining a set of nondominated policies to learn about all Pareto-optimal solutions simultaneously \citep{barrett2008learning, iima2014multi, van2014multi, reymond2019pareto, li2020deep}. In contrast, outer-loop methods generate multiple solutions by simply running single-policy algorithms in sequence while varying the parameters of the utility function, the ordering of the objectives, or the thresholds of the constraints \citep{parisi2014policy, mossalam2016multi, xu2020prediction}. Some methods also condition the policy with these preferences to have access to all the learned policies and share their representation in a single model \citep{abels2019dynamic}. Ultimately, multi-policy approaches remain more computationally costly but offer additional flexibility in terms of the proposed solutions.

% -----------------------
%   SECTION
% -----------------------

\section{Reward Modeling}
\label{sec:reward_modeling}

Despite the variety of tools and approaches presented in Section~\ref{sec:reward_composition} that help practitioners define safe and efficient reward functions for RL, many take the position that, for complex tasks and environments, manually specified reward functions are doomed to be incomplete and underspecified \citep{ibarz2018reward, hadfield2017inverse, leike2018scalable}. Instead, they advocate for \textit{learning} the reward function from human supervision, a radically different strategy which can be referred to as \textit{reward modeling}.

\begin{figure}
    \centering
    \includegraphics[width=0.9\textwidth]{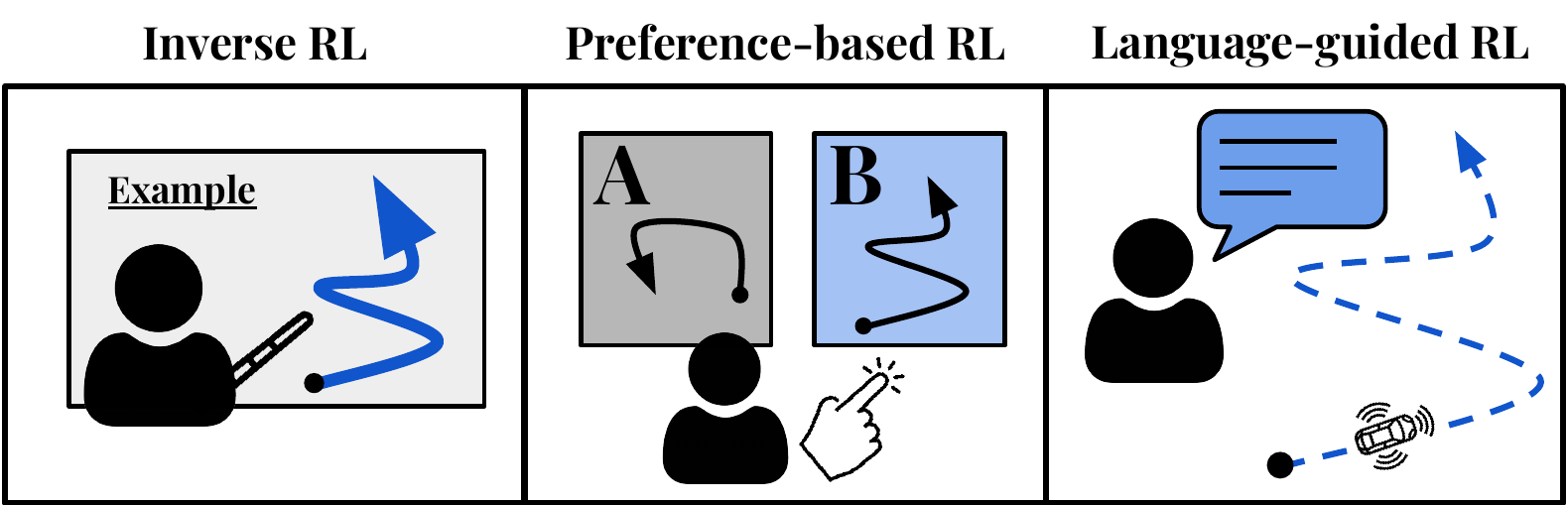}
    \caption[Overview of reward modeling strategies]{Overview of reward modeling strategies. Reward modeling consists in learning a model of the reward function from human supervision in an attempt to capture the true intentions of the task designer. (Left) In Inverse RL, a model of the reward function is learned from expert demonstrations. (Middle) In preference-based RL, the reward model is learned from human preferences over pairs of examples. (Right) In language-guided RL, a reward model can be trained to predict whether an agent's behavior is in accordance with a language command.}
    \label{fig:reward_modeling}
\end{figure}

The main benefit of such approaches is that they enable a definition of the task that is not subject to arbitrary scaling choices for the numerical value of the reward, the shaping signal, and the other components describing the desired behavior. Instead, reward modeling seeks to leverage different forms of human input to model a parameterized reward function $r_\rparams$, which will then lead a downstream RL algorithm to learn the intended behavior \citep{jeon2020reward}. In the next sections, we review the literature on reward modeling from three types of supervision signal: expert demonstrations, human preferences, and natural language (see Figure~\ref{fig:reward_modeling}).

% -----------------------
%   SECTION
% -----------------------

\subsection{Inverse RL}

Reinforcement learning (RL) aims to learn a behavior policy from a reward function. Conversely, \textit{inverse} reinforcement learning (IRL) aims to learn a reward function from a demonstrated behavior. 
More specifically, starting from a finite set of $N$ expert trajectories $\mathcal{D}_E:=\{\tau^{(i)}\}_{i=1}^N$, the goal is to learn a parameterized reward function $r_\rparams$ for which the expert is uniquely optimal, and then to use it to train an agent to behave like the expert by running a reinforcement learning algorithm on this learned reward function.

The IRL problem was first introduced by \citet{russell1998learning}. An algorithm for this problem generally follows this general procedure: starting from a first estimate of the true reward function, we need to iteratively (1) solve for a policy $\pi$ which is optimal under $J_{r_\rparams}$ and (2) modify our reward estimate $r_\rparams$ to minimize the distance between the learned policy's behavior and expert behavior inferred from the demonstration set $\mathcal{D}_E$. This paradigm presents two important challenges. First, to obtain $\pi$ from our current estimate $r_\rparams$, one needs to solve a complete reinforcement learning problem which must be repeated at each step of the IRL algorithm. Second, there are generally a large number of reward functions that could explain the demonstrated behavior (including degenerate solutions such as $r_\rparams(s,a)=0 \, \, \forall \, s,a$). Foundational works in IRL handle the ambiguity of the solution set in different ways, and most can be categorized either as maximum margin methods, maximum entropy methods, and Bayesian approaches \citep{adams2022survey, arora2021survey}.

Maximum-margin methods aim at solving the ambiguity problem by maximizing a margin which describes how well the learned reward $r_\rparams$ explains the demonstrated behavior compared to any other policy \citep{ng2000algorithms, ratliff2009learning, silver2008high, ratliff2006maximum}. In other words, it seeks to make the expert as \textit{uniquely} optimal as possible. An example of such margin could be, for each state, the difference between the value of the action from the demonstration set $a^*$ and any other action $a$:
\begin{equation}
    \argmax_{r_\rparams} \quad \sum_{s \in \mathcal{S}} \Big( \hat{Q}(s,a^*) - \max_{a \in \mathcal{A}(s) \setminus \{a^*\}} \hat{Q}(s,a) \Big)
\end{equation}
Apprenticeship learning \citep{abbeel2004apprenticeship, abbeel2005exploration} is a particularly influential method for maximum margin optimisation which seeks to match the feature expectation of the demonstrated behavior. 

Another approach to tackle the ambiguity of the rewards is the maximum entropy IRL framework \citep{ziebart2008maximum}, which states that the probability of a trajectory should be proportional to its cumulative rewards:
\begin{equation}
    \mathbb{P}(\tau|r_\rparams) \propto \exp \left(\sum_{(s,a)\in \tau} r_\rparams(s,a)\right)
\end{equation}
In other words, trajectories generating the same return under $r_\rparams$ should be equally likely under $\pi$, whereas a trajectory with higher return should be exponentially more likely, with the trajectories from the demonstration set $\mathcal{D}_E$ be the most probable of all.
The goal here is to obtain a policy that acts as randomly as possible, while maximizing the reward estimate $r_\rparams$ so that the solution commits as little as possible to any one possible reward function. To achieve this property, the learned reward function is often extended with an entropy term when training the policy $\pi$. Entropy over trajectories is often used, but extensions of this framework include the use of causal entropy \citep{ziebart2010modeling} or relative entropy with a baseline policy \citep{boularias2011relative}.

Finally, bayesian IRL \citep{ramachandran2007bayesian} consists in capturing the distribution over all possible candidate reward functions which explain the expert behavior:
\begin{equation}
    \mathbb{P}(r_\rparams|\tau) \propto \mathbb{P}(\tau|r_\rparams) \mathbb{P}(r_\rparams) \quad , \quad \tau \sim \mathcal{D}_E
\end{equation}
While allowing to handle the reward ambiguity issue in a principled way, the prior distribution $\mathbb{P}(r_\rparams)$ must be carefully selected based on the expected properties of the true reward function \citep{arora2021survey}. Different parameterisations have been explored for likelihood $\mathbb{P}(\tau|r_\rparams)$, such as Boltzmann distributions \citep{choi2011map}, which require computationally expensive MCMC sampling methods to estimate the normalization constant or Gaussian processes \citep{levine2011nonlinear} that limit the expressivity of the model by depending on engineered features.

The reliance of early methods on linear combinations of predefined reward features or on simple nonlinear parameterisations prevented their applicability to higher-dimensional problems. More recent approaches \citep{wulfmeier2015maximum, finn2016guided} leverage deep function approximators to model complex nonlinear relations between the state features and the reward model. They also circumvent the need to fully optimize the policy in the inner loop of the reward optimization by swapping the two optimisation loops or by considering partial updates of the policy for each reward iteration. In particular, Adversarial Inverse Reinforcement Learning (AIRL) \citep{fu2017learning} take inspiration from both the adversarial imitation learning framework from \citet{ho2016generative} and the potential-based reward shaping from \citet{ng1999policy} to learn robust reward functions from expert demonstrations. The idea is to train a discriminator $D_{\rparams}$ to classify whether a given transition has been generated by the agent policy $\pi_\piparams$ or the expert policy $\pi_E$, while parameterizing it so that the learned reward implements a potential function $h_\omega$ over states:
\begin{align}
\label{eq:gail_objective}
    \min_{\piparams} \,  \, \max_{\rparams} \, \, &\E_{d_{\pi_E}}\big[\log D_{\rparams}(s,a,s')\big] + \E_{d_{\pi_{\piparams}}}\big[\log(1 - D_{\rparams}(s,a,s'))\big] \\
    & \text{with} \quad D_{\rparams}(s,a,s') := \frac{\exp\big(f_\rparams(s,a,s')\big)}{\exp\big(f_\rparams(s,a,s')\big) + \pi_\piparams(a|s)} \\
    & \text{and} \quad f_\rparams(s,a,s') := \underbrace{r_\rparams(s,a)}_{\text{main reward}} + \underbrace{\gamma h_\omega(s') - h_\omega(s)}_{\text{shaping reward}}
\end{align}
These approaches seek to allow for more expressive reward modeling capabilities and improve the generalisation properties of these algorithms by providing negative examples to the learned policy. \citet{ghasemipour2020divergence} provides a divergence-based classification of recent progress in this field.

% -----------------------
%   SECTION
% -----------------------

\subsection{Preference-based RL}

%intro
Preference-based RL (PbRL) is a paradigm for learning a policy from non-numerical feedback using reinforcement learning \citep{wirth2017survey}. The idea of learning a reward function from qualitative feedback stems from the observation that it is often easier to judge the quality of a solution than to produce it ourselves; a concept that has been fundamental to the field of computational complexity \citep{fortnow2009status}. Indeed, since expert demonstrations are expected to be optimal, providing them may be expensive and requires the human supervisor to have great proficiency at that task. Instead, preference-based RL takes a different approach by asking a human to simply compare different outcomes and using this ranking as a supervision signal, thus providing feedback to the agent without having to fully specify a reward function (regular RL) or to produce a complete set of demonstrations (inverse RL).

%rankings
The simplest approach for collecting human preferences consists of presenting the examiner with a set (or even a single pair) of example trajectories and asking them to provide a ranking from the least adequate to the best example. 
% Although ranking-based methods offer only one bit of information per label \citep{daniel2014active}, they are generally preferred to numerical feedback ratings  \citep{thomaz2008teachable} and quickly become as puzzling to provide as a regular reward function. 
Despite its simplicity, qualitative feedback in the form of rankings has been found to be sufficient to specify a variety of control problems. Theoretical analysis of task specification in RL even pinpoint the very generic idea of ``goals'' as a ``a binary preference relation expressing preference over one outcome over another'' \citep{bowling2023settling}, suggesting that enumerating preferences between pairs of candidates in the set of all possible solutions may be sufficient to specify any task \citep{abel2021expressivity}. 
% History up until Christiano
The relationship $\tau_i \succ \tau_j$ indicates that the trajectory $\tau_i$ is strictly preferred over $\tau_j$. In PbRL, the goal is to find a policy $\pi^*$ that maximizes the difference in likelihood for trajectories $\tau_i, \tau_j$ for all the preferences $i, j$ available:
\begin{equation}
    \tau_i \succ \tau_j \quad \Rightarrow \quad \pi^* = \argmax_{\pi \in \Pi} \, p_{\pi}(\tau_i) - p_{\pi}(\tau_j)
\end{equation}

The use of rankings as a learning signal in reinforcement learning dates back to \citet{cheng2011preference} and \citet{akrour2011preference}, who, respectively, propose algorithms that exploit the signal from user preferences over actions or trajectories to derive improved policies. This approach has been the subject of growing interest with several follow-up works \citep{wirth2017survey} and has since been scaled up in the deep RL framework by \citet{christiano2017deep}. They propose to train a reward model $r_{\rparams}$ parameterized by a neural network $\rparams$ from pair comparisons of partial trajectories and to use it to learn a policy $\pi_\piparams$. Crucially, reward modeling, policy optimization and preference rating all run in parallel in an asynchronous fashion. The policy is randomly initialized and starts collecting trajectories $\tau$. These trajectories are broken into trajectory fragments $\sigma$ and sent to a human evaluator in pairs $(\sigma_i, \sigma_j)$. The evaluator indicates whether a fragment is better, that they are equal, or refuses to include them in the database. The preference label $y := I(\sigma_i \succ \sigma_j)$ is recorded (or $y=\frac{1}{2}$ if they are equal), and this relationship is leveraged to train $r_{\rparams}$ by assuming that the preference grows exponentially with the sum of rewards to form a probabilistic model of human preferences:
\begin{equation}
    P_\rparams(\sigma_i \succ \sigma_j) := \frac{\exp\left(\sum_{(s_t, a_t) \in \sigma_i} r_{\rparams}(s_t, a_t)\right)}{\exp\left(\sum_{(s_t, a_t) \in \sigma_i} r_{\rparams}(s_t, a_t)\right) + \exp\left(\sum_{(s_t, a_t) \in \sigma_j} r_{\rparams}(s_t, a_t)\right)}
\end{equation}
It can then be optimized by minimizing the Binary Cross-Entropy loss (BCE) between the predicted likelihood of the preference ordering and the true preference label  \citep{david1963method}:
\begin{equation}
    \text{BCE}(P_\rparams, y) = - y \log P_\rparams(\sigma_i \succ \sigma_j) - (1 - y) \log P_\rparams(\sigma_i \prec \sigma_j) \quad \forall \quad (i,j, y) \in \mathcal{D}
\end{equation}
The policy $\pi_\piparams$ can then be trained to maximize the return over $r_\psi$ using any deep RL algorithm. To prioritize the order in which trajectories should be queried for evaluation, the authors train an ensemble of reward models and query the trajectory-fragment pair which shows the highest variance between to maximally reduce the uncertainty across the ensemble. The results show that preference-based RL can scale to complex problems without having access to the true reward function, and that it can even outperform learning from the true reward function in some cases where human evaluations lead to a better shaped reward model $r_{\rparams}$ than the true reward function $R$. Several extensions of this work have been explored, for example, to enable off-policy learning by relabeling past experiences when the reward model is updated \citep{lee2021pebble} or to assess the benefits of modeling the dynamics of the environment alongside preference-based policy iteration \citep{liu2023efficient}.

% % active learning
% In general, preference-based methods inherently imply an interdependence process between the reward model and the learned policy: the reward model is used to guide the exploration of the policy towards high-performing behavior, while the policy is used to collect trajectories, which can then be passed on to a human to be rated and thus contribute to learning the reward model \citep{daniel2014active}. This can be seen as an instantiation of Semi-Supervised RL \citep{amodei2016concrete}, which consists of only labeling some of the episodes with the reward, while trying to make the most of the unlabeled episodes. Due to the presence of a human-in-the-loop (HiL), it is common to use some form of active learning to reduce the number of queries made to the human annotator. Several works make use of a Bayesian model of the annotator's response in order to decide on which trajectories to query the expert to maximally reduce the model's uncertainty \citep{wilson2012bayesian, akrour2012april}.

% demonstrations + preferences
Preference-based learning can also be used on a fixed set of demonstrations. \citet{brown2019extrapolating} uses human rankings in the context of imitation learning where the demonstrations may be suboptimal. They require the dataset to provide a total order over the trajectories, which implies a number of constraints that grows quadratically in the number of samples and allows learning the reward function entirely off-line.
% Ranking information can even be provided automatically \citep{brown2020better}. This is done by first fitting a policy to the fixed demonstrations using behavioral cloning, and then generating new trajectories which can be degraded by injecting noise in the policy. The more noise it received, the lower a trajectory should be ranked.
In cases where rankings are not provided but the demonstrations are deemed optimal, \citet{reddy2019sqil} use a method which considers all the trajectories in the demonstration set as being of equal value and preferred to any other generated trajectory, which can be interpreted either as inferring preferences from a set of demonstrations or as performing IRL with a discretized reward.
\citet{jain2013learning} experiment with a more subtle way of combining the information from human demonstrations and preferences by offering the evaluator the option to provide a trajectory that merely improves upon the last demonstration rather than providing a near-optimal example. Finally, demonstrations and preferences can be used in combination to improve the sample efficiency of preference-based approaches while outperforming the demonstrations \citep{ibarz2018reward, palan2019learning}.

% RLHF for LLMs
More recently, with the advent of Large Language Models (LLMs), preference-based reinforcement learning (also known as RL from Human Feedback, RLHF) has experienced a surge in popularity in an effort to improve the behavior of large pre-trained models \citep{fernandes2023bridging, ziegler2019fine, stiennon2020learning, jaques2019way, kreutzer2018can, bai2022training, askell2021general}. In particular, \citet{ouyang2022training} experiment with fine-tuning LLMs using RLHF and show that 
incorporating human preferences allows to obtain better aligned models even with a fraction of the original capacity. 
\citet{glaese2022improving} break down human preferences into several distinct aspects of desirable behavior to make better use of the provided feedback. These methods showcase the remarkable flexibility of preference-based reward specification, which allows to capture the essence of very nuanced and personalized objectives such as insuring that personal assistants provide harmless, accurate and useful advice.

% -----------------------
%   SECTION
% -----------------------

\subsection{Language-guided RL}

Finally, natural language could represent a source of supervision which is less demanding than complete demonstrations, but more sample efficient than preference rankings. Humans use natural language extensively to give feedback to each other, and recent advances in natural language processing \citep{treviso2023efficient} open up the opportunity for leveraging the flexibility of language instructions for task specification in RL. This approach to reward specification is particularly appealing because language naturally captures object relations and elements of compositionality of the agent's environment. Moreover, it provides a user interface which does not require a technical background to specify objectives to or modify the behavior of an agent. This source of supervision could be particularly useful in areas where data efficiency is required or where human priors can be helpful \citep{luketina2019survey}.

Most approaches for language-guided RL aim to learn a dense reward function $\RF_{\text{NLP}}$ indicating the partial completion of a task and use it to augment the external reward function in a way similar to reward shaping (Section \ref{sec:reward_shaping}) and auxiliary tasks (Section \ref{sec:auxiliary_tasks}):
\begin{equation}
    \RF' = \RF + \lambda \RF_{\text{NLP}}
\end{equation}
Early attempts to use language for task specification used an object-oriented definition of the environment to specify an agent's reward function \citep{macglashan2017interactive, arumugam2017accurately, williams2018learning, chevalier2018babyai}. These methods are constrained to a certain set of pre-defined concepts and object relations, limiting their scalability to the complexity of the target behavior and the training environment. \citet{goyal2019using, goyal2021pixl2r} propose a more flexible language-to-reward model which takes as input the embeddings of both a language command and a trajectory in the environment and estimates whether they are related. Others instead learn a mapping from language commands to state embeddings to specify goals through natural language \citep{kaplan2017beating, waytowich2019narration}. \citet{fu2019language} learn a language-conditional reward function from demonstrations in the aim to be reused for different tasks. \citet{sumers2021learning} propose to use sentiment inference on the provided commands to avoid the need to collect an explicit dataset. \citet{bahdanau2018learning} take inspiration from IRL and use an adversarial loss to learn a reward function that connects commands and goals.

% LLMs for reward specification
More recently, advances in Large Language Models (LLMs) \citep{zhao2023survey} have enabled important steps towards scalable language-based reward specification. \citet{kwon2023reward} explore the idea of prompting a language model with the desired behavior and then using it to provide a numerical reward to the agent in a text-based negotiation game. \citet{klissarov2023motif} use LLMs to provide preferences over pairs of events in a captioned environment to guide the agent towards interesting states. Another approach is to use LLMs to automatically break down a language description of a high-level task into a set of low-level subgoals which can be executed by an underlying language-conditioned control algorithm \citep{huang2022language, ahn2022can}. Finally, a more end-to-end perspective consists in using LLMs and iterative prompting to translate a language description of the desired behavior into a code implementation of the reward function, which can then be optimized by any RL algorithm \citep{yu2023language, ma2023eureka}. These successes demonstrate that the vast amount of knowledge about the world distilled in LLMs has the potential to help bridge the gap between human intentions and effective reward specifications.

% % ---------------------------------------------
% % ---------------------------------------------
% % ----- CHAPTER
% % ---------------------------------------------
% % ---------------------------------------------

\chapter[PREAMBLE TO TECHNICAL CONTRIBUTIONS]{\\PREAMBLE TO TECHNICAL CONTRIBUTIONS}
\label{chap:preamble}

Chapters~\ref{chap:article1_asaf}~to~\ref{chap:article4_gcgfn} present four original contributions related to the field of reward specification in deep reinforcement learning. Respectively, we present methods that make use of inverse reinforcement learning (article~1), auxiliary tasks (article~2), and multi-objective RL (articles~3 and~4) which are intended to support policy learning both in terms of efficiency and alignment. In all of these works, I (Julien Roy) made significant contributions and was deeply involved in designing the algorithms, surveying the relevant literature, carrying out and analyzing experiments, and writing the manuscript. Here is a brief overview of each article.

\paragraph{Article 1} is titled \textit{Adversarial Soft Advantage Fitting:
Imitation Learning without Policy Optimization}. It presents improvements in adversarial imitation learning. Imitation learning takes a supervised learning approach to sequential decision making. The agent is provided with a set of demonstrations that are deemed optimal and seeks to learn a policy that emulates this behavior. We propose a novel architecture for adversarial IL which allows to significantly simplify the implementation and accelerate the training of these methods. In combination with RL, efficient IL algorithms could be used to aid exploration and further ground the behavior using a small set of demonstrations.

\paragraph{Article 2} is titled \textit{Promoting Coordination through Policy Regularization in Multi-Agent Deep Reinforcement Learning}. It focuses on the use of auxiliary tasks in RL. We investigate the case of multi-agent cooperative tasks and propose different auxiliary objectives which are optimized simultaneously with the main task to allow the agents to discover effective cooperative behaviors faster. Our approach is an example of how intuitions about the properties of effective strategies can be incorporated in the optimization process and serve as useful inductive biases for RL agents.

\paragraph{Article 3} is titled \textit{Direct Behavior Specification via Constrained Reinforcement Learning}. It presents a general framework to easily incorporate hard constraints on the agent behavior. This framework separates the main task that the agent is asked to perform from additional requirements that it should abide to, and allows the system designer to specify all of these constraints in a foreseeable and intuitive way. The resulting paradigm allows to efficiently design new tasks, maintain a clear monitoring on the ability of the agent to meet these constraints, and insure that the final behavior is aligned with the designer's intentions.

\paragraph{Article 4} is titled \textit{Goal-conditioned GFlowNets for Controllable Multi-Objective Molecular Design}. It presents an application of goal-conditioned reinforcement learning to the problem of multi-objective molecular design for computer-based drug discovery. We formulate the goals as subregions of the objective space and train a discrete generative model to target specific trade-offs on the Pareto front in order to widen its solution coverage. Our method allows for a controllable generative process that can then be leveraged to explore the molecular space in an intentional manner.

% ---------------------------------------------
% ---------------------------------------------
% ----- CHAPTER
% ---------------------------------------------
% ---------------------------------------------
\chapter[ARTICLE 1: ADVERSARIAL SOFT ADVANTAGE FITTING:\\IMITATION LEARNING WITHOUT POLICY OPTIMISATION]{\\ARTICLE 1: ADVERSARIAL SOFT ADVANTAGE FITTING: IMITATION LEARNING WITHOUT POLICY OPTIMISATION}\label{chap:article1_asaf}

\vspace{-3mm}
\begin{center}
Co-authors\\Paul Barde, Wonseok Jeon, Joelle Pineau, Christopher Pal \& Derek Nowrouzezahrai
\end{center}
\vspace{-5mm}
\begin{center}
Published in\\Advances in Neural Information Processing Systems, December 12, 2020
\end{center}
% % Define the custom symbol for footnote. In this case, it's an asterisk (*).
% \renewcommand{\thefootnote}{*} 
% \footnotetext{Denotes equal contribution.}
% % Reset to normal numbering for subsequent footnotes
% \renewcommand{\thefootnote}{\arabic{footnote}}
% \setcounter{footnote}{0}

\begin{abstract}
Adversarial Imitation Learning alternates between learning a discriminator -- which tells apart expert's demonstrations from generated ones -- and a generator's policy to produce trajectories that can fool this discriminator. This alternated optimization is known to be delicate in practice since it compounds unstable adversarial training with brittle and sample-inefficient reinforcement learning. We propose to remove the burden of the policy optimization steps by leveraging a novel discriminator formulation. Specifically, our discriminator is explicitly conditioned on two policies: the one from the previous generator's iteration and a learnable policy. When optimized, this discriminator directly learns the optimal generator's policy. Consequently, our discriminator's update solves the generator's optimization problem for free: learning a policy that imitates the expert does not require an additional optimization loop. This formulation effectively cuts by half the implementation and computational burden of Adversarial Imitation Learning algorithms by removing the Reinforcement Learning phase altogether. We show on a variety of tasks that our simpler approach is competitive to prevalent Imitation Learning methods.
\end{abstract}

\section{Introduction}

Imitation Learning (IL) treats the task of learning a policy from a set of expert demonstrations. IL is effective on control problems that are challenging for traditional Reinforcement Learning (RL) methods, either due to reward function design challenges or the inherent difficulty of the task itself \cite{abbeel2004apprenticeship,ross2011reduction}. Most IL work can be divided into two branches: Behavioral Cloning and Inverse Reinforcement Learning. Behavioral Cloning casts IL as a supervised learning objective and seeks to imitate the expert's actions using the provided demonstrations as a fixed dataset \cite{pomerleau1991efficient}. Thus, Behavioral Cloning usually requires a lot of expert data and results in agents that struggle to generalize. As an agent deviates from the demonstrated behaviors -- straying outside the state distribution on which it was trained -- the risks of making additional errors increase, a problem known as compounding error \cite{ross2011reduction}. 

Inverse Reinforcement Learning aims to reduce compounding error by learning a reward function under which the expert policy is optimal \cite{abbeel2004apprenticeship}. Once learned, an agent can be trained (with any RL algorithm) to learn how to act at any given state of the environment. Early methods were prohibitively expensive on large environments because they required training the RL agent to convergence at each learning step of the reward function \citep{ziebart2008maximum, abbeel2004apprenticeship}. Recent approaches instead apply an adversarial formulation (Adversarial Imitation Learning, AIL) in which a discriminator learns to distinguish between expert and agent behaviors to learn the reward optimized by the expert. AIL methods allow for the use of function approximators and can in practice be used with only a few policy improvement steps for each discriminator update \citep{ho2016generative, fu2017learning, finn2016connection}.

While these advances have allowed Imitation Learning to tackle bigger and more complex environments \cite{kuefler2017imitating, ding2019goal}, they have also significantly complexified the implementation and learning dynamics of Imitation Learning algorithms. It is worth asking how much of this complexity is actually mandated. For example, in recent work, \citet{reddy2019sqil} have shown that competitive performance can be obtained by hard-coding a very simple reward function to incentivize expert-like behaviors and manage to imitate it through off-policy direct RL. \citet{reddy2019sqil} therefore remove the reward learning component of AIL and focus on the RL loop, yielding a regularized version of Behavioral Cloning. 
Motivated by these results, we also seek to simplify the AIL framework but following the opposite direction: keeping the reward learning module and removing the policy improvement loop.

We propose a simpler yet competitive AIL framework. Motivated by \citet{finn2016connection} who use the optimal discriminator form, we propose a structured discriminator that estimates the probability of demonstrated and generated behavior using a single parameterized maximum entropy policy. Discriminator learning and policy learning therefore occur simultaneously, rendering seamless generator updates: once the discriminator has been trained for a few epochs, we simply use its policy model to generate new rollouts. We call this approach Adversarial Soft Advantage Fitting (ASAF).

\newpage
We make the following contributions:
\begin{itemize}
\setlength\itemsep{-1mm} % Adjusts the space between items
\item \textbf{Algorithmic}: we present a novel algorithm (ASAF) designed to imitate expert demonstrations without any Reinforcement Learning step.
\item \textbf{Theoretical}: we show that our method retrieves the expert policy when trained to optimality.
\item \textbf{Empirical}: we show that ASAF outperforms prevalent IL algorithms on a variety of discrete and continuous control tasks. We also show that, in practice, ASAF can be easily modified to account for different trajectory lengths.
\end{itemize}

\section{Background}
\paragraph{Markov Decision Processes (MDPs)}
We use \citet{hazan2018provably}'s notation and consider the classic $T$-horizon $\gamma$-discounted MDP $\mathcal{M}=\langle \mathcal{S}, \mathcal{A}, \mathcal{P}, \mathcal{P}_0, \gamma, r, T \rangle$. For simplicity, we assume that $\mathcal{S}$ and $\mathcal{A}$ are finite. Successor states are given by the transition distribution $\mathcal{P}(s'|s,a)\in [0,1]$, and the initial state $s_0$ is drawn from $\mathcal{P}_0(s)\in[0,1]$. Transitions are rewarded with $r(s,a)\in\mathbb{R}$ with $r$ being bounded. The discount factor and the episode horizon are $\gamma \in [0,1]$ and $T\in \mathbb{N}\cup \{\infty\}$, where $T<\infty$ for $\gamma=1$. Finally, we consider stationary stochastic policies $\pi \in \Pi : \mathcal{S}\times\mathcal{A} \rightarrow ]0,1[$ that produce trajectories $\tau = (s_0, a_0, s_1, a_1, ..., s_{T-1}, a_{T-1}, s_T)$ when executed on $\mathcal{M}$.

The probability of trajectory $\tau$ under policy $\pi$ is 
$
    P_\pi(\tau)
    \triangleq 
    \mathcal{P}_0(s_0)\prod_{t=0}^{T-1}\pi(a_t|s_t)\mathcal{P}(s_{t+1}|s_t, a_t)
$
and the corresponding marginals are defined as 
$
    d_{t, \pi}(s)
    \triangleq 
    \sum_{\tau:s_t=s}P_\pi(\tau)
$
and  
$
    d_{t, \pi}(s, a)
    \triangleq 
    \sum_{\tau:s_t=s, a_t=a}P_\pi(\tau)
    =
    d_{t, \pi}(s)\pi(a|s)
$, respectively.
With these marginals, we define the normalized discounted  state and state-action occupancy measures as
$
    d_{\pi}(s)
    \triangleq
    \frac{1}{Z(\gamma, T)}\sum_{t=0}^{T-1}\gamma^td_{t, \pi}(s)
$
and 
$
    d_{\pi}(s, a)
    \triangleq
    \frac{1}{Z(\gamma, T)}\sum_{t=0}^{T-1}\gamma^td_{t, \pi}(s, a)
    =
    d_{\pi}(s)\pi(a|s)
$
where the partition function
$Z(\gamma, T)$ is equal to $\sum_{t=0}^{T-1}\gamma^t$.
Intuitively, the state (or state-action) occupancy measure can be interpreted as the discounted visitation distribution of the states (or state-action pairs) that the agent encounters when navigating with policy $\pi$.
The expected sum of discounted rewards can be expressed in term of the occupancy measures as follows:
\begin{equation*}
\begin{aligned}
    J_{\pi}[r(s, a)] 
    \triangleq
    \E_{\tau\sim P_\pi}\left[\textstyle\sum_{t=0}^{T-1}\gamma^t \, r(s_t, a_t)\right]
    =
    Z(\gamma, T) \, \E_{(s,a)\sim d_{\pi}}[r(s, a)]
\end{aligned}
\end{equation*}
In the entropy-regularized Reinforcement Learning framework \cite{haarnoja2018soft},
the optimal policy maximizes its entropy at each visited state in addition to the standard RL objective:

\begin{equation*}
    \pi^*
    \triangleq
    \argmax_\pi J_\pi[r(s,a)+\alpha \mathcal{H}(\pi(\cdot|s))]\,, \quad \mathcal{H}(\pi(\cdot|s)) = \E_{a\sim \pi(\cdot|s)}[-\log(\pi(a|s))]
\end{equation*}
As shown in \cite{ziebart2010modeling, haarnoja2017reinforcement} the corresponding optimal policy is:
\begin{align}
    \label{eq:max_ent_policy}
    \pi_{\mathrm{soft}}^*(a|s) =\exp \left({\alpha}^{-1} \, A^*_{\mathrm{soft}}(s,a)\right) \quad \text{with} \quad
    &A^*_{\mathrm{soft}}(s,a)
    \triangleq
    Q^*_{\mathrm{soft}}(s,a)- V^*_{\mathrm{soft}}(s)
    \\
    V^*_{\mathrm{soft}}(s)
    =
    \alpha \log\sum_{a\in\mathcal{A}}\exp \left({\alpha}^{-1} \, Q^*_{\mathrm{soft}}(s,a)\right), \,\,
    &Q^*_{\mathrm{soft}}(s,a)
    =
    r(s,a) + \gamma\mathbb{E}_{s'\sim\mathcal{P}(\cdot|s, a)}\left[ V^*_{\mathrm{soft}}(s')\right]
\end{align}

\paragraph{Maximum Causal Entropy Inverse Reinforcement Learning}
In the problem of Inverse Reinforcement Learning (IRL), it is assumed that the MDP's reward function is unknown but that demonstrations from using expert's policy $\pie$ are provided. 
Maximum causal entropy IRL \cite{ziebart2008maximum} proposes to fit a reward function $r$ from a set $\mathcal{R}$ of reward functions and retrieve the corresponding optimal policy by solving the optimization problem:
\begin{equation}
\label{eq:irl_problem}
    \min_{r\in\mathcal{R}}\left(\max_{\pi} J_\pi[r(s, a) + \mathcal{H}(\pi(\cdot|s))]  \right)-J_{\pie}[r(s, a)]
\end{equation}
In brief, the problem reduces to finding a reward function $r$ for which the expert policy is optimal. In order to do so, the optimization procedure searches high entropy policies that are optimal with respect to $r$ and minimizes the difference between their returns and the return of the expert policy, eventually reaching a policy $\pi$ that approaches $\pie$. 
Most of the proposed solutions \cite{abbeel2004apprenticeship,ziebart2010modeling,ho2016generative} transpose IRL to the problem of distribution matching; \citet{abbeel2004apprenticeship} and \citet{ziebart2008maximum} used linear function approximation and proposed to match the feature expectation; \citet{ho2016generative} proposed to cast Equation~\ref{eq:irl_problem} with a convex reward function regularizer into the problem of minimizing the Jensen-Shannon divergence between the state-action occupancy measures:
\begin{equation}
\label{eq:irl_feature_matching}
    \min_{\pi} D_{\text{JS}}(d_\pi, d_{\pie}) - J_\pi[\mathcal{H}(\pi(\cdot|s))]
\end{equation}

\paragraph{Connections between Generative Adversarial Networks (GANs) and IRL}

For the data distribution $\pe$ and the generator distribution $\pg$ defined on the domain $\mathcal{X}$, the GAN objective \cite{goodfellow2014generative} is
\begin{align}
    \label{eq:GAN_obj}
    \min_{\pg} \max_D L(D, \pg) \, , \quad L(D, \pg)
    \triangleq
    \mathbb{E}_{x\sim \pe}[\log D(x)]
    +
    \mathbb{E}_{x\sim \pg}[\log(1-D(x))]
\end{align}

In \citet{goodfellow2014generative}, the maximizer of the inner problem in Equation~\ref{eq:GAN_obj} is shown to be:

\begin{align}
\label{eq:optimal_D}
    D_{\pg}^*
    \triangleq \argmax_D L(D, \pg)
    =\frac{\pe}{\pe + \pg}
\end{align}
and the optimizer for Equation~\ref{eq:GAN_obj} is
$
     \argmin_{\pg}
     \max_D L(D, \pg)
     = \argmin_{\pg}L(D_{\pg}^*, \pg)=\pe
$. Later, \citet{finn2016connection} and \citet{ho2016generative}  concurrently proposed connections between GANs and IRL. The Generative Adversarial Imitation Learning (GAIL) formulation in \citet{ho2016generative} is based on matching state-action occupancy measures, while \citet{finn2016connection} considered matching trajectory distributions. Our work is inspired by the discriminator proposed and used by \citet{finn2016connection}:
\begin{align}
    \label{eq:irl_D}
    D_{\theta}(\tau)
    \triangleq
    \frac{
        p_\theta(\tau)
    }{
        p_\theta(\tau)+q(\tau)
    }
\end{align}

where $p_\theta(\tau)\propto \exp r_\theta(\tau)$ with reward approximator $r_\theta$ motivated by maximum causal entropy IRL. 
Note that Equation~\ref{eq:irl_D} matches the form of the optimal discriminator in Equation~\ref{eq:optimal_D}. Although \citet{finn2016connection} do not empirically support the effectiveness of their method, the Adversarial IRL approach of \citet{fu2017learning} (AIRL) successfully used a similar discriminator for state-action occupancy measure matching.

\section{Imitation Learning without Policy Optimization}

In this section, we derive Adversarial Soft Advantage Fitting (ASAF), our novel Adversarial Imitation Learning approach. Specifically, in Section~\ref{sec:asaf_theory}, we present the theoretical foundations for ASAF to perform Imitation Learning on full-length trajectories. Intuitively, our method is based on the use of such \textit{structured discriminators} -- that match the optimal discriminator form -- to fit the trajectory distribution induced by the expert policy. This approach requires being able to evaluate and sample from the learned policy and allows us to learn that policy and train the discriminator simultaneously, thus drastically simplifying the training procedure. We present in Section~\ref{sec:policy_class} parametrization options that satisfy these requirements. Finally, in Section~\ref{sec:asaf_practical_algorithm}, we explain how to implement a practical algorithm that can be used for arbitrary trajectory-lengths, including the transition-wise case.

\subsection{Adversarial Soft Advantage Fitting -- Theoretical setting}\label{sec:asaf_theory}
Before introducing our method, we derive GAN training with a structured discriminator.

\paragraph{GAN with structured discriminator}
Suppose that we have a generator distribution $\pg$ and some arbitrary distribution $\tildep$ and that both can be evaluated efficiently, e.g., categorical distribution or probability density with normalizing flows \cite{rezende2015variational}. 
We call a \textit{structured discriminator} a function $D_{\tildep, \pg}:\mathcal{X}\rightarrow [0,1]$ of the form
$
    D_{\tildep, \pg}(x)
    =
    {\tildep(x)}\big/({\tildep(x)+\pg(x)})
$
which matches the optimal discriminator form for Equation~\ref{eq:optimal_D}. Considering our new GAN objective, we get:
\begin{align}
    \label{eq:structured_GAN_obj}
    \min_{\pg}\max_{\tildep}
    L(\tildep, \pg)\, , \quad 
    L(\tildep, \pg)
    \triangleq
    \E_{x\sim \pe}
    [\log D_{\tildep, \pg}(x)]
    +
    \E_{x\sim \pg}
    [\log (1-D_{\tildep, \pg}(x))]
\end{align}

While the unstructured discriminator $D$ from Equation~\ref{eq:GAN_obj} learns a mapping from $x$ to a Bernoulli distribution, we now learn a mapping from $x$ to an arbitrary distribution $\tildep$ from which we can analytically compute $D_{\tildep, \pg}(x)$. One can therefore say that $D_{\tildep, \pg}$ is \emph{parameterized} by $\tildep$. 
For the optimization problem of Equation~\ref{eq:structured_GAN_obj}, we have the following optima:

\begin{lem}
\label{lem:optim_D_gan}
The optimal discriminator parameter for any generator $\pg$ in Equation~\ref{eq:structured_GAN_obj} is equal to the expert's distribution,  
$\tildep^* 
    \triangleq \argmax_{\tildep} L(\tildep, \pg) 
    = \pe$
, and the optimal discriminator parameter is also the optimal generator, i.e.,
$$
    \pg^* 
    \triangleq \argmin_{\pg} \max_{\tildep} L(\tildep, \pg)
    = \argmin_{\pg} L(\pe, \pg)
    = \pe = \tildep^*
$$
\end{lem}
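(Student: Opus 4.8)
The plan is to reduce both claims to the classical GAN analysis of \citet{goodfellow2014generative}, already recorded in Equations~\ref{eq:optimal_D} and~\ref{eq:GAN_obj}, by exploiting a single observation: the structured family $\{D_{\tildep, \pg} : \tildep\}$ is a subset of all $[0,1]$-valued discriminators that nonetheless contains the \emph{unconstrained} optimum.

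First, for the fixed-$\pg$ claim, I would rewrite $L(\tildep, \pg)$ in Equation~\ref{eq:structured_GAN_obj} explicitly using $D_{\tildep, \pg}(x) = \tildep(x)/(\tildep(x) + \pg(x))$ and $1 - D_{\tildep, \pg}(x) = \pg(x)/(\tildep(x) + \pg(x))$. I would then recall that the unstructured objective $L(D, \pg)$, maximized over all $D:\mathcal{X}\to[0,1]$, attains its maximum at $D^*_\pg = \pe/(\pe + \pg)$ by Equation~\ref{eq:optimal_D}. The key step is that setting $\tildep = \pe$ yields exactly $D_{\pe, \pg} = D^*_\pg$, so the global unconstrained maximizer already lies inside the structured family. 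Since maximizing over a subset cannot exceed the global maximum, and that maximum is realized within the subset, we obtain $L(\tildep, \pg) \le L(D^*_\pg, \pg) = L(\pe, \pg)$ for every $\tildep$, with equality at $\tildep = \pe$; hence $\tildep^* = \pe$.

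For the outer minimization, I would substitute $\tildep = \tildep^* = \pe$ into $L$ to recover the familiar GAN value function $L(\pe, \pg) = \E_{x \sim \pe}[\log \frac{\pe}{\pe + \pg}] + \E_{x \sim \pg}[\log \frac{\pg}{\pe + \pg}]$, which equals $2\JS(\pe \| \pg) - \log 4$. Minimizing over $\pg$ then gives $\pg^* = \pe$ because the Jensen--Shannon divergence is nonnegative and vanishes precisely when its arguments coincide, matching the optimizer $\argmin_{\pg} L(D^*_\pg, \pg) = \pe$ from Equation~\ref{eq:GAN_obj}. Chaining this with the first part yields $\pg^* = \pe = \tildep^*$.

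The main obstacle is the subtlety in the first part: one cannot simply quote the standard inner maximization, because the structured discriminator ranges only over functions of the special form $\tildep/(\tildep + \pg)$, a strict subset of all $[0,1]$-valued functions. The argument must therefore make explicit that the unconstrained optimizer $D^*_\pg$ is itself attained inside this subset (at $\tildep = \pe$), which is what legitimizes transferring the classical result. I would also address uniqueness: for fixed $\pg$ the map $\tildep \mapsto D_{\tildep, \pg}$ is injective, since one recovers $\tildep = \pg D/(1 - D)$ wherever $\pg > 0$, and $D^*_\pg$ is the unique unconstrained maximizer on $\{\pe + \pg > 0\}$; hence $\tildep^* = \pe$ is unique up to the usual null-set caveat on the region where both densities vanish.
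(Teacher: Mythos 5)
Your proof is correct, and part (a) takes a genuinely different route from the paper's. The paper proves the inner maximization by brute force: it decomposes $L$ into pointwise terms $L_i$, observes that $\tildep_i \mapsto D_i = \tildep_i/(\tildep_i+\pgi)$ is strictly monotonic on $]0,+\infty[$ so that optimizing over $D_i$ and over $\tildep_i$ are equivalent, and then differentiates $L_i$ in $\tildep_i$, checks the second-derivative test, and verifies the boundary behavior to conclude $\tildep_i^*=\pei$. You instead observe that the unconstrained Goodfellow optimum $D^*_{\pg}=\pe/(\pe+\pg)$ is itself realized inside the structured family at $\tildep=\pe$, so the restricted maximum coincides with the global one; this imports the classical result wholesale and avoids redoing the calculus. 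Your argument is shorter and arguably cleaner, and your injectivity remark ($\tildep=\pg D/(1-D)$ wherever $\pg>0$) supplies the uniqueness that the paper obtains from its second-derivative and boundary analysis. One small quibble: calling the structured family a \emph{strict} subset of $[0,1]$-valued discriminators is slightly misleading, since the very inversion formula you use shows that on $\{\pg>0\}$ the parameterization actually sweeps out all discriminators valued in $]0,1[$ — it is a reparameterization rather than a restriction — but this only strengthens your argument and does not affect its validity. Part (b) of your proof (substituting $\tildep^*=\pe$, rewriting $L(\pe,\pg)$ as $2\JS(\pe\|\pg)-\log 4$, and invoking nonnegativity of the Jensen--Shannon divergence) is exactly the paper's argument, itself reproduced from \citet{goodfellow2014generative}.
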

\vspace{-15pt}
\proof{See Appendix~\ref{app:ASAF:proof_of_optim_D_gan}}

Intuitively, Lemma~\ref{lem:optim_D_gan} shows that the optimal discriminator parameter is also the target data distribution of our optimization problem (i.e., the optimal generator). In other words, solving the inner optimization yields the solution of the outer optimization. In practice, we update $\tildep$ to minimize the discriminator objective and use it directly as $\pg$ to sample new data.

\paragraph{Matching trajectory distributions with structured discriminator}
\label{subsec:asaf_discriminator_theory}
Motivated by the GAN with structured discriminator, we consider the trajectory distribution matching problem in IL. Here, we optimise Equation~\ref{eq:structured_GAN_obj} with 
$
x=\tau, 
\mathcal{X}=\gT, 
\pe=P_{\pie},
\pg=P_{\pig},
$ which yields the following objective:
\begin{align}
    \label{eq:TASAF_obj}
    \min_{\pig}\max_{\tildepi}
    L(\tildepi, \pig)
    \,,\quad L(\tildepi, \pig)
    \triangleq
    \E_{\tau\sim P_{\pie}}
    [\log D_{\tildepi, \pig}(\tau)]
    +
    \E_{\tau\sim P_{\pig}}
    [\log (1-D_{\tildepi, \pig}(\tau))],
\end{align}
with the structured discriminator:
\begin{align}
    % \label{eq:trajectory_optimal_discriminator}
    \label{eq:simplified_trajectory_optimal_discriminator}
    D_{\tildepi, \pig}(\tau)
    = \frac{P_{\tildepi}(\tau)}{P_{\tildepi}(\tau)+P_{\pig}(\tau)}
    = \frac{q_{\tildepi}(\tau)}{q_{\tildepi}(\tau)+q_{\pig}(\tau)}
\end{align}

Here we used the fact that $P_{\pi}(\tau)$ decomposes into two distinct products: $q_\pi(\tau)\triangleq\prod_{t=0}^{T-1}\pi(a_t|s_t)$ which depends on the stationary policy $\pi$ and $\xi(\tau)\triangleq\mathcal{P}_0(s_0)\prod_{t=0}^{T-1}\mathcal{P}(s_{t+1}|s_t, a_t)$ which accounts for the environment dynamics. Crucially, $\xi(\tau)$ cancels out in the numerator and denominator leaving $\tildepi$ as the sole parameter of this structured discriminator. In this way, $D_{\tildepi, \pig}(\tau)$ can evaluate the probability of a trajectory being generated by the expert policy simply by evaluating products of stationary policy distributions $\tildepi$ and $\pig$. With this form, we can get the following result:
\begin{thm}
\label{thm:traj_GAN}
The optimal discriminator parameter for any generator policy $\pig$ in Equation~\ref{eq:TASAF_obj} $\tildepi^*
\triangleq
\argmax_{\tildepi}L(\tildepi, \pig)
$ is such that
$
q_{\tildepi^*}
=
q_{\pie}
$, and using generator policy $\tildepi^*$ minimizes $L(\tildepi^*, \pig)$, i.e.,
$$
\tildepi^*
\in
\argmin_{\pig}\max_{\tildepi}L(\tildepi, \pig)
=
\argmin_{\pig}L(\tildepi^*, \pig)
$$
\end{thm}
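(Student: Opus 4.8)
The plan is to reduce Theorem~\ref{thm:traj_GAN} to the already-established Lemma~\ref{lem:optim_D_gan} by recognizing that the trajectory objective of Equation~\ref{eq:TASAF_obj} is exactly an instance of the structured-GAN objective of Equation~\ref{eq:structured_GAN_obj}. First I would note that, since the dynamics factor $\xi(\tau)$ cancels in numerator and denominator, the structured discriminator of Equation~\ref{eq:simplified_trajectory_optimal_discriminator} can equivalently be written as $D_{\tildepi,\pig}(\tau) = P_{\tildepi}(\tau)/\big(P_{\tildepi}(\tau)+P_{\pig}(\tau)\big)$, which matches the optimal-discriminator form of Equation~\ref{eq:optimal_D} with the parameterized distribution playing the role of $\tildep = P_{\tildepi}$ and the generator the role of $\pg = P_{\pig}$. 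Consequently $L(\tildepi,\pig)$ is precisely $L(P_{\tildepi}, P_{\pig})$ in the notation of Lemma~\ref{lem:optim_D_gan}, with the expert data distribution instantiated as $\pe = P_{\pie}$.

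For the first claim I would invoke Lemma~\ref{lem:optim_D_gan} directly: the maximizer of the inner problem over arbitrary distributions is the expert trajectory distribution $P_{\pie}$. The only gap to close is that here one does not optimize over arbitrary distributions but only over those realizable as $P_{\tildepi} = \xi(\tau)\,q_{\tildepi}(\tau)$ for some policy $\tildepi$. This gap is immediate to bridge: the target $P_{\pie} = \xi(\tau)\,q_{\pie}(\tau)$ is itself realizable (take $\tildepi = \pie$), so the constrained maximum coincides with the unconstrained one and is attained at $P_{\tildepi^*} = P_{\pie}$. Because both distributions carry the same factor $\xi(\tau)$, which is strictly positive exactly on the feasible trajectories, the equality $P_{\tildepi^*} = P_{\pie}$ is equivalent to $q_{\tildepi^*} = q_{\pie}$ on that support, which is the first assertion.

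For the second claim I would first observe that, by the computation above, the inner maximum value equals $L(\tildepi^*,\pig)$ for a $\tildepi^*$ with $q_{\tildepi^*} = q_{\pie}$ regardless of $\pig$, which is exactly what justifies the rewriting $\argmin_{\pig}\max_{\tildepi} L(\tildepi,\pig) = \argmin_{\pig} L(\tildepi^*,\pig)$. I would then appeal once more to Lemma~\ref{lem:optim_D_gan}, whose generator part gives that the outer problem is minimized when the generator distribution equals the expert distribution, $P_{\pig} = P_{\pie}$, i.e. $q_{\pig} = q_{\pie}$. Since $\tildepi^*$ was just shown to satisfy $q_{\tildepi^*} = q_{\pie}$, choosing $\pig = \tildepi^*$ attains this minimum, so $\tildepi^* \in \argmin_{\pig}\max_{\tildepi} L(\tildepi,\pig)$, as claimed.

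The main obstacle, and the reason the statement is phrased with set membership ``$\in$'' rather than equality, is precisely the constrained-optimization subtlety just sketched: one optimizes over the policy-induced family of trajectory distributions rather than over all distributions, so the argument must verify both that the expert trajectory distribution is realizable by a policy and that matching trajectory distributions reduces to matching the policy products $q_\pi$ once the dynamics factor $\xi(\tau)$ cancels. The condition $q_{\tildepi^*} = q_{\pie}$ pins down $\tildepi^*$ only along trajectories of positive probability and need not determine it uniquely as a policy, which is exactly why $\tildepi^*$ is only guaranteed to be \emph{an} element of the $\argmin$ set.
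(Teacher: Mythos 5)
Your proof is correct and follows essentially the same route as the paper's: both reduce Theorem~\ref{thm:traj_GAN} to Lemma~\ref{lem:optim_D_gan} after observing that the dynamics factor $\xi(\tau)$ cancels in the structured discriminator. The paper executes the reduction per trajectory --- splitting $L$ into independent terms $L_i$, dropping the constant $\xi(\tau_i)$, and rerunning the pointwise argument of the lemma with $\pe=q_{\pie}$ and $\pg=q_{\pig}$ --- whereas you invoke the lemma wholesale at the level of the full trajectory distributions $P_{\tildepi}$ and $P_{\pig}$. The one substantive difference is that you explicitly address the constrained-optimization subtlety: the products $q_{\tildepi}(\tau)$ for different trajectories cannot be chosen independently, since they are all induced by a single policy, so the pointwise optima must be shown to be simultaneously realizable; your observation that $\tildepi=\pie$ realizes them all (and, for part (b), that the minimizing generator distribution is likewise policy-realizable) makes rigorous a step the paper's proof takes for granted when it asserts that each $L_i$ can be optimised individually.
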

\vspace{-15pt}
\proof{See Appendix~\ref{app:ASAF:proof_of_thm_traj_GAN}}

Theorem~\ref{thm:traj_GAN}'s benefits are similar to the ones from Lemma~\ref{lem:optim_D_gan}: we can use a discriminator of the form of Equation~\ref{eq:simplified_trajectory_optimal_discriminator} to fit to the expert demonstrations a policy $\tildepi^*$ that simultaneously yields the optimal generator's policy and produces the same trajectory distribution as the expert policy.

\subsection{A Specific Policy Class}
\label{sec:policy_class}
The derivations of Section~\ref{sec:asaf_theory} rely on the use of a learnable policy that can both be evaluated and sampled from in order to fit the expert policy. A number of parameterization options that satisfy these conditions are available.

First of all, we observe that since $\pie$ is independent of $r$ and $\pi$, we can add the entropy of the expert policy $\mathcal{H}(\pie(\cdot|s))$ to the MaxEnt IRL objective of Eq.~(\ref{eq:irl_problem}) without modifying the solution to the optimization problem:
\begin{equation}
    \min_{r\in\mathcal{R}}
    \left(\max_{\pi \in \Pi} J_{\pi}[r(s,a)+\mathcal{H}(\pi(\cdot|s))]
    \right)
    - J_{\pie}[r(s,a)+\mathcal{H}(\pie(\cdot|s))]
\end{equation}
The max over policies implies that when optimising $r$, $\pi$ has already been made optimal with respect to the causal entropy augmented reward function $r'(s,a| \pi) = r(s,a) + \mathcal{H}(\pi(\cdot|s))$ and therefore it must be of the form presented in Eq.~(\ref{eq:max_ent_policy}). Moreover, since $\pi$ is optimal w.r.t. $r'$ the difference in performance $J_{\pi}[r'(s,a| \pi)]-J_{\pie}[r'(s,a|\pie)]$ is always non-negative and its minimum of 0 is only reached when $\pie$ is also optimal w.r.t. $r'$, in which case $\pie$ must also be of the form of Eq.~(\ref{eq:max_ent_policy}). 

With discrete action spaces we propose to parameterize the MaxEnt policy defined in Equation~\ref{eq:max_ent_policy} with the following categorical distribution:
\begin{equation}
   \tildepi(a|s) = \exp\left(Q_\theta(s,a) - \log\sum_{a'}\exp Q_\theta(s,a') \right) 
\end{equation}
where $Q_\theta$ is a model parameterized by $\theta$ that approximates $\frac{1}{\alpha} Q^*_{\text{soft}}$.

With continuous action spaces, the soft value function involves an intractable integral over the action domain. Therefore, we approximate the MaxEnt distribution with a Normal distribution with diagonal covariance matrix like it is commonly done in the literature \cite{ haarnoja2018soft,nachum2018trustpcl}. By parameterizing the mean and variance we get a learnable density function that can be easily evaluated and sampled from. 

\subsection{Adversarial Soft Advantage Fitting (ASAF) -- practical algorithm}
\label{sec:asaf_practical_algorithm}
Section~\ref{subsec:asaf_discriminator_theory} shows that assuming $\tildepi$ can be evaluated and sampled from, we can use the structured discriminator of Equation~\ref{eq:simplified_trajectory_optimal_discriminator} to learn a policy $\tildepi$ that matches the expert's trajectory distribution. Section~\ref{sec:policy_class} proposes parameterizations for discrete and continuous action spaces that satisfy those assumptions. In practice, as with GANs \cite{goodfellow2014generative}, we do not train the discriminator to convergence as gradient-based optimisation cannot be expected to find the global optimum of non-convex problems. Instead, Adversarial Soft Advantage Fitting (ASAF) alternates between two simple steps: (1) training $D_{\tildepi, \pig}$ by minimizing the binary cross-entropy loss,
\begin{equation}
\label{eq:BCE_loss}
\begin{aligned}
    &\mathcal{L}_{BCE}(\mathcal{D}_E, \mathcal{D}_G, \tildepi) \approx -\frac{1}{n_E} \sum_{i=1}^{n_E} \log D_{\tildepi, \pig}(\tau_i^{(E)}) - \frac{1}{n_G} \sum_{i=1}^{n_G} \log \left(1 - D_{\tildepi, \pig}(\tau_i^{(G)})\right) \\
    &\text{where  }\quad \tau_i^{(E)}\sim \mathcal{D}_E \text{ ,  } \tau_i^{(G)}\sim \mathcal{D}_G \, \text{ and  } \, D_{\tildepi, \pig}(\tau) = \frac{\prod_{t=0}^{T-1}\tildepi(a_t|s_t)}{\prod_{t=0}^{T-1}\tildepi(a_t|s_t)+\prod_{t=0}^{T-1}\pig(a_t|s_t)}
\end{aligned}
\end{equation}
with minibatch sizes $n_E = n_G$, and (2) updating the generator's policy as $\pig \leftarrow \tildepi$ to minimize Equation~\ref{eq:TASAF_obj} (see Algorithm~\ref{alg:asaf}).

We derived ASAF considering full trajectories, yet it might be preferable in practice to split full trajectories into smaller chunks. This is particularly true in environments where trajectory length varies a lot or tends to infinity. 
 
To investigate whether the practical benefits of using partial trajectories hurt ASAF's performance, we also consider a variation, ASAF-\textit{w}, where we treat trajectory-windows of size \textit{w} as if they were full trajectories. Note that considering windows as full trajectories results in approximating that the initial state of these sub-trajectories have equal probability under the expert's and the generator's policy (this is easily seen when deriving Equation~\ref{eq:simplified_trajectory_optimal_discriminator}).

\begin{tabular}{@{}p{0.44\textwidth}p{0.52\textwidth}@{}}
In the limit, ASAF-1 (window-size of 1) becomes a transition-wise algorithm which can be desirable if one wants to collect rollouts asynchronously or has only access to unsequential expert data. While ASAF-1 may work well in practice it essentially assumes that the expert's and the generator's policies have the same state occupancy measure, which is incorrect until actually recovering the true expert policy.
&
\vspace{-0.25cm}
\begin{algorithm}[H]
    \begin{algorithmic}
        \caption{\label{alg:asaf}ASAF}
        \REQUIRE expert trajectories $\mathcal{D}_E = \{\tau_i\}_{i=1}^{N_E}$
        \STATE Randomly initialize $\tildepi$ and set $\pig \leftarrow \tildepi$
        \FOR{steps $m=0$ to $M$}
            \STATE Collect trajectories $\mathcal{D}_G = \{\tau_i\}_{i=1}^{N_G}$ using $\pig$
            \STATE Update $\tildepi$ by minimizing Equation~\ref{eq:BCE_loss}
            \STATE Set $\pig \leftarrow \tildepi$
        \ENDFOR
    \end{algorithmic}
\end{algorithm}
\end{tabular}

Finally, to offer a complete family of algorithms based on the structured discriminator approach, we show in Appendix~\ref{app:ASAF:ASQF} that this assumption is not mandatory and derive a transition-wise algorithm based on Soft Q-function Fitting (rather than soft advantages) that also gets rid of the RL loop. We call this algorithm ASQF. While theoretically sound, we found that in practice, ASQF is outperformed by ASAF-1 in more complex environments (see Section~\ref{sec:results_and_discussion}).

\section{Related works}
\citet{ziebart2008maximum} first proposed MaxEnt IRL, the foundation of modern IL. \citet{ziebart2010modeling} further elaborated MaxEnt IRL as well as deriving the optimal form of the MaxEnt policy at the core of our methods. \citet{finn2016connection} proposed a GAN formulation to IRL that leveraged the energy based models of \citet{ziebart2010modeling}. \citet{finn2016guided}'s implementation of this method, however, relied on processing full trajectories with Linear Quadratic Regulator and on optimizing with guided policy search, to manage the high variance of trajectory costs. To retrieve robust rewards, \citet{fu2017learning} proposed a straightforward transposition of \cite{finn2016connection} to state-action transitions. In doing so, they had to however do away with a GAN objective during policy optimization, consequently minimizing the Kullback–Leibler divergence from the expert occupancy measure to the policy occupancy measure (instead of the Jensen-Shannon divergence) \cite{ghasemipour2019divergence}. 

Later works \cite{sasaki2018sample, Kostrikov2020Imitation} move away from the Generative Adversarial formulation. To do so, \citet{sasaki2018sample} directly express the expectation of the Jensen-Shannon divergence between the occupancy measures in term of the agent's Q-function, which can then be used to optimize the agent's policy with off-policy Actor-Critic \cite{degris2012off}. Similarly, \citet{Kostrikov2020Imitation} use Dual Stationary Distribution Correction Estimation \cite{nachum2019dualdice} to approximate the Q-function on the expert's demonstrations before optimizing the agent's policy under the initial state distribution using the reparametrization trick \cite{haarnoja2018soft}. While \cite{sasaki2018sample,Kostrikov2020Imitation} are related to our methods in their interests in learning directly the value function, they differ in their goal and thus in the resulting algorithmic complexity. Indeed, they aim at improving the sample efficiency in terms of environment interaction and therefore move away from the algorithmically simple Generative Adversarial formulation towards more complicated divergence minimization methods. In doing so, they further complicate the Imitation Learning methods while still requiring to explicitly learn a policy. Yet, simply using the Generative Adversarial formulation with an Experience Replay Buffer can significantly improve the sample efficiency \cite{kostrikov2018discriminatoractorcritic}. 
For these reasons, and since our aim is to propose efficient yet simple methods, we focus on the Generative Adversarial formulation.% and the MaxEnt IRL framework.

While \citet{reddy2019sqil} share our interest for simpler IL methods, they pursue an opposite approach to ours. They propose to eliminate the reward learning steps of IRL by simply hard-coding a reward of 1 for expert's transitions and of 0 for agent's transitions. They then use Soft Q-learning \cite{haarnoja2017reinforcement} to learn a value function by sampling transitions in equal proportion from the expert's and agent's buffers. Unfortunately, once the learner accurately mimics the expert, it collects expert-like transitions that are labeled with a reward of 0 since they are generated and not coming from the demonstrations. This effectively causes the reward of expert-like behavior to decay as the agent improves and can severely destabilize learning to a point where early-stopping becomes required \cite{reddy2019sqil}.

Our work builds on \cite{finn2016connection}, yet its novelty is to explicitly express the probability of a trajectory in terms of the policy in order to directly learn this latter when training the discriminator. In contrast, \cite{fu2017learning} considers a transition-wise discriminator with un-normalized probabilities which makes it closer to ASQF (Appendix~\ref{app:ASAF:ASQF}) than to ASAF-1. Additionally, AIRL \cite{fu2017learning} minimizes the Kullback-Leiber Divergence \cite{ghasemipour2019divergence} between occupancy measures whereas ASAF minimizes the Jensen-Shanon Divergence between trajectory distributions.

Finally, Behavioral Cloning uses the loss function from supervised learning (classification or regression) to match expert's actions given expert's states and suffers from compounding error due to co-variate shift \cite{ross2010efficient} since its data is limited to the demonstrated state-action pairs without environment interaction. Contrarily, ASAF-1 uses the binary cross entropy loss in Equation~\ref{eq:BCE_loss} and does not suffer from compounding error as it learns on both generated and expert's trajectories.

\section{Results and discussion}

We evaluate our methods on a variety of discrete and continuous control tasks. Our results show that, in addition to drastically simplifying the adversarial IRL framework, our methods perform on par or better than previous approaches on all but one environment. 
When trajectory length is really long or drastically varies across episodes (see MuJoCo experiments Section~\ref{sec:mujoco_results}), we find that using sub-trajectories with fixed window-size (ASAF-\textit{w} or ASAF-1) significantly outperforms its full trajectory counterpart ASAF.

\subsection{Experimental setup}
\label{sec:results_and_discussion}
We compare our algorithms ASAF, ASAF-\textit{w} and ASAF-1 against GAIL \cite{ho2016generative}, the predominant Adversarial Imitation Learning algorithm in the literature, and AIRL \cite{fu2017learning}, one of its variations that also leverages the access to the generator's policy distribution. Additionally, we compare against SQIL \cite{reddy2019sqil}, a recent Reinforcement Learning-only approach to Imitation Learning that proved successful on high-dimensional tasks. Our implementations of GAIL and AIRL use PPO \cite{schulman2017proximal} instead of TRPO \cite{schulman2015trust} as it has been shown to improve performance \cite{kostrikov2018discriminatoractorcritic}. Finally, to be consistent with \cite{ho2016generative}, we do not use causal entropy regularization.  

For all tasks except MuJoCo, we selected the best performing hyperparameters through a random search of equal budget for each algorithm-environment pair (see Appendix~\ref{app:ASAF:hyperparameters}) and the best configuration is retrained on ten random seeds. For the MuJoCo experiments, GAIL required extensive tuning (through random searches) of both its RL and IRL components to achieve satisfactory performances. Our methods, ASAF-\textit{w} and ASAF-1, on the other hand showed much more stable and robust to hyperparameterization, which is likely due to their simplicity.  
SQIL used the same SAC\cite{haarnoja2018soft} implementation  and hyperparameters that were used to generate the expert demonstrations.

Finally for each task, all algorithms use the same neural network architectures for their policy and/or discriminator (see full description in Appendix~\ref{app:ASAF:hyperparameters}). 
Expert demonstrations are either generated by hand (mountaincar), using open-source bots (Pommerman) or from our implementations of SAC and PPO (all remaining). More details are given in Appendix~\ref{app:ASAF:environments}.

\begin{figure}[t]
    \centering
    \includegraphics[width=1.\textwidth]{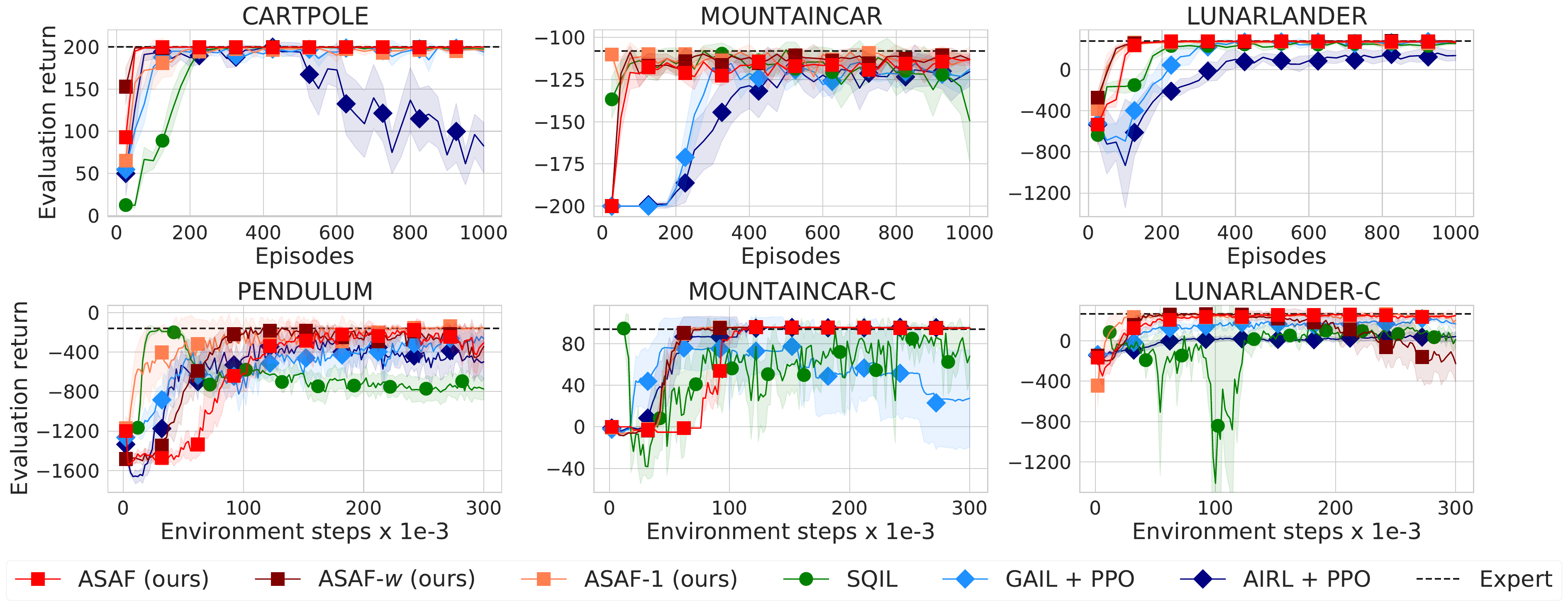}
    \caption[Results on classic control and Box2D tasks]{Results on classic control and Box2D tasks for 10 expert demonstrations. First row contains discrete actions environments, second row corresponds to continuous control.}
    \label{fig:results_toy}
\end{figure}

\subsection{Experiments on classic control and Box2D tasks (discrete and continuous)}
\label{sec:classic_control_results}

Figure~\ref{fig:results_toy} shows that ASAF and its approximate variations ASAF-1 and ASAF-\textit{w} quickly converge to expert's performance (here \textit{w} was tuned to values between 32 to 200, see Appendix~\ref{app:ASAF:hyperparameters} for selected window-sizes). This indicates that the practical benefits of using shorter trajectories or even just transitions does not hinder performance on these simple tasks. Note that for Box2D and classic control environments, we retrain the best configuration of each algorithm for twice as long than was done in the hyperparameter search, which allows to uncover unstable learning behaviors. Figure~\ref{fig:results_toy} shows that our methods display much more stable learning: their performance rises until they match the expert's and does not decrease once it is reached. This is a highly desirable property for an Imitation Learning algorithm since in practice one does not have access to a reward function and thus cannot monitor the performance of the learning algorithm to trigger early-stopping. The baselines on the other hand experience occasional performance drops. For GAIL and AIRL, this is likely due to the concurrent RL and IRL loops, whereas for SQIL, it has been noted that an effective reward decay can occur when accurately mimicking the expert \cite{reddy2019sqil}. This instability is particularly severe in the continuous control case. 
In practice, all three baselines use early stopping to avoid performance decay \cite{reddy2019sqil}.

\subsection{Experiments on MuJoCo (continuous control)}
\label{sec:mujoco_results}

\begin{figure}[t]
    \centering
    \includegraphics[width=1.\textwidth]{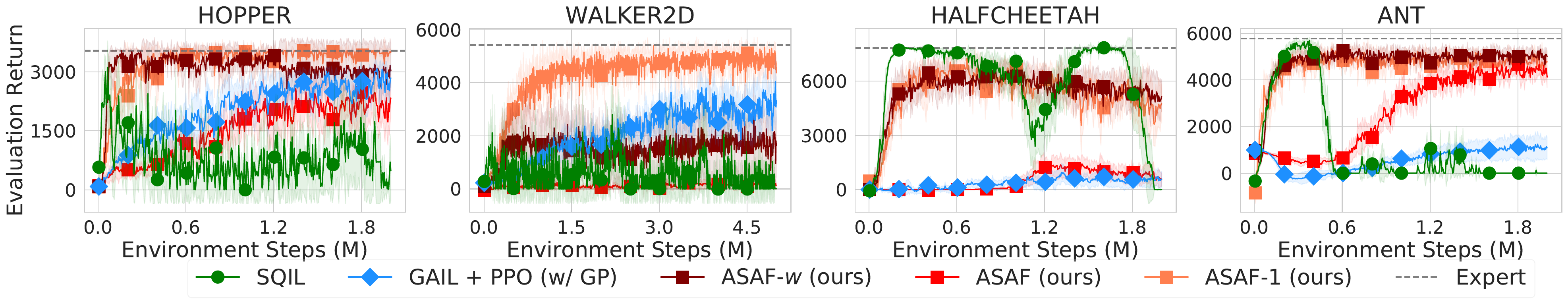}
    \caption[Results on MuJoCo tasks]{Results on MuJoCo tasks for 25 expert demonstrations.}
    \label{fig:mujoco_results}
\end{figure}

To scale up our evaluations in continuous control we use the popular MuJoCo benchmarks. In this domain, the trajectory length is either fixed at a large value (1000 steps on HalfCheetah) or varies a lot across episodes due to termination when the character falls down (Hopper, Walker2d and Ant). Figure~\ref{fig:mujoco_results} shows that these trajectory characteristics hinder ASAF's learning as ASAF requires collecting multiple episodes for every update, while ASAF-1 and ASAF-\textit{w} perform well and are more sample-efficient than ASAF in these scenarios.
We focus on GAIL since \cite{fu2017learning} claim that AIRL performs on par with it on MuJoCo environments. In Figure~\ref{fig:gail_gradient_penalty} in Appendix~\ref{app:ASAF:additional_experiments} we evaluate GAIL both with and without gradient penalty (GP) on discriminator updates \cite{gulrajani2017improved, kostrikov2018discriminatoractorcritic} and while GAIL was originally proposed without GP \cite{ho2016generative},
we empirically found that GP prevents the discriminator to overfit and enables RL to exploit dense rewards, which highly improves its sample efficiency. Despite these ameliorations, GAIL proved to be quite inconsistent across environments despite substantial efforts on hyperparameter tuning. On the other hand, ASAF-1 performs well across all environments. Finally, we see that SQIL's instability is exacerbated on MuJoCo.

\subsection{Experiments on Pommerman (discrete control)}
\label{sec:pommerman_results}

\begin{figure}[t]
    \centering
    \begin{tabular}{ccc}
        \includegraphics[width=.2\textwidth]{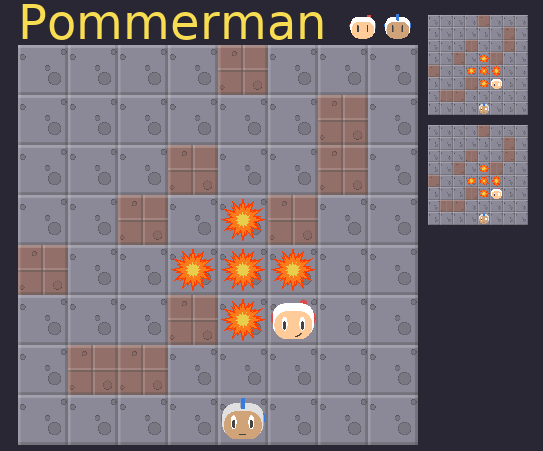} &
        \includegraphics[width=.45\textwidth]{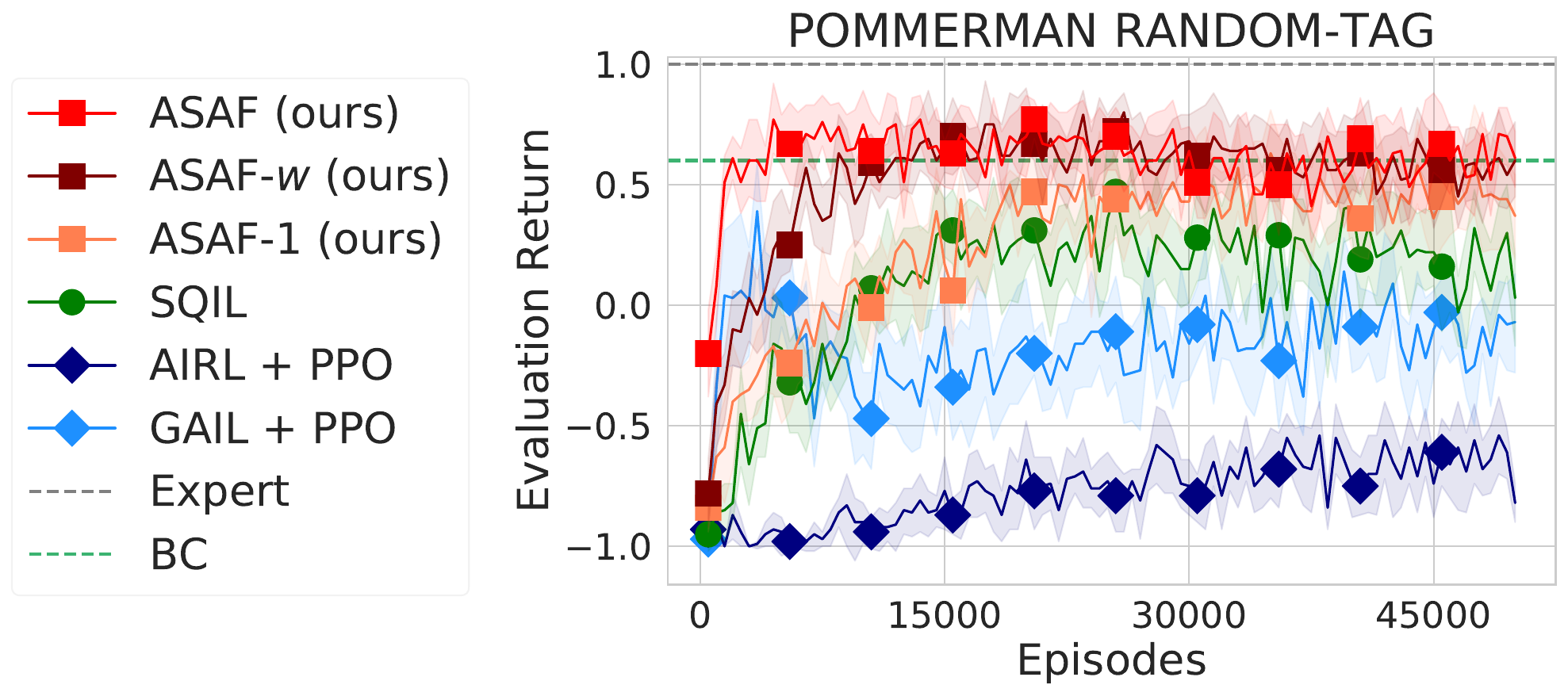} &
        \includegraphics[width=.26\textwidth]{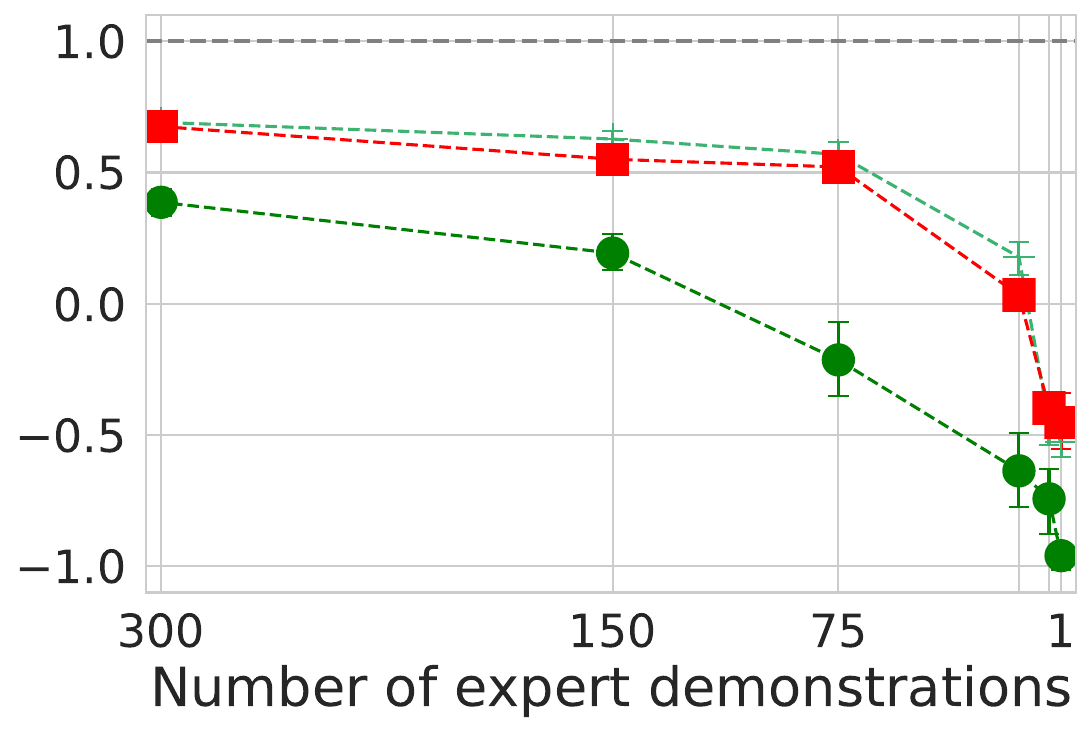} 
        % \\ (a) & (b) & (c)
    \end{tabular}
    \caption[Results on Pommerman Random-Tag]{Results on Pommerman Random-Tag: (Left) Snapshot of the environment. (Center) Learning measured as evaluation return over episodes for 150 expert trajectories  (Right) Average return on last 20\% of training for decreasing number of expert trajectories [300, 150, 75, 15, 5, 1].}
    \label{fig:pommerman_results_randomTag}
\end{figure}

Finally, to scale up our evaluations in discrete control environments, we consider the domain of Pommerman \cite{resnick2018pommerman}, a challenging and very dynamic discrete control environment that uses rich and high-dimensional observation spaces (see Appendix~\ref{app:ASAF:environments}). We perform evaluations of all of our methods and baselines on a 1 vs 1 task where a learning agent plays against a random agent, the opponent. The goal for the learning agent is to navigate to the opponent and eliminate it using expert demonstrations provided by the champion algorithm of the FFA 2018 competition \cite{zhou2018hybrid}. We removed the ability of the opponent to lay bombs so that it doesn't accidentally eliminate itself. Since it can still move around, it is however surprisingly tricky to eliminate: the expert has to navigate across the whole map, lay a bomb next to the opponent and retreat to avoid eliminating itself. This entire routine has then to be repeated several times until finally succeeding since the opponent will often avoid the hit by chance. We refer to this task as \textit{Pommerman Random-Tag}. Note that since we measure success of the imitation task with the win-tie-lose outcome (sparse performance metric), a learning agent has to truly reproduce the expert behavior until the very end of trajectories to achieve higher scores. Figure~\ref{fig:pommerman_results_randomTag} shows that all three variations of ASAF as well as Behavioral Cloning (BC) outperform the baselines.

\section{Conclusion}

We propose an important simplification to the Adversarial Imitation Learning framework by removing the Reinforcement Learning optimisation loop altogether. We show that, by using a particular form for the discriminator, our method recovers a policy that matches the expert's trajectory distribution. We evaluate our approach against prior works on many different benchmarking tasks and show that our method (ASAF) compares favorably to the predominant Imitation Learning algorithms. The approximate versions, ASAF-\textit{w} and ASAF-1, that use sub-trajectories yield a flexible algorithms that work well both on short and long time horizons. Finally, our approach still involves a reward learning module through its discriminator, and it would be interesting in future work to explore how ASAF can be used to learn robust rewards, along the lines of \citet{fu2017learning}.

\section*{Broader Impact}
Our contributions are mainly theoretical and aim at simplifying current Imitation Learning methods. We do not propose new applications nor use sensitive data or simulator. Yet our method can ease and promote the use, design and development of Imitation Learning algorithms and may eventually lead to applications outside of simple and controlled simulators. We do not pretend to discuss the ethical implications of the general use of autonomous agents but we rather try to investigate what are some of the differences in using Imitation Learning rather than reward oriented methods in the design of such agents.

Using only a scalar reward function to specify the desired behavior of an autonomous agent is a challenging task as one must weight different desiderata and account for unsuspected behaviors and situations. Indeed, it is well known in practice that Reinforcement Learning agents tend to find bizarre ways of exploiting the reward signal without solving the desired task. The fact that it is difficult to specify and control the behavior of an RL agents is a major flaw that prevent current methods to be applied to risk sensitive situations. On the other hand, Imitation Learning proposes a more natural way of specifying nuanced preferences by demonstrating desirable ways of solving a task. Yet, IL also has its drawbacks. First of all one needs to be able to demonstrate the desired behavior and current methods tend to be only as good as the demonstrator. Second, it is a challenging problem to ensure that the agent will be able to adapt to new situations that do not resemble the demonstrations. For these reasons, it is clear for us that additional safeguards are required in order to apply Imitation Learning (and Reinforcement Learning) methods to any application that could effectively have a real world impact. 

\section*{Acknowledgments}
We thank Eloi Alonso, Olivier Delalleau, Félix G. Harvey, Maxim Peter and the entire research team at Ubisoft Montreal's La Forge R\&D laboratory. Their feedback and comments contributed significantly to this work.
Christopher Pal and Derek Nowrouzezahrai acknowledge funding from the
Fonds de Recherche Nature et
Technologies (FRQNT), Ubisoft Montreal and Mitacs’ Accelerate Program in
support of our work,
as well as Compute Canada for providing computing resources. Derek and Paul also acknowledge support from the NSERC Industrial Research Chair program.

% ---------------------------------------------
% ---------------------------------------------
% ----- CHAPTER
% ---------------------------------------------
% ---------------------------------------------
\chapter[ARTICLE 2: PROMOTING COORDINATION THROUGH POLICY REGULARIZATION IN MULTI-AGENT DEEP REINFORCEMENT LEARNING]{\\ARTICLE 2: PROMOTING COORDINATION\\THROUGH POLICY REGULARIZATION IN MULTI-AGENT DEEP REINFORCEMENT LEARNING}\label{chap:article2_cmaddpg}

\vspace{-3mm}
\begin{center}
Co-authors\\Paul Barde, Félix Harvey, Derek Nowrouzezahrai \& Christopher Pal
\end{center}
\vspace{-5mm}
\begin{center}
Published in\\Advances in Neural Information Processing Systems, December 12, 2020
\end{center}
% % Define the custom symbol for footnote. In this case, it's an asterisk (*).
% \renewcommand{\thefootnote}{*} 
% \footnotetext{Denotes equal contribution.}
% % Reset to normal numbering for subsequent footnotes
% \renewcommand{\thefootnote}{\arabic{footnote}}
% \setcounter{footnote}{0}

\begin{abstract}
In multi-agent reinforcement learning, discovering successful collective behaviors is challenging as it requires exploring a joint action space that grows exponentially with the number of agents. While the tractability of independent agent-wise exploration is appealing, this approach fails on tasks that require elaborate group strategies. We argue that coordinating the agents' policies can guide their exploration and we investigate techniques to promote such an inductive bias. We propose two policy regularization methods: TeamReg, which is based on inter-agent action predictability and CoachReg that relies on synchronized behavior selection. We evaluate each approach on four challenging continuous control tasks with sparse rewards that require varying levels of coordination as well as on the discrete action Google Research Football environment. Our experiments show improved performance across many cooperative multi-agent problems. Finally, we analyze the effects of our proposed methods on the policies that our agents learn and show that our methods successfully enforce the qualities that we propose as proxies for coordinated behaviors.
\end{abstract}

\section{Introduction}
Multi-Agent Reinforcement Learning (MARL) refers to the task of training an agent to maximize its expected return by interacting with an environment that contains other learning agents. It represents a challenging branch of Reinforcement Learning (RL) with interesting developments in recent years \citep{hernandez2018multiagent}. A popular framework for MARL is the use of a Centralized Training and a Decentralized Execution (CTDE) procedure \citep{lowe2017multi, foerster2018counterfactual, iqbal2018actor, foerster2018bayesian, rashid2018qmix}. Typically, one leverages centralized critics to approximate the value function of the aggregated observations-actions pairs and train actors restricted to the observation of a single agent. Such critics, if exposed to coordinated joint actions leading to high returns, can steer the agents' policies toward these highly rewarding behaviors. However, these approaches depend on the agents luckily stumbling on these collective actions in order to grasp their benefit. Thus, it might fail in scenarios where such behaviors are unlikely to occur by chance. We hypothesize that in such scenarios, coordination-promoting inductive biases on the policy search could help discover successful behaviors more efficiently and supersede task-specific reward shaping and curriculum learning. To motivate this proposition we present a simple Markov Game in which agents forced to coordinate their actions learn remarkably faster. For more realistic tasks in which coordinated strategies cannot be easily engineered and must be learned, we propose to transpose this insight by relying on two coordination proxies to bias the policy search. The first avenue, TeamReg, assumes that an agent must be able to predict the behavior of its teammates in order to coordinate with them. The second, CoachReg, supposes that coordinated agents collectively recognize different situations and synchronously switch to different sub-policies to react to them.\footnote{Source code for the algorithms and environments will be made public upon publication of this work. Visualisations of CoachReg are available here: \url{https://sites.google.com/view/marl-coordination/}}.

Our contributions are threefold. First, we show that coordination can crucially accelerate multi-agent learning for cooperative tasks. Second, we propose two novel approaches that aim at promoting such coordination by augmenting CTDE MARL algorithms through additional multi-agent objectives that act as policy regularizers and are optimized jointly with the main return-maximization objective. Third, we design two new sparse-reward cooperative tasks in the multi-agent particle environment \citep{mordatch2018emergence}. We use them along with two standard multi-agent tasks to present a detailed evaluation of our approaches' benefits when they extend the reference CTDE MARL algorithm MADDPG \cite{lowe2017multi}. We validate our methods' key components by performing an ablation study and a detailed analysis of their effect on agents' behaviors. Finally, we verify that these benefits hold on the more complex, discrete action, Google Research Football environment \cite{kurach2019google}.

Our experiments suggest that our TeamReg objective provides a dense learning signal that can help guiding the policy towards coordination in the absence of external reward, eventually leading it to the discovery of higher performing team strategies in a number of cooperative tasks. However we also find that TeamReg does not lead to improvements in every single case and can even be harmful in environments with an adversarial component. For CoachReg, we find that enforcing synchronous sub-policy selection enables the agents to concurrently learn to react to different agreed upon situations and consistently yields significant improvements on the overall performance.

% SECTION -------------------------------------------------------------------------------------------------
\section{Background}
\subsection{Markov Games}\label{section:markov_games}
We consider the framework of Markov Games \citep{littman1994markov}, a multi-agent extension of Markov Decision Processes (MDPs). A Markov Game of $N$ agents is defined by the tuple $\langle \mathcal{S}, \mathcal{T}, \mathcal{P}, \{ \mathcal{O}^i, \mathcal{A}^i, \mathcal{R}^i \}_{i=1}^N \rangle$ where $\mathcal{S}$, $\mathcal{T}$, and $\mathcal{P}$ are respectively the set of all possible states, the transition function and the initial state distribution. While these are global properties of the environment, $\mathcal{O}^i$, $\mathcal{A}^i$ and $\mathcal{R}^i$ are individually defined for each agent $i$. They are respectively the observation functions, the sets of all possible actions and the reward functions. At each time-step $t$, the global state of the environment is given by $s_t \in \mathcal{S}$ and every agent's individual action vector is denoted by $a_t^i \in \mathcal{A}^i$. To select their action, each agent $i$ only has access to its own observation vector $o_t^i$ which is extracted by the observation function $\mathcal{O}^i$ from the global state $s_t$. The initial state $s_0$ is sampled from the initial state distribution $\mathcal{P}: \mathcal{S} \rightarrow [0, 1]$ and the next state $s_{t+1}$ is sampled from the probability distribution over the possible next states given by the transition function $\mathcal{T}: \mathcal{S} \times \mathcal{S} \times \mathcal{A}^1 \times ... \times \mathcal{A}^N \rightarrow [0,1]$. Finally, at each time-step, each agent receives an individual scalar reward $r_t^i$  from its reward function $\mathcal{R}^i: \mathcal{S} \times \mathcal{S} \times \mathcal{A}^1 \times ... \times \mathcal{A}^N \rightarrow \mathbb{R}$. Agents aim at maximizing their expected discounted return $\mathbb{E}\left[\sum_{t=0}^T \gamma^t r_t^i\right]$ over the time horizon $T$, where $\gamma \in [0, 1]$ is a discount factor.
\subsection{Multi-Agent Deep Deterministic Policy Gradient}
MADDPG~\citep{lowe2017multi} is an adaptation of the Deep Deterministic Policy Gradient algorithm~\citep{lillicrap2015continuous} to the multi-agent setting. It allows the training of cooperating and competing decentralized policies through the use of a centralized training procedure. In this framework, each agent $i$ possesses its own deterministic policy $\mu^i$ for action selection and critic $Q^i$ for state-action value estimation, which are respectively parametrized by $\theta^i$ and $\phi^i$. All parametric models are trained off-policy from previous transitions $\zeta_t \coloneqq (\mathbf{o}_t, \mathbf{a}_t, \mathbf{r}_t, \mathbf{o}_{t+1})$ uniformly sampled from a replay buffer $\mathcal{D}$. Note that $\mathbf{o}_t \coloneqq [o_t^1, ..., o_t^N]$ is the joint observation vector and $\mathbf{a}_t \coloneqq [a_t^1, ..., a_t^N]$ is the joint action vector, obtained by concatenating the individual observation vectors $o_t^i$ and action vectors $a_t^i$ of all $N$ agents. Each centralized critic is trained to estimate the expected return for a particular agent $i$ from the Q-learning loss~\citep{watkins1992q}:
\begin{equation}
\label{eq:Lcritic}
\begin{split}
\mathcal{L}^i(\mathbf{\phi}^i) 
&= \mathbb{E}_{\zeta_t \sim \mathcal{D}} 
\left[ 
    \frac{1}{2} \left(Q^i (\mathbf{o}_t, \mathbf{a}_t; \phi^i)
    - y^i_t \right)^2\right]
\\
y^i_t &= r_t^i + \gamma Q^i (\mathbf{o}_{t+1}, \mathbf{a}_{t+1}; \bar{\phi}^i)\left|_{a_{t+1}^j = \mu^j (o_{t+1}^j; \bar{\theta}^j)\, \forall j} \right.
\end{split}
\end{equation}
For a given set of weights $w$, we define its target counterpart $\bar{w}$, updated from $\bar{w}\leftarrow \tau w + (1-\tau) \bar{w}$ where $\tau$ is a hyperparameter.
Each policy is updated to maximize the expected discounted return of the corresponding agent $i$ :
\begin{align}
\label{eq:JPG}
    J_{PG}^i(\mathbf{\theta}^i) &= \mathbb{E}_{\mathbf{o}_t \sim \mathcal{D}} \left[Q^i (\mathbf{o}_t, \mathbf{a}_t) \bigg|_{\subalign{&a_t^i = \mu^i(o_t^i;\,\theta^i), \\ &a_t^j = \mu^j(o_t^j;\,\bar{\theta}^j)\, \forall j\neq i}} \right]
\end{align}
By taking into account \textit{all} agents' observation-action pairs when guiding an agent's policy, the value-functions are trained in a centralized, stationary environment, despite taking place in a multi-agent setting. This mechanism can allow to learn coordinated strategies that can then be deployed in a decentralized way. However, this procedure does not encourage the \textit{discovery} of coordinated strategies since high-return behaviors have to be randomly experienced through unguided exploration.

% SECTION -------------------------------------------------------------------------------------------------
\section{Motivation}
\label{sec:CMADDPG:toy_experiment}
\FloatBarrier

In this section, we aim to answer the following question: can coordination help the discovery of effective policies in cooperative tasks? Intuitively, coordination can be defined as an agent's behavior being informed by the behavior of another agent, i.e. structure in the agents' interactions. Namely, a team where agents behave independently of one another would not be coordinated. 

Consider the simple Markov Game consisting of a chain of length $L$ leading to a termination state as depicted in Figure~\ref{fig:toy_experiment_MDP}. At each time-step, both agents receive $r_t=-1$. The joint action of these two agents in this environment is given by $\mathbf{a}\in\mathcal{A}=\mathcal{A}^1 \times \mathcal{A}^2$, where $\mathcal{A}^1 = \mathcal{A}^2 = \{0,1\}$. Agent $i$ tries to go right when selecting $a^i=0$ and left when selecting $a^i=1$. However, to transition to a different state both agents need to perform the same action at the same time (two lefts or two rights).
Now consider a slight variant of this environment with a different joint action structure $\mathbf{a}'\in\mathcal{A}'$. The former action structure is augmented with a hard-coded coordination module which maps the joint primitive $a^i$ to $a^{i\prime}$ like so:
\begin{equation*}
    \mathbf{a}' = \begin{pmatrix*}[l] a^{1 \prime} = a^1 \\ a^{2 \prime} = a^1 a^2 + (1 - a^1)(1 - a^2)\end{pmatrix*},\,\begin{pmatrix*}[l]a^1 \\ a^2\end{pmatrix*} \in \mathcal{A}
\end{equation*}
While the second agent still learns a state-action value function $Q^2(s,a^2)$ with $a^2 \in \mathcal{A}^2$, the coordination module builds $a^{2\prime}$ from $(a^1, a^2)$ so that $a^{2\prime}$ effectively determines whether the second agent acts in \textit{agreement} or in \textit{disagreement} with the first agent. In other words, if $a^2=1$, then $a^{2\prime}=a^1$ (agreement) and if $a^2=0$, then $a^{2\prime}=1 - a^1$ (disagreement).

\WFclear
\clearpage
\begin{minipage}{0.48\textwidth}
While it is true that this additional structure does not modify the action space nor the independence of the action selection, it reduces the stochasticity of the transition dynamics as seen by agent 2. In the first setup, the outcome of an agent’s action is conditioned on the action of the other agent. In the second setup, if agent 2 decides to disagree, the transition becomes deterministic as the outcome is independent of agent 1. This suggests that by reducing the entropy of the transition distribution, this mapping reduces the variance of the Q-updates and thus makes online tabular Q-learning agents learn much faster (Figure~\ref{fig:toy_experiment_MDP}).

This example uses a handcrafted mapping in order to demonstrate the effectiveness of exploring in the space of coordinated policies rather than in the unconstrained policy space. Now, the following question remains: how can one softly learn the same type of constraint throughout training for any multi-agent cooperative tasks? In the following sections, we present two algorithms that tackle this problem.
\end{minipage}
\hfill
\begin{minipage}{0.48\textwidth}
    \centering
    \begin{tabular}{@{}c@{}}  % removes spacing around column to avoid overflow
    \includegraphics[width=\textwidth]{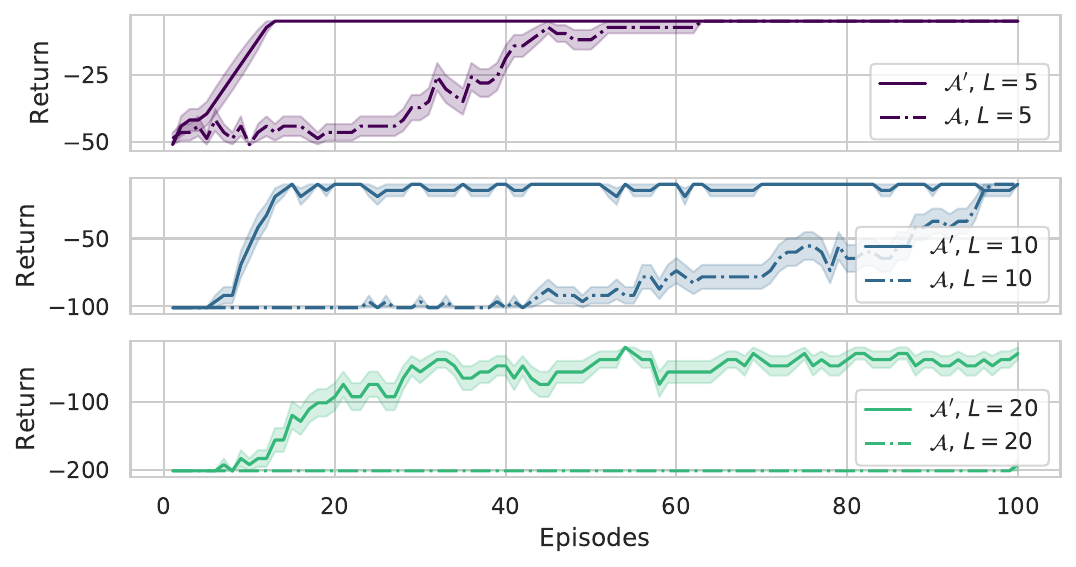} \\
    \includegraphics[width=\textwidth]{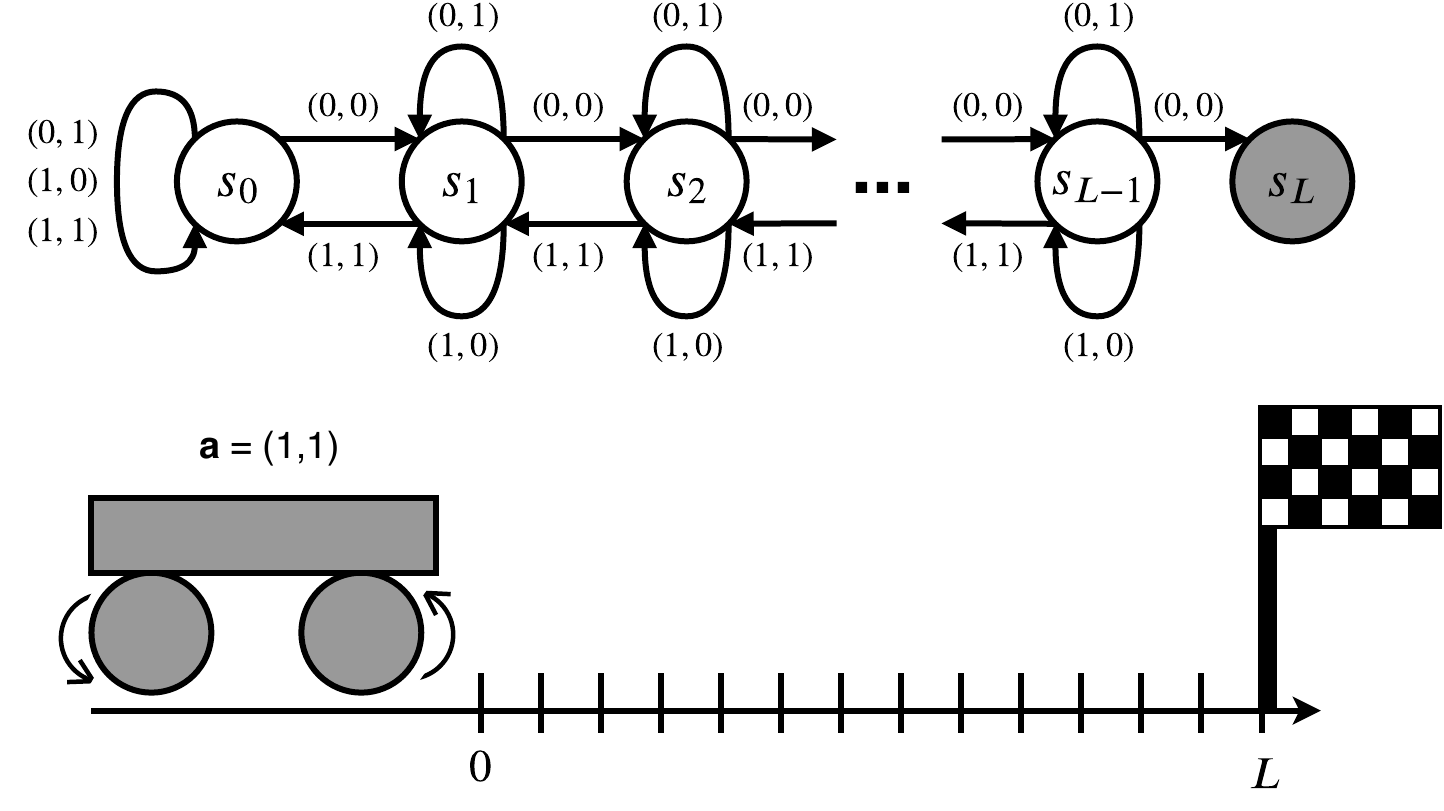}
    \end{tabular}
    \captionof{figure}[Simple coordination MDP]{(Top) The tabular Q-learning agents learn much more efficiently when constrained to the space of coordinated policies (solid lines) than in the original action space (dashed lines). (Bottom) Simple Markov Game consisting of a chain of length $L$ leading to a terminal state (in grey). Agents can be seen as the two wheels of a vehicle so that their actions need to be in agreement for the vehicle to move. The detailed experimental setup is reported in Appendix~\ref{ap:toy}.}
    \label{fig:toy_experiment_MDP}
\end{minipage}

\section{Coordination and Policy regularization}

Pseudocodes of our implementations are provided in Appendix~\ref{app:CMADDPG:algorithms} (see Algorithms~\ref{alg:teammaddpg}~and~\ref{alg:coachmaddpg}).

\subsection{Team regularization}\label{sec:CMADDPG:ts}
% TEAM FIGURE (alone)
% \begin{wrapfigure}{R}{0.375\textwidth}
%     \centering
%     \includegraphics[width=0.25\textwidth]{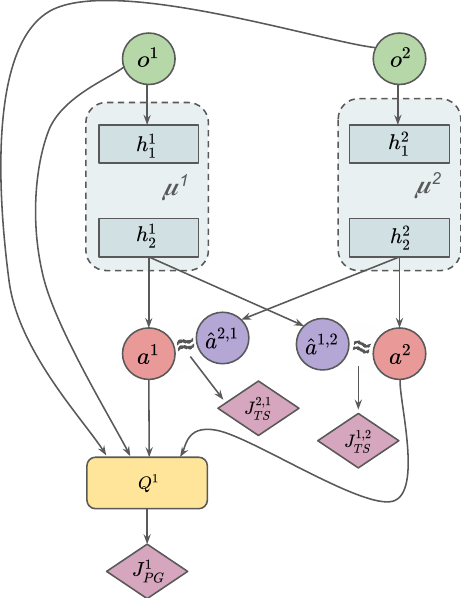}
%     \caption{Illustration of TeamReg with two agents. Each agent's policy is equipped with additional heads that are trained to predict other agents' actions and every agent is regularized to produce actions that its teammates correctly predict. The method is depicted for agent 1 only to avoid cluttering.}
%     \label{fig:diagram-teamMADDPG}
% \end{wrapfigure}

This first approach aims at exploiting the structure present in the joint action space of coordinated policies to attain a certain degree of predictability of one agent's behavior with respect to its teammate(s). It is based on the hypothesis that the reciprocal also holds i.e. that promoting agents' predictability could foster such team structure and lead to more coordinated behaviors. This assumption is cast into the decentralized framework by training agents to predict their teammates' actions given only their own observation. For continuous control, the loss is the mean squared error (MSE) between the predicted and true actions of the teammates, yielding a teammate-modelling secondary objective. For discrete action spaces, we use the KL-divergence ($D_\text{KL}$) between the predicted and real action distributions of an agent pair.

While estimating teammates' policies can be used to enrich the learned representations, we extend this objective to also drive the teammates' behaviors towards the predictions by leveraging a differentiable action selection mechanism. We call \textit{team-spirit} this objective pair $J^{i,j}_{TS}$ and $J^{j,i}_{TS}$ between agents $i$ and $j$:
\begin{equation}
    \label{eq:J_TScont}
    J^{i,j}_{TS\text{-continuous}}(\theta^i, \theta^j) 
    = -\mathbb{E}_{\mathbf{o}_t \sim \mathcal{D}} 
    \left[
    \text{MSE}(\mu^j(o_t^j;\theta^j),\hat{\mu}^{i,j}(o_t^i;\theta^i))
    \right]
\end{equation}
\begin{equation}
    \label{eq:J_TSdisc}
    J^{i,j}_{TS\text{-discrete}}(\theta^i, \theta^j) 
    = -\mathbb{E}_{\mathbf{o}_t\sim \mathcal{D}}
    \left[ 
    \text{D}_{\text{KL}} \left( \pi^j(\cdot|o^j_t; \theta^j)||\hat{\pi}^{i,j}(\cdot|o^i_t; \theta^i)\right)
    \right]
\end{equation}
where $\hat{\mu}^{i,j}$ (or $\hat{\pi}^{i,j}$ in the discrete case) is the policy head of agent $i$ trying to predict the action of agent $j$. The total objective for a given agent $i$ becomes:
\begin{equation}
    \label{eq:team_update}
    J_{total}^i(\theta^i) 
    = J_{PG}^i(\theta^i)
    + \lambda_1 \sum_j J^{i,j}_{TS}(\theta^i, \theta^j)
    + \lambda_2 \sum_j J^{j,i}_{TS}(\theta^j, \theta^i)
\end{equation}
where $\lambda_1$ and $\lambda_2$ are hyperparameters that respectively weigh how well an agent should predict its teammates' actions, and how predictable an agent should be for its teammates. We call TeamReg this dual regularization from team-spirit objectives. Figure~\ref{fig:diagram-teamMADDPG} summarizes these interactions.
% ------1

\subsection{Coach regularization}
In order to foster coordinated interactions, this method aims at teaching the agents to recognize different situations and synchronously select corresponding sub-behaviors. 
% % COACH FIGURE (alone)
% \begin{wrapfigure}{R}{0.375\textwidth}
%     \centering
%     \includegraphics[width=0.375\textwidth]{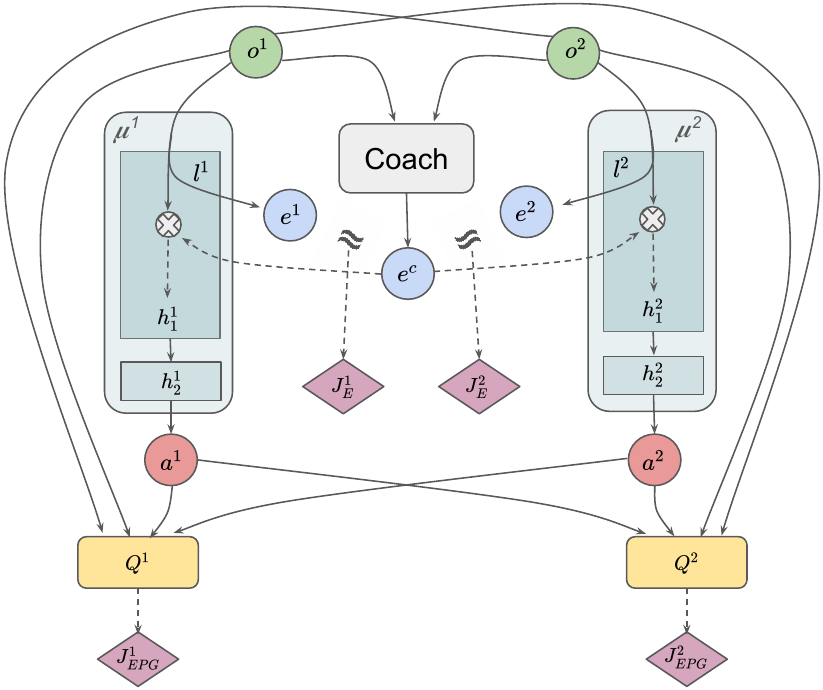}
%     \caption[Illustration of CoachReg with two agents]{Illustration of CoachReg with two agents. A central model, the coach, takes all agents' observations as input and outputs the current mode (policy mask). Agents are regularized to predict the same mask from their local observations and optimize the corresponding sub-policy.}
%     \label{fig:diagram-coachMADDPG}
% \end{wrapfigure}
%
\paragraph{Sub-policy selection}
Firstly, to enable explicit sub-behavior selection, we propose the use of \textit{policy masks} as a means to modulate the agents' policies. A policy mask $u^j$ is a one-hot vector of size $K$ (a fixed hyperparameter) with its $j^{\text{th}}$ component set to one. In practice, we use these masks to perform dropout \citep{srivastava2014dropout} in a structured manner on $\tilde{h}_1\in \mathbb{R}^{M}$, the pre-activations of the first hidden layer $h_1$ of the policy network $\pi$.  To do so, we construct the vector $\boldsymbol{u}^j$, which is the concatenation of $C$ copies of $u^j$, in order to reach the dimensionality $M = C * K$. The element-wise product $\boldsymbol{u}^j \odot \tilde{h}_1$ is performed and only the units of $\tilde{h}_{1}$ at indices $m\, \text{modulo}\, K = j$ are kept for $m =0,\ \ldots, M-1$. Each agent $i$ generates $e_t^i$, its own policy mask from its observation $o_t^i$, to modulate its policy network. Here, a simple linear layer $l^i$ is used to produce a categorical probability distribution $p^i(e_t^i|o_t^i)$ from which the one-hot vector is sampled:
\vspace{-3mm}
\begin{equation}
    p^i(e_t^i=u^j|o_t^i) 
    = \frac{\text{exp}\left(l^i(o_t^i;\theta^i)_j\right)}{\sum_{k=0}^{K-1} \text{exp}\left(l^i(o_t^i;\theta^i)_k\right)}
\end{equation}

% \newpage
\vspace{-5mm}
\paragraph{Synchronous sub-policy selection}
Although the policy masking mechanism enables the agent to swiftly switch between sub-policies it does not encourage the agents to synchronously modulate their behavior. To promote synchronicity we introduce the \textit{coach} entity, parametrized by $\psi$, which learns to produce policy-masks $e^c_t$ from the joint observations, i.e. $p^c(e_t^c|\mathbf{o}_t; \psi)$. The coach is used at training time only and drives the agents toward synchronously selecting the same behavior mask. Specifically, the coach is trained to output masks that (1) yield high returns when used by the agents and (2) are predictable by the agents. Similarly, each agent is regularized so that (1) its private mask matches the coach's mask and (2) it derives efficient behavior when using the coach's mask. At evaluation time, the coach is removed and the agents only rely on their own policy masks. The policy gradient objective when agent $i$ is provided with the coach's mask is given by:
\begin{equation}
\label{eq:JEPG}
    J_{EPG}^i(\theta^i, \psi)
    = \mathbb{E}_{\mathbf{o}_t, \mathbf{a}_t \sim \mathcal{D}} \left[ Q^i (\mathbf{o}_t, \mathbf{a}_t) \left|_{\subalign{&a_t^i = \mu(o^i_t, e^c_t;\theta^i)\\&e^c_t \sim p^c(\cdot|\mathbf{o}_t; \psi)}} \right.\right]
\end{equation}
The difference between the mask distribution of agent $i$ and the coach's is measured from the Kullback–Leibler divergence:
\begin{equation}
\label{eq:JE}
J_{E}^i(\mathbf{\theta}^i, \psi) = -\mathbb{E}_{\mathbf{o}_t\sim \mathcal{D}}\left[ \text{D}_{\text{KL}} \left(\left. p^c(\cdot|\mathbf{o}_t; \psi)\right||p^i(\cdot|o^i_t; \theta^i)\right)\right]
\end{equation}
The total objective for agent $i$ is:
\begin{equation}
\label{eq:Jtoti}
    J^i_{total}(\theta^i) 
    = J_{PG}^i(\theta^i)
    + \lambda_1 J_{E}^i(\theta^i, \psi)
    + \lambda_2 J_{EPG}^i(\theta^i, \psi)
\end{equation}
with $\lambda_1$ and $\lambda_2$ the regularization coefficients.
Similarly, the coach is trained with the following dual objective, weighted by the $\lambda_3$ coefficient:
\begin{equation}
\label{eq:Jtotc}
J^c_{total}(\psi) = \frac{1}{N}\sum_{i=1}^N\left( J_{EPG}^i(\theta^i, \psi) + \lambda_3J_{E}^i(\theta^i, \psi)\right)
\end{equation}
In order to propagate gradients through the sampled policy mask we reparameterized the categorical distribution using the Gumbel-softmax \citep{jang2016categorical}.
We call this coordinated sub-policy selection regularization CoachReg and illustrate it in Figure~\ref{fig:diagram-coachMADDPG}.

\begin{minipage}{0.48\textwidth}
    \centering
    \includegraphics[width=0.67\textwidth]{}
    \vspace{0mm}
    \captionof{figure}[Illustration of TeamReg with two agents]{Illustration of TeamReg with two agents. Each agent's policy is equipped with additional heads that are trained to predict other agents' actions and every agent is regularized to produce actions that its teammates correctly predict. The method is depicted for agent 1 only to avoid cluttering}
    \label{fig:diagram-teamMADDPG}
\end{minipage}
\hfill
\begin{minipage}{0.49\textwidth}
    \centering
    \vspace{-5mm}
    \includegraphics[width=\textwidth]{}
    \vspace{0mm}
    \captionof{figure}[Illustration of CoachReg with two agents]{Illustration of CoachReg with two agents. A central model, the coach, takes all agents' observations as input and outputs the current mode (policy mask). Agents are regularized to predict the same mask from their local observations and optimize the corresponding sub-policy.}
    \label{fig:diagram-coachMADDPG}
\end{minipage}

\FloatBarrier
% SECTION -------------------------------------------------------------------------------------------------
\section{Related Work}
\label{section:related_work}
Several works in MARL consider explicit communication channels between the agents and distinguish between communicative actions (e.g. broadcasting a given message) and physical actions (e.g. moving in a given direction) \citep{foerster2016learning,  mordatch2018emergence, lazaridou2016multi}. Consequently, they often focus on the emergence of language, considering tasks where the agents must discover a common communication protocol to succeed. Deriving a successful communication protocol can already be seen as coordination in the communicative action space and can enable, to some extent, successful coordination in the physical action space~\citep{ahilan2019feudal}. Yet, explicit communication is not a necessary condition for coordination as agents can rely on physical communication \citep{mordatch2018emergence, gupta2017cooperative}.

TeamReg falls in the line of work that explores how to shape agents' behaviors with respect to other agents through auxiliary tasks. \citet{strouse2018learning} use the mutual information between the agent's policy and a goal-independent policy to shape the agent's behavior towards hiding or spelling out its current goal. However, this approach is only applicable for tasks with an explicit goal representation and is not specifically intended for coordination. \citet{jaques2018intrinsic} approximate the direct causal effect between agent's actions and use it as an intrinsic reward to encourage social empowerment. This approximation relies on each agent learning a model of other agents' policies to predict its effect on them. In general, this type of behavior prediction can be referred to as \textit{agent modelling} (or opponent modelling) and has been used in previous work to enrich representations~\citep{hern2019agent, hong2017deep}, to stabilise the learning dynamics \citep{he2016opponent} or to classify the opponent's play style \citep{schadd2007opponent}.

With CoachReg, agents learn to unitedly recognize different modes in the environment and adapt by jointly switching their policy. This echoes with the hierarchical RL literature and in particular with the single agent options framework \citep{bacon2017option} where the agent switches between different sub-policies, the options, depending on the current state. To encourage cooperation in the multi-agent setting, \citet{ahilan2019feudal} proposed that an agent, the ``manager'', is extended with the possibility of setting other agents' rewards in order to guide collaboration. CoachReg stems from a similar idea: reaching a consensus is easier with a central entity that can asymmetrically influence the group. Yet, \citet{ahilan2019feudal} guides the group in terms of ``ends'' (influences through the rewards) whereas CoachReg constrains it in terms of ``means'' (the group must synchronously switch between different strategies). Hence, the interest of CoachReg does not just lie in training sub-policies (which are obtained here through a simple and novel masking procedure) but rather in co-evolving synchronized sub-policies across multiple agents. \citet{mahajan2019maven} also looks at sub-policies co-evolution to tackle the problem of joint exploration, however their selection mechanism occurs only on the first timestep and requires duplicating random seeds across agents at test time. On the other hand, with CoachReg the sub-policy selection is explicitly decided by the agents themselves at each timestep without requiring a common sampling procedure since the mode recognition has been learned and grounded on the state throughout training.

Finally, \citet{barton2018measuring} propose convergent cross mapping (CCM) to measure the degree of effective coordination between two agents. Although this represents an interesting avenue for behavior analysis, it fails to provide a tool for effectively enforcing coordination as CCM must be computed over long time series making it an impractical learning signal for single-step temporal difference methods. 

To our knowledge, this work is the first to extend agent modelling to derive an inductive bias towards team-predictable policies or to introduce a collective, agent induced, modulation of the policies without an explicit communication channel. Importantly, these coordination proxies are enforced throughout training only, which allows to maintain decentralised execution at test time.

% SECTION -------------------------------------------------------------------------------------------------
\section{Training environments}
\label{section:training_environments}
Our continuous control tasks are built on OpenAI's multi-agent particle environment \citep{mordatch2018emergence}. SPREAD and CHASE were introduced by \citep{lowe2017multi}. We use SPREAD as is but with sparse rewards. CHASE is modified with a prey controlled by repulsion forces so that only the predators are learnable, as we wish to focus on coordination in cooperative tasks. Finally we introduce COMPROMISE and BOUNCE where agents are physically tied together. While positive return can be achieved in these tasks by selfish agents, they all benefit from coordinated strategies and maximal return can only be achieved by agents working closely together. Figure~\ref{fig:CMADDPG:envs} presents a visualization and a brief description. In all tasks, agents receive as observation their own global position and velocity as well as the relative position of other entities. A more detailed description is provided in Appendix~\ref{app:CMADDPG:environments}. Note that work showcasing experiments on these environments often use discrete action spaces and dense rewards (e.g. the proximity with the objective) \citep{iqbal2018actor, lowe2017multi, jiang2018learning}. In our experiments, agents learn with continuous action spaces and from sparse rewards which is a far more challenging setting.

% SECTION -------------------------------------------------------------------------------------------------
\section{Results and Discussion}
The proposed methods offer a way to incorporate new inductive biases in CTDE multi-agent policy search algorithms. We evaluate them by extending MADDPG, one of the most widely used algorithm in the MARL literature. We compare against vanilla MADDPG as well as two of its variants in the four cooperative multi-agent tasks described in Section~\ref{section:training_environments}. The first variant (DDPG) is the single-agent counterpart of MADDPG (decentralized training). The second (MADDPG + sharing) shares the policy and value-function models across agents. Additionally to the two proposed algorithms and the three baselines, we present results for two ablated versions of our methods. The first ablation (MADDPG + agent modelling) is similar to TeamReg but with $\lambda_2=0$, which results in only enforcing agent modelling and not encouraging agent predictability. The second ablation (MADDPG + policy mask) uses the same policy architecture as CoachReg, but with $\lambda_{1,2,3}=0$, which means that agents still predict and apply a mask to their own policy, but synchronicity is not encouraged.

\begin{figure}[htbp]
    \centering
    \includegraphics[width=0.9\textwidth]{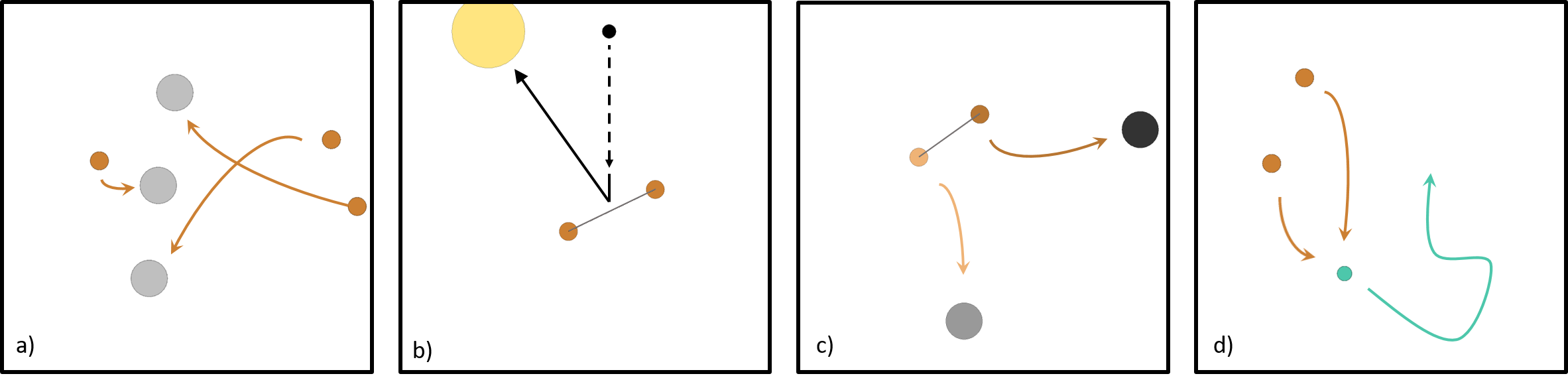}
    \caption[Depiction of multi-agent coordination environments]{Multi-agent tasks we employ. (a) SPREAD: Agents must spread out and cover a set of landmarks. (b) BOUNCE: Two agents are linked together by a spring and must position themselves so that the falling black ball bounces towards a target. (c) COMPROMISE: Two linked agents must compete or cooperate to reach their own assigned landmark. (d) CHASE: Two agents chase a (non-learning) prey (turquoise) that moves w.r.t repulsion forces from predators and walls.}
    \label{fig:CMADDPG:envs}
\end{figure}

To offer a fair comparison between all methods, the hyperparameter search routine is the same for each algorithm and environment (see Appendix~\ref{app:CMADDPG:hyperparameter_search_ranges}). For each search-experiment (one per algorithm per environment), 50 randomly sampled hyperparameter configurations each using 3 random seeds are used to train the models for $15,000$ episodes. For each algorithm-environment pair, we then select the best hyperparameter configuration for the final comparison and retrain them on 10 seeds for twice as long. This thorough evaluation procedure represents around 3 CPU-year. We give all details about the training setup and model selection in Appendix~\ref{app:CMADDPG:training_details} and \ref{app:CMADDPG:model_selection}. The results of the hyperparameter searches are given in Appendix~\ref{app:CMADDPG:hyperparameter_search_results}. Interestingly, Figure~\ref{fig:hyperparam_search_boxplots} shows that our proposed coordination regularizers improve robustness to hyperparameters despite having more hyperparameters to tune.

\begin{table}[t]
\caption[Final performance in coordination environments]{Final performance reported as mean return over agents averaged across 10 episodes and 10 seeds ($\pm$ SE).}
\vspace{-6mm}
\begin{center}
\scalebox{.67}{
\begin{tabular}{l|c|c|c|c|c|c|c|}
 \backslashbox[68pt]{env}{alg} & DDPG & MADDPG & \makecell{MADDPG\\+sharing} & \makecell{MADDPG\\+agent modelling} & \makecell{MADDPG\\+policy mask} & \makecell{MADDPG\\+TeamReg (ours)} & \makecell{MADDPG\\+CoachReg (ours)} \\
 \hline
 SPREAD & $133 \pm 12$ & $159 \pm 6$ & $47 \pm 8$ & $183 \pm 10$ & $\boldsymbol{221 \pm 11}$ & $\boldsymbol{216 \pm 12}$ & $\boldsymbol{210 \pm 12}$ \\ 
 BOUNCE & $3.6 \pm 1.4$ & $4.0 \pm 1.6$ & $0.0 \pm 0.0$ & $3.8 \pm 1.5$ & $3.7 \pm 1.1$ & $\boldsymbol{5.8 \pm 1.3}$ & $\boldsymbol{7.4 \pm 1.2}$ \\  
 COMPROMISE & $19.1 \pm 1.2$ & $18.1 \pm 1.1$ & $19.6 \pm 1.5$ & $12.9 \pm 0.9$ & $18.4 \pm 1.3$ & $8.8 \pm 0.9$ & $\boldsymbol{31.1 \pm 1.1}$ \\
 CHASE & $727 \pm 87$ & $834 \pm 80$ & $\boldsymbol{980 \pm 64}$ & $\boldsymbol{946 \pm 69}$ & $722 \pm 82$ & $\boldsymbol{917 \pm 90}$ & $\boldsymbol{949 \pm 54}$
\end{tabular}
}
\end{center}
\label{tab:final_performance}
\end{table}
\begin{figure}[t]
    \centering
    \makebox[\textwidth][c]{\includegraphics[width=1.\textwidth]{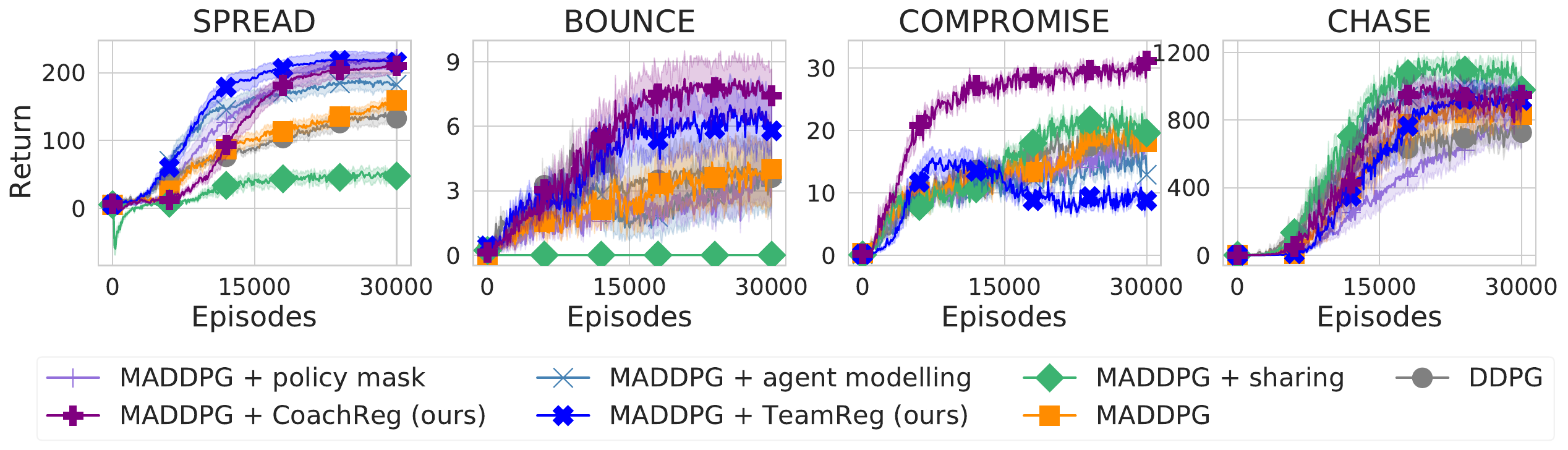}}
    \caption[Learning curves on all four coordination environments]{Learning curves (mean return over agents) for our two proposed algorithms, two ablations and three baselines on all four environments. Solid lines are the mean and envelopes are the Standard Error (SE) across the 10 training seeds.}
    \label{fig:learning_curves}
\end{figure}

\subsection{Asymptotic Performance}

Figure~\ref{fig:learning_curves} reports the average learning curves and Table~\ref{tab:final_performance} presents the final performance. CoachReg is the best performing algorithm considering performance across all tasks. TeamReg also significantly improves performance on two tasks (SPREAD and BOUNCE) but shows unstable behavior on COMPROMISE, the only task with an adversarial component. This result reveals one limitation of this approach and is dicussed in details in Appendix~\ref{app:CMADDPG:effects_enforcing_predictability}. Note that all algorithms perform similarly well on CHASE, with a slight advantage to the one using parameter sharing; yet this superiority is restricted to this task where the optimal strategy is to move symmetrically and squeeze the prey into a corner. Contrary to popular belief, we find that MADDPG almost never significantly outperforms DDPG in these sparse reward environments, supporting the hypothesis that while CTDE algorithms can in principle identify and reinforce highly rewarding coordinated behavior, they are likely to fail to do so if not incentivized to coordinate.

Regarding the ablated versions of our methods, the use of unsynchronized policy masks might result in swift and unpredictable behavioral changes and make it difficult for agents to perform together and coordinate. Experimentally, ``MADDPG~+~policy mask'' performs similarly or worse than MADDPG on all but one environment, and never outperforms the full CoachReg approach. However, policy masks alone seem sufficient to succeed on SPREAD, which is about selecting a landmark from a set.  Finally ``MADDPG~+~agent modelling'' does not drastically improve on MADDPG apart from one environment, and is always outperformed by the full TeamReg (except on COMPROMISE, see Appendix~\ref{app:CMADDPG:effects_enforcing_predictability}) which supports the importance of enforcing predictability alongside agent modeling.

\subsection{Effects of enforcing predictable behavior}\label{section:TeamReg_analysis}

Here we validate that enforcing predictability makes the agent-modelling task more successful. To this end, we compare, on the SPREAD environment, the team-spirit losses between TeamReg and its ablated versions. Figure~\ref{fig:ts_analysis} shows that initially, due to the weight initialization, the predicted and actual actions both have relatively small norms yielding small values of team-spirit loss. As training goes on ($\sim$1000 episodes), the norms of the action-vector increase and the regularization loss becomes more important. As expected, MADDPG leads to the worst team-spirit loss as it is not trained to predict the actions of other agents. When using only the agent-modelling objective ($\lambda_1 > 0$), the agents significantly decrease the team-spirit loss, but it never reaches values as low as when using the full TeamReg objective ($\lambda_1 > 0$ and $\lambda_2 > 0$). Note that the team-spirit loss increases when performance starts to improve i.e. when agents start to master the task ($\sim$8000 episodes). Indeed, once the return maximisation signal becomes stronger, the relative importance of the auxiliary objective is reduced. Being predictable with respect to one-another may push agents to explore in a more structured and informed manner in the absence of reward signal, as similarly pursued by intrinsic motivation approaches \citep{chentanez2005intrinsically}.

\subsection{Analysis of synchronous sub-policy selection}
\label{subsec:CMADDPG:analysis_coach}

\begin{figure}[b!]
\begin{minipage}{.35\textwidth}
\vspace{5mm}
  \centering
  \includegraphics[width=1.\textwidth, height=0.5\textwidth]{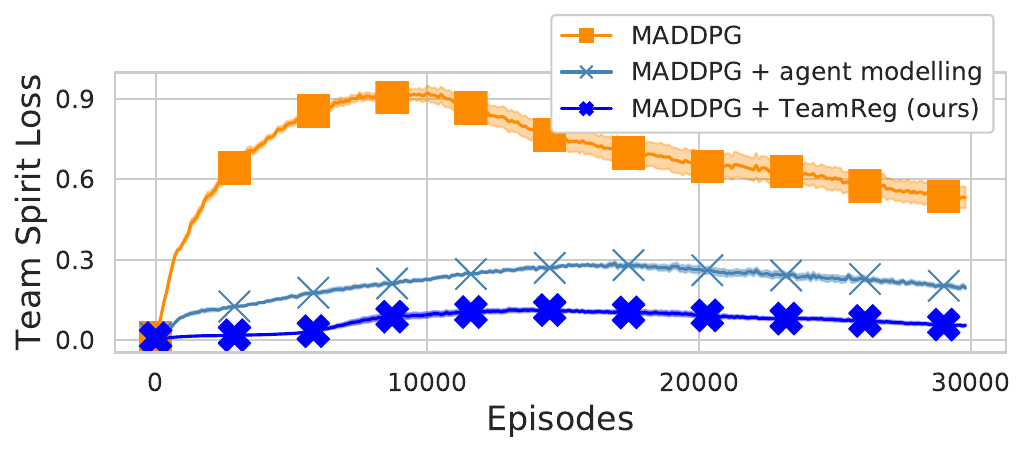}
  \vspace{-1mm}
  \captionof{figure}[TeamReg ablation study]{Effect of enabling and disabling the coefficients $\lambda_1$ and $\lambda_2$ on the ability of agents to predict their teammates behavior. Solid lines and envelope are average and SE on 10 seeds on SPREAD.
  }
  \label{fig:ts_analysis}
\end{minipage}
\hfill
\begin{minipage}{.60\textwidth}
  \centering
  \begin{tabular}{@{}cc@{}}
    \includegraphics[width=0.47\textwidth]{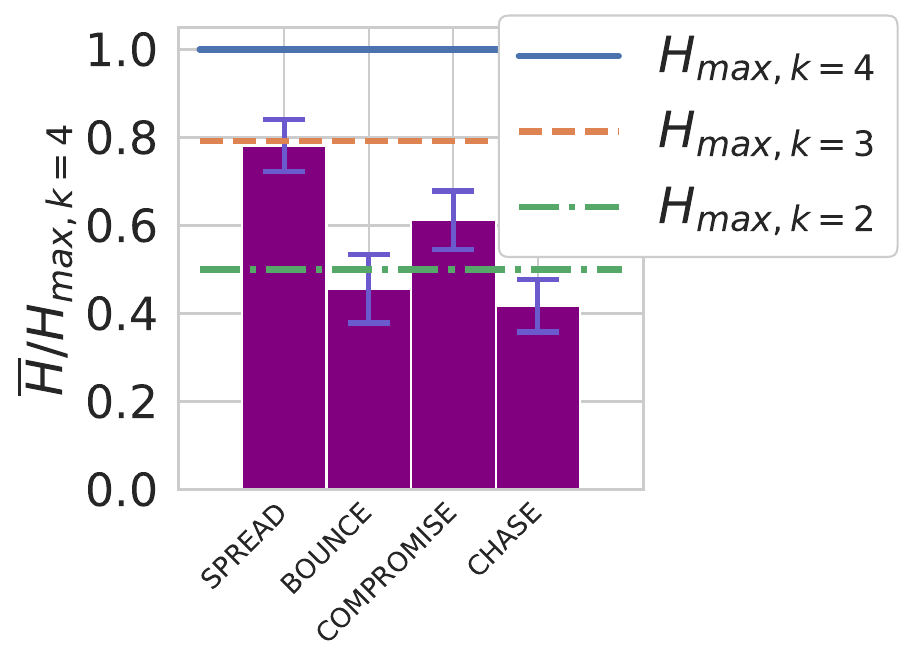} & \includegraphics[width=0.47\textwidth]{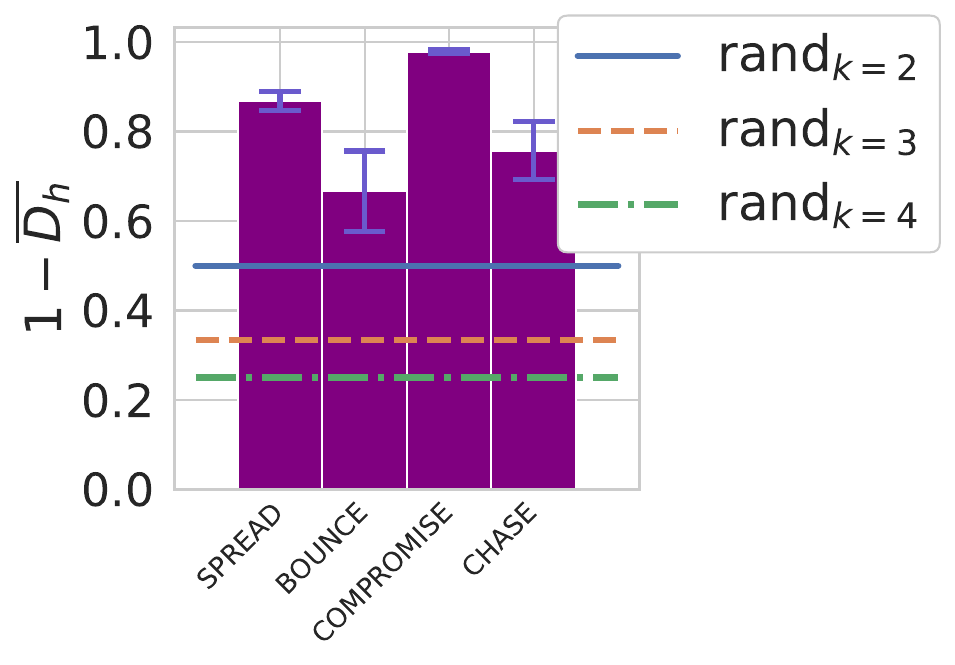}
    \end{tabular}
    \captionof{figure}[CoachReg synchronicity analysis]{(Left) Average entropy of the policy mask distributions for each task. $H_{max, k}$ is the entropy of a $k$-CUD. (Right) Average Hamming Proximity between the policy mask sequence of agent pairs. rand$_k$ stands for agents independently sampling their masks from $k$-CUD. Error bars are the SE on 10 seeds.
    }
    \label{fig:masks_metrics}
\end{minipage}
\vspace{-5mm}
\end{figure}
In this section we confirm that CoachReg yields the desired behavior: agents \textit{synchronously} alternating between \textit{varied} sub-policies.

Figure~\ref{fig:masks_metrics} shows the average entropy of the mask distributions for each environment compared to the entropy of Categorical Uniform Distributions of size $k$ ($k$-CUD). On all the environments, agents use several masks and tend to alternate between masks with more variety (close to uniformly switching between 3 masks) on SPREAD (where there are 3 agents and 3 goals) than on the other environments (comprised of 2 agents). Moreover, the Hamming proximity between the agents' mask sequences, $1 - D_h$ where $D_h$ is the Hamming distance (i.e. the ratio of timesteps for which the two sequences are different) shows that agents are synchronously selecting the same policy mask at test time (without a coach). Finally, we observe that some settings result in the agents coming up with interpretable strategies, like the one depicted in Figure~\ref{fig:BD_bounce} in Appendix~\ref{app:CMADDPG:coach_subpolicy_selection} where the agents alternate between two sub-policies depending on the position of the target\footnote{See animations at \url{https://sites.google.com/view/marl-coordination/}}.

\subsection{Experiments on discrete action spaces}

\begin{minipage}{.48\textwidth}
We evaluate our techniques on the more challenging task of 3vs2 Google Research football environment \cite{kurach2019google}. In this environment, each agent controls an offensive player and tries to score against a defensive player and a goalkeeper controlled by the engine's rule-based bots. Here agents have discrete action spaces of size 21, with actions like moving direction, dribble, sprint, short pass, high pass, etc. We use as observations 37-dimensional vectors containing players' and ball's coordinates, directions, etc.
\end{minipage}
\hfill
\begin{minipage}{.48\textwidth}
    \centering
    \scalebox{0.75}{
    \begin{tabular}{c}
        Table 2: Average Returns for 3v2 football\\
        \setcounter{table}{1}
        % \multicolumn{1}{c}{
        \begin{tabular}{|c|c|}
        \hline
        \small MADDPG & \small 0.004 $\pm$ 0.002 \\
        \small MADDPG + sharing & \small 0.005 $\pm$ 0.003 \\
        \small MADDPG + TeamReg (ours) & \small 0.006 $\pm$ 0.003 \\
        \small MADDPG + CoachReg (ours) & \small \textbf{0.088 $\pm$ 0.017} \\
        \hline
        \end{tabular}
        % }
        \vspace{3mm}
        \\
        \includegraphics[width=\textwidth]{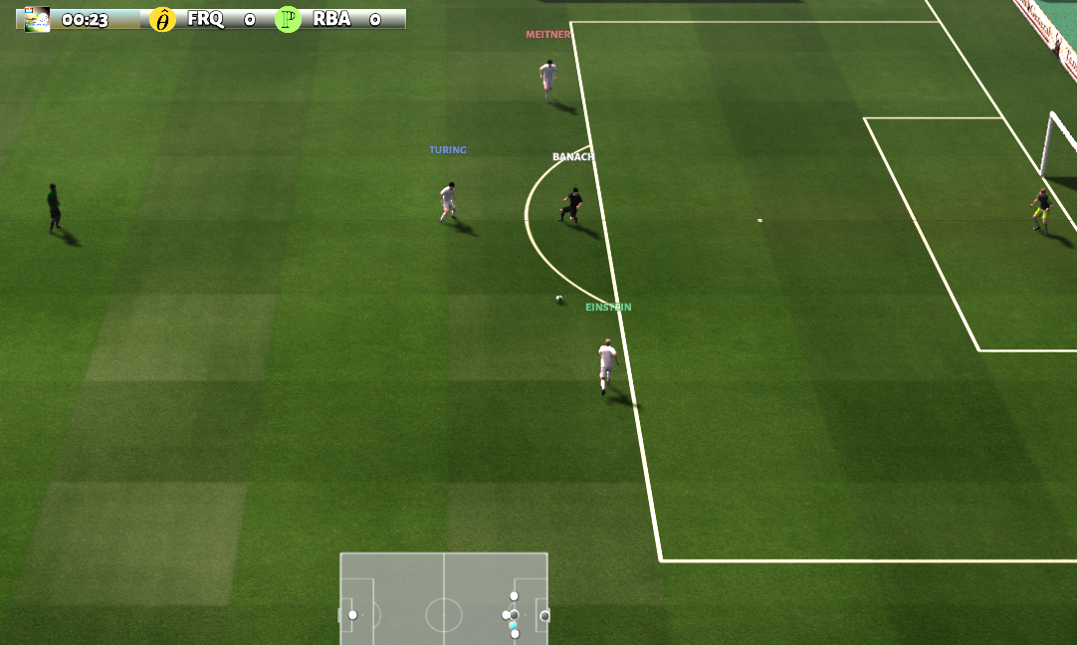}
    \end{tabular}
    }
    \captionof{figure}[Snapshot of the google research football environment]{Snapshot of the google research football environment \textit{3vs1-with-keeper}.}
    \label{fig:football_results}
\end{minipage}

The algorithms presented in Table~2 were trained using 25 randomly sampled hyperparameter configurations. The best configuration was retrained using 10 seeds for 80,000 episodes of 100 steps. Table~2 shows the mean return ($\pm$ standard error across seeds) on the last 10,000 episodes. All algorithms but MADDPG + CoachReg fail to reliably learn policies that achieve positive return (i.e. scoring goals).

\section{Conclusion}
In this work we motivate the use of coordinated policies to ease the discovery of successful strategies in cooperative multi-agent tasks and propose two distinct approaches to promote coordination for CTDE multi-agent RL algorithms. While the benefits of TeamReg appear task-dependent -- we show for example that it can be detrimental on tasks with a competitive component -- CoachReg significantly improves performance on almost all presented environments. Motivated by the success of this single-step coordination technique, a promising direction is to explore model-based planning approaches to promote coordination over long-term multi-agent interactions.

\section*{Broader Impact}
In this work, we present and study methods to enforce coordination in MARL algorithms. It goes without saying that multi-agent systems can be employed for positive and negative applications alike. We do not propose methods aimed at making new applications possible or improving a particular set of applications.  We instead propose methods that allow to better understand and improve multi-agent RL algorithms in general. Therefore, we do not aim in this section at discussing the impact of Multi-Agent Reinforcement Learning applications themselves but focus on the impact of our contribution: promoting multi-agent behaviors that are coordinated. 

We first observe that current Multi-Agent Reinforcement Learning (MARL) algorithms may fail to train agents that leverage information about the behavior of their teammates and that even when explicitly given their teammates observations, action and current policy during the training phase.
We believe that this is an important observation worth raising some concern among the community since there is a widespread belief that centralized training (like MADDPG) should always outperform decentralize training (DDPG). Not only is this belief unsupported by empirical evidence (at least in our experiments) but it also prevents the community from investigating and tackling this flaw that is an important limitation for learning safer and more effective multi-agent behavior. By not accounting for the behavior of its teammates, an agent could not adapt to a new teammate or even a change in the teammates behavior. This prevents current methods to be applied in the real world where there is external perturbations and uncertainties and where an artificial agent may need to interact with various different individuals. 

We propose to focus on coordination and sketch a definition of coordination: an agent behavior should be predictable given its teammate behavior. While this definition is restrictive, we believe that it is a good starting point to consider. Indeed, enforcing that criterion should make learning agents more aware of their teammates in order to coordinate with them. Yet, coordination alone does not ensure success, as agents could be coordinated in an unproductive manner. More so, coordination could have detrimental effects if it enables an attacker to influence an agent through taking control of a teammate or using a mock-up teammate. For these reasons, when using multi-agent RL algorithms (or even single-agent RL for that matter) for real world applications, additional safeguards are absolutely required to prevent the system from misbehaving, which is highly probable if out-of-distribution states are to be encountered.

\section*{Acknowledgements}
We thank Olivier Delalleau for his insightful feedback and comments. We also acknowledge funding in support of this work from the Fonds de Recherche Nature et Technologies (FRQNT) and Mitacs, as well as Compute Canada for supplying computing resources.

% ---------------------------------------------
% ---------------------------------------------
% ----- CHAPTER
% ---------------------------------------------
% ---------------------------------------------
\chapter[ARTICLE 3: DIRECT BEHAVIOR SPECIFICATION VIA\\CONSTRAINED REINFORCEMENT LEARNING]{\\ARTICLE 3: DIRECT BEHAVIOR SPECIFICATION VIA CONSTRAINED REINFORCEMENT LEARNING}\label{chap:article3_dbs}

\vspace{-3mm}
\begin{center}
Co-authors\\Roger Girgis, Joshua Romoff, Pierre-Luc Bacon \& Christopher Pal
\end{center}
\vspace{-5mm}
\begin{center}
Published in\\Proceedings of Machine Learning Research, July 28, 2023
\end{center}

\begin{abstract}
    The standard formulation of Reinforcement Learning lacks a practical way of specifying what are admissible and forbidden behaviors. Most often, practitioners go about the task of behavior specification by manually engineering the reward function, a counter-intuitive process that requires several iterations and is prone to reward hacking by the agent. In this work, we argue that constrained RL, which has almost exclusively been used for safe RL, also has the potential to significantly reduce the amount of work spent for reward specification in applied RL projects. To this end, we propose to specify behavioral preferences in the CMDP framework and to use Lagrangian methods to automatically weigh each of these behavioral constraints. Specifically, we investigate how CMDPs can be adapted to solve goal-based tasks while adhering to several constraints simultaneously. We evaluate this framework on a set of continuous control tasks relevant to the application of Reinforcement Learning for NPC design in video games.
\end{abstract}

\section{Introduction}

Reinforcement Learning (RL) has shown rapid progress and lead to many successful applications over the past few years \cite{mnih2013playing,silver2017mastering,andrychowicz2020learning}. The RL framework is predicated on the simple idea that all tasks could be defined as a single scalar function to maximise, an idea generally referred to as the reward hypothesis \cite{sutton2018rlTextbook,silver2021reward,abel2021expressivity}. This idea has proven very useful to develop the theory and concentrate research on a single theoretical framework. However, it can be significantly limiting when translating a real-life problem into an RL problem, since the question of where the reward function comes from is completely ignored \cite{singh2009rewards}. In practice, human-designed reward functions often lead to unforeseen behaviors and represent a serious obstacle to the reliable application of RL in the industry \cite{amodei2016concrete}.

Concretely, for an engineer working on applying RL methods to an industrial problem, the task of reward specification implies to: (1) characterise the desired behavior that the system should exhibit, (2) write in a computer program a reward function for which the optimal policy corresponds to that desired behavior, (3) train an RL agent on that task using one of the methods available in the literature and (4) evaluate whether the agent exhibits the expected behavior. Multiple design iterations of that reward function are generally required, each time accompanied by costly trainings of the policy \cite{hadfield2017inverse,dulac2019challenges}. This inefficient design loop is exacerbated by the fact that current Deep RL algorithms cannot be guaranteed to find the optimal policy \cite{sutton2018rlTextbook}, meaning that the reward function could be correctly specified but still fail to lead to the desired behavior. The design problem thus becomes ``What reward function would lead SAC \cite{haarnoja2018soft} or PPO \cite{schulman2017proximal} to give me a policy that I find satisfactory?'', a difficult puzzle that every RL practitioner has had to deal with.

\begin{figure}[t]
    \centering
    \begin{minipage}{.5\textwidth}
        \centering
        \includegraphics[width=.98\textwidth,height=4.3cm]{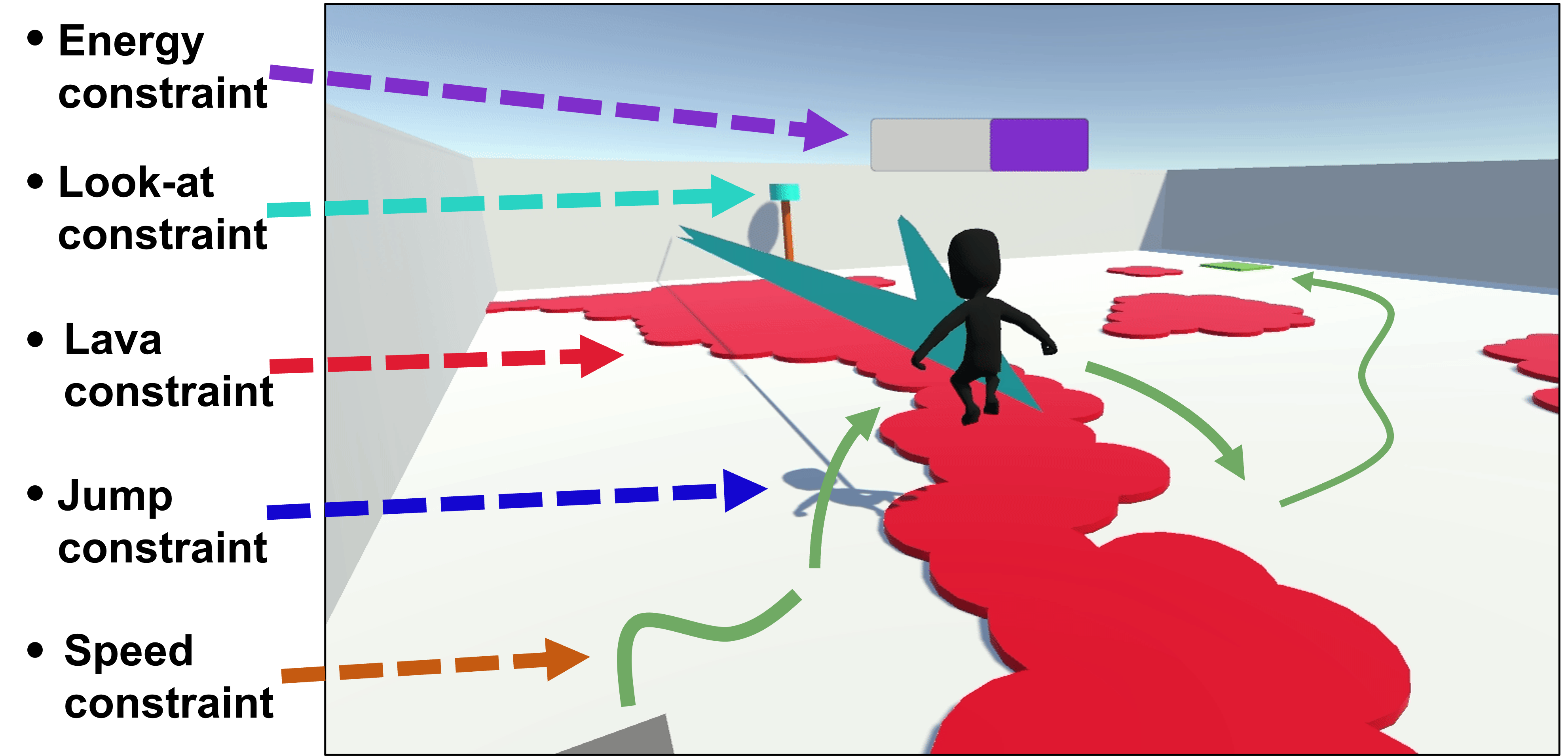}
    \end{minipage}%
    \begin{minipage}{.5\textwidth}
        \centering
        \includegraphics[width=.98\textwidth,height=4.3cm,frame]{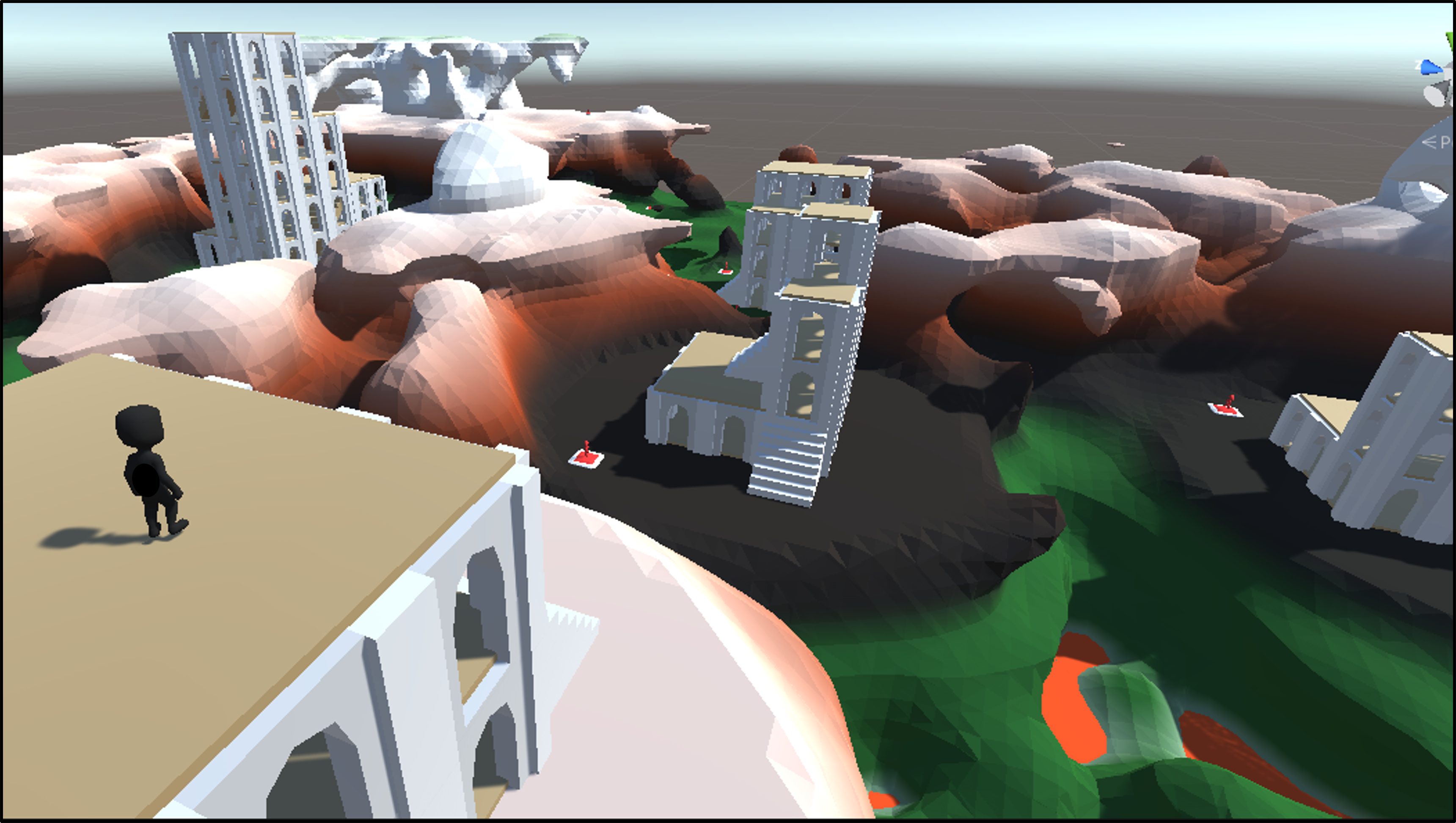}
    \end{minipage}
    \caption[Environments for direct behavior specification using constrained RL]{Depictions of our setup to evaluate direct behavior specification using constrained RL; Arena environment (left); OpenWorld environment (right). For videos see: \href{https://sites.google.com/view/behaviorspecificationviacrl/home}{https://sites.google.com/view/behaviorspecificationviacrl/home}.}
    \label{fig:DBS:envs}
\end{figure}

Most published work on Reinforcement Learning focuses on point (3) i.e. improving the reliability and efficiency with which these algorithms can yield a near-optimal policy for a \textit{given} reward function. This line of work is crucial to allow RL to tackle difficult problems. However, as agents become more and more capable of solving the tasks we present them with, our ability to (2) correctly specify these reward functions will only become more critical \cite{dewey2014reinforcement}.

Constrained Markov Decision Processes \cite{altman1999constrained} offer an alternative framework for sequential decision making. The agent still seeks to maximise a single reward function, but must do so while respecting a set of constraints defined by additional cost functions. While it is generally recognised that this formulation has the potential to allow for an easier task definition from the end user \cite{ray2019benchmarking}, most work on CMDPs focuses on the safety aspect of this framework i.e. that the constraint-satisfying behavior be maintained throughout the entire exploration process \cite{achiam2017constrained,zhang2020first,turchetta2020safe,marchesini2022exploring}. In this paper we specifically focus on the benefits of CMDPs relating to behavior specification. We make the following contributions: (1) we show experimentally that reward engineering poorly scales with the complexity of the target behavior, (2) we propose a solution where a designer can directly specify the desired frequency of occurrence of some events, (3) we develop a novel algorithmic approach capable of jointly satisfying many more constraints and (4) we evaluate this framework on a set of constrained tasks illustrative of the development cycle required for deploying RL in video games.

\begin{figure}[t]
    \centering
    \begin{tabular}{lc}
        \textbf{a)} & 
        \includegraphics[width=0.66\textwidth]{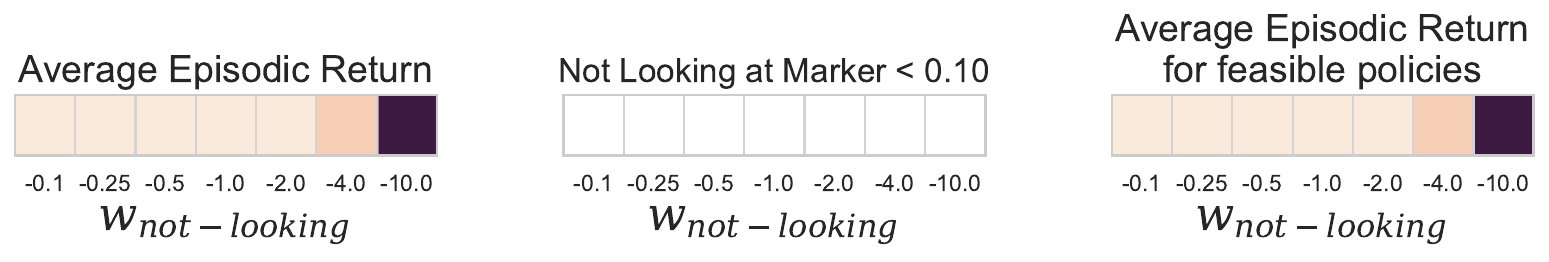}\\
        \textbf{b)} & 
        \includegraphics[width=0.9\textwidth]{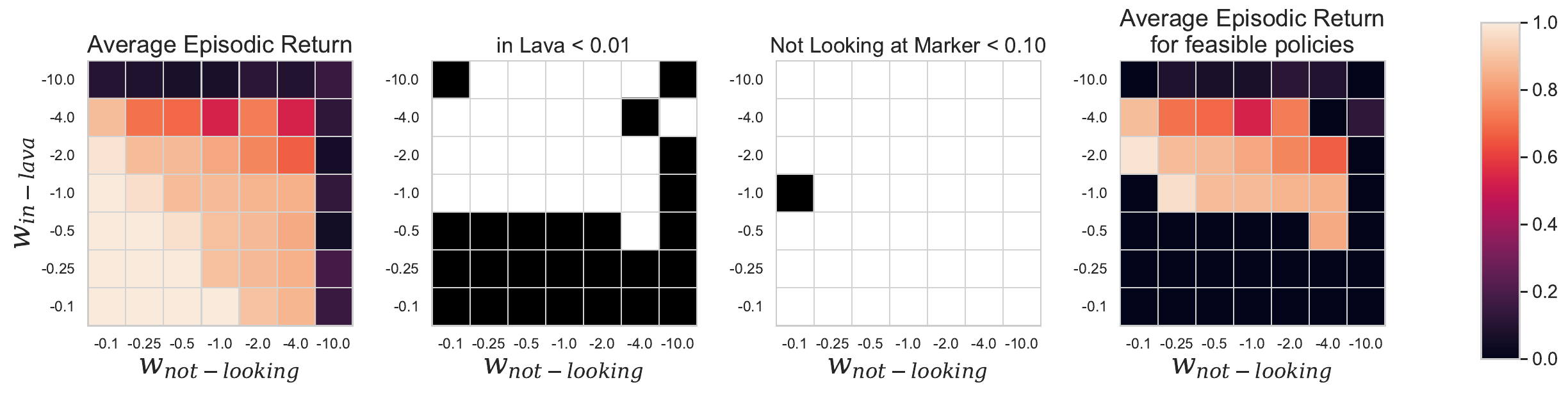}
    \end{tabular}
    \caption[Enforcing behavioral constraints using reward engineering]{Enforcing behavioral constraints using reward engineering. Each grid represents a different metric. Within each grid, each square represents the final performance (according to that metric) of an agent trained for 3M steps using the reward function in Equation~\ref{eq:reward_engineering} parameterised as given by the grid coordinates. Performance is obtained by evaluating the agent on 1000 episodes. The leftmost column indicates the episodic return of the trained policies, the middle columns indicates whether or not the agent respects the behavioral constraint(s) and the rightmost column indicates the average return for these feasible policies only. \textbf{a)} The ``looking-at marker" behavior does not affect too much the main task and, consequently, all chosen weights allow to satisfy the constraint (looking at marker 90\% of the time) and many of them also lead to good performance on the main navigation task ($-0.1 \geq w \geq -2$). \textbf{b)} When also enforcing the ``Not in Lava" behavior, which is much more in the way of the main task, most of the resulting policies do not respect the constraint or perform poorly on the navigation task, highlighting the difficulty of choosing the correct penalty weights ahead of time. On 49 experiments, only two yielded good performing feasible policies: $(-0.10, -2.0)$ and $(-0.25, -1.0)$. On the largest search with 3 behavioral constraints, none of the 343 experiments found a good performing feasible policy (see Figure~\ref{fig:reward_engineering_3constraints} in Appendix~\ref{sec:DBS:additional_experiments}).}
    \label{fig:reward_engineering}
\end{figure}

\section{The problem with reward engineering}
\label{sec:DBS:problem_with_reward_engineering}

In this section, we motivate the impracticality of using reward engineering to shape behavior. 
We consider a navigation task in which the agent has to reach a goal location while being subject to additional behavioral constraints. These constraints are (1) looking at a visible marker $90\%$ of the time, (2) avoiding forbidden terrain $99\%$ of the time and (3) avoiding to run out of energy also $99\%$ of the time. The environment is depicted in Figure~\ref{fig:DBS:envs} (left) and the details are presented in Appendix~\ref{sec:DBS:arena_env_details}. The reward function for this task is of the form:
\begin{equation}
\label{eq:reward_engineering}
R'(s,a) = R(s,a) - \mathbf{1}*w_{\text{not-looking}} - \mathbf{1} * w_{\text{in-lava}} - \mathbf{1} * w_{\text{no-energy}}
\end{equation}
where $R(s,a)$ gives a small shaping reward for progressing towards the goal and a terminal reward for reaching the goal, and the $\mathbf{1}$'s are indicator functions which are only active if their corresponding behavior is exhibited.

The main challenge for an RL practitioner is to determine the correct values of the weights $w_{\text{not-looking}}$, $w_{\text{in-lava}}$ and $w_{\text{no-energy}}$ such that the agent maximises its performance on the main task while respecting the behavioral requirements, a problem often referred to as reward engineering. Setting these weights too low results in an agent that ignores these requirements while setting them too high distracts the agent from completing the main task. In general, knowing how to scale these components relatively to one another is not intuitive and is often performed by trial and error across the space of reward coefficients $w_k$. To illustrate where the desired solutions can be found for this particular problem, we perform 3 grid searches on 7 different values for each of these weights, ranging from 0.1 to 10 times the scale of the main reward function, for the cases of 1, 2 and 3 behavioral constraints. The searches thus respectively must go through 7, 49 and 343 training runs. Figure~\ref{fig:reward_engineering} (and Figure~\ref{fig:reward_engineering_3constraints} in Appendix~\ref{sec:DBS:additional_experiments}) show the results of these experiments. We can see that a smaller and smaller proportion of these trials lead to successful policies as the number of behavioral constraints grows. For an engineer searching to find the right trade-off, they find themselves cornered between two undesirable solutions: an ad-hoc manual approach guided by intuition or to run a computationally demanding grid-search. While expert knowledge or other search strategies can partially alleviate this burden, the approach of reward engineering clearly does not scale as the control problem grows in complexity.

It is important to note that whether or not it is the case that all tasks can in principle be defined as a single scalar function to maximise i.e. the reward hypothesis \cite{sutton2018rlTextbook}, this notion should not be seen as a restrictive design principle when translating a real-life problem into an RL problem. That is because it does not guarantee that this reward function admits a simple form. Rich and multi-faceted behaviors may only be specifiable through a complex reward function \citep{abel2021expressivity} beyond the reach of human intuition. In the next sections we present a practical framework in which CMDPs can be used to provide a more intuitive and human-centric interface for behavioral specification.

%  ----------
%  SECTION
% -----------

\section{Background }
\label{sec:DBS:background}

\paragraph{Markov Decision Processes (MDPs)} \cite{sutton2018rlTextbook} are formally defined through the following four components: $(\mathcal{S}, \mathcal{A}, P, R)$. At timestep $t$, an agent finds itself in state $s_t \in \mathcal{S}$ and picks an action $a_t \in \mathcal{A}(s_t)$. The transition probability function $P$ encodes the conditional probability  $P(s_{t+1}|s_t, a_t)$ of transitioning to the next state $s_{t+1}$. Upon entering the next state, an immediate reward is generated through a reward function $R: \mathcal{S} \times \mathcal{A} \to \mathbb{R}$. In this paper, we restrict our attention to stationary randomized policies of the form $\pi(a|s)$ -- which are sufficient for optimality in both MDPs and CMDPs \cite{altman1999constrained}. The interaction of a policy within an MDP gives rise to trajectories $(s_0, a_0, r_0, \hdots, s_T, a_T, R_T)$ over which can be computed the sum of rewards which we call the \textit{return}. Under the Markov assumption, the probability distribution over trajectories is of the form:
\begin{equation}
    p_\pi(\tau) := P_0(s_0)\prod_{t=0}^T P(s_{t+1}|s_t, a_t) \pi(a_t|s_t)
\end{equation}
where $P_0$ is some initial state distribution. Furthermore, any such policy induces a marginal distribution over state-action pairs referred to as the \textit{visitation distribution} or state-action \textit{occupation measure}:
\begin{equation}
    x_{\pi}(s,a) := \frac{1}{Z(\gamma, T)} \sum_{t=0}^T \gamma^t p_{\pi, t}(S_t=s, A_t=a) \enspace ,
\end{equation}
where $Z(\gamma, T) = \sum_{t=0}^T \gamma^t$ is a normalising constant.

In this paper, it is useful to extend the notion of return to any function $f:\mathcal{S}\times \mathcal{A}\rightarrow \mathbb{R}$ over states and actions other than the reward function of the MDP itself. The expected discounted sum of $f$ then becomes:
\begin{equation}
    J_f(\pi) := \E_{\tau \sim p_{\pi}}\left[ \sum_{t=0}^T \gamma^t f(s_t,a_t) \right]
\end{equation}
where $\gamma \in [0,1]$ is a discount factor. While this idea is the basis for much of the work on General Value Functions (GVFs) \cite{white2015} for predictive state representation \citep{sutton2011horde}, our focus here is on problem of behavior specification and not that of prediction. 

Finally, in the MDP setting, a policy is said to be optimal under the expected discounted return criterion if $\pi^* = \argmax_{\pi \in \Pi} J_R(\pi)$, where $\Pi$ is the set of possible policies. 

\paragraph{Constrained MDPs (CMDPs)} \cite{altman1999constrained} 
is a framework that extends the notion of optimality in MDPs to a scenario where multiple cost constraints need to be satisfied in addition to the main objective. We write $C_k: \mathcal{S}\times\mathcal{A}\rightarrow \mathbb{R}$ to denote such a cost function whose expectation must remain bounded below a specified threshold $d_k \in \mathbb{R}$. 
% The set of such constrained optimal policies then becomes $\Pi_C = \{ \pi  \in \Pi: J_{C_k}(\pi) \leq d_k, \, k=1,\dots,K \}$. 
% A policy is said to be feasible if it satisfies all constraints, i.e. $\pi \in \Pi_C$. 
The set of feasible policies is then:
\begin{equation}
\Pi_C = \{ \pi  \in \Pi: J_{C_k}(\pi) \leq d_k, \, k=1,\dots,K \}.
\end{equation}
Optimal policies in the CMDP framework are those of maximal expected return among the set of feasible policies:
\begin{equation}
\pi^* = \argmax_{\pi \in \Pi} \, J_R(\pi), \, \text{ s.t. } \, \,  J_{C_k}(\pi) \leq d_k \text{ , } \, \, k=1,\dots,K
\end{equation}
While it is sufficient to consider the space of stationary deterministic policies in searching for optimal policies in the MDP setting, this is no longer true in general with CMDPs \citep{altman1999constrained} and we must consider the larger space of stationary randomized policies. 

\paragraph{Lagrangian methods for CMPDs.}
Several recent works have found that the class of Lagrangian methods for solving CMDPs is capable of finding good feasible solutions at convergence  \cite{achiam2017constrained,ray2019benchmarking,stooke2020responsive,zhang2020first}. The basis for this line of work stems from the saddle-point characterisation of the optimal solutions in nonlinear programs with inequality constraints \citep{uzawa, polyak, korpelevich1976extragradient}. Intuitively, these methods combine the main objective $J_R$ and the constraints into a single function $\mathcal{L}$ called the Lagrangian. The relative weight of the constraints are determined by additional variables $\lambda_k$ called the Lagrange multipliers. Applied in our context, this idea leads to the following min-max formulation:
% In this class of methods, some coefficients $\lambda_k$ called Lagrange multipliers, are optimised so that the constraints are respected (for conciseness we write $\lambda:=\{\lambda_k\}_{k=1}^K$). The result is a min-max game of the form:
\begin{align}
\label{eq:cmdp_lagrangian}
&\max_\pi \, \, \min_{\lambda\geq 0} \, \mathcal{L}(\pi, \lambda) \notag \\ &\mathcal{L}(\pi, \lambda) = J_R(\pi) - \sum_{k=1}^K \lambda_k (J_{C_k}(\pi) - d_k)
\end{align}
where we denoted $\lambda:=\{\lambda_k\}_{k=1}^K$ for conciseness. \citet{uzawa} proposed to find a solution to this problem iteratively by taking gradient ascent steps of the Lagrangian $\mathcal{L}$ in the variable $\pi$ and descent ones in $\lambda$. This is also the same gradient ascent-descent 
\cite{gda} procedure underpinning many learning algorithms for Generative Adversarial Networks \citep{goodfellow2014generative}.

% At each step of the algorithm, both the policy and the multipliers are updated according to the gradient of the Lagrangian. For the gradient w.r.t the policy, the reward and costs combine into a single expectation i.e. $\nabla_\pi \mathcal{L}(\pi, \lambda) = \nabla_\pi J_{L}(\pi)$ with $L(s,a) = R(s,a) - \sum_{k=1}^K \lambda_k C_k(s,a)$.
% Without loss of generality, the constraints could also be defined as positive behaviors to be reinforced by adding bonuses $B_k(s,a)$ (rather than subtracting costs $C_k(s,a)$) and by changing the constraints of the optimisation problem as $J_{B_k}(\pi) \geq d_k$. We further discuss this design choice in our proposed framework section. 
% The policy can be updated by applying any policy optimisation method using $L(s,a)$ in place of the regular reward function. 
The maximization of the Lagrangian over the policy variables can be carried out by applying any existing unconstrained policy optimization methods to the new reward function $L: \mathcal{S} 
\times \mathcal{A} \to \mathbb{R}$ where:
\begin{equation}
L(s,a) = R(s,a) - \sum_{k=1}^K \lambda_k C_k(s,a).
\end{equation}
For the gradient w.r.t. the Lagrange multipliers $\lambda$, the term depending on $\pi$ cancels out and we are left with $\nabla_{\lambda_k} \mathcal{L}(\pi, \lambda) = -(J_{C_k}(\pi) - d_k)$. The update is followed by a projection onto $\lambda_k \geq 0$ using the max-clipping operator. If the constraint is violated $(J_{C_k}(\pi) > d_k)$, taking a step in the opposite direction of the gradient will increase the corresponding multiplier $\lambda_k$, thus increasing the relative importance of this constraint in $J_L(\pi)$. Inversely, if the constraint is respected $(J_{C_k}(\pi) < d_k)$, the update will decrease $\lambda_k$, allowing the optimisation process to focus on the other constraints and the main reward function $R$. 

%  ----------
%  SECTION
% -----------

\section{Proposed Framework}

In Reinforcement Learning, the reward function is often assumed to be provided apriori. For example, in most RL benchmarking environments this is indeed the case and researchers can focus on improving current algorithms at finding better policies, faster and more reliably. In industrial applications however, several desiderata are often required for the agent's behavior, and balancing these components into a single reward function is highly non-trivial. In the next sections, we describe a framework in which CMDPs can be used for efficient behavior specification.

\subsection{Indicator cost functions}

The difficulty of specifying the desired behavior of an agent using a single reward function stems from the need to tune the relative scale of each reward component. Moreover, finding the most appropriate ratio becomes more challenging as the number of reward components increases (see Section~\ref{sec:DBS:problem_with_reward_engineering}). While the prioritisation and saturation characteristics of CMDPs help factoring the behavioral specification problem \cite{ray2019benchmarking}, there remains important design challenges. First, the CMDP framework allows for arbitrary forms of cost functions, again potentially leading to unforeseen behaviors. Second, specifying the appropriate thresholds $d_k$ can be difficult to do solely based on intuition. For example, in the mujoco experiments performed by \citet{zhang2020first}, the authors had to run an unconstrained version of PPO \citep{schulman2017proximal} to first estimate the typical range of values for the cost infringements and then run their constrained solver over the appropriately chosen thresholds.

We show here that this separate phase of threshold estimation can be avoided completely if we consider a subclass of CMDPs that allows for a more intuitive connection between the chosen cost functions $C_k$ and their expected returns $J_{C_k}$. More specifically, we restrict our attention to CMDPs where the cost functions are defined as indicators of the form:
\begin{equation}
\label{eq:indicator_cost_functions}
    C_k(s,a) = I(\text{behavior $k$ is met in $(s,a)$})
\end{equation}
which simply expresses whether an agent showcases some particular behavior $k$ when selecting action $a$ in state $s$. An interesting property of this design choice is that, by rewriting the expected discounted sum of these indicator cost functions as an expectation over the visitation distribution of the agent, we can interpret this quantity as a re-scaled probability that the agent exhibits behavior $k$ at any given time during its interactions with the environment:
\begin{align}
    &J_{C_k}(\pi) 
    = \E_{\tau \sim p_{\pi}}\left[ \sum_{t=0}^T \gamma^t C_k(s_t,a_t) \right] \\
    &= Z(\gamma, T) \mathbb{E}_{(s,a)\sim x_{\pi}(s,a)}[C_k(s,a)] \\
    &= Z(\gamma, T) \mathbb{E}_{(s,a)\sim x_{\pi}(s,a)}[I(\text{behavior $k$ met in $(s,a)$})] \\
    &= Z(\gamma, T) Pr\big(\text{behavior $k$ met in $(s,a)$}\big) , \, \, (s,a)\sim x_{\pi}
\end{align}
Dividing each side of $J_{C_k}(\pi) \leq d_k$ by $Z(\gamma, T)$, we are left with $\tilde{d}_k$, a normalized constraint threshold for the constraint $k$ which represents the desired rate of encountering the behavior designated by the indicator cost function $C_k$. In practice, we simply compute the average cost function across the batch to give equal weighting to all state-action pairs regardless of their position $t$ in the trajectory:
\begin{equation}
    \tilde{J}_{C_k}(\pi) := \frac{1}{N}\sum_{i=1}^N C_k(s_i, a_i)
\end{equation}
where $i$ is the sample index from the batch. We also train the corresponding critic $Q^{(k)}$ using a discount factor $\gamma_k<1$ for numerical stability.

While the class of cost functions defined in Equation~\ref{eq:indicator_cost_functions} still allows for modelling a large variety of behavioral preferences, it has the benefit of informing the user on the range of appropriate thresholds -- a probability $\tilde{d}_k\in[0,1]$ -- and the semantics is clear regarding its effect on the agent's behavior (assuming that the constraint is binding and that a feasible policy is found). This effectively allows for minimal to no tuning behavior specification (or ``zero-shot'' behavior specification). 

Finally, indicator cost functions also have the practical advantage of allowing to capture both desired and undesired behaviors without affecting the termination tendencies of the agent. Indeed, when using an arbitrary cost function, it could be tempting to simply flip its sign to enforce the opposite behavior. However, as noted in previous work \cite{kostrikov2018discriminator}, the choice of whether to enforce behaviors through bonuses or penalties should instead be thought about with the termination conditions in mind. A positive bonus could cause the agent to delay termination in order to accumulate more bonuses while negative penalties could shape the agent behavior such that it seeks to trigger the termination of the episode as soon as possible. Indicator cost functions are thus very handy in that they offer a straightforward way to enforce the opposite behavior by simply inverting the indicator function $Not\big(I(s,a)\big) = 1 - I(s,a)$ without affecting the sign of the constraint (penalties v.s. bonuses).

\subsection{Multiplier normalisation}
\label{sec:DBS:mult_norm}

When the constraint $k$ is violated, the multiplier $\lambda_k$ associated with that constraint increases to put more emphasis on that aspect of the overall behavior. While it is essential for the multipliers to be able to grow sufficiently compared to the main objective, a constraint that enforces a behavior which is long to discover can end up reaching very large multiplier values. It then leads to very large policy updates and destabilizes the learning dynamics.

To maintain the ability of one constraint to dominate the policy updates when necessary while keeping the scale of the updates bounded, we propose to normalize the multipliers. This can be readily implemented by using a softmax layer:
\begin{equation}
    \lambda_k = \frac{\exp(z_k)}{\exp(a_0) + \sum_{k'=1}^K \exp(z_{k'})} \, , \quad k=1,\dots,K
\end{equation}
where $z_k$ are the base parameters for each one of the multipliers and $a_0$ is a dummy variable used to obtain a normalized weight $\lambda_0 := 1 - \sum_{k=1}^K \lambda_k$ for the main objective $J_R(\pi)$. The corresponding min-max problem becomes:
\begin{align}
\label{eq:cmdp_lagrangian_normalised_multipliers}
&\max_\pi \, \, \min_{z_{1:K}\geq 0} \, \mathcal{L}(\pi, \lambda) \notag \\
&\mathcal{L}(\pi, \lambda) = \lambda_0 J_R(\pi) - \sum_{k=1}^K \lambda_k (J_{C_k}(\pi) - d_k)
\end{align}
\subsection{Bootstrap Constraint}
In the presence of many constraints, one difficulty that emerges with the above multiplier normalisation is that the coefficient of the Lagrangian function that weighs the main objective is constrained to be  $\lambda_0 = 1 - \sum_{k=1}^K \lambda_k$, which leaves very little to no traction to improve on the main task while the process is looking for a feasible policy. Furthermore, as more constraints are added, the optimisation path becomes discontinuous between regions of feasible policies, preventing learning progress on the main task objective.

A possible solution is to grant the main objective the same powers as the behavioral constraints that we are trying to enforce. This can be done by defining an additional function $S_{K+1}(s,a)$ which captures some measure of success on the main task. Indeed, many RL tasks are defined in terms of such sparse, clearly defined success conditions, and then often only augmented with a dense reward function to guide the agent toward these conditions \cite{ng1999policy}. A so-called \textit{success constraint} of the form $J_{S_{K+1}}(\pi) \geq \tilde{d}_{K+1}$ can thus be implemented using an indicator cost function as presented above and added to the existing constraint set $\{J_{C_k}(\pi) \leq \tilde{d}_{k}\}_{k=1}^K$. While the use of a success constraint alone can be expected to aid learning of the main task, it is only a sparse signal and could be very difficult to discover if the main task is itself challenging. Since the success function $S_{K+1}$ is meant to be highly correlated with the reward function $R$, by going a step further and using the success constraint multiplier $\lambda_{K+1}$ in place of the reward multiplier $\lambda_0$, we can take full advantage of the density of the main reward function when enforcing that constraint. However, to maintain a true maximisation objective over the main reward function, we still need to keep using $\lambda_0$  when other constraints are satisfied, so that the most progress can be made on $J_R(\pi)$. We thus take the largest of these two coefficients for weighing the main objective $\tilde{\lambda}_0 := \max\big(\lambda_0, \lambda_{K+1}\big)$ and replace $\lambda_0$ with $\tilde{\lambda}_0$ in Equation~\ref{eq:cmdp_lagrangian_normalised_multipliers}. Here we say that constraint $K$~$+$~$1$ is used as a \textit{bootstrap constraint}.

Our method of encoding a success criterion in the constraint set can be seen as a way of relaxing the behavioral constraints during the optimisation process without affecting the convergence requirements. For example, in previous work, \citet{calian2020balancing} tune the learning rate of the Lagrange multipliers to automatically turn some constraints into soft-constraints when the agent is not able to satisfy them after a given period of time. Instead, the bootstrap constraint allows to start making some progress on the main task without turning our hard constraints into soft constraints.

%  ----------
%  SECTION
% -----------

\section{Related Work}
\label{sec:DBS:related_work}

\paragraph{Constrained Reinforcement Learning.} CMDPs \cite{altman1999constrained} have been the focus of several previous work in Reinforcement Learning. Lagrangian methods \cite{borkar2005actor,tessler2018reward,stooke2020responsive} combine the constraints and the main objective into a single function and seek to find a saddle point corresponding to feasible solutions to the maximisation problem. Projection-based methods \cite{achiam2017constrained,chow2019lyapunov,yang2020projection,zhang2020first} instead use a projection step to try to map the policy back into a feasible region after the reward maximisation step. While most of these works focus on the single-constraint case \cite{zhang2020first,dalal2018safe,calian2020balancing,stooke2020responsive} and seek to minimize the total regret over the cost functions throughout training \cite{ray2019benchmarking}, we focus on the potential of CMDPs for precise and intuitive behavior specification and work on satisfying many constraints simultaneously.

\paragraph{Reward Specification.} Imitation Learning \cite{zheng2021imitation} is largely motivated by the difficulty of designing reward functions and instead seeks to use expert data to define the task. Other approaches introduce a human in the loop to either guide the agent towards the desired behavior \cite{christiano2017deep} or to prevent it from making catastrophic errors while exploring the environment \cite{saunders2017trial}. While our approach of using CMDPs for behavior specification also seeks to make better use of human knowledge, we focus on the idea of providing this knowledge by simply specifying thresholds and indicator functions rather than requiring expert demonstrations or constant human feedback. Another line of work studies whether natural language can be used as a more convenient interface to specify the agent's desired behavior \cite{goyal2019using,macglashan2015grounding}. While this idea presents interesting perspectives, natural language is inherently ambiguous and prone to reward hacking by the agent. Moreover such approaches generally come with the added complexity of having to learn a language-to-reward model. Finally, others seek to solve reward mis-specification through Inverse Reward Design \cite{hadfield2017inverse,mindermann2018active,ratner2018simplifying} which treats the provided reward function as a single observation of the true intent of the designer and seeks to learn a probabilistic model that explains it. While this approach is interesting for adapting to environmental changes, we focus on behavior specification in fixed-distribution environments.

\paragraph{RL in video games.} Video games have been used as a benchmark for Deep RL for several years \cite{shao2019survey,berner2019dota,vinyals2019grandmaster}. However, examples of RL being used in a video game production are limited due to a variety of factors which include the difficulty of shaping behavior, interpretability, and compute limitations at run-time \cite{jacob2020s,alonso2020deep}. Still, there has been a recent push in the video game industry to build NPCs (Non Player Characters) using RL, for applications including navigation \cite{alonso2020deep,devlin2021navigation}, automated testing \cite{bergdahl2020augmenting,gordillo2021improving}, play-style modeling \cite{de2021configurable} and content generation \cite{gisslen2021adversarial}.

%  ----------
%  SECTION
% -----------

\section{Experiments}

To evaluate the proposed framework, we train SAC agents \cite{haarnoja2018soft} to solve navigation tasks with up to 5 constraints imposed on their behavior. Many of these constraints interact with the main task and with one another which significantly restricts the space of admissible policies. We conduct most of our experiments in the Arena environment (see Figure~\ref{fig:DBS:envs}, left)\footnote{The algorithm is presented in Appendix~\ref{sec:DBS:algorithm}. The code for the Arena environment experiments is available at:\newline\scriptsize{\url{https://github.com/ubisoft/DirectBehaviorSpecification}}} where we seek to verify the capacity of the proposed framework to allow for easy specification of the desired behavior and the ability of the algorithm to deal with a large number of constraints simultaneously. We also perform an experiment in the OpenWorld environment (see Figure~\ref{fig:DBS:envs}, right), a much larger and richer map generated using the GameRLand map generator \cite{beeching2021graph}, where we seek to verify the scalability of that approach and whether it fits the needs of agent behavior specification for the video game industry. See Appendices~\ref{sec:DBS:arena_env_details}~and~\ref{sec:DBS:openworld_env_details} for a detailed description of both experimental setups.

\subsection{Experiments in the Arena environment}

\paragraph{Multiplier Normalization}

Our first set of experiments showcases the effect of normalizing the Lagrange multipliers. For illustrative purposes, we designed a simple scenario where one of the constraints is not satisfied for a long period of time. Specifically, the agent is attempting to satisfy an impossible constraint of never touching the ground. Figure~\ref{fig:multiplier_normalisation_experiment} (in red) shows that the multiplier on the unsatisfied constraint endlessly increases in magnitude, eventually harming the entire learning system; the loss on the critic diverges and the performance collapses. When using our normalization technique, Figure~\ref{fig:multiplier_normalisation_experiment} (in blue) shows that the multiplier and critic losses remain bounded, avoiding such instabilities.

\begin{figure}[htb]
    \centering
    \includegraphics[width=0.9\textwidth]{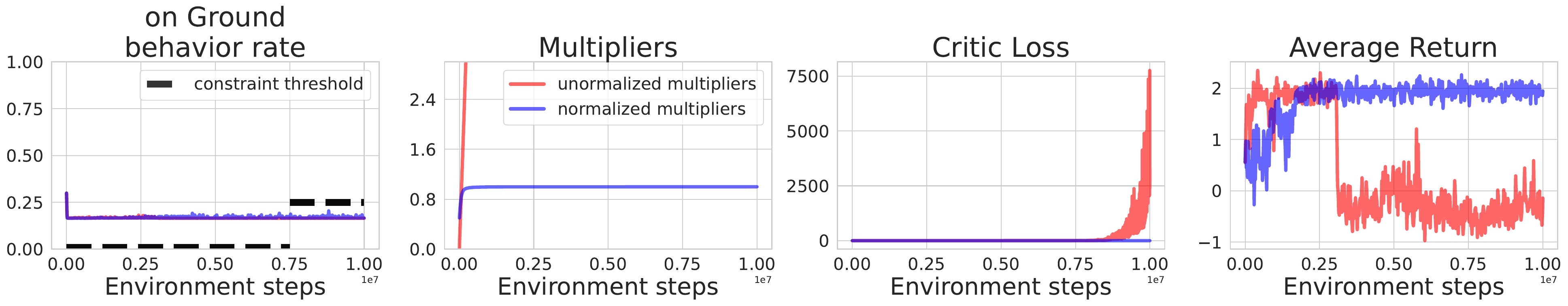}
    \caption[Effect of the multiplier normalization]{The multiplier normalisation keeps the learning dynamics stable when discovering a constraint-satisfying behavior takes a large amount of time. To simulate such a case, an impossible constraint is set for 7.5M steps and then replaced by a feasible one for the last 2.5M steps. The method using unormalized multipliers (red) keeps taking larger and larger steps in policy space leading to the divergence of its learning dynamics and complete collapse of its performance.}
    \label{fig:multiplier_normalisation_experiment}
\end{figure}

\begin{figure}[htb]
    \centering
    \includegraphics[width=0.9\textwidth]{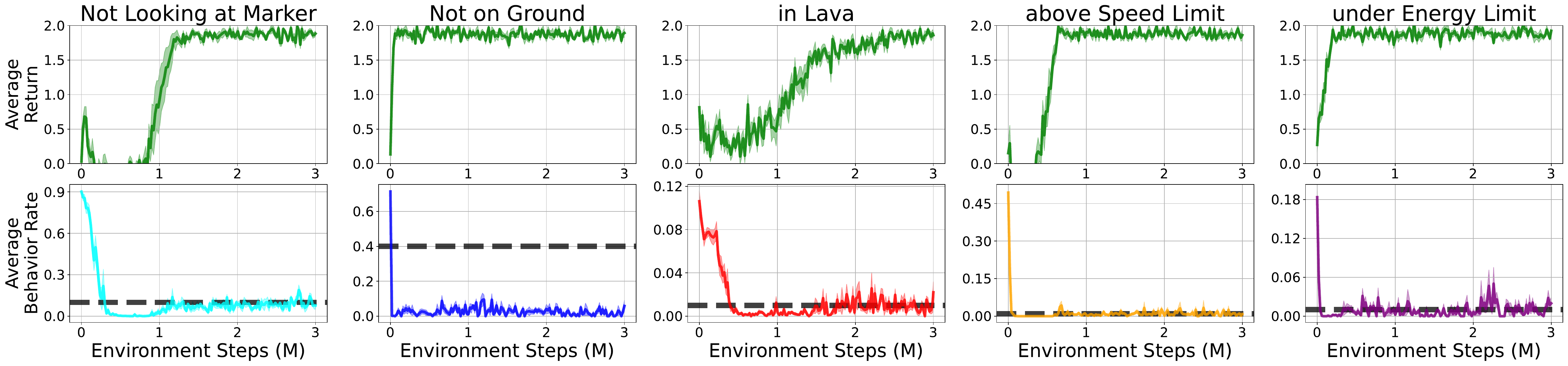}
    \caption[Results with single constraints in the Arena environment]{Each \textit{column} presents the results for an experiment in which the agent is trained for 3M steps with a \textit{single constraint} enforced on its behavior. Training is halted after every $20,000$ environment steps and the agent is evaluated for 10 episodes. All curves show the average over 5 seeds and envelopes show the standard error around that mean. The top row shows the average return, the bottom row shows the average behavior rate on which the constraint is enforced. The black doted lines mark the constraint thresholds.}
    \label{fig:single_constraint_experiments}
\end{figure}

\paragraph{Single Constraint satisfaction}

We use our framework to encode the different behavioral preferences into indicator functions and specify their respective thresholds. Figure \ref{fig:single_constraint_experiments} shows that our SAC-Lagrangian with multiplier normalisation can solve the task while respecting the behavioral requirements when imposed with constraints individually. We note that the different constraints do not affect the main task to the same extent; while some still allow to quickly solve the navigation task, like the behavioral requirement to avoid jumping, others make the navigation task significantly more difficult to solve, like the requirement to avoid certain types of terrain (lava).

\begin{figure}[t]
    \centering
    \includegraphics[width=\textwidth]{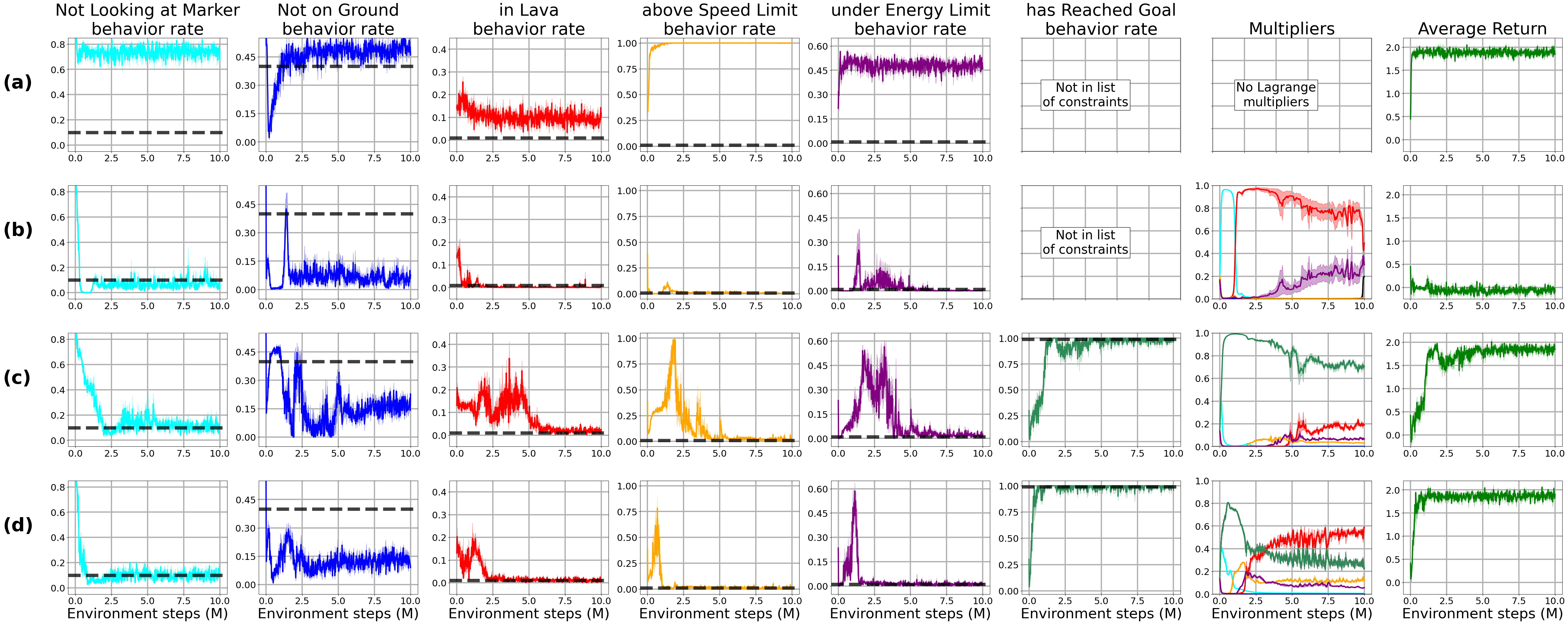}
    \caption[Results with multiple constraints in the Arena environment]{Each \textit{row} presents the results of an experiment in which an agent is trained for 10M steps. Training is halted after every $20,000$ environment steps and the agent is evaluated for 10 episodes.  All curves show the average over 5 seeds and envelopes show the standard error around that mean. \textbf{(a)} Unconstrained SAC agent; none of the behavioral preferences are enforced and consequently improvement on performance is very fast but none of the constraints are satisfied. \textbf{(b)} SAC-Lagrangian with the 5 behavioral constraints enforced. While each constraint was successfully dealt with when imposed one by one (see Figure~\ref{fig:single_constraint_experiments}), maximising the main objective when subject to all the constraints \textit{simultaneously} proves to be much harder. The agent does not find a policy that improves on the main task while keeping the constraints in check. \textbf{(c)} By using an additional success constraint (that the agent should reach its goal in 99\% of episodes), the agent can cut through infeasible policy space to start improving on the main task and optimise the remaining constraints later on. \textbf{(d)} By using the success constraint as a bootstrap constraint (bound to the main reward function) improvement on the main task is much faster as the agent benefits from the dense reward function to improve on the goal-reaching task.}
    \label{fig:many_constraints_experiments}
\end{figure}

\paragraph{Multiple Constraints Satisfaction}

In Figure \ref{fig:many_constraints_experiments} we see that when imposed with all of the constraints simultaneously, the agent learns a feasible policy but fails at solving the main task entirely. The agent effectively settles on a trivial behavior in which it only focuses on satisfying the constraints, but from which it is very hard to move away without breaking the constraints. By introducing a success constraint, the agent at convergence is able to satisfy all of the constraints as well as succeeding in the navigation task. This additional incentive to traverse infeasible regions of the policy space allows to find feasible but better performing solutions. Our best results are obtained when using the success constraint as a bootstrap constraint, effectively lending $\lambda_{K+1}$ to the main reward while the agent is still looking for a feasible policy.

\subsection{Experiment in the OpenWorld environment}

\begin{figure}[t]
    \centering
    \includegraphics[width=\textwidth]{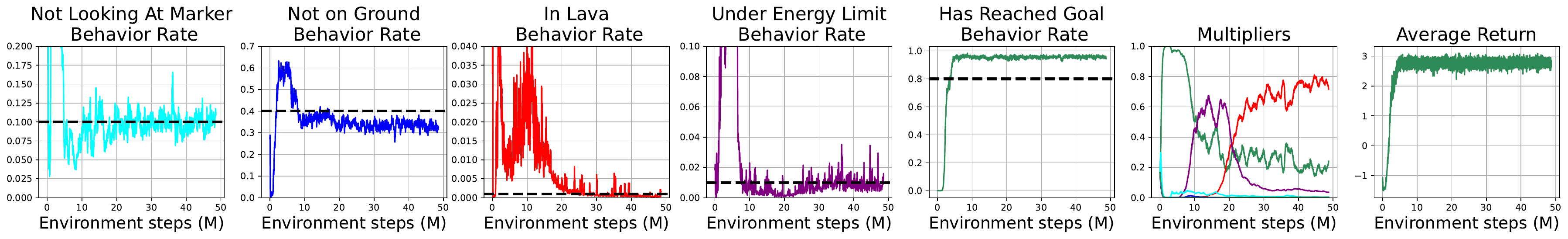}
    \caption[Results in the OpenWorld environment]{A SAC-Lagrangian agent trained to solve the navigation problem in the OpenWorld environment while respecting four constraints and imposing the bootstrap constraint. Results suggest that our SAC-Lagrangian method using indicator cost functions, normalised multipliers and bootstrap constraint scales well to larger and more complex environments.}
    \label{fig:open_world_exp}
\end{figure}

In the OpenWorld environment, we seek to verify that the proposed solution scales well to more challenging and realistic tasks. Contrarily to the Arena environment, the OpenWorld contains uneven terrain, buildings, and interactable objects like jump-pads, which brings this evaluation setting much closer to an actual RL application in the video game industry. For this experiment, we trained a SAC-Lagrangian agent to solve the navigation problem with four constraints on its behavior: \textit{On-Ground}, \textit{Not-In-Lava}, \textit{Looking-At-Marker} and \textit{Above-Energy-Limit}. The SAC component uses the same hyperparameters as in \citet{alonso2020deep}. The results are shown in Figure~\ref{fig:open_world_exp}. While training the agent in this larger and more complex environment now requires up to 50M environment steps, the agent still succeeds at completing the task and respecting the constraints, favourably supporting the scalability of the proposed framework for direct behavior specification. 

%  ----------
%  SECTION
% -----------

\section{Discussion}

Our work showed that CMDPs offer compelling properties when it comes to task specification in RL. More specifically, we developed an approach where the agent's desired behavior is defined by the frequency of occurrence for given indicator events, which we view as constraints in a CMDP formulation. We showed through experiments that this methodology is preferable over the reward engineering alternative where we have to do an extensive hyperparameter search over possible reward functions. We evaluated this framework on the many constraints case in two different environments. Our experiments showed that simultaneously satisfying a large number of constraints is difficult and can systematically prevent the agent from improving on the main task. We addressed this problem by normalizing the constraint multipliers, which resulted in improved stability during training and proposed to bootstrap the learning on the main objective to avoid getting trapped by the composing constraint set. 
This bootstrap constraint becomes a way for practitioners to incorporate prior knowledge about the task and desired result -- if the threshold is strenuous, a high success is prioritized -- if the threshold is lax, it will simply be used to exit the initialisation point and the other constraints will quickly takeover.
Our overall method is easy to implement over existing policy gradient code bases and can scale across domains  easily. %Moreover, since the CMDP framework naturally reduces to a regular MDP when no constraint are specified, this framework can be used as a single, by-default method for both constrained and unconstrained problems. 

We hope that these insights can contribute to a wider use of Constrained RL methods in industrial application projects, and that such adoption can be mutually beneficial to the industrial and research RL communities.

%  ----------
%  SECTION
% -----------

\section*{Acknowledgments}

We wish to thank Philippe Marcotte, Maxim Peter, Rémi Labory, Pierre Le Pelletier De Woillemont, Julien Varnier, Pierre Falticska, Gabriel Robert, Vincent Martineau, Olivier Pomarez, Tristan Deleu and Paul Barde as well as the entire research team at Ubisoft La Forge for providing technical support and insightful comments on this work. We also acknowledge funding in support of this work from Fonds de Recherche Nature et Technologies (FRQNT), Mitacs Accelerate Program, Institut de valorisation des données (IVADO) and Ubisoft La Forge.

% ---------------------------------------------
% ---------------------------------------------
% ----- CHAPTER
% ---------------------------------------------
% ---------------------------------------------
\chapter[ARTICLE 4: GOAL-CONDITIONED GFLOWNETS FOR\\CONTROLLABLE MULTI-OBJECTIVE MOLECULAR DESIGN]{\\ARTICLE 4: GOAL-CONDITIONED GFLOWNETS FOR CONTROLLABLE MULTI-OBJECTIVE \\MOLECULAR DESIGN}\label{chap:article4_gcgfn}

\begin{center}
Co-authors\\Pierre-Luc Bacon, Christopher Pal \& Emmanuel Bengio
\end{center}
\vspace{-5mm}
\begin{center}
Presented at\\Workshop on Challenges in Deployable Generative AI\\at the International Conference on Machine Learning, June 23, 2023
\end{center}

\begin{abstract}
In recent years, \textit{in-silico} molecular design has received much attention from the machine learning community. When designing a new compound for pharmaceutical applications, there are usually multiple properties of such molecules that need to be optimized: binding energy to the target, synthesizability, toxicity, EC50, and so on. While previous approaches have employed a scalarization scheme to turn the multi-objective problem into a \textit{preference-conditioned} single objective, it has been established that this kind of reduction may produce solutions that tend to slide towards the extreme points of the objective space when presented with a problem that exhibits a concave Pareto front. In this work we experiment with an alternative formulation of \textit{goal-conditioned} molecular generation to obtain a more controllable conditional model that can uniformly explore solutions along the entire Pareto front.
\end{abstract}

\section{Introduction}
\label{sec:GCGFN:introduction}

Modern Multi-Objective optimization (MOO) is comprised of a large number of paradigms \citep{keeney1993decisions,miettinen2012nonlinear} intended to solve the problem of trading off between different objectives; a setting particularly relevant to molecular design \citep{jin2020multi,jain2022multi}. One particular paradigm that integrates well with recent discrete deep-learning based MOO is \emph{scalarization} \citep{ehrgott2005multicriteria,pardalos2017non}, which transforms the problem of discovering the Pareto front of a problem into a \emph{family} of problems, each defined by a set of coefficients over the objectives. One notable issue with such approaches is that the solution they give tends to depend on the \emph{shape} of the Pareto front in objective space \citep{emmerich2018tutorial}.

To tackle this problem, we propose to train models which explicitly target specific regions in \emph{objective space}. Taking inspiration from goal-conditional reinforcement learning \citep{schaul2015universal}, we condition GFlowNet \citep{bengio2021flow,bengio2023gflownet} models on a description of such \textit{goal regions}. Through the choice of distribution over these goals, we enable users of these models to have more fine-grained control over trade-offs. We also find that assuming proper coverage of the goal distribution, goal-conditioned models discover a more complete and higher entropy approximation of the Pareto front than the scalarization approach.

\section{Background \& Related Work}
\label{sec:GCGFN:related_work}

The \textbf{Multi-Objective optimisation} problem can be broadly described as the desire to maximize a set of $K$ objectives over $\mathcal{X}$, $\mathbf{R}(x)\in\mathbb{R}^K$. In typical MOO problems, there is no single optimal solution $x$ such that $R_k(x)>R_k(x') \, \forall \, k, x'$. Instead, the solution set is generally composed of \emph{Pareto optimal} points, which are points $x$ that are not \emph{dominated} by any other point, i.e. $\nexists \, x'\mbox{ s.t. }R_k(x)\geq R_k(x') \, \forall \, k$. In other words, a point is Pareto optimal if it cannot be locally improved. The projection in objective space of the set of Pareto optimal points forms the so-called \emph{Pareto front}. 

As graph-based models improve \citep{rampavsek2022recipe} and more molecular data become available \citep{wu2018moleculenet}, molecular design has become an active field of research within the deep learning community \citep{brown2019guacamol,huang2021therapeutics}, and core to this research is the fact that molecular design is a fundamentally multi-objective search problem \citep{papadopoulos2006multiobjective,brown2006novel}. The advent of such tools has led to various important work at the intersection of these two fields \citep{zhou2019optimization,staahl2019deep,jin2020multi,jain2022multi}.

The \textbf{Generative Flow Network} (GFlowNet, GFN) framework is a recently introduced method to train energy-based generative models  \citep[i.e. models that learn $p_\theta(x) \propto R(x)$; ][]{bengio2021flow}. They have now been successfully applied to a variety of settings such as biological sequences \citep{jain2022biological}, causal discovery \citep{deleu2022bayesian,atanackovic2023dyngfn}, discrete latent variable modeling \citep{hu2023gflownet}, and computational graph scheduling \citep{zhang2023robust}. The framework itself has also received theoretical attention \citep{bengio2023gflownet}, for example, highlighting its connections to variational methods \citep{zhang2022unifying,malkin2022gflownets}, and several objectives to train GFNs have been proposed \citep{malkin2022trajectory,madan2022learning,pan2023better} including extensions to continuous domains \citep{lahlou2023theory}. 

In the context of molecular design, GFlowNets have several important properties that make them an interesting method for this task. Notably, they are naturally well-suited for discrete compositional object generation, and their multi-modal modeling capabilities allow them to induce greater state space diversity in the solutions they find than previous methods. A recent GFN-based approach to multi-objective molecular design, which we call \textit{preference-conditioning} \cite{jain2022multi}, amounts to scalarizing the objective function by using a set of weights (or preferences) $w$:
\begin{equation}
R_w(x) = \sum_k w_k r_k \quad , \quad \sum_k w_k = 1 \quad , \quad w_k \geq 0
\end{equation}
and then passing this preference vector $w$ as input to the model. By sampling various $w$'s from a distribution such as Dirichlet's during training, one can obtain a model that can be conditioned to emphasize some preferred dimensions of the reward function. \citet{jain2022multi} also find that such a method finds diverse candidates in both state and objective spaces.

\section{Methods}
\label{sec:GCGFN:methods}

\subsection{Goal-conditioned GFlowNets}
\label{sec:GCGFN:goal-conditioned-gfn}

\begin{figure}[t]
    \centering
    \centering
    \begin{minipage}{.33\textwidth}
        \centering
        \includegraphics[height=4cm]{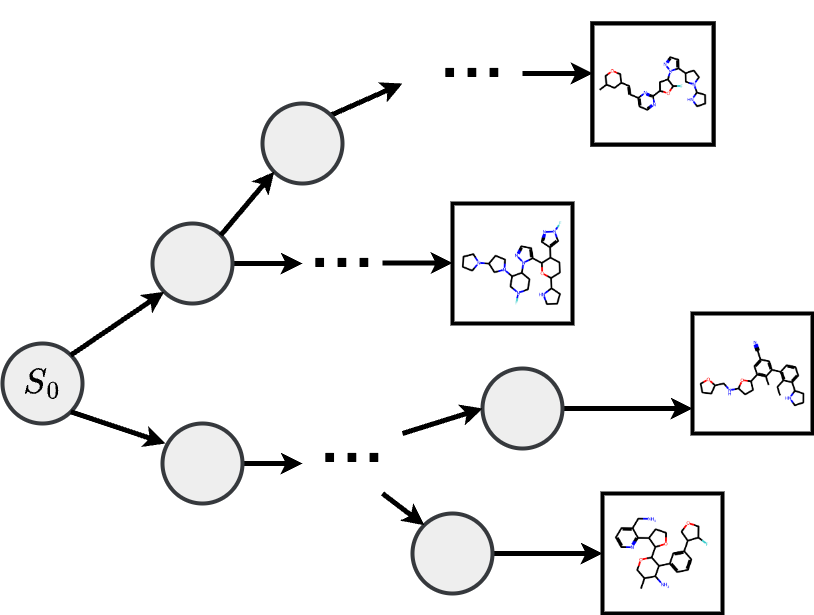}
    \end{minipage}%
    \begin{minipage}{0.66\textwidth}
        \centering
        \includegraphics[height=4cm]{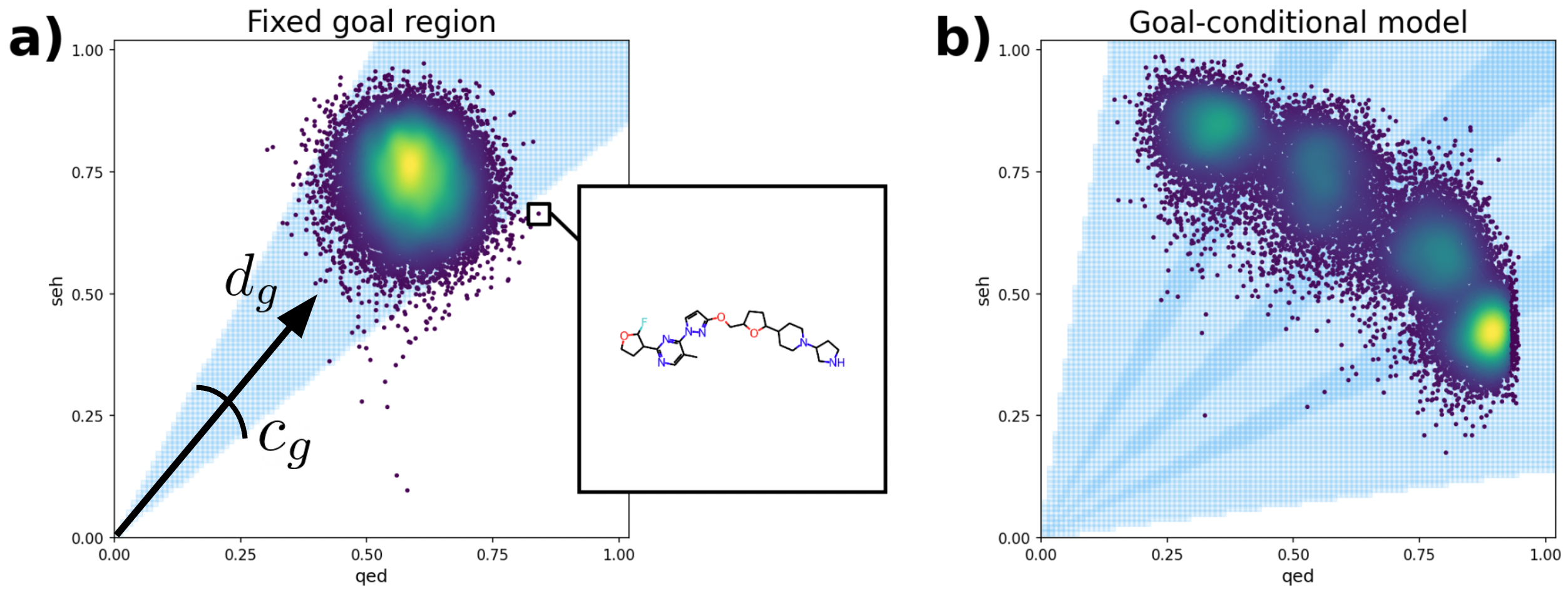}
    \end{minipage}
    \caption[Goal-Conditioned GFlowNets for molecular design]{The diagram on the left depicts the state space of a GFlowNet molecule generator which learns a forward policy that sequentially builds diverse molecules. \textbf{a)} The sampling distribution learned by such a model on a two-objective problem (seh, qed). Each dot represents a molecule's image in the objective space. The focus region (see Section~\ref{sec:GCGFN:goal-conditioned-gfn}) is depicted as a light blue cone, and the colors represent the density of the distribution. The model learns to produce molecules that mostly belong within the focus region. \textbf{b)} By training a goal-conditioned GFlowNet and sampling from several focus regions (here showing 4 distinct regions), we can cover a wider section of the objective space and increase the diversity of proposed candidates.}
    \label{fig:method_overview}
\end{figure}

Building on the method of \citet{jain2022multi}, our approach also formulates the problem as a conditional generative task but now imposes a hard constraint on the model: the goal is to generate samples for which the image in objective space falls into the specified goal region. While many different goal-design strategies could be employed, we take inspiration from \citet{lin2019pareto} and state that a sample $x$ meets the specified goal $g$ if the cosine similarity between its reward vector $r$ and the goal direction $d_g$ is above the threshold $c_g$: $g := \{r \in \mathbb{R}^K: \frac{r \cdot d_g}{||r||\cdot||d_g||} \geq c_g \}$. We call such a goal a \textit{focus region}, which represents a particular choice of trade-off in the objective space (see Figure~\ref{fig:method_overview}). The method can be considered a form of goal-conditional reinforcement learning \cite{schaul2015universal}, where the reward function $R_g$ depends on the current goal $g$. In our case we have:
\begin{equation}
\label{eq:goal-reward}
    R_g(x) = 
    \begin{cases}
    \sum_k r_k ,& \text{if } r \in g\\
    0,              & \text{otherwise}
    \end{cases}
\end{equation}
To alleviate the effects of the now increased sparsity of the reward function $R_g$, we use a replay buffer which proved to stabilise the learning dynamics of our models (see Appendix~\ref{app:GCGFN:replay_buffer}). Notably, by explicitly formulating a goal, we can measure the \textit{goal-reaching accuracy} of our model, which refers to the proportion of samples that successfully landed in their prescribed region. This measurement enables us to employ hindsight experience replay \citep{andrychowicz2017hindsight}, which lets the model learn from the sampled trajectories that didn't meet their goal. Finally, to further increase the goal-reaching accuracy we sharpen the reward function's profile to help the model generate samples closer to the center of the focus region (see Appendix~\ref{app:GCGFN:focus_limit_coef}). 

\subsection{Learned Goal Distribution}
\label{sec:GCGFN:learned_goal_distribution}

Preference conditioning uses soft constraints to steer the model in some regions of the objective space. While hard constraints provide a more explicit way of incorporating the user's intentions in the model \citep{amodei2016concrete, roy2021direct}, they come with the unique challenge that not every goal may be feasible. In such cases, the model will only observe samples with a reward of 0 and thus return molecules of little interest drawn uniformly across the state space. These ``bad samples'' are not harmful in themselves and can easily be filtered out. Still, their prominence will affect the sampling efficiency of goal-conditioned approaches compared to their soft-constrained counterpart. Moreover, the number of infeasible regions will likely multiply as the number of objectives grows, further aggravating this disparity. To cope with this challenge, we propose to use a simple \textit{tabular goal-sampler} (Tab-GS) which maintains a belief about whether any particular goal direction $d_g$ is feasible. Once learned, we can start drawing new goals from it with a much lower likelihood on the goals that are believed to be infeasible, thus restoring most of the lost sample efficiency. We give more details on this approach in Appendix~\ref{app:GCGFN:learned_goal_model} and use it in our experiments in Section~\ref{sec:GCGFN:increasing_number_of_objectives}.

\subsection{Evaluation Metrics}
\label{sec:GCGFN:evaluation_metrics}

While there exists many multi-objective scoring functions to choose from, any single metric only partially captures the desirable properties of the learned generative distribution \citep{audet2021performance}. In this work, we focus on sampling high-performing molecules across the entire Pareto front in a controllable manner at test time. With that in mind, we propose combining three metrics to evaluate our solution. The first one, the Inverted Generational Distance (IGD) \citep{coello2005solving}, uses a set of reference points $P$ (the \textit{true} Pareto front) and takes the average of the distance to the closest generated sample for each of these points:
$\text{IGD}(S, P) := \frac{1}{|P|} \sum_{p \in P} \min_{s \in S} ||s - p||_2$ where $S=\{s_i\}_{i=1}^N$ is the image in objective space of a set of $N$ generated molecules $s_i$. When the true Pareto front is unknown, we use a discretization of the extreme faces of the objective space hypercube as reference points. IGD thus captures the width and depth at which our Pareto front approximation reaches out in the objective space. The second metric, which we call the Pareto-Clusters Entropy (PC-ent), measures how uniformly distributed the samples are along the true Pareto front. To accomplish this, we use the same reference points $P$ as for IGD, and cluster together in the subset $S_j$ all of the samples $s_i$ located closer to the reference point $p_j$ than any other reference point. PC-ent computes the entropy of the histogram of each counts $|S_j|$, reaching its maximum value of $-\log \frac{1}{|P|}$ when all the samples are uniformly distributed relative to the true Pareto front:
$\text{PC-ent}(S, P) := - \sum_{j} \frac{|S_j|}{|P|} \log \frac{|S_j|}{|P|}$. Finally, to report on the \textit{controllability} of the compared methods, we measure the Pearson correlation coefficient (PCC) between the conditional vector $c$ (goal or preference) and the resulting reward vector $s$, averaged across objectives $k$:
$\text{Avg-PCC}(S, C) := \frac{1}{K}\sum_{k=1}^K\text{PCC}(s_{\cdot,k}, c_{\cdot,k})$.

\section{Results}
\label{sec:GCGFN:results}

\subsection{Evaluation Tasks}
\label{sec:GCGFN:task}

\begin{table}[b!]
    \centering
    \caption[Evaluation of conditioning methods with 2 objectives]{Comparisons according to IGD, Avg-PCC and PC-ent between preference-conditioned and goal-conditioned GFNs on a set of increasingly difficult objective landscapes, metrics reported on 3 seeds (mean $\pm$ sem).}
    \vspace{2mm}
    \renewcommand{\arraystretch}{1.3}
    \resizebox{\textwidth}{!}{ 
    \begin{tabular}{|r|c|c c c c c c c|}
        \cline{2-9}
         \multicolumn{1}{c|}{} & \textbf{algorithm} & \textbf{unrestrained} & \textbf{restrained-convex} & \textbf{concave} & \textbf{concave-sharp} & \textbf{multi-concave} & \textbf{4-dots} & \textbf{16-dots}\\
        \hline
        \multirow{2}{*}{IGD $(\downarrow)$} & pref-cond & \cellcolor{lightgray}$\mathbf{0.087 \pm 0.001}$ & $0.316 \pm 0.002$ & $0.272 \pm 0.001$ & \cellcolor{lightgray}$\mathbf{0.180 \pm 0.002}$ & \cellcolor{lightgray}$\mathbf{0.152 \pm 0.006}$ & $0.130 \pm 0.011$ & $0.109 \pm 0.009$ \\
        \cline{2-9}
         & goal-cond & $0.095 \pm 0.002$ & \cellcolor{lightgray}$\mathbf{0.310 \pm 0.001}$ & \cellcolor{lightgray}$\mathbf{0.266 \pm 0.001}$ & $0.197 \pm 0.002$ & $0.173 \pm 0.004$ & $0.134 \pm 0.002$ & $0.115 \pm 0.004$ \\
        \hline
        \hline
        \multirow{2}{*}{Avg-PCC $(\uparrow)$} & pref-cond & $0.905 \pm 0.001$ & $0.673 \pm 0.009$ & $0.830 \pm 0.002$ & $0.855 \pm 0.004$ & $0.700 \pm 0.009$ & $0.768 \pm 0.038$ & $0.770 \pm 0.011$ \\
        \cline{2-9}
         & goal-cond & \cellcolor{lightgray}$\mathbf{0.967 \pm 0.002}$ & \cellcolor{lightgray}$\mathbf{0.953 \pm 0.001}$ & \cellcolor{lightgray}$\mathbf{0.926 \pm 0.002}$ & \cellcolor{lightgray}$\mathbf{0.915 \pm 0.001}$ & \cellcolor{lightgray}$\mathbf{0.946 \pm 0.004}$ & \cellcolor{lightgray}$\mathbf{0.928 \pm 0.002}$ & \cellcolor{lightgray}$\mathbf{0.948 \pm 0.001}$ \\
        \hline
        \hline
        \multirow{2}{*}{PC-ent $(\uparrow)$} & pref-cond & $2.170 \pm 0.004$ & $1.913 \pm 0.019$ & $1.563 \pm 0.009$ & $1.629 \pm 0.002$ & $1.867 \pm 0.015$ & $1.521 \pm 0.022$ & $1.610 \pm 0.019$ \\
        \cline{2-9}
         & goal-cond & \cellcolor{lightgray}$\mathbf{2.472 \pm 0.006}$ & \cellcolor{lightgray}$\mathbf{2.242 \pm 0.013}$ & \cellcolor{lightgray}$\mathbf{1.997 \pm 0.002}$ & \cellcolor{lightgray}$\mathbf{1.918 \pm 0.001}$ & \cellcolor{lightgray}$\mathbf{2.380 \pm 0.020}$ & \cellcolor{lightgray}$\mathbf{2.270 \pm 0.025}$ & \cellcolor{lightgray}$\mathbf{2.262 \pm 0.014}$ \\
        \hline
    \end{tabular}
    }
\label{tab:difficult_pareto_fronts}
\end{table}

We primarily experiment on a two-objective task, the well-known drug-likeness heuristic QED \citep{bickerton2012quantifying}, which is already between 0 and 1, and the sEH binding energy prediction of a pre-trained publicly available model \citep{bengio2021flow}; we divide the output of this model by 8 to ensure it will likely fall between 0 and 1 (some training data goes past values of 8). For 3 and 4 objective tasks, we use a standard heuristic of synthetic accessibility \citep{ertl2009estimation} and a penalty for compounds exceeding a molecular weight of 300. See Appendix~\ref{app:GCGFN:training_details} for all task and training details.

\subsection{Comparisons in Difficult Objective Landscapes}
\label{sec:GCGFN:difficult_landscapes}

To simulate the effect of complexifying the objective landscape while keeping every other parameter of the evaluation fixed, we incorporate \textit{unreachable regions}, depicted in dark in Figures~\ref{fig:difficult_pareto_fronts_alignment}~\&~\ref{fig:difficult_pareto_fronts_density}, by simply setting to \textit{null} the reward function of any molecule whose image in the objective space would fall into these dark regions. We can see that the preference-conditioned approach can effectively solve problems exhibiting a convex pareto-front (Figure~\ref{fig:difficult_pareto_fronts_alignment}~\&~\ref{fig:difficult_pareto_fronts_density}, columns 1-2). However, it is far less effective on problems exhibiting more complex objective landscapes. When faced with a concave Pareto front, the algorithm favours solutions towards the extreme ends (Figure~\ref{fig:difficult_pareto_fronts_alignment}~\&~\ref{fig:difficult_pareto_fronts_density}, columns 3-7). In contrast, by explicitly forcing the algorithm to sample from each trade-off direction in the objective space, our goal-conditioned method learns a sampling distribution that spans the entire space diagonally, no matter how complex we make the objective landscape. Table~\ref{tab:difficult_pareto_fronts} reports the performance of both methods on these objective landscapes in terms of IGD, Avg-PCC and PC-ent (mean $\pm$ sem, over 3 seeds). 

We see in Table~\ref{tab:difficult_pareto_fronts} that according to IGD, preference-conditioning and goal-conditioning perform similarly in terms of pushing the empirical Pareto front forward. While the two learned distributions are in many cases very different (Figure~\ref{fig:difficult_pareto_fronts_alignment}, columns 3-7), the preference-conditioning method still manages to produce a few samples in the middle areas of the Pareto front, which satisfies IGD as it only looks for the single closest sample to each reference point. However, the two algorithms differ drastically in terms of controllability of the distribution (color-coded in  Figure~\ref{fig:difficult_pareto_fronts_alignment}) and uniformity of the distribution along the Pareto front (color-coded in  Figure~\ref{fig:difficult_pareto_fronts_density}), which are highlighted by the Avg-PCC and PC-ent criteria in Table~\ref{tab:difficult_pareto_fronts}.

\begin{figure}[t!]
    \centering
    \includegraphics[width=1\linewidth, height=2.75cm]{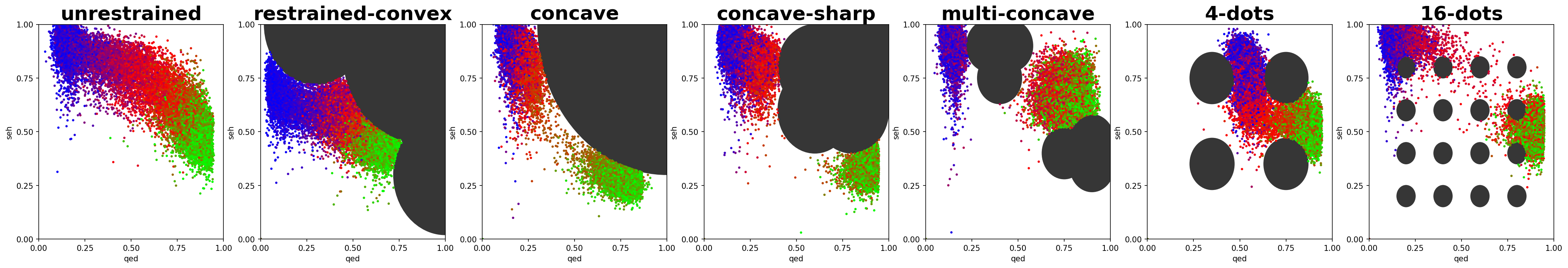}
    \includegraphics[width=1\linewidth, height=2.75cm]{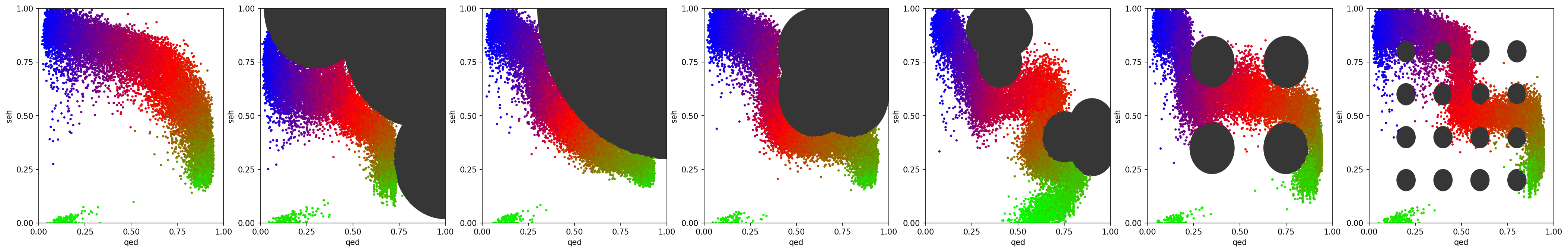}
    \caption[Control comparison of conditioning methods with 2 objectives]{Comparisons between a preference-conditioned GFN (top row) and a goal-conditioned GFN (bottom row) on a set of increasingly complex modifications of a two-objective (seh, qed) fragment-based molecule generation task \cite{jain2022multi}. The BRG colors represent the angle between the vector $[1, 0]$ and either the preference-vector $w$ (top) or the goal direction $d_g$ (bottom), respectively. For example, in the case of preference-conditioning, a green dot means that such samples were produced with a strong preference for the qed-objective, while in the goal-conditioning case, a green dot means that  the model \textit{intended} to produce a sample alongside the qed-axis. We see that goal-conditioning allows to span the entire objective space even in very challenging landscapes (columns 3-7) and in a more controllable way.}
    \label{fig:difficult_pareto_fronts_alignment}
\end{figure}
\begin{figure}[b!]
    \centering
    \includegraphics[width=1\linewidth, height=2.75cm]{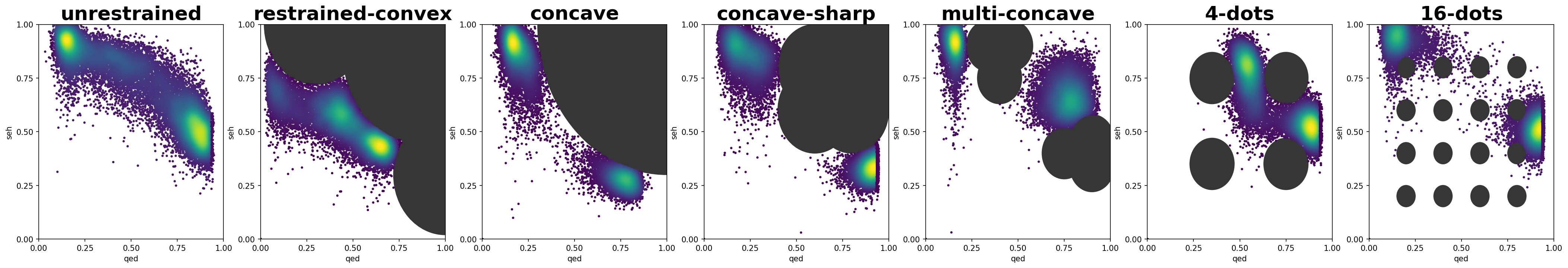}
    \includegraphics[width=1\linewidth, height=2.75cm]{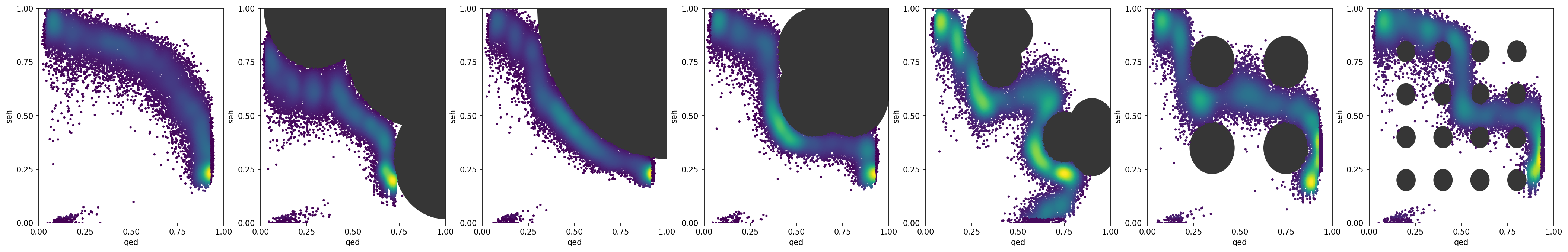}
    \caption[Density comparison of conditioning methods with 2 objectives]{Comparisons of the same sampling distributions depicted in Figure~\ref{fig:difficult_pareto_fronts_alignment}. Now the colors indicate how densely populated a particular area of the objective space is (brighter is more populated). We can see that by explicitly targeting different trade-off regions in objective space, our goal-conditioning approach (bottom row) produces far more evenly distributed samples along the Pareto front than with preference-conditioning (top row).}
    \label{fig:difficult_pareto_fronts_density}
\end{figure}

\subsection{Comparisons for Increasing Number of Objectives}
\label{sec:GCGFN:increasing_number_of_objectives}

Using the same metrics, we also evaluate the performance of both methods when the number of objectives increases. As described in Section~\ref{sec:GCGFN:learned_goal_distribution}, to maintain the sample efficiency of our goal-conditioned approach we sample the goal directions $d_g$ from a learned tabular goal-sampler (Tab-GS) rather than uniformly across the objective space (Uniform-GS). We can see in Table~\ref{tab:GCGFN:growing_number_of_objectives} (and in the ablation in Appendix~\ref{app:GCGFN:learned_goal_model}) that with this adaptation, our goal-conditioned approach maintains its advantages in terms of controllability and uniformity of the learned distribution as the number of objectives increases, proving to be an effective method for probing large, high-dimensional objective spaces for diverse solutions.

\begin{table}[h!]
    \centering
    \caption[Evaluation of conditioning methods with 3 and 4 objectives]{Comparisons according to IGD, Avg-PCC and PC-ent between preference- and goal-conditioned GFNs faced with increasing objectives (3 seeds, mean $\pm$ sem).}
    \vspace{1mm}
    \renewcommand{\arraystretch}{1.3}
    \scalebox{0.8}{
    % \resizebox{\linewidth}{!}{
    \begin{tabular}{|r|c|c c c|}
        \cline{2-5}
         \multicolumn{1}{c|}{} & \textbf{algorithm} & \textbf{2 objectives} & \textbf{3 objectives} & \textbf{4 objectives}\\
        \hline
        \multirow{2}{*}{IGD $(\downarrow)$} & pref-cond & \cellcolor{lightgray}$\mathbf{0.088 \pm 0.001}$ & $0.218 \pm 0.003$ & $0.370 \pm 0.000$ \\
        \cline{2-5}
         & goal-cond & $0.094 \pm 0.004$ & \cellcolor{lightgray}$\mathbf{0.199 \pm 0.002}$ & \cellcolor{lightgray}$\mathbf{0.303 \pm 0.001}$ \\
        \hline
        \hline
        \multirow{2}{*}{Avg-PCC $(\uparrow)$} & pref-cond & $0.904 \pm 0.002$ & $0.775 \pm 0.004$ & $0.612 \pm 0.002$ \\
        \cline{2-5}
         & goal-cond & \cellcolor{lightgray}$\mathbf{0.961 \pm 0.001}$ & \cellcolor{lightgray}$\mathbf{0.909 \pm 0.001}$ & \cellcolor{lightgray}$\mathbf{0.893 \pm 0.002}$ \\
        \hline
        \hline
        \multirow{2}{*}{PC-ent $(\uparrow)$} & pref-cond & $2.166 \pm 0.007$ & $3.775 \pm 0.016$ & $4.734 \pm 0.004$ \\
        \cline{2-5}
         & goal-cond & \cellcolor{lightgray}$\mathbf{2.471 \pm 0.001}$ & \cellcolor{lightgray}$\mathbf{4.571 \pm 0.008}$ & \cellcolor{lightgray}$\mathbf{6.320 \pm 0.009}$ \\
        \hline
    \end{tabular}
    % }
    }
\label{tab:GCGFN:growing_number_of_objectives}
\end{table}

\section{Future Work}
\label{sec:GCGFN:discussion}
\begin{figure}[t]
    \centering
    \vspace{-4mm}
    \includegraphics[width=0.7\textwidth]{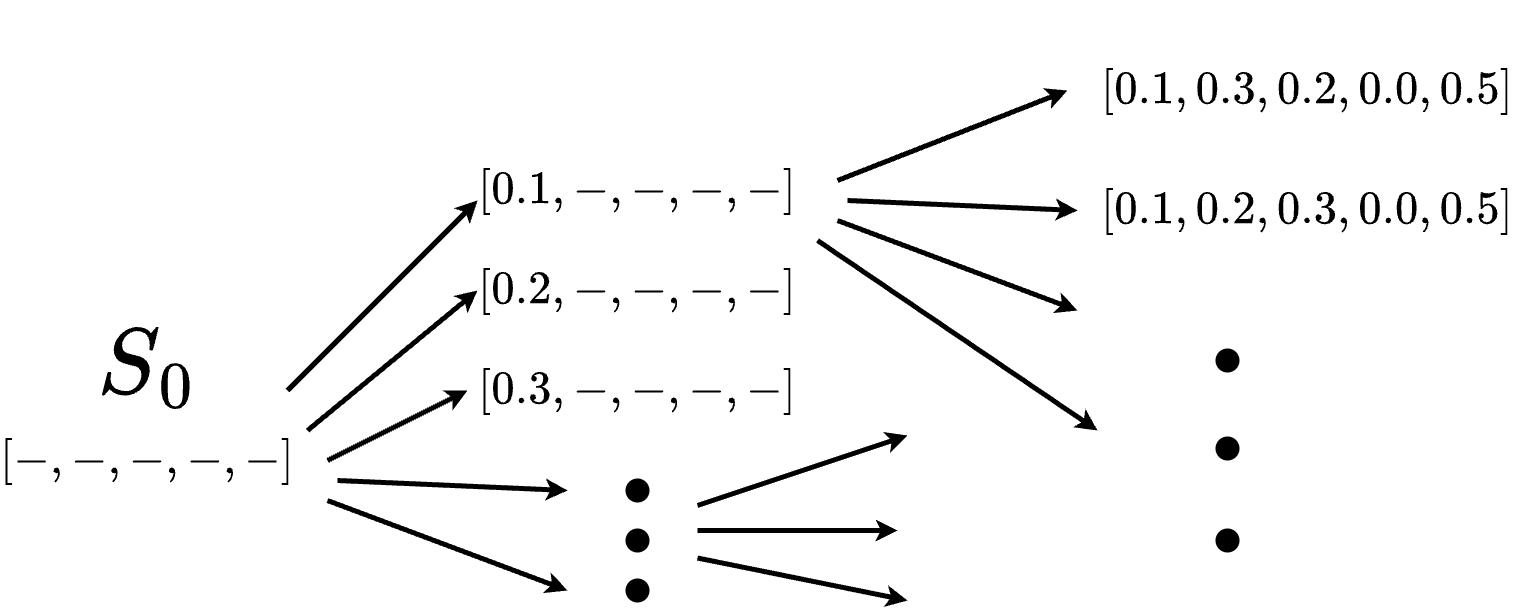}
    \caption[Depiction of a GFlowNet Goal Sampler]{Depiction of a GFlowNet Goal Sampler (GFN-GS) gradually building goal directions $d_g$ one coordinate at a time, as a sequence of $K$ steps.}
    \label{fig:hirarchical_goal_sampler}
\end{figure}
In this work, we proposed goal-conditioned GFlowNets for multi-objective molecular design. We showed that they are an effective solution to give practitioners more control over their generative models, allowing them to obtain a large set of more widely and more uniformly distributed molecules across the objective space. An important limitation of the proposed approach was the reduced sample efficiency of the method due to the existence of \textit{a priori unknown} infeasible goals. We proposed a tabular approach to gradually discredit these fruitless goal regions as we explore the objective space. However, this set of parameters, one for every goal direction, grows exponentially with the number of objectives $K$, eventually leading to statistical and memory limitations. As future steps, we plan to experiment with a GFlowNet-based Goal Sampler (GFN-GS) which would learn to sample feasible goal directions dimension by dimension, thus benefiting from parameter sharing and the improved statistical efficiency of its hierarchical structure.

% Acknowledgements should only appear in the accepted version.
\section*{Acknowledgements}

We wish to thank Berton Earnshaw, Paul Barde and Tristan Deleu as well as the entire research team at Recursion's Emerging ML Lab for providing insightful comments on this work. We also acknowledge funding in support of this work from Fonds de Recherche Nature et Technologies (FRQNT), Institut de valorisation des données (IVADO) and Recursion Pharmaceuticals.

% ---------------------------------------------
% ---------------------------------------------
% ----- CHAPTER
% ---------------------------------------------
% ---------------------------------------------
% Add an empty line in the TOC
\addtocontents{toc}{\protect\vspace{\baselineskip}}
\chapter[GENERAL DISCUSSION]{\\GENERAL DISCUSSION}\label{chap:discussion}

% recall motivation
Reinforcement learning has shown significant potential in tackling complex sequential decision-making challenges in various real-world domains. More than a decade ago, the potential applications of this technology were already being demonstrated for production scheduling \citep{wang2005application}, aerobatic helicopter flight \citep{abbeel2006application, abbeel2010autonomous} and patient-prosthetic interfaces \citep{pilarski2011online}. Today, Deep RL algorithms are actively being used to improve energy efficiency  \citep{luo2022controlling},
% \red{\citep{deepmind2016cooling, yu2019deep}}
are investigated as a novel approach for plasma control in nuclear fusion reactors \citep{degrave2022magnetic}
% , are explored as an innovative tool for public policy evaluation \red{\citep{kompella2020reinforcement, zheng2020ai}} 
and hold great potential in molecular design \citep{popova2018deep}, addressing critical issues such as climate change and disease treatment. However, the core principle of reward maximization, crucial to uncovering innovative solutions, also poses a major challenge in its effective deployment. Designing predictable and efficient reward functions is a complex task, and attempts at reward engineering often result in misaligned solutions or inefficient learning due to incomplete reward specification. In this chapter, we summarize the contributions of this thesis to the field of reward specification in reinforcement learning and discuss their successes and limitations. We then cover additional considerations to alleviate this challenge and conclude by touching on some of the fundamental difficulties that make reward specification so persistent.

% the glue section: go over each contribution, put it in perspective w.r.t Literature Review and state its strengths and limitations
\section{Sucesses and Limitations} % OR Summary of contributions?

Over the last decades, several families of strategies have been developed to guide and assist the task of reward specification in reinforcement learning. In \textbf{Chapter~\ref{chap:lit_review}}, we divide them into two distinct categories: reward composition and reward modeling. Our contributions, summarized in \textbf{Chapter~\ref{chap:preamble}}, span both of these paradigms. 

Reward modeling aims to bypass the challenge of reward design by learning a model of the reward function using human supervision. In \textbf{Chapter~\ref{chap:article1_asaf}}, we present Adversarial Soft-Advantage Fitting (ASAF), a method which takes advantage of the analytical solution of the adversarial imitation learning problem to parameterize the discriminator in a way that allows to learn a near-optimal policy without performing any policy improvement step. This approach allows to accelerate the learning process and drastically simplify the implementation of adversarial imitation learning algorithms, making it easier to use and deploy. However, reward modeling has important limitations. First, supervision sources need to be available for the task at hand. In the case of imitation learning and inverse reinforcement learning, the algorithm is provided with expert demonstrations. Such demonstrations can be expensive to obtain. For example, in the case of robotics, data collection may require the design of virtual reality simulations, haptic interfaces, or direct robot-manipulation by a human \citep{calinon2009learning, zhang2018deep}. In other cases, such as in molecular design, existing compounds may represent interesting examples to serve as a starting point but may not be deemed optimal, thus restricting the ability of the model to discover new solutions with optimal properties. Secondly, even when available, these demonstrations generally cannot cover all the scenarios of interest. Overcoming this limitation leads to the challenges of generalizing beyond the sampling distribution. The limited availability of human supervision and brittle generalization affect all reward modeling families, including those relying on cheaper input sources such as preference-based methods. Therefore, while demonstration data represent an important asset in accelerating the early phases of learning, in many applications, defining explicity a numerical reward function remains the preferred solution.

The second paradigm, reward composition, regroups reward design strategies which aim at building a reward function from multiple components. Some of these approaches are now widely spread and well understood. For example, sparse rewards can often be augmented with dense potential-based shaping, and this approach can drastically accelerate learning while theoretically preserving the set of optimal policies. However, this type of composition is limited in the type of reward functions that can be captured, and does not explicitly use all of the available information from the environment to shape the learned representation of the policy. The use of auxiliary tasks aims to bridge that gap. Although not guaranteeing the preservation of the set of optimal policies, reasonable assumptions on the necessary properties of the solution set can be made to accelerate learning. For example, a navigation robot operating in a dynamic environment should be able to estimate its own velocity. Auxiliary objectives are an attempt to make use of such additional learning signal to enrich the learned representation of the policy. In \textbf{Chapter~\ref{chap:article2_cmaddpg}}, we present two auxiliary task approaches named Team regularization and Coach regularization, which promote coordination between agents in cooperative multi-agent scenarios. When properly designed and well calibrated, such auxiliary tasks can yield significant gains in performance and sample efficiency by guiding the exploration process towards more promising regions of the policy space. The limitations of auxiliary tasks are two-fold. First, if the proposed task is not sufficiently correlated with the main objective, these additional learning signals can prove detrimental to the performance of the agent. We demonstrate a case of such conflicting signals in the \textit{Compromise} environment in Appendix~\ref{app:CMADDPG:effects_enforcing_predictability} where an agent trained to behave in a predictable way in an adversarial environment becomes subservient to its teammate and neglects to pursue its own goals. While some families of auxiliary tasks are less at risk of being detrimental, for example, when simply enforcing object detection in a vision-based model, such cases of conflicting auxiliary tasks are important reminders of the necessary prior knowledge required to craft effective inductive biases. Moreover, approaches based on auxiliary tasks generally focus on learning efficiency and do not address the problem of alignment. We confront this problem more directly in our two last contributions using multi-objective paradigms.

In \textbf{Chapter~\ref{chap:article3_dbs}}, we present a framework leveraging constrained RL for reward composition. Constrained reinforcement learning defines some components of a task as constraints to satisfy. By restricting cost functions in the constrained MDP framework to identity functions, the expected discounted cost becomes probabilities of events on the agent's visitation distribution, forming a natural interface between the designer's intentions and the agent's behavior. With this framework, for each aspect of behavior that we seek to enforce, a task designer simply needs to write a detection function and specify a target threshold. It may be the case that not every aspect of behavior can be conveniently captured in the form of an indicator function, and this framework comes at the cost of longer training time since the weighting coefficient of each constraint need to be adapted in a bi-level optimization procedure. However, in practice, we find this particular family of cost functions to still retain a lot of expressivity in capturing very diverse disederata, and the additional training time is compensated by saving several design iterations of the reward function to a task designer, thus coming out as a much more effective solution in the overall project development. By enforcing a more thorough and intentional specification procedure, this framework reduces the risk of emergence of exploitative behaviors. These results highlight the benefits of limited optimization focusing on goal-satisfaction, rather than unbounded goal-maximization \citep{vamplew2022scalar}. We have found the use of constraints to be much more intuitive than manually weighting the reward components, and because the CMDP framework naturally reduces to the unconstrained case when only one objective is pursued, we suggest employing a constrained approach as the default paradigm for reward specification over the traditional scalarized MDP approach.

Finally, in \textbf{Chapter~\ref{chap:article4_gcgfn}}, we turn our attention to a fundamentally multi-objective problem: molecular design. We present Goal-Conditioned GFlowNets which leverage goal-conditioning as a way to specify the task in a controllable and flexible set of policies that can be adapted at test-time. While requiring additional computation to explore the set of Pareto-optimal policies, conditional approaches allow to defer the final decision on user preferences to deployment time, effectively postponing the problem of reward specification. This method leverages the compression capabilities of neural parameterizations for solving the dilemma of reward design. Moreover, since rewards are strictly terminal, enforcing hard-constraints can be done efficiently by leveraging the proportional sampling property of GFlowNets using reward manipulation, without requiring a bi-level optimization procedure as in the case on return-based constraints. These constraints allow to cover the most complex of Pareto fronts, thus retaining a broad coverage of solutions independently of the properties of the objective space.

In summary, this thesis explores the landscape of reward specification in reinforcement learning, delving into the realms of reward modeling and reward composition. Our contributions navigate through the challenges and opportunities inherent in these approaches. From human demonstrations to the use of auxiliary tasks, we have demonstrated the potential to accelerate learning using prior knowledge about the task to perform. Subsequently, our exploration of constrained reinforcement learning and goal-conditioned molecular design underlines the importance of intentional design and explicit goals. All of these methods present unique strengths and limitations, and the use of demonstrations, auxiliary tasks and multi-objectivization should be combined to tackle complex real-world challenges most efficiently.

\section{Additional considerations}

This thesis focuses on reward specification. However, there are other considerations that should be taken into account to ensure efficient exploration and aligned behavior. In this section, we briefly discuss the role of environment design, monitoring strategies, and human-in-the-loop to guide policy learning and address misaligned behavior.

\subsubsection*{Environment design}

The reward function in great part captures the task to be accomplished -- for a fixed MDP, different reward functions can lead to completely different behaviors. However, in real-world RL applications, engineers generally have agency over the design of the entire MDP, not just its reward function \citep{taylor2023reinforcement}. 

One of the most fundamental decisions is the representation of the state space $\mathcal{S}$. A correct state specification must include all the information relevant to making the proper action selection and respect the Markov property. As a design decision, it can be difficult to choose between a preprocessed state vector which may omit some environment details but allow for efficient policy learning, and a rich higher-dimensional state representation such as images which makes all of the information available to the agent but requires significantly more training samples. The action space $\mathcal{A}$ also has a great influence on the difficulty of learning the task. Reframing the problem to remove potentially harmful actions from the agent's control and narrowing the learned policy on a low-dimensional control vector can both accelerate policy learning and reduce the risk of misalignment. In some cases, fixed subroutines can be used for finer control while the agent would be in charge of a more distant decision making procedure. Finally, the definition of time-step $t$ has great implications for both controllability and sample efficiency. An agent that acts more frequently will have more precise control over the environment at the cost of more difficult credit assignment. 

Any given real-world task can typically be formulated as several different MDP instantiations, and the design choices behind these MDP components greatly impact the nature of the problem to solve from the RL agent's perspective. For example, starting from a well-defined objective such as teaching a robot arm to play table tennis, \citet{d2023robotic} make a myriad of design decisions to define the MDP, essentially transforming the engineering challenge from hard-coding the robot behavior to designing an environment that can be solved by reinforcement learning.

The challenge of environment design intertwines with different research areas. Hierarchical reinforcement learning aims at learning a hierarchy of agents where higher-level decision-makers receive the true reward function and take temporally-extended action delegating finer control to lower-level effectors \citep{dayan1992feudal}. Higher-level managers provide learned rewards to lower-level effectors, essentially bringing together the concepts of environment design and reward modeling by both specifying multiple MDPs at different levels while learning a model of the reward function to cascade the true signal down. Curriculum learning in RL instead seeks to build a sequence of MDPs terminating with the real task to solve \citep{chevalier2018babyai}. The first MDPs are meant to be easier to solve and their optimal policies are used as starting point for the next problem, allowing to gradually build up the control problem in its full complexity. This approach brings together notions of environment design and auxiliary tasks, decomposing the true problem into several MDPs of increasing complexity.

\subsubsection*{Monitoring}

To find the best configuration for the algorithm, the environment and the reward function, RL projects often involve running hundreds of experiments in parallel. Due to this necessary practice, identifying misaligned behaviors becomes a complex task. Manually inspecting each agent's trajectory is not only labor-intensive but also impractical for large-scale searches. Therefore, automated methods to pinpoint deviations from expected behaviors are essential for the effective deployment of RL systems. 

A notable approach by \citet{pan2022effects} suggests detecting significant shifts in agent behavior by evaluating the divergence between the agent's policy and a pre-examined, \textit{trusted} policy. This method illustrates the utility of expert policies or demonstrations not just in learning but also in monitoring the progression of an agent's behavior. Subsequently, when misalignment is observed, dissecting the reward components individually can shed light on the influences driving the agent's behavior. As discussed by \citet{vamplew2018human}, this is particularly pertinent in multi-objective frameworks. Some works have even explored decomposing a black-boxed scalar rewards into interpretable components to elucidate the agent's behavior in terms of trade-offs \citep{juozapaitis2019explainable}. \citet{anderson2019explaining} further experimented with graphical interfaces that display the anticipated returns for each reward component to allow human observers to analyze surprising decisions from the agent.

Overall, monitoring is a fundamental aspect in detecting, understanding, and rectifying issues related to reward misspecification. It presents significant research opportunities to develop more interpretable agents to inform and guide the design of the reward function. With the increasing adoption of RL in industry settings, this aspect of development will likely take a central place in more standardized development practices.

\subsubsection*{Human-in-the-loop}

Perfectly specifying a reward function on the first attempt for a complex task is highly unlikely. On the other hand, an iterative process is very costly and time consuming, as the agent needs to be trained to convergence between each attempt. To address this dilemma, one idea consists in intervening \textit{during} the agent training, which can be seen as a way to edit the reward function on the fly, to re-specify it as the agent's behavior starts taking form. This paradigm can be referred to as Human-in-the-Loop (HiL) \citep{mosqueira2023human}.

HiL can take the form of active learning \citep{settles2009active}, where the agent models its own uncertainty about the reward function and is responsible for querying human annotators for more labels when its predictions are too ambiguous. Many reward modeling approaches allow for such a feedback loop. In particular, both learning from demonstrations and learning from preferences have been combined with active learning frameworks to optimize the number of queries to human experts \citep{cui2018active, sadigh2017active}. Further along that spectrum, we find more interactive methods in which humans remain in control of when additional feedback should be provided. For example, \citet{knox2009interactively, knox2012reinforcement} propose a framework where a human supervisor directly provides signals of approval and disapproval, allowing the agent to learn to a policy without being responsible for credit assignment. \citet{saunders2017trial} intervene to prevent the agent from making catastrophic errors while learning to interact with its environment. \citet{bajcsy2017learning} use human physical interventions on a robot arm to learn the parameters of its objective function.

These approaches operationalize the idea of widening the modelization of our relationship with learning agents. In a similar line of thought, \citet{jeon2020reward} propose 
to use the intervention itself as a source of feedback; the agent should take note that the fact that a human intervention was necessary is unacceptable.  \citet{taylor2023reinforcement} zooms back even more and argues that RL as a whole should be seen as a human-in-the-loop procedure, from the MDP design to the deployment of the solution. A general HiL perspective and concrete methods to implement it mitigate the challenges of reward specification by allowing the designers to correct course, both accelerating and re-aligning the agent's learning in a live system.

\section{Reward Specification: A persistent challenge}

Reward specification presents itself as a fundamental challenge in the realm of control algorithms. RL agents have a pervasive tendency to yield exploitative solutions, a characteristic that seems to be inherent to automated learning systems, as this phenomenon has also been frequently observed in digital evolution studies \citep{lehman2020surprising}. Both the progress in this field and the remaining difficulties invite for additional work. However, despite its technical nature, reward specification also has an ethical and psychological component, and the disconcerting fragility of any particular reward design can in part be attributed to the deep connections between the act of defining an objective and some core issues relating to policy making, system design, and uncertainty.

First, among a set of desiderata, objectives not only differ in their respective weights and levels of priority but they also vary in nature. Certain rules exhibit flexibility, allowing a degree of tolerance (e.g. be on time), while others hold axiomatic significance (e.g. do not kill). The balancing of these rules often manifests in societal decisions, where, for example, it might be deemed tolerable to infringe on an individual's rights to own land to facilitate the construction of a bridge or electrical infrastructure that can benefit the life of millions. For an agent to understand in which scenarios such trade-offs are acceptable requires significant knowledge about human cultures and values.

Secondly, our capacity to measure aspects of a system is often limited. For instance, the objective of training an agent to ``play a game in an entertaining manner'' is conceptually simple yet presents considerable challenges in execution. Certain attributes, such as the concept of \textit{entertainment}, cannot be directly quantified. Moreover, there is often a lack of consensus among individuals regarding what they consider entertaining. System designers thus have to make significant efforts to distill this overarching goal into a set of quantifiable targets. Despite our best efforts, the agent tends to develop behaviors that diverge from the intended path, in a manner akin to how individuals and corporations invariably find loopholes to maximize profits, regardless how complex tax regulations might become. It is this gap from a conceptual goal to a set of enforceable heuristics that creates the opportunity for exploitative behavior and misalignment. 

Finally, users themselves are often uncertain about their own preferences. Humans regularly change their minds, and external circumstances evolve, making yesterday's targets unfit to meet today's needs. Consequently, a system that was effective under one set of preferences may become less effective or even counterproductive as preferences evolve, necessitating continuous adaptation and reevaluation of the algorithm and its objectives. 

These challenges suggest that the problem of creating robust and adaptable reward functions will persist as a continual and evolving aspect of the field, and mandate for multi-displinary approaches to designing such systems, favouring methods that allow non-technical stakeholders to take part in the reward specification process of RL agents.

\chapter[CONCLUSION]{\\CONCLUSION}\label{sec:Conclusion}
% Purpose: The conclusion is where you synthesize your findings and discussions. It is where you interpret your results, discuss their implications, and suggest areas for future research.
% Content: The conclusion should revisit the research question, discuss how the findings answer this question, reflect on the limitations of the study, and may suggest directions for future research. It can also include a summary of the key findings but in a more detailed and comprehensive manner than in the abstract.

Reward specification is the process of providing a reinforcement learning agent with a reward function. It is designed such that, in maximizing its return, the agent is accomplishing a task for us. This reward function is often engineered by hand through trial-and-error, iteratively refined after each training cycle to speed up the agent's learning and better align its behavior with our objectives. However, due to the accumulation of the reward through time, the difficulty to capture human intentions and the need to balance conflicting signals, many attempts lead to ineffective training and the emergence of undesireable behaviors. Crucially, there is no one-size-fits-all solution to these challenges, and the choice of appropriate methods depends on the availability of data and on the specific task at hand. 

In this thesis, we have presented contributions to several important paradigms addressing these issues, starting from the use of auxiliary tasks and demonstrations to accelerate agent learning, to multi-objective formulations that incorporate several requirements in an agent's behavior. We also surveyed complementary approaches such as preference-based and language-guided reward modeling and discussed important considerations regarding environment design, monitoring and human-in-the-loop. All of these tools can work together. Deep reinforcement learning has the potential to tackle significant real-world problems and advance human knowledge. To deliver these benefits and function effectively within larger systems, RL frameworks must employ controllable and transparent strategies for reward specification, enable interpretable analysis and monitoring, and allow for rapid adaptations in the face of evolving circumstances and shifting goals.

\ifthenelse{\equal{\Langue}{english}}{
	\renewcommand\bibname{REFERENCES}
        \addcontentsline{toc}{compteur}{REFERENCES}
	\bibliography{references}
}{
	\renewcommand\bibname{RÉFÉRENCES}
	\bibliography{references}
}
%

% ---------------------------------------------
% ---------------------------------------------
% ----- CHAPTER
% ---------------------------------------------
% ---------------------------------------------

%%
%%  Annexes
%%
%%  Note: Ne pas modifier la ligne ci-dessous. / Do not modify the following line.
\ifthenelse{\equal{\Langue}{english}}{
	\addcontentsline{toc}{compteur}{APPENDICES}
}{
	\addcontentsline{toc}{compteur}{ANNEXES}
}

% RESTORE ME
\ifthenelse{\equal{\AnnexesPresentes}{O}}{
	\appendix%
	\newcommand{\Annexe}[1]{\annexe{#1}\setcounter{figure}{0}\setcounter{table}{0}\setcounter{footnote}{0}}%

%%
%%
%%  Toutes les annexes doivent être inclues dans ce document
%%  les unes à la suite des autres.
%%  All annexes must be included in this document one after the other.

% ---------------------------------------------
% ---------------------------------------------
% ----- APPENDIX 
% ---------------------------------------------
% ---------------------------------------------
\Annexe{Supplementary Material for Chapter~\ref{chap:article1_asaf}} 
% \addcontentsline{toc}{compteur}{APPENDIX A\ \ \ \ SUPPLEMENTARY MATERIAL FOR CHAPTER~\ref{chap:article1_asaf}}

\tocless{\section{Proofs}}{}

\tocless{\subsection{Proof of Lemma~\ref{lem:optim_D_gan}}}{\label{app:ASAF:proof_of_optim_D_gan}}
\begin{proof}
Lemma~\ref{lem:optim_D_gan} states that given $L(\tildep, \pg)$ defined in~Equation~\ref{eq:structured_GAN_obj}:
\begin{enumerate}[label=(\alph*)]
    \item $\displaystyle \tildep^* \triangleq \argmax_{\tildep} L(\tildep, \pg) = \pe$
    \item $\displaystyle \argmin_{\pg} L(\pe, \pg) = \pe$
\end{enumerate}

Starting with (a), we have:
\begin{align*}
    \argmax_{\tildep} L(\tildep, \pg)
    &= \argmax_{\tildep}
    \sum_{x_i} \pe(x_i)\log D_{\tildep, \pg}(x_i) + \pg(x_i)\log (1 - D_{\tildep, \pg}(x_i))\\
    &\triangleq \argmax_{\tildep} \sum_{x_i} L_i
\end{align*}
Assuming infinite discriminator's capacity, $L_i$ can be made independent for all $x_i \in \mathcal{X}$ and we can construct our optimal discriminator $D_{\tildep, \pg}^*$ as a look-up table $D_{\tildep, \pg}^*:\mathcal{X} \rightarrow \, ]0,1[ \, ; \, x_i \mapsto D^*_i$ with $D^*_i$ the optimal discriminator for each $x_i$ defined as:
\begin{equation}
    D^*_i = \argmax_{D_i}L_i = \argmax_{D_i} \pei
    \log D_i + \pgi\log (1 - D_i),
\end{equation}
with $\pgi\triangleq\pg(x_i)$, $\pei\triangleq\pe(x_i)$ and $D_i\triangleq D(x_i)$.

Recall that $D_i \in \, ]0,1[$ and that $\pgi \in \, ]0,1[$. Therefore the function $ \tildep_i \mapsto D_i = \dfrac{\tildep_i}{\tildep_i + \pgi}$ is defined for $\tildep_i \in ]0, +\infty[$. Since it is strictly monotonic over that domain we have that:
\begin{align}
D_i^*=\argmax_{D_i} L_i \, \Leftrightarrow \, \tildep^*_i =\argmax_{\tildep_i} L_i
\end{align}
Taking the derivative and setting to zero, we get:
\begin{align}
    \left.\frac{d L_i}{d \tildep_i}\right|_{\tildep_i} = 0 \, \Leftrightarrow& \,
    \tildep_i = \pei
\end{align}
The second derivative test confirms that we have a maximum, i.e. $\left.\dfrac{d^2 L_i}{d \tildep_i^2}\right|_{\tildep_i^*} < 0$. The values of $L_i$ at the boundaries of the domain of definition of $\tildep_i$ tend to $-\infty$, therefore $L_i(\tildep_i^*=\pei)$ is the global maximum of $L_i$ w.r.t. $\tildep_i$. Finally, the optimal global discriminator is given by:
\begin{equation}
\label{eq:D_optim_GAN_proof}
    D_{\tildep, \pg}^*(x) = \frac{\pe(x)}{\pe(x)+\pg(x)} \quad \forall x \in \mathcal{X}
\end{equation}
This concludes the proof for (a).

The proof for (b) can be found in the work of \citet{goodfellow2014generative}. We reproduce it here for completion. Since from (a) we know that $\tildep^*(x) = \pe(x) \, \forall x \in \mathcal{X}$, we can write the GAN objective for the optimal discriminator as:
\begin{align}
    \argmin_{\pg} L(\tildep^*, \pg)
    &= \argmin_{\pg} L(\pe, \pg) \\
    &= \argmin_{\pg} \E_{x\sim \pe}\left[\log \frac{\pe(x)}{\pe(x)+\pg(x)}\right] + \E_{x\sim \pg}\left[\log \frac{\pg(x)}{\pe(x)+\pg(x)}\right]\label{eq:L_Dopt}
\end{align}
Note that:
\begin{equation}
\label{eq:minuslog4}
    \log4 = \E_{x\sim \pe}\left[\log 2\right] + \E_{x\sim \pg}\left[\log 2\right]
\end{equation}
Adding Equation~\ref{eq:minuslog4} to Equation~\ref{eq:L_Dopt} and subtracting $\log4$ on both sides:
\begin{align}
    \argmin_{\pg} L(\pe, \pg) &= -\log4 + \E_{x\sim \pe}\left[\log \frac{2\pe(x)}{\pe(x)+\pg(x)}\right] + \E_{x \sim \pg}\left[\log \frac{2\pg(x)}{\pe(x)+\pg(x)}\right]\\
    &= -\log4 + \KL\left(\pe\left\|\frac{\pe + \pg}{2}\right.\right)+ \KL\left(\pe\left\|\frac{\pe + \pg}{2}\right.\right)\\
    &= -\log4 + 2\JS\left(\pe\left\|\pg\right.\right)
\end{align}
Where $\KL$ and $\JS$ are respectively the Kullback-Leibler and the Jensen-Shannon divergences. Since the Jensen-Shannon divergence between two distributions is always non-negative and zero if and only if the two distributions are equal, we have that: 
\begin{equation}
\displaystyle \argmin_{\pg} L(\pe, \pg) = \pe
\end{equation}

This concludes the proof for (b). 
\end{proof}

\tocless{\subsection{Proof of Theorem~\ref{thm:traj_GAN}}}{\label{app:ASAF:proof_of_thm_traj_GAN}}
\begin{proof}
Theorem~\ref{thm:traj_GAN} states that given $L(\tildepi, \pig)$ defined in~Equation~\ref{eq:TASAF_obj}:
\begin{enumerate}[label=(\alph*)]
    \item $\displaystyle \tildepi^* \triangleq \argmax_{\tildepi} L(\tildepi, \pig) \text{ satisfies } q_{\tildepi^*} = q_{\pie}$
    \item $\displaystyle \pig^* = \tildepi^* \in \argmin_{\pig}L(\tildepi^*, \pig)$
\end{enumerate}

The proof of (a) is very similar to the one from Lemma~\ref{lem:optim_D_gan}. Starting from Equation~\ref{eq:TASAF_obj} we have:
\begin{align}
    \argmax_{\tildepi}
    L(\tildepi, \pig)
    &=
    \argmax_{\tildepi} \sum_{\tau_i} P_{\pie}(\tau_i) \log D_{\tildepi, \pig}(\tau_i)
    +
    P_{\pig}(\tau_i) \log (1-D_{\tildepi, \pig}(\tau_i)) \\
    &=
    \argmax_{\tildepi} \sum_{\tau_i} \xi(\tau_i) \left( q_{\pie}(\tau_i) \log D_{\tildepi, \pig}(\tau_i)
    +
    q_{\pig}(\tau_i) \log (1-D_{\tildepi, \pig}(\tau_i))
    \right) \\
    &=
    \argmax_{\tildepi} \sum_{\tau_i} L_i
\end{align}
Like for Lemma~\ref{lem:optim_D_gan}, we can optimise for each $L_i$ individually. When doing so, $\xi(\tau_i)$ can be omitted as it is constant w.r.t $\tildepi$. The rest of the proof is identical to the one of but Lemma~\ref{lem:optim_D_gan} with $\pe = q_{\pie}$ and $\pg = q_{\pig}$. It follows that the max of $L(\tildepi, \pig)$ is reached for $q_{\tildepi}^* = q_{\pie}$. From that we obtain that the policy $\tildepi^*$ that makes the discriminator $D_{\tildepi^*, \pig}$ optimal w.r.t $L(\tildepi, \pig)$ is such that $q_{\tildepi^*} = q_{\tildepi}^* = q_{\pie}$ i.e. $\prod_{t=0}^{T-1}\tildepi^*(a_t|s_t) = \prod_{t=0}^{T-1}\pie(a_t|s_t) \, \forall \, \tau$. 

The proof for (b) stems from the observation that choosing $\pig = \tildepi^*$ (the policy recovered by the optimal discriminator $D_{\tildepi^*, \pig}$) minimizes $L(\tildepi^*, \pig)$:
\begin{align}
    \pig(a|s) = \tildepi^*(a|s) \,
    \, \forall \, (s,a) \in \gS\times\gA \quad
    &\Rightarrow \quad \prod_{t=0}^{T-1}\pig(a_t|s_t) = \prod_{t=0}^{T-1}\tildepi^*(a_t|s_t) \,\, \forall \tau \in \gT \\
    &\Rightarrow \quad q_{\pig}(\tau) = q_{\pie}(\tau) \,\, \forall \, \tau \in \gT \\
    &\Rightarrow \quad D_{\tildepi^*, \tildepi^*} = \frac{1}{2} \,\, \forall \,\tau \in \gT \\
    &\Rightarrow \quad L(\tildepi^*, \tildepi^*) = -\log 4
\end{align}
By multiplying the numerator and denominator of $D_{\tildepi^*, \tildepi^*}$ by $\xi(\tau)$ it can be shown in exactly the same way as in Appendix~\ref{app:ASAF:proof_of_optim_D_gan} that $-\log4$ is the global minimum of $L(\tildepi^*, \pig)$.
\end{proof}

% \clearpage
\tocless{\section{Adversarial Soft Q-Fitting: transition-wise Imitation Learning without Policy Optimization}}{\label{app:ASAF:ASQF}}
In this section we present Adversarial Soft Q-Fitting (ASQF), a principled approach to Imitation Learning without Reinforcement Learning that relies exclusively on transitions. Using transitions rather than trajectories presents several practical benefits such as the possibility to deal with asynchronously collected data or non-sequential experts demonstrations. We first present the theoretical setting for ASQF and then test it on a variety of discrete control tasks. We show that while it is theoretically sound, ASQF is often outperformed by ASAF-1, an approximation to ASAF that also allows to rely on transitions instead of trajectories.

\paragraph{Theoretical Setting}
We consider the GAN objective of Equation~\ref{eq:GAN_obj} with 
$x=(s,a)$, $
\mathcal{X}=\gS \times \gA$, $\pe=d_{\pie}$, $\quad
\pg=d_{\pig}$ and a discriminator $D_{\tildef, \pig}$ of the form of \citet{fu2017learning}:
\begin{equation}
\begin{aligned}
    \label{eq:ASQF_obj}
    \min_{\pig}\max_{\tildef}
    L(\tildef, \pig)
    \,,\quad L(\tildef, \pig)
    &\triangleq
    \E_{d_{\pie}}
    [\log D_{\tildef, \pig}(s,a)]
    +
    \E_{d_{\pig}}
    [\log (1-D_{\tildef, \pig}(s,a))],\\
    \text{with}\quad D_{\tildef, \pig} &= \frac{\exp \tildef(s, a)}{\exp \tildef(s, a) + \pig(a|s)},
\end{aligned}
\end{equation}
for which we present the following theorem.
\begin{thm}
\label{thm:ASQF_thm}
For any generator policy $\pig$, the optimal discriminator parameter for Equation~\ref{eq:ASQF_obj} is 
\begin{equation*}
   \tildef^* \triangleq \argmax_{\tildef} L(\tildef, \pig) = \log\left(\pie(a|s)\frac{d_{\pie}(s)}{d_{\pig}(s)}\right) \, \forall (s,a) \in \gS \times \gA 
\end{equation*}
Using $\tildef^*$, the optimal generator policy $\pig^*$ is
\begin{align*} \argmin_{\pig}\max_{\tildef}L(\tildef, \pig) = \argmin_{\pig}L(\tildef^*, \pig) = \pie(a|s) = 
 \frac{\exp \tildef^*(s,a)}{\sum_{a'}\exp \tildef^*(s,a')} \,\, \forall (s,a) \in \gS\times \gA.
\end{align*}
\end{thm}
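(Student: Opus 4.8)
The plan is to follow the same two-step template used for Lemma~\ref{lem:optim_D_gan} and Theorem~\ref{thm:traj_GAN}, exploiting the fact that the structured discriminator $D_{\tildef, \pig}$ can represent any function $\gS\times\gA \to \, ]0,1[$. For part (a), I would first observe that for a fixed $\pig$, as $\tildef(s,a)$ ranges over $\mathbb{R}$ the quantity $\exp\tildef(s,a)$ ranges over $]0,\infty[$, so $D_{\tildef, \pig}(s,a) = \exp\tildef(s,a)/(\exp\tildef(s,a)+\pig(a|s))$ ranges over all of $]0,1[$ (recall $\pig(a|s)>0$). Hence maximizing $L(\tildef, \pig)$ over $\tildef$ is equivalent to the unconstrained pointwise maximization over a discriminator $D$, whose solution is given by the standard optimal discriminator of Lemma~\ref{lem:optim_D_gan} with $\pe = d_{\pie}$ and $\pg = d_{\pig}$, namely $D^*(s,a) = d_{\pie}(s,a)/(d_{\pie}(s,a)+d_{\pig}(s,a))$.

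Equating the two forms of the optimal discriminator and solving for $\tildef^*$ then gives $\exp\tildef^*(s,a) = \pig(a|s)\, d_{\pie}(s,a)/d_{\pig}(s,a)$. Using the occupancy-measure factorization $d_\pi(s,a) = d_\pi(s)\pi(a|s)$ established in the background for both $\pie$ and $\pig$, the $\pig(a|s)$ factors cancel and I obtain $\exp\tildef^*(s,a) = \pie(a|s)\, d_{\pie}(s)/d_{\pig}(s)$, i.e. $\tildef^*(s,a) = \log\big(\pie(a|s)\, d_{\pie}(s)/d_{\pig}(s)\big)$, which is the claimed expression.

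For part (b), I would reuse the Jensen-Shannon computation from the proof of Lemma~\ref{lem:optim_D_gan}(b): substituting the optimal $D^*$ yields $\max_{\tildef} L(\tildef, \pig) = L(\tildef^*, \pig) = -\log 4 + 2\,\JS(d_{\pie}\|d_{\pig})$. Since the Jensen-Shannon divergence is non-negative and vanishes only when $d_{\pig} = d_{\pie}$, the outer minimization is solved by any policy whose occupancy measure matches $d_{\pie}$. Invoking the standard bijection between stationary policies and state-action occupancy measures, $d_{\pig} = d_{\pie}$ holds if and only if $\pig = \pie$, so $\argmin_{\pig} L(\tildef^*, \pig) = \pie$. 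The final identity follows by taking the softmax of $\tildef^*$ over actions: the state-dependent ratio $d_{\pie}(s)/d_{\pig}(s)$ is constant in $a$ and cancels between numerator and denominator, leaving $\exp\tildef^*(s,a)/\sum_{a'}\exp\tildef^*(s,a') = \pie(a|s)/\sum_{a'}\pie(a'|s) = \pie(a|s)$, using $\sum_{a'}\pie(a'|s)=1$.

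The main obstacle I anticipate is the gap between minimizing over policies $\pig$ and minimizing over occupancy measures $d_{\pig}$: unlike the abstract GAN in Lemma~\ref{lem:optim_D_gan}, here the generator distribution $d_{\pig}$ is not a free variable but is induced by the policy through the environment dynamics. Establishing that the minimum of the Jensen-Shannon divergence is both attained and unique therefore requires the policy-to-occupancy-measure correspondence to be a bijection, which I would cite rather than reprove. A secondary technical point worth checking is that $\tildef^*$ itself depends on $\pig$ through $d_{\pig}(s)$; as in the earlier proofs, $L(\tildef^*, \pig)$ should be read as the inner-maximum value $\max_{\tildef} L(\tildef, \pig)$, and the cancellation in the softmax identity must be verified to hold for every $\pig$ (not only at the optimum) so that the practical recovery of $\pie$ via a softmax is justified.
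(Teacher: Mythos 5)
Your proposal is correct and follows essentially the same route as the paper's proof: a pointwise reduction of the inner maximization to the standard optimal discriminator of Lemma~\ref{lem:optim_D_gan} (the paper redoes the derivative computation explicitly, but relies on the same monotone reparametrization $\tildef_i \mapsto D_i$), the cancellation of $\pig(a|s)$ via $d_\pi(s,a)=d_\pi(s)\pi(a|s)$, the Jensen--Shannon argument for the outer minimization, and the softmax normalization using $\sum_{a'}\exp\tildef^*(s,a')=d_{\pie}(s)/d_{\pig}(s)$. If anything you are slightly more careful than the paper on the last step, which only verifies that $\pig=\pie$ \emph{satisfies} $d_{\pig}=d_{\pie}$ rather than invoking the policy--occupancy bijection to get uniqueness, and on noting that $L(\tildef^*,\pig)$ must be read as the inner-maximum value since $\tildef^*$ depends on $\pig$ through $d_{\pig}(s)$.
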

\begin{proof}
The beginning of the proof closely follows the proof of Appendix~\ref{app:ASAF:proof_of_optim_D_gan}.
\begin{equation}
\begin{aligned}
    \argmax_{\tildef} L &(\tildef,\pig)= \\& \argmax_{\tildef} \sum_{s_i, a_i} d_{\pie}(s_i,a_i) \log D_{\tildef, \pig}(s_i,a_i) + d_{\pig}(s_i,a_i)\log(1-D_{\tildef, \pig}(s_i,a_i))
\end{aligned}
\end{equation}
We solve for each individual $(s_i,a_i)$ pair and note that $\tildef_{i} \mapsto D_i = \dfrac{\exp \tildef_{i}}{\exp \tildef_{i} + \pigi}$ is strictly monotonic on $\tildef_{i} \in \sR \, \forall \, \pigi \in ]0,1[$ so,
\begin{align}
    D^*_i=\argmax_{D_i} L_i
    \, \Leftrightarrow& \, \tildef^*_i =\argmax_{\tildef} L_i
\end{align}
Taking the derivative and setting it to 0, we find that
\begin{align}
 \left.\frac{d L_i}{d \tildef_i}\right|_{\tildef_i} = 0 \quad \Leftrightarrow \quad
    \tildef_i =\log\left(\pigi\frac{d_{\pie,i}}{d_{\pig,i}}\right)
\end{align}
We confirm that we have a global maximum with the second derivative test and the values at the border of the domain i.e. $\left.\dfrac{d^2 L_i}{d \tildef_i^2}\right|_{\tildef_i^*} < 0$ and $L_i$ goes to $-\infty$ for $\tildef_i \rightarrow +\infty$ and for $\tildef_i \rightarrow -\infty$.

It follows that
\begin{align}
    \tildef^*(s,a) &=\log\left(\pig(a|s)\frac{d_{\pie}(s,a)}{d_{\pig}(s,a)}\right) \quad \forall (s,a) \in \gS \times \gA\\
    \implies \tildef^*(s,a) &=\log\left(\cancel{\pig(a|s)}\frac{d_{\pie}(s)\pie(a|s)}{d_{\pig}(s)\cancel{\pig(a|s)}}\right) \quad \forall (s,a) \in \gS \times \gA\\
    \label{eq:ASQF_optimal_parameter}
    \implies \tildef^*(s,a) &=\log\left(\pie(a|s)\frac{d_{\pie}(s)}{d_{\pig}(s)}\right) \quad \forall (s,a) \in \gS \times \gA
\end{align}
This proves the first part of Theorem~\ref{thm:ASQF_thm}.

To prove the second part notice that
\begin{equation}
\begin{aligned}
    D_{\tildef^*, \pig}(s,a) &= \dfrac{\pie(a|s)\dfrac{d_{\pie}(s)}{d_{\pig}(s)}}{\pie(a|s)\dfrac{d_{\pie}(s)}{d_{\pig}(s)} + \pig(a|s)}\\
    &= \frac{\pie(a|s)d_{\pie}(s)}{\pie(a|s)d_{\pie}(s) + \pig(a|s)d_{\pig}(s)}\\
    &= \frac{d_{\pie}(s,a)}{d_{\pie}(s,a) + d_{\pig}(s,a)}
\end{aligned}
\end{equation}
This is equal to the optimal discriminator of the GAN objective Equation~\ref{eq:D_optim_GAN_proof} when $x=(s,a)$. For this discriminator we showed in Section~\ref{app:ASAF:proof_of_optim_D_gan} that the optimal generator $\pig^*$ is such that $d_{\pig^*}(s,a) = d_{\pie}(s,a)$ $\forall (s,a) \in \gS \times \gA$, which is satisfied for $\pig^*(a|s) = \pie(a|s)$ $\forall (s,a) \in \gS\times\gA$. 
Using the fact that
\begin{equation}
\label{eq:ASQF_partition_function}
    \sum_{a'}\exp\tildef^*(s,a') = \sum_{a'}\pie(a'|s)\frac{d_{\pie}(s)}{d_{\pig}(s)} = \frac{d_{\pie}(s)}{d_{\pig}(s)}\sum_{a'}\pie(a'|s) = \frac{d_{\pie}(s)}{d_{\pig}(s)}.
\end{equation}
we can combine Equation~\ref{eq:ASQF_optimal_parameter} and Equation~\ref{eq:ASQF_partition_function} to write the expert's policy $\pie$ as a function of the optimal discriminator parameter $\tildef^*$:
\begin{equation}
    \pie(a|s) = 
 \frac{\exp \tildef^*(s,a)}{\sum_{a'}\exp \tildef^*(s,a')} \,\, \forall (s,a) \in \gS\times \gA.
\end{equation}
This concludes the second part of the proof.
\end{proof}

\paragraph{Adversarial Soft-Q Fitting (ASQF) - practical algorithm}
In a nutshell, Theorem~\ref{thm:ASQF_thm} tells us that training the discriminator in Equation~\ref{eq:ASQF_obj} to distinguish between transitions from the expert and transitions from a generator policy can be seen as retrieving $\tildef^*$ which plays the role of the expert's soft Q-function (i.e. which matches Equation~\ref{eq:max_ent_policy} for $\tildef^*=\frac{1}{\alpha}Q_{\text{soft},E}^*$):
\begin{equation}
\label{eq:ASQF_pi}
    \pie(a|s) = \frac{\exp \tildef^*(s,a)}{\sum_{a'}\exp \tildef^*(s,a')} = \exp\left(\tildef^*(s,a) - \log\sum_{a'}\exp \tildef^*(s,a') \right),
\end{equation}
Therefore, by training the discriminator, one simultaneously retrieves the optimal generator policy.

There is one caveat though: the summation over actions that is required in Equation~\ref{eq:ASQF_pi} to go from $\tildef^*$ to the policy is intractable in continuous action spaces and would require an additional step such as a projection to a proper distribution (\citet{haarnoja2018soft} use a Gaussian) in order to draw samples and evaluate likelihoods. Updating in this way the generator policy to match a softmax over our learned state-action preferences ($\tildef^*$) becomes very similar in requirements and computational load to a policy optimization step, thus defeating the purpose of this work which is to get rid of the policy optimization step. For this reason we only consider ASQF for discrete action spaces. 

As explained in Section~\ref{sec:asaf_practical_algorithm}, in practice we optimize $D_{\tildef, \pig}$ only for a few steps before updating $\pig$ by normalizing $\exp \tildef(s,a)$ over the action dimension. See Algorithm~\ref{alg:ASQF_alg} for the pseudo-code. 

\begin{algorithm}[H]
\label{alg:ASQF_alg}
  \begin{algorithmic}
  \caption{Adversarial Soft-Q Fitting (ASQF)}\label{alg:asqf}
    \REQUIRE expert transitions $\mathcal{D}_E = \{(s_i, a_i)\}_{i=1}^{N_E}$
    \STATE Randomly initialize $\tildef$ and get $\pig$ from Equation~\ref{eq:ASQF_pi}
    \FOR{steps $m=0$ to $M$}
      \STATE Collect transitions $\mathcal{D}_G = \{(s_i, a_i)\}_{i=1}^{N_G}$ by executing $\pig$
      \STATE Train $D_{\tildef, \pig}$ using binary cross-entropy on minibatches of transitions from $\mathcal{D}_E$ and $\mathcal{D}_G$
      \STATE Get $\pig$ from Equation~\ref{eq:ASQF_pi}
  \ENDFOR
  \end{algorithmic}
\end{algorithm}

\paragraph{Experimental results}
Figure~\ref{fig:toy_asaf1_asqf} shows that ASQF performs well on small scale environments but struggles and eventually fails on more complicated environments. Specifically, it seems that ASQF does not scale well with the observation space size. Indeed mountaincar, cartpole, lunarlander and pommerman have respectively an observation space dimensionality of 2, 4, 8 and 960. This may be due to the fact that the partition function Equation~\ref{eq:ASQF_partition_function} becomes more difficult to learn. Indeed, for each state, several transitions with different actions are required in order to learn it. Poorly approximating this partition function could lead to assigning too low a probability to expert-like actions and eventually failing to behave appropriately. ASAF on the other hand explicitly learns the probability of an action given the state -- in other word it explicitly learns the partition function -- and is therefore immune to that problem. 
\begin{figure}[h!]
    \centering
    \includegraphics[width=.55\textwidth]{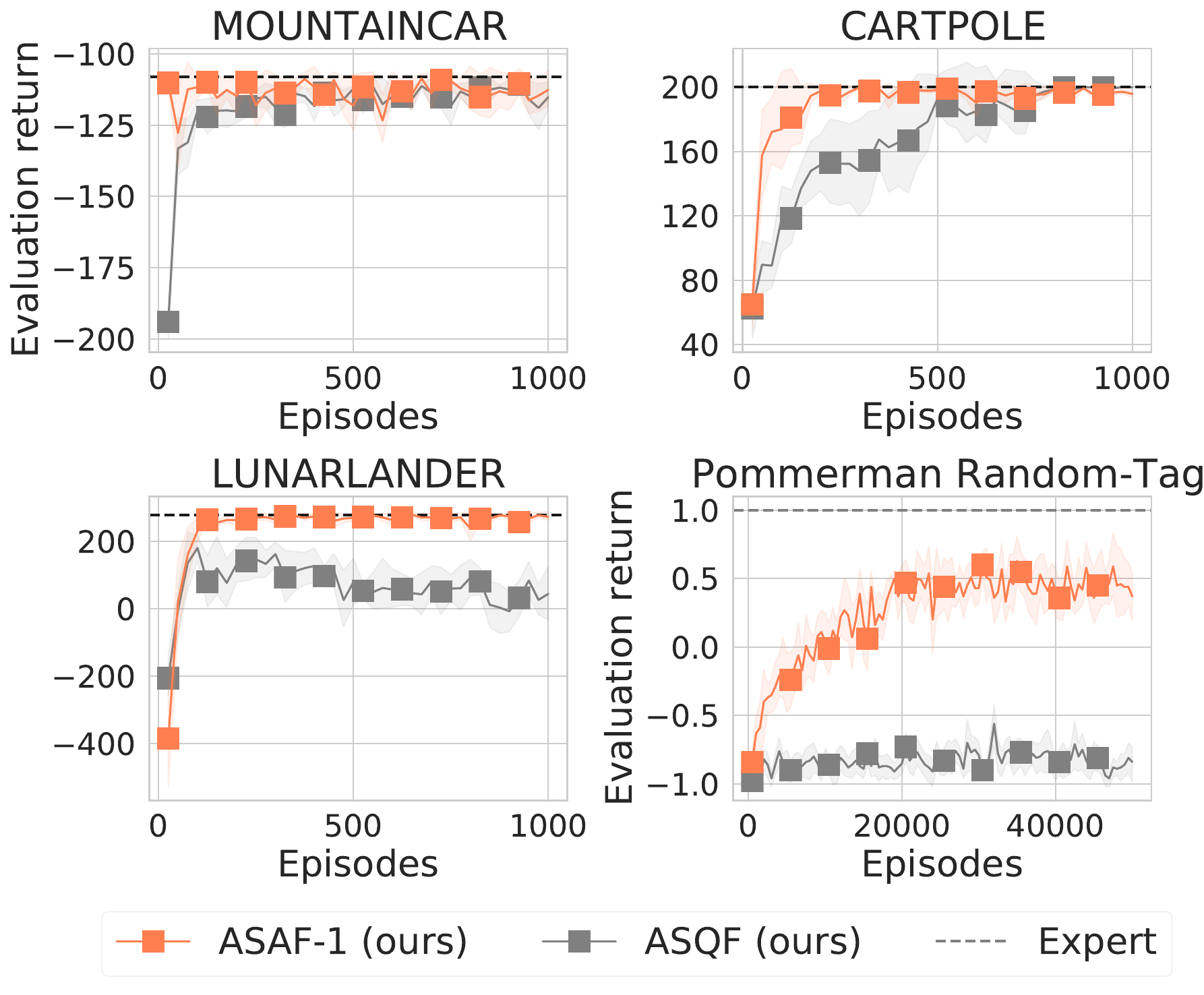}
    \caption[Comparison between ASAF-1 and ASQF]{Comparison between ASAF-1 and ASQF, our two transition-wise methods, on environments with increasing observation space dimensionality}
    \label{fig:toy_asaf1_asqf}
\end{figure}

\clearpage
\tocless{\section{Additional Experiments}}{\label{app:ASAF:additional_experiments}}

\tocless{\subsection{GAIL - Importance of Gradient Penalty}}{}

\begin{figure}[h!]
    \centering
    \includegraphics[width=1.\textwidth]{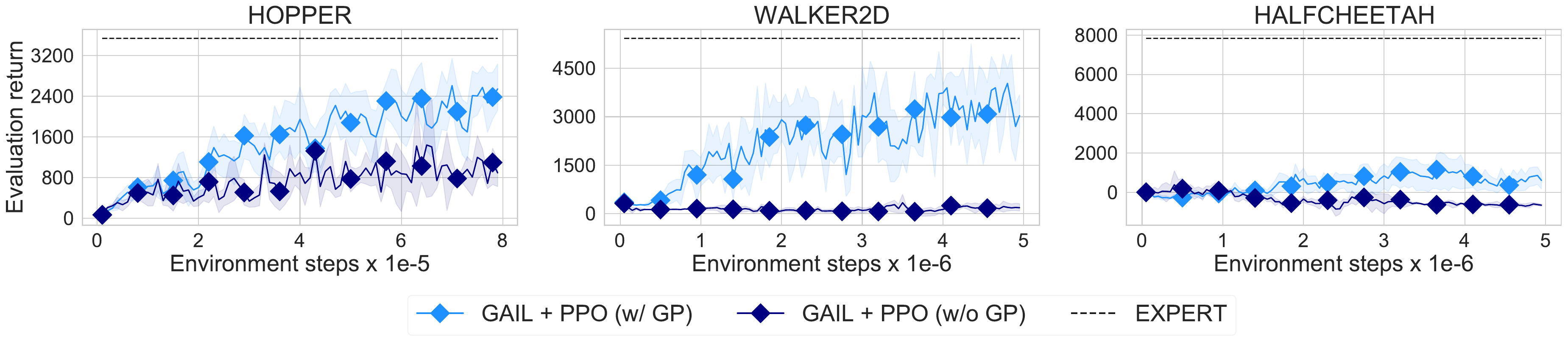}
    \caption[Effect of gradient penalty on GAIL algorithm]{Comparison between original GAIL \cite{ho2016generative} and GAIL with gradient penalty (GP) \cite{gulrajani2017improved,kostrikov2018discriminatoractorcritic}}
    \label{fig:gail_gradient_penalty}
\end{figure}

\clearpage
\tocless{\subsection{Mimicking the expert}}{}
\noindent
\begin{minipage}[r]{0.58\textwidth}
% \begin{wrapfigure}{r}{0.5\textwidth}
%     \centering
%     \includegraphics[width=0.48\textwidth]{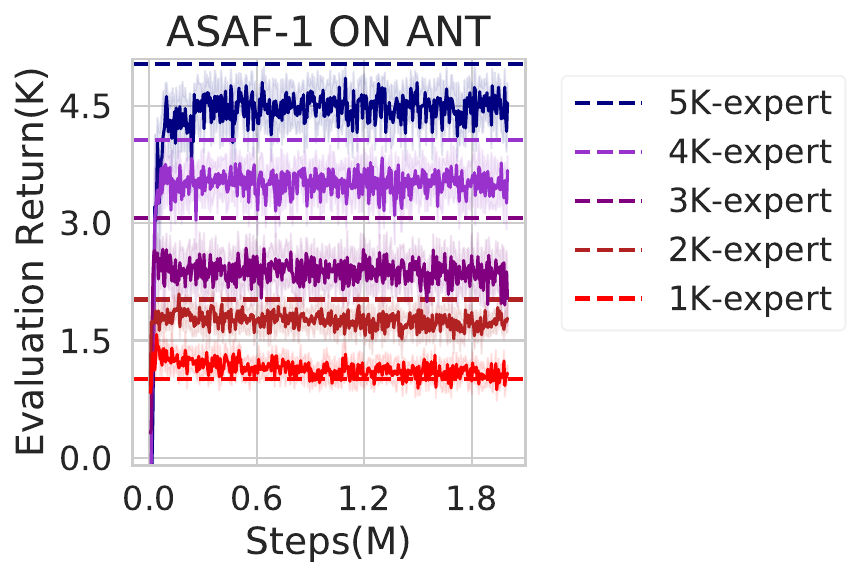}
%     \caption{ASAF-1 on Ant-v2. Colors are 1K, 2K, 3K, 4K, 5K expert's performance.}
%     \label{fig:gradual_expert}
% \end{wrapfigure}
To ensure that our method actually mimics the expert and doesn't just learn a policy that collects high rewards when trained with expert demonstrations, we ran ASAF-1 on the Ant-v2 MuJoCo environment using various sets of 25 demonstrations. These demonstrations were generated from a Soft Actor-Critic agent at various levels of performance during its training.  Since at low-levels of performance the variance of episode's return is high, we filtered collected demonstrations to lie in the targeted range of performance (e.g. return in [800, 1200] for the 1K set).
\end{minipage}\hfill
\begin{minipage}[l]{0.39\textwidth}
    \includegraphics[width=\textwidth]{}
    \captionof{figure}[ASAF-1 on Ant-v2 for different expert levels of performance]{ASAF-1 on Ant-v2. Colors are 1K, 2K, 3K, 4K, 5K expert's performance.}
    \label{fig:gradual_expert}
\end{minipage}

Results in Figure~\ref{fig:gradual_expert} show that our algorithm succeeds at learning a policy that closely emulates various demonstrators (even when non-optimal).

\tocless{\subsection{Wall Clock Time}}{}
We report training times in Figure~\ref{fig:wall_clock_times} and observe that ASAF-1 is always fastest to learn. Note however that reports of performance w.r.t wall-clock time should always be taken with a grain of salt as they are greatly influenced by hyperparameters and implementation details.
\begin{figure}[h!]
    \centering
    \includegraphics[width=1.\textwidth]{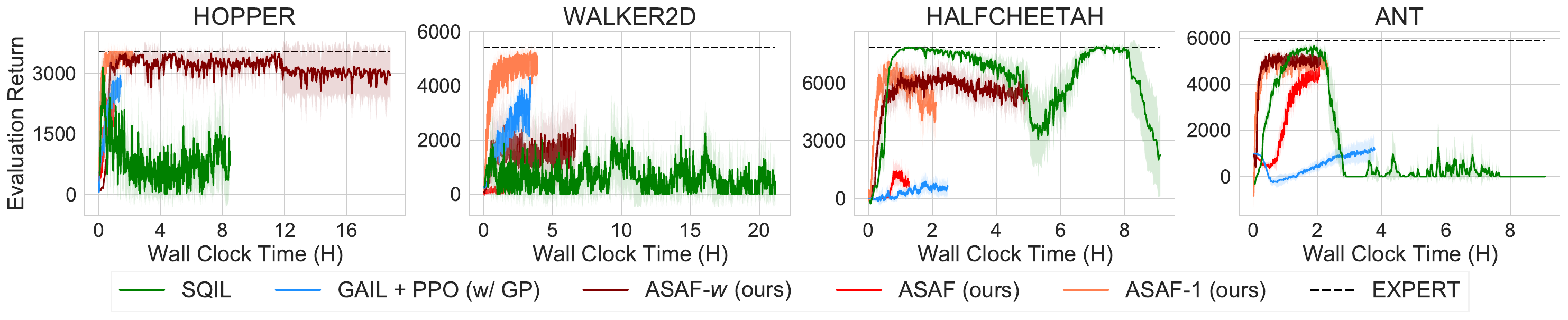}
    \caption[Training times on MuJoCo tasks]{Training times on MuJoCo tasks for 25 expert demonstrations.}
    \label{fig:wall_clock_times}
\end{figure}

\clearpage
\tocless{\section{Hyperparameter tuning and best configurations}}{\label{app:ASAF:hyperparameters}}

\tocless{\subsection{Classic Control}}{}

For this first set of experiments, we use the fixed hyperparameters presented in Table~\ref{table:fixed_hyperparams_classic_control}.

%FIXED HYPERPARAMS (DISCRETE CLASSIC CONTROL)
\begin{table}[h!]
\centering
\begin{sc}
\small
\caption{Fixed Hyperparameters for classic control tasks}
\begin{tabular}{lcc}
\label{table:fixed_hyperparams_classic_control}
\textbf{RL component}
\\ \hline
Hyperparameter                                         &Discrete Control       &Continuous Control
\\ \hline
\\
\hspace{5mm} \textbf{SAC} \\
\hspace{5mm} Batch size (in transitions)                &256                    &256 \\
\hspace{5mm} Replay Buffer length $|\mathcal{B}|$       &$10^{6}$               &$10^{6}$ \\
\hspace{5mm} Warmup (in transitions)       &1280                      &10240 \\
\hspace{5mm} Initial entropy weight $\alpha$            &0.4                    &0.4 \\
\hspace{5mm} Gradient norm clipping threshold           &0.2                    &1 \\
\hspace{5mm} Transitions between update                 &40                     &1 \\
\hspace{5mm} Target network weight $\tau$               &0.01                   &0.01 \\
\\
\hspace{5mm} \textbf{PPO} \\
\hspace{5mm} Batch size (in transitions)                &256                    &256 \\
\hspace{5mm} GAE parameter $\lambda$                    &0.95                   &0.95 \\
\hspace{5mm} Transitions between update                 &-                      &2000 \\
\hspace{5mm} Episodes between updates                   &10                     &- \\
\hspace{5mm} Epochs per update                          &10                     &10 \\
\hspace{5mm} Update clipping parameter                  &0.2                    &0.2 \\
\\
\textbf{Reward Learning component}
\\ \hline
Hyperparameter                                         &Discrete Control       &Continuous Control
\\ \hline
\\
\hspace{5mm} \textbf{AIRL, GAIL, ASAF-1} \\
\hspace{5mm} Batch size (in transitions)                &256                    &256 \\
\hspace{5mm} Transitions between update                 &-                      &2000 \\
\hspace{5mm} Episodes between updates                   &10                     &- \\
\hspace{5mm} Epochs per update                          &50                     &50 \\
\hspace{5mm} Gradient value clipping threshold         &-                      &1 \\
\hspace{5mm}(ASAF-1)&&\\
\\
\hspace{5mm} \textbf{ASAF, ASAF-\textit{w}} \\
\hspace{5mm} Batch size (in trajectories)               &10                     &10 \\
\hspace{5mm} Episodes between updates                   &10                     &20 \\
\hspace{5mm} Epochs per update                          &50                     &50 \\
\hspace{5mm} Window size $w$                            &(searched)             &200 \\
\hspace{5mm} Gradient value clipping threshold           &-                      &1 \\
\end{tabular}
\end{sc}
\end{table}

For the most sensitive hyperparameters, the learning rates for the reinforcement learning and discriminator updates ($\epsilon_{\text{RL}}$ and $\epsilon_{\text{D}}$), we perform a random search over 50 configurations and 3 seeds each (for each algorithm on each task) for 500 episodes. We consider logarithmic ranges, i.e. $\epsilon = 10^{u}$ with $u \sim Uniform(-6, -1)$ for $\epsilon_{\text{D}}$ and $u \sim Uniform(-4, -1)$ for $\epsilon_{\text{RL}}$. We also include in this search the critic learning rate coefficient $\kappa$ for PPO also sampled according to a logarithmic scale with $u \sim Uniform(-2, 2)$ so that the effective learning rate for PPO's critic network is $\kappa \cdot \epsilon_{\text{RL}}$. For discrete action tasks, the window-size \textit{w} for ASAF-\textit{w} is sampled uniformly within $\{32, 64, 128\}$. The best configuration for each algorithm is presented in Tables~\ref{table:hyperparams_cartpole}~to~\ref{table:hyperparams_lunarlander_c}. Figure~\ref{fig:results_toy} uses these configurations retrained on 10 seeds and twice as long.

Finally for all neural networks (policies and discriminators) for these experiments we use a fully-connected MLP with two hidden layers and ReLU activation (except for the last layer). We used hidden sizes of 64 for the discrete tasks and of 256 for the continuous tasks.

%CARTPOLE
\begin{table}[h]
\centering
\begin{sc}
\caption{Best found hyperparameters for Cartpole}
\resizebox{\textwidth}{!}{%
\begin{tabular}{lcccccc}
\label{table:hyperparams_cartpole}
Hyperparameter                                     &ASAF           &ASAF-\textit{w}           &ASAF-$1$           &SQIL           &AIRL + PPO         &GAIL + PPO    
\\ \hline
Discriminator update lr $\epsilon_{\text{D}}$       &0.028          &0.039              &0.00046            &-              &2.5*$10^{-6}$        &0.00036 \\
RL update lr $\epsilon_{\text{RL}}$                 &-              &-                  &-                  &0.0067         &0.0052             &0.012 \\
Critic lr coefficient $\kappa$                      &-              &-                  &-                  &-              &0.25               &0.29 \\
window size \textit{w}                                     &-              &64                 &1                  &-              &-                  &- \\
window stride                                     &-              &64                 &1                  &-              &-                  &- \\
\end{tabular}}
\end{sc}
\end{table}

%MOUNTAINCAR
\begin{table}[h!]
\centering
\begin{sc}
\caption{Best found hyperparameters for Mountaincar}
\resizebox{\textwidth}{!}{%
\begin{tabular}{lcccccc}
\label{table:hyperparams_mountaincar}
Hyperparameter                                     &ASAF           &ASAF-\textit{w}           &ASAF-$1$           &SQIL           &AIRL + PPO         &GAIL + PPO    
\\ \hline
Discriminator update lr $\epsilon_{\text{D}}$       &0.059          &0.059              &0.0088             &-              &0.0042             &0.00016 \\
RL update lr $\epsilon_{\text{RL}}$                 &-              &-                  &-                  &0.062          &0.016              &0.0022 \\
Critic lr coefficient $\kappa$                      &-              &-                  &-                  &-              &4.6                &0.018 \\
window size \textit{w}                                     &-              &32                 &1                  &-              &-                  &- \\
window stride                                     &-              &32                 &1                  &-              &-                  &- \\
\end{tabular}}
\end{sc}
\end{table}

%Lunarlander
\begin{table}[h!]
\centering
\begin{sc}
\caption{Best found hyperparameters for Lunarlander}
\resizebox{\textwidth}{!}{%
\begin{tabular}{lcccccc}
\label{table:hyperparams_lunarlander}
Hyperparameter                                     &ASAF           &ASAF-\textit{w}           &ASAF-$1$           &SQIL           &AIRL + PPO         &GAIL + PPO    
\\ \hline
Discriminator update lr $\epsilon_{\text{D}}$       &0.0055         &0.0015             &0.00045            &-              &0.0002             &0.00019 \\
RL update lr $\epsilon_{\text{RL}}$                 &-              &-                  &-                  &0.0036         &0.0012             &0.0016 \\
Critic lr coefficient $\kappa$                      &-              &-                  &-                  &-              &0.48               &8.5 \\
window size \textit{w}                                     &-              &32                 &1                  &-              &-                  &- \\
window stride                                  &-              &32                 &1                  &-              &-                  &- \\
\end{tabular}}
\end{sc}
\end{table}

%Pendulum
\begin{table}[ht!]
\centering
\begin{sc}
\caption{Best found hyperparameters for Pendulum}
\resizebox{\textwidth}{!}{%
\begin{tabular}{lcccccc}
\label{table:hyperparams_pendulum}
Hyperparameter                                     &ASAF           &ASAF-\textit{w}       &ASAF-$1$           &SQIL           &AIRL + PPO         &GAIL + PPO    
\\ \hline
Discriminator update lr $\epsilon_{\text{D}}$       &0.00069        &0.00082        &0.00046            &-              &4.3*$10^{-6}$      &1.6*$10^{-5}$ \\
RL update lr $\epsilon_{\text{RL}}$                 &-              &-              &-                  &0.0001         &0.00038            &0.00028 \\
Critic lr coefficient $\kappa$                      &-              &-              &-                  &-              &0.028              &84 \\
window size \textit{w}                              &-              &200                 &1                  &-              &-                  &- \\
window stride                                       &-              &200                 &1                  &-              &-                  &- \\
\end{tabular}}
\end{sc}
\end{table}

%MOUNTAINCAR-C
\begin{table}[ht!]
\centering
\begin{sc}
\caption{Best found hyperparameters for Mountaincar-c}
\resizebox{\textwidth}{!}{%
\begin{tabular}{lcccccc}
\label{table:hyperparams_mountaincar_c}
Hyperparameter                                     &ASAF           &ASAF-$w$           &ASAF-$1$           &SQIL           &AIRL + PPO         &GAIL + PPO    
\\ \hline
Discriminator update lr $\epsilon_{\text{D}}$       &0.00021        &3.8*$10^{-5}$      &6.2*$10^{-6}$      &-              &1.7*$10^{-5}$      &1.5*$10^{-5}$ \\
RL update lr $\epsilon_{\text{RL}}$                 &-              &-                  &-                  &0.0079         &0.0012             &0.0052 \\
Critic lr coefficient $\kappa$                      &-              &-                  &-                  &-              &10                 &12 \\
window size \textit{w}                              &-              &200                 &1                  &-              &-                  &- \\
window stride                                       &-              &200                 &1                  &-              &-                  &- \\
\end{tabular}}
\end{sc}
\end{table}

%LUNARLANDER-C
\begin{table}[ht!]
\centering
\begin{sc}
\caption{Best found hyperparameters for Lunarlander-c}
\resizebox{\textwidth}{!}{%
\begin{tabular}{lcccccc}
\label{table:hyperparams_lunarlander_c}
Hyperparameter                                     &ASAF           &ASAF-$w$           &ASAF-$1$           &SQIL           &AIRL + PPO         &GAIL + PPO    
\\ \hline
Discriminator update lr $\epsilon_{\text{D}}$       &0.0051         &0.0022             &0.0003             &-              &0.0045             &0.00014 \\
RL update lr $\epsilon_{\text{RL}}$                 &-              &-                  &-                  &0.0027         &0.00031            &0.00049 \\
Critic lr coefficient $\kappa$                      &-              &-                  &-                  &-              &14                 &0.01 \\
window size \textit{w}                              &-              &200                 &-                  &-              &-                  &- \\
window stride                                       &-              &200                 &-                  &-              &-                  &- \\
\end{tabular}}
\end{sc}
\end{table}

% \clearpage
\tocless{\subsection{MuJoCo}}{}

For MuJoCo experiments (Hopper-v2, Walker2d-v2, HalfCheetah-v2, Ant-v2), the fixed hyperparameters are presented in Table~\ref{table:fixed_hyperparams_MuJoCo}. For all exeperiments, fully-connected MLPs with two hidden layers and ReLU activation (except for the last layer) were used, where the number of hidden units is equal to 256. 
%FIXED HYPERPARAMS (MuJoCo)
\begin{table}[ht!]
\centering
\begin{sc}
\caption{Fixed hyperparameters for MuJoCo environments.}
\scalebox{0.8}{
\begin{tabular}{lcccc}
\label{table:fixed_hyperparams_MuJoCo}
\textbf{RL component}
\\ \hline
Hyperparameter                                         &Hopper, Walker2d, HalfCheetah, Ant
\\ \hline
\\
\hspace{5mm} \textbf{PPO (for GAIL)} \\
\hspace{5mm} GAE parameter $\lambda$                    &{0.98} \\
\hspace{5mm} Transitions between updates                &{2000} \\
\hspace{5mm} Epochs per update                          &{5} \\
\hspace{5mm} Update clipping parameter                  &{0.2} \\
\hspace{5mm} Critic lr coefficient $\kappa$             &{0.25} \\
\hspace{5mm} Discount factor $\gamma$                          &{0.99} \\
\textbf{Reward Learning component}
\\ \hline
Hyperparameter                                         &Hopper, Walker2d, HalfCheetah, Ant
\\ \hline
\\
\hspace{5mm} \textbf{GAIL} \\
\hspace{5mm} Transitions between updates                &{2000} \\
\\
\hspace{5mm} \textbf{ASAF}\\
\hspace{5mm} Episodes between updates                   &{25} \\
\\
\hspace{5mm} \textbf{ASAF-1 and ASAF-\textit{w}}\\
\hspace{5mm} Transitions between updates                &{2000} \\
\end{tabular}
}
\end{sc}
\end{table}

For SQIL we used SAC with the same hyperparameters that were used to generate the expert demonstrations. For ASAF, ASAF-1 and ASAF-\textit{w}, we set the learning rate for the discriminator at 0.001 and ran random searches over 25 randomly sampled configurations and 2 seeds for each task to select the other hyperparameters for the discriminator training. These hyperparameters included the discriminator batch size sampled from a uniform distribution over $\{10, 20, 30\}$ for ASAF and ASAF-\textit{w} (in trajectories) and over $\{100, 500, 1000, 2000\}$ for ASAF-1 (in transitions), the number of epochs per update sampled from a uniform distribution over $\{10, 20, 50\}$, the gradient norm clipping threshold sampled form a uniform distribution over $\{1, 10\}$, the window-size (for ASAF-\textit{w}) sampled from a uniform distribution over $\{100, 200, 500, 1000\}$ and the window stride (for ASAF-\textit{w}) sampled from a uniform distribution over $\{1, 50, w\}$. For GAIL, we obtained poor results using the original hyperparameters from \cite{ho2016generative} for a number of tasks so we ran random searches over 100 randomly sampled configurations for each task and 2 seeds to select for the following hyperparameters: the log learning rate of the RL update and the discriminator update separately sampled from uniform distributions over $[-7, -1]$, the gradient norm clipping for the RL update and the discriminator update separately sampled from uniform distributions over $\{None, 1, 10\}$, the number of epochs per update sampled from a uniform distribution over $\{5, 10, 30, 50\}$, the gradient penalty coefficient sampled from a uniform distribution over $\{1, 10\}$ and the batch size for the RL update and discriminator update separately sampled from uniform distributions over $\{100, 200, 500, 1000, 2000\}$. 

%HOPPER
\begin{table}[ht!]
\centering
\begin{sc}
\caption{Best found hyperparameters for the Hopper-v2 environment}
\resizebox{\textwidth}{!}{%
\begin{tabular}{lccccccc}
\label{table:hopper}
Hyperparameter                                 &ASAF           &ASAF-\textit{w}           &ASAF-$1$           &SQIL           &GAIL + PPO  
\\ \hline
RL batch size (in transitions)       &-              &-              &-         &256     &200 \\
Discriminator batch size (in transitions)       &-              &-              &100         &-     &2000 \\
Discriminator batch size (in trajectories)      &10              &10              &-         &-     &- \\
Gradient clipping (RL update)      &-              &-              &-         &-     &1. \\
Gradient clipping (discriminator update)  &10.              &10.              &1.         &-     &1. \\
Epochs per update &50              &50              &30         &-     &5 \\
Gradient penalty (discriminator update) &-              &-              &-         &-     &1. \\
RL update lr $\epsilon_{\text{RL}}$       &-     &-         &-         &$3*10^{-4}$     &$1.8*10^{-5}$        \\
Discriminator update lr $\epsilon_{\text{D}}$        &0.001     &0.001         &0.001         &-     &0.011         \\
window size \textit{w}       &-     &200         &1        &-     &-        \\
window stride       &-     &1         &1        &-     &-        \\
\end{tabular}}
\end{sc}
\end{table}

%HALFCHEETAH
\begin{table}[ht!]
\centering
\begin{sc}
\caption{Best found hyperparameters for the HalfCheetah-v2 environment}
\resizebox{\textwidth}{!}{%
\begin{tabular}{lccccccc}
\label{table:halfcheetah}
Hyperparameter                                 &ASAF           &ASAF-\textit{w}           &ASAF-$1$           &SQIL           &GAIL + PPO  
\\ \hline
RL batch size (in transitions)       &-              &-              &-         & 256    &1000 \\
Discriminator batch size (in transitions)       &-              &-              &100         &-     &100 \\
Discriminator batch size (in trajectories)      &10              &10              &-         &-     &- \\
Gradient clipping (RL update)      &-              &-              &-         &-     &- \\
Gradient clipping (discriminator update)  &10.              &1              &1         &-     &10 \\
Epochs per update &50              &10              &30         &-     &30 \\
Gradient penalty (discriminator update) &-              &-              &-         &-     &1. \\
RL update lr $\epsilon_{\text{RL}}$       &-     &-         &-         &$3*10^{-4}$     &0.0006        \\
Discriminator update lr $\epsilon_{\text{D}}$        &0.001     &0.001         &0.001         &-     &0.023         \\
window size \textit{w}       &-     &200         &1         &-     &-        \\
window stride       &-     &1         &1        &-     &-        \\
\end{tabular}}
\end{sc}
\end{table}

%WALKER2D
\begin{table}[ht!]
\centering
\begin{sc}
\caption{Best found hyperparameters for the Walker2d-v2 environment}
\resizebox{\textwidth}{!}{%
\begin{tabular}{lccccccc}
\label{table:walker2D}
Hyperparameter                                 &ASAF           &ASAF-\textit{w}           &ASAF-$1$           &SQIL           &GAIL + PPO  
\\ \hline
RL batch size (in transitions)       &-              &-              &-         & 256    &200 \\
Discriminator batch size (in transitions)       &-              &-              &500         &-     &2000 \\
Discriminator batch size (in trajectories)      &20              &20              &-         &-     &- \\
Gradient clipping (RL update)      &-              &-              &-         &-     &- \\
Gradient clipping (discriminator update)  &10.              &1.              &10.         &-     &- \\
Epochs per update &30              &10              &50         &-     &30 \\
Gradient penalty (discriminator update) &-              &-              &-         &-     &1. \\
RL update lr $\epsilon_{\text{RL}}$       &-     &-         &-         &$3*10^{-4}$     &0.00039        \\
Discriminator update lr $\epsilon_{\text{D}}$        &0.001     &0.001         &0.001         &-     &0.00066         \\
window size \textit{w}       &-     &100         &1         &-     &-        \\
window stride       &-     &1         &1        &-     &-        \\
\end{tabular}}
\end{sc}
\end{table}

%ANT
\begin{table}[ht!]
\centering
\begin{sc}
\caption{Best found hyperparameters for the Ant-v2 environment}
\resizebox{\textwidth}{!}{%
\begin{tabular}{lccccccc}
\label{table:ant}
Hyperparameter                                 &ASAF           &ASAF-\textit{w}           &ASAF-$1$           &SQIL           &GAIL + PPO  
\\ \hline
RL batch size (in transitions)       &-              &-              &-         &256     &500 \\
Discriminator batch size (in transitions)       &-              &-              &100         &-     &100 \\
Discriminator batch size (in trajectories)      &20              &20              &-         &-     &- \\
Gradient clipping (RL update)      &-              &-              &-         &-     &- \\
Gradient clipping (discriminator update)  &10.              &1.              &1.         &-     &10. \\
Epochs per update &50              &50              &10         &-     &50 \\
Gradient penalty (discriminator update) &-              &-              &-         &-     &10 \\
RL update lr $\epsilon_{\text{RL}}$       &-     &-         &-         &$3*10^{-4}$     &$8.5*10^{-5}$        \\
Discriminator update lr $\epsilon_{\text{D}}$        &0.001     &0.001         &0.001         &-     &0.0016         \\
window size \textit{w}       &-     &200         &1         &-     &-        \\
window stride       &-     &50         &1        &-     &-        \\
\end{tabular}}
\end{sc}
\end{table}

\clearpage
\tocless{\subsection{Pommerman}}{}

For this set of experiments, we use a number of fixed hyperparameters for all algorithms either inspired from their original papers for the baselines or selected through preliminary searches. These fixed hyperparameters are presented in Table~\ref{table:fixed_hyperparams_pommerman_random_tag}.

%FIXED HYPERPARAMS (POMMERMAN)
\begin{table}[h!]
\centering
\begin{sc}
\caption[Fixed Hyperparameters for Pommerman environment]{Fixed Hyperparameters for Pommerman Random-Tag environment.}
\scalebox{0.8}{
\begin{tabular}{lcc}
\label{table:fixed_hyperparams_pommerman_random_tag}
\textbf{RL component}
\\ \hline
Hyperparameter                                         &Pommerman Random-Tag
\\ \hline
\\
\hspace{5mm} \textbf{SAC} \\
\hspace{5mm} Batch size (in transitions)                &256 \\
\hspace{5mm} Replay Buffer length $|\mathcal{B}|$       &$10^{5}$ \\
\hspace{5mm} Warmup (in transitions)       &1280 \\
\hspace{5mm} Initial entropy weight $\alpha$            &0.4 \\
\hspace{5mm} Gradient norm clipping threshold           &0.2 \\
\hspace{5mm} Transitions between update                 &10 \\
\hspace{5mm} Target network weight $\tau$               &0.05 \\
\\
\hspace{5mm} \textbf{PPO} \\
\hspace{5mm} Batch size (in transitions)                &256 \\
\hspace{5mm} GAE parameter $\lambda$                    &0.95 \\
\hspace{5mm} Episodes between updates                   &10 \\
\hspace{5mm} Epochs per update                          &10 \\
\hspace{5mm} Update clipping parameter                  &0.2 \\
\hspace{5mm} Critic lr coefficient $\kappa$             &0.5 \\
\\
\textbf{Reward Learning component}
\\ \hline
Hyperparameter                                         &Pommerman Random-Tag
\\ \hline
\\
\hspace{5mm} \textbf{AIRL, GAIL, ASAF-1} \\
\hspace{5mm} Batch size (in transitions)                &256 \\
\hspace{5mm} Episodes between updates                   &10 \\
\hspace{5mm} Epochs per update                          &10 \\
\\
\hspace{5mm} \textbf{ASAF, ASAF-\textit{w}} \\
\hspace{5mm} Batch size (in trajectories)               &5 \\
\hspace{5mm} Episodes between updates                   &10 \\
\hspace{5mm} Epochs per update                          &10 \\
\end{tabular}
}
\end{sc}
\end{table}

For the most sensitive hyperparameters, the learning rates for the reinforcement learning and discriminator updates ($\epsilon_{\text{RL}}$ and $\epsilon_{\text{D}}$), we perform a random search over 25 configurations and 2 seeds each for all algorithms. We consider logarithmic ranges, i.e. $\epsilon = 10^{u}$ with $u \sim Uniform(-7, -3)$ for $\epsilon_{\text{D}}$ and $u \sim Uniform(-4, -1)$ for $\epsilon_{\text{RL}}$. We also include in this search the window-size \textit{w} for ASAF-\textit{w}, sampled uniformly within $\{32, 64, 128\}$. The best configuration for each algorithm is presented in Table~\ref{table:hyperparams_pommerman}. Figure~\ref{fig:pommerman_results_randomTag} uses these configurations retrained on 10 seeds.

%POMMERMAN
\begin{table}[ht!]
\centering
\begin{sc}
\caption[Best found hyperparameters for the Pommerman environment]{Best found hyperparameters for the Pommerman Random-Tag environment}
\resizebox{\textwidth}{!}{%
\begin{tabular}{lccccccc}
\label{table:hyperparams_pommerman}
Hyperparameter         &ASAF           &ASAF-\textit{w}           &ASAF-$1$           &SQIL           &AIRL + PPO         &GAIL + PPO         &BC   
\\ \hline
Discriminator update lr $\epsilon_{\text{D}}$        &0.0007     &0.0002         &0.0001         &-     &3.1*$10^{-7}$         &9.3*$10^{-7}$         &0.00022\\
RL update lr $\epsilon_{\text{RL}}$       &-     &-         &-         &0.00019     &0.00017         &0.00015         &-\\
window size \textit{w}       &-     &32         &1         &-     &-         &-         &-\\
window stride       &-     &32         &1         &-     &-         &-         &-\\
\end{tabular}}
\end{sc}
\end{table}

Finally for all neural networks (policies and discriminators) we use the same architecture. Specifically, we first process the feature maps (see Section~\ref{app:ASAF:env_pommerman}) using a 3-layers convolutional network with number of hidden feature maps of 16, 32 and 64 respectivelly. Each one of these layers use a kernel size of 3x3 with stride of 1, no padding and a ReLU activation. This module ends with a fully connected layer of hidden size 64 followed by a ReLU activation. The output vector is then concatenated to the unprocessed additional information vector (see Section~\ref{app:ASAF:env_pommerman}) and passed through a final MLP with two hidden layers of size 64 and ReLU activations (except for the last layer).

% \clearpage
\tocless{\section{Environments and expert data}}{\label{app:ASAF:environments}}

\tocless{\subsection{Classic Control}}{}
The environments used here are the reference Gym implementations for classic control\footnote{See: \url{http://gym.openai.com/envs/\#classic_control}} and for Box2D\footnote{See: \url{http://gym.openai.com/envs/\#box2d}}.
We generated the expert trajectories for mountaincar (both discrete and continuous version) by hand using keyboard inputs. For the other tasks, we trained our SAC implementation to get experts on the discrete action tasks and our PPO implementation to get experts on the continuous action tasks.

\tocless{\subsection{MuJoCo}}{}
The experts were trained using our implementation of SAC \cite{haarnoja2018soft} the state-of-the-art RL algorithm in MuJoCo continuous control tasks.
Our implementation basically refactors the SAC implementation from Rlpyt\footnote{See: 
\url{https://github.com/astooke/rlpyt}}. We trained SAC agent for 1,000,000 steps for Hopper-v2 and 3,000,000 steps for Walker2d-v2 and HalfCheetah-v2 and Ant-v2. 
We used the default hyperparameters from Rlpyt.

\tocless{\subsection{Pommerman}}{\label{app:ASAF:env_pommerman}}
The observation space that we use for Pommerman domain \cite{resnick2018pommerman} is composed of a set of 15 feature maps as well as an additional information vector. The feature maps whose dimensions are given by the size of the board (8x8 in the case of 1vs1 tasks) are one-hot across the third dimension and represent which element is present at which location. Specifically, these feature maps identify whether a given location is the current player, an ally, an ennemy, a passage, a wall, a wood, a bomb, a flame, fog, a power-up. Other feature maps contain integers indicating bomb blast stength, bomb life, bomb moving direction and flame life for each location. Finally, the additional information vecor contains the time-step, number of ammunition, whether the player can kick and blast strengh for the current player. The agent has an action space composed of six actions: do-nothing, up, down, left, right and lay bomb.

For these experiments, we generate the expert demonstrations using Agent47Agent, the open-source champion algorithm of the FFA 2018 competition \cite{zhou2018hybrid} which uses hardcoded heuristics and Monte-Carlo Tree-Search\footnote{See: \url{https://github.com/YichenGong/Agent47Agent/tree/master/pommerman}}. While this agent occasionally eliminates itself during a match, we only select trajectories leading to a win as being expert demonstrations.

\tocless{\subsection{Demonstrations summary}}{}
Table~\ref{table:demo_summary} provides a summary of the expert data used.

\begin{table}[h]
\centering
\begin{sc}
\caption{Expert demonstrations used for Imitation Learning}
\resizebox{\textwidth}{!}{%
\scalebox{0.8}{
\begin{tabular}{lcc}
\label{table:demo_summary}
Task-Name         &Expert mean return            &Number of expert trajectories  
\\ \hline
Cartpole &200.0     &10 \\
Mountaincar & -108.0      &10\\
Lunarlander & 277.5      &10\\
Pendulum & -158.6     &10\\
Mountaincar-c & 93.92    &10\\
Lunarlander-c & 266.1    &10\\
Hopper &3537        &25\\
Walker2D &5434      &25\\
Halfcheetah &7841       &25\\
Ant &5776       &25\\
Pommerman random-tag        &1 &300, 150, 75, 15, 5, 1\\
\end{tabular} }
}
\end{sc}
\end{table}

% ---------------------------------------------
% ---------------------------------------------
% ----- APPENDIX 
% ---------------------------------------------
% ---------------------------------------------
\Annexe{Supplementary Material for Chapter~\ref{chap:article2_cmaddpg}}
% \addcontentsline{toc}{compteur}{APPENDIX B\ \ \ \ SUPPLEMENTARY MATERIAL FOR CHAPTER~\ref{chap:article2_cmaddpg}}

\tocless{\section{Additional details on Motivation section}}{\label{ap:toy}}

We trained each agent $i$ with online Q-learning \cite{watkins1992q} on the $Q^i(a^i,s)$ table using Boltzmann exploration \cite{kaelbling1996reinforcement}. The Boltzmann temperature is fixed to 1 and we set the learning rate to 0.05 and the discount factor to 0.99. After each learning episode we evaluate the current greedy policy on 10 episodes and report the mean return. Curves are averaged over 20 seeds and the shaded area represents the standard error.

\tocless{\section{Tasks descriptions}}{\label{app:CMADDPG:environments}}

\textbf{SPREAD} (Figure~\ref{fig:CMADDPG:envs}a): 
In this environment, there are 3 agents (small orange circles) and 3 landmarks (bigger gray circles). At every timestep, agents receive a team-reward $r_t=n-c$ where $n$ is the number of landmarks occupied by at least one agent and $c$ the number of collisions occurring at that timestep. To maximize their return, agents must therefore spread out and cover all landmarks. Initial agents' and landmarks' positions are random. Termination is triggered when the maximum number of timesteps is reached.

\textbf{BOUNCE} (Figure~\ref{fig:CMADDPG:envs}b): 
In this environment, two agents (small orange circles) are linked together with a spring that pulls them toward each other when stretched above its relaxation length. At episode's mid-time a ball (smaller black circle) falls from the top of the environment. Agents must position correctly so as to have the ball bounce on the spring towards the target (bigger beige circle), which turns yellow if the ball's bouncing trajectory passes through it. They receive a team-reward of $r_t = 0.1$ if the ball reflects towards the side walls, $r_t = 0.2$ if the ball reflects towards the top of the environment, and $r_t = 10$ if the ball reflects towards the target. At initialisation, the target's and ball's vertical position is fixed, their horizontal positions are random. Agents' initial positions are also random. Termination is triggered when the ball is bounced by the agents or when the maximum number of timesteps is reached. 

\textbf{COMPROMISE} (Figure~\ref{fig:CMADDPG:envs}c):
In this environment, two agents (small orange circles) are linked together with a spring that pulls them toward each other when stretched above its relaxation length. They both have a distinct assigned landmark (light gray circle for light orange agent, dark gray circle for dark orange agent), and receive a reward of $r_t = 10$ when they reach it.  Once a landmark is reached by its corresponding agent, the landmark is randomly relocated in the environment. Initial positions of agents and landmark are random. Termination is triggered when the maximum number of timesteps is reached.

\textbf{CHASE} (Figure~\ref{fig:CMADDPG:envs}d): 
In this environment, two predators (orange circles) are chasing a prey (turquoise circle). The prey moves with respect to a scripted policy consisting of repulsion forces from the walls and predators. At each timestep, the learning agents (predators) receive a team-reward of $r_t = n$ where $n$ is the number of predators touching the prey. The prey has a greater max speed and acceleration than the predators. Therefore, to maximize their return, the two agents must coordinate in order to squeeze the prey into a corner or a wall and effectively trap it there. Termination is triggered when the maximum number of time steps is reached.

\tocless{\section{Training details}}{\label{app:CMADDPG:training_details}}

In all of our experiments, we use the Adam optimizer \citep{kingma2014adam} to perform parameter updates. All models (actors, critics and coach) are parametrized by feedforward networks containing two hidden layers of $128$ units. We use the Rectified Linear Unit (ReLU) \citep{nair2010rectified} as activation function and layer normalization \citep{ba2016layer} on the pre-activations unit to stabilize the learning. We use a buffer-size of $10^6$ entries and a batch-size of $1024$. We collect $100$ transitions by interacting with the environment for each learning update. For all tasks in our hyperparameter searches, we train the agents for $15,000$ episodes of $100$ steps and then re-train the best configuration for each algorithm-environment pair for twice as long ($30,000$ episodes) to ensure full convergence for the final evaluation. The scale of the exploration noise is kept constant for the first half of the training time and then decreases linearly to $0$ until the end of training. We use a discount factor $\gamma$ of $0.95$ and a gradient clipping threshold of $0.5$ in all experiments. Finally for CoachReg, we fixed $K$ to 4 meaning that agents could choose between 4 sub-policies. Since policies' hidden layers are of size 128 the corresponding value for $C$ is 32. All experiments were run on Intel E5-2683 v4 Broadwell (2.1GHz) CPUs in less than 12 hours.

\clearpage

\tocless{\section{Algorithms}}{\label{app:CMADDPG:algorithms}}

\begin{algorithm}[h!]
\scriptsize
  \begin{algorithmic}
  \caption{Team}\label{alg:teammaddpg}
    \STATE Randomly initialize $N$ critic networks $Q^i$ and actor networks $\mu^i$
    \STATE Initialize the target weights
    \STATE Initialize one replay buffer $\mathcal{D}$
    \FOR{episode from 0 to number of episodes}
      \STATE Initialize random processes $\mathcal{N}^i$ for action exploration
      \STATE Receive initial joint observation $\bo_0$
      \FOR{timestep t from 0 to episode length}
        \STATE Select action $a_i = \mu^i(o^i_t) + \mathcal{N}^i_t$ for each agent
        \STATE Execute joint action $\ba_t$ and observe joint reward $\br_t$ and new observation $\bo_{t+1}$
        \STATE Store transition $(\bo_t, \ba_t, \br_t, \bo_{t+1})$ in $\mathcal{D}$
       \ENDFOR
        \STATE Sample a random minibatch of $M$ transitions from $\mathcal{D}$
        \FOR{each agent $i$}
            \STATE Evaluate $\mathcal{L}^i$ and $J_{PG}^i$ from Equations~(\ref{eq:Lcritic})~and~(\ref{eq:JPG})
            \FOR{each other agent ($j \neq i$)}
                \STATE Evaluate $J_{TS}^{i,j}$ from Equations~(\ref{eq:J_TScont}, \ref{eq:J_TSdisc})
                \STATE Update actor $j$ with $\theta^j \leftarrow \theta^j + \alpha_{\theta}\nabla_{\theta^j}\lambda_2 J_{TS}^{i,j}$
            \ENDFOR
            \STATE Update critic with $\phi^i \leftarrow \phi^i - \alpha_{\phi}\nabla_{\phi^i}\mathcal{L}^i$
            \STATE Update actor $i$ with $\theta^i \leftarrow \theta^i + \alpha_{\theta}\nabla_{\theta^i}\left(J_{PG}^i + \lambda_1 \sum_{j=1}^N J_{TS}^{i,j} \right)$
      \ENDFOR
        \STATE Update all target weights
    \ENDFOR
  \end{algorithmic}
\end{algorithm}

\begin{algorithm}[h!]
\scriptsize
  \begin{algorithmic}
  \caption{Coach}\label{alg:coachmaddpg}
    \STATE Randomly initialize $N$ critic networks $Q^i$, actor networks $\mu^i$ and one coach network $p^c$
    \STATE Initialize $N$ target networks $Q^{i\prime}$ and $\mu^{i\prime}$
    \STATE Initialize one replay buffer $\mathcal{D}$
    \FOR{episode from 0 to number of episodes}
      \STATE Initialize random processes $\mathcal{N}^i$ for action exploration
      \STATE Receive initial joint observation $\bo_0$
      \FOR{timestep t from 0 to episode length}
        \STATE Select action $a_i = \mu^i(o^i_t) + \mathcal{N}^i_t$ for each agent
        \STATE Execute joint action $\ba_t$ and observe joint reward $\br_t$ and new observation $\bo_{t+1}$
        \STATE Store transition $(\bo_t, \ba_t, \br_t, \bo_{t+1})$ in $\mathcal{D}$
       \ENDFOR
        \STATE Sample a random minibatch of $M$ transitions from $\mathcal{D}$
        \FOR{each agent $i$}
            \STATE Evaluate $\mathcal{L}^i$ and $J_{PG}^i$ from Equations~(\ref{eq:Lcritic}) and (\ref{eq:JPG})
            \STATE Update critic with $\phi^i \leftarrow \phi^i - \alpha_{\phi}\nabla_{\phi^i}\mathcal{L}^i$
            \STATE Update actor with $\theta^i \leftarrow \theta^i + \alpha_{\theta}\nabla_{\theta^i}J_{PG}^i$
        \ENDFOR
        \FOR{each agent $i$}
            \STATE Evaluate $J_{E}^i$ and $J_{EPG}^i$ from Equations~(\ref{eq:JE}) and (\ref{eq:JEPG})
            \STATE Update actor with $\theta^i \leftarrow \theta^i + \alpha_{\theta}\nabla_{\theta^i}\left(\lambda_1J_{E}^i+\lambda_2J_{EPG}^i\right)$
        \ENDFOR
        \STATE Update coach with $\psi \leftarrow \psi + \alpha_{\psi}\nabla_{\psi}\frac{1}{N}\sum_{i=1}^N \left(J_{EPG}^i + \lambda_3J_{E}^i\right)$
        \STATE Update all target weights
    \ENDFOR
  \end{algorithmic}
\end{algorithm}

\clearpage

\tocless{\section{Hyperparameter search}}{\label{app:CMADDPG:hyperparameter_search}}

\tocless{\subsection{Hyperparameter search ranges}}{\label{app:CMADDPG:hyperparameter_search_ranges}}

We perform searches over the following hyperparameters: the learning rate of the actor $\alpha_{\theta}$, the learning rate of the critic $\omega_{\phi}$ relative to the actor ($\alpha_{\phi} = \omega_{\phi} * \alpha_{\theta}$), the target-network soft-update parameter $\tau$ and the initial scale of the exploration noise $\eta_{noise}$ for the Ornstein-Uhlenbeck noise generating process \citep{uhlenbeck1930theory} as used by \citet{lillicrap2015continuous}. When using TeamReg and CoachReg, we additionally search over the regularization weights $\lambda_1$, $\lambda_2$ and $\lambda_3$. The learning rate of the coach is always equal to the actor's learning rate (i.e. $\alpha_{\theta} = \alpha_{\psi}$), motivated by their similar architectures and learning signals and in order to reduce the search space. Table \ref{table:hyperparamsearch} shows the ranges from which values for the hyperparameters are drawn uniformly during the searches.

\begin{table}[ht]
\caption[Hyperparameter ranges for multi-agent coordination environments]{Ranges for hyperparameter search, the log base is 10}
\begin{sc}
\begin{center}
\begin{tabular}{lc}
Hyperparameter &Range   
\\ \hline
$\log(\alpha_{\theta})$  &$[-8, -3]$\\
$\log(\omega_{\phi})$    &$[-2, \,\,\,\,\,2]$\\
$\log(\tau)$             &$[-3, -1]$\\
$\log(\lambda_1)$        &$[-3\,,\,\,\,\, 0]$\\
$\log(\lambda_2)$        &$[-3\,,\,\,\,\, 0]$\\
$\log(\lambda_3)$        &$[-1\,,\,\,\,\, 1]$\\
$\eta_{noise}$         &$[0.3, 1.8]$\\
\end{tabular}
\end{center}
\end{sc}
\label{table:hyperparamsearch}
\end{table}

\tocless{\subsection{Model selection}}{\label{app:CMADDPG:model_selection}}

During training, a policy is evaluated on a set of 10 different episodes every 100 learning steps. At the end of the training, the model at the best evaluation iteration is saved as the best version of the policy for this training, and is re-evaluated on 100 different episodes to have a better assessment of its final performance. The performance of a hyperparameter configuration is defined as the average performance (across seeds) of the best policies learned using this set of hyperparameter values.

\clearpage

\tocless{\subsection{Selected hyperparameters}}{}

Tables \ref{table:hp_spread}, \ref{table:hp_bounce}, \ref{table:hp_chase}, and \ref{table:hp_compromise} shows the best hyperparameters found by the random searches for each of the environments and each of the algorithms.

%SPREAD
\begin{table}[ht]
\centering
\begin{sc}
\caption[Best found hyperparameters for SPREAD]{Best found hyperparameters for the SPREAD environment}
\resizebox{\textwidth}{!}{%
\begin{tabular}{lccccc}
\label{table:hp_spread}
Hyperparameter          &DDPG          &MADDPG         &MADDPG+Sharing       &MADDPG+TeamReg         &MADDPG+CoachReg   
\\ \hline
$\alpha_{\theta}$  &$5.3*10^{-5}$ &$2.1*10^{-5}$  &$9.0*10^{-4}$      &$2.5*10^{-5}$      &$1.2*10^{-5}$ \\
$\omega_{\phi}$    &$53$          &$79$           &$0.71$             &$42$               &$82$ \\
$\tau$             &$0.05$        &$0.083$        &$0.076$            &$0.098$            &$0.0077$\\
$\lambda_1$        &-             &-              &-                  &$0.054$            &$0.13$\\
$\lambda_2$        &-             &-              &-                  &$0.29$             &$0.24$\\
$\lambda_3$         &-                      &-              &-                  &-          &$8.4$\\
$\eta_{noise}$           &$1.0$         &$0.5$          &$0.7$              &$1.2$              &$1.6$\\
\end{tabular}}
\end{sc}
\end{table}

%BOUNCE
\begin{table}[ht!]
\centering
\begin{sc}
\caption[Best found hyperparameters for BOUNCE]{Best found hyperparameters for the BOUNCE environment}
\resizebox{\textwidth}{!}{%
\begin{tabular}{lccccc}
\label{table:hp_bounce}
Hyperparameter          &DDPG          &MADDPG       &MADDPG+Sharing         &MADDPG+TeamReg         &MADDPG+CoachReg   
\\ \hline
$\alpha_{\theta}$  &$8.1*10^{-4}$ &$3.8*10^{-5}$    &$1.2*10^{-4}$    &$1.3*10^{-5}$      &$6.8*10^{-5}$\\
$\omega_{\phi}$    &$2.4$         &$87$             &$0.47$           &$85$               &$9.4$\\
$\tau$             &$0.089$       &$0.016$          &$0.06$           &$0.055$            &$0.02$\\
$\lambda_1$        &-             &-                &-                &$0.06$             &$0.0066$\\
$\lambda_2$        &-             &-                &-                &$0.0026$           &$0.23$\\
$\lambda_3$         &-                      &-              &-                  &-          &$0.34$\\
$\eta_{noise}$           &$1.2$         &$0.9$            &$1.2$            &$1.0$              &$1.1$\\
\end{tabular}}
\end{sc}
\end{table}

%CHASE
\begin{table}[ht!]
\centering
\begin{sc}
\caption[Best found hyperparameters for CHASE]{Best found hyperparameters for the CHASE environment}
\resizebox{\textwidth}{!}{%
\begin{tabular}{lccccc}
\label{table:hp_chase}
Hyperparameter          &DDPG      &MADDPG             &MADDPG+Sharing   &MADDPG+TeamReg         &MADDPG+CoachReg   
\\ \hline
$\alpha_{\theta}$  &$4.5*10^{-4}$ &$2.0*10^{-4}$  &$9.7*10^{-4}$   &$1.3*10^{-5}$    &$1.8*10^{-4}$\\
$\omega_{\phi}$    &$32$          &$64$           &$0.79$          &$85$             &$90$\\
$\tau$             &$0.031$       &$0.021$        &$0.032$         &$0.055$          &$0.011$\\
$\lambda_1$        &-             &-              &-               &$0.06$           &$0.0069$\\
$\lambda_2$        &-             &-              &-               &$0.0026$         &$0.86$\\
$\lambda_3$         &-                      &-              &-                  &-          &$0.76$\\
$\eta_{noise}$           &$0.6$         &$1.0$          &$1.5$           &$1.0$            &$1.1$\\
\end{tabular}}
\end{sc}
\end{table}

%COMPROMISE
\begin{table}[ht!]
\centering
\begin{sc}
\caption[Best found hyperparameters for COMPROMISE]{Best found hyperparameters for the COMPROMISE environment}
\resizebox{\textwidth}{!}{%
\begin{tabular}{lccccc}
\label{table:hp_compromise}
Hyperparameter          &DDPG                   &MADDPG           &MADDPG+Sharing     &MADDPG+TeamReg     &MADDPG+CoachReg   
\\ \hline
$\alpha_{\theta}$  &$6.1*10^{-5}$         &$3.1*10^{-4}$    &$6.2*10^{-4}$    &$1.5*10^{-5}$  &$3.4*10^{-4}$\\
$\omega_{\phi}$    &$1.7$                  &$0.94$           &$0.58$             &$90$         &$29$\\
$\tau$             &$0.065$                &$0.045$         &$0.007$          &$0.02$        &$0.0037$\\
$\lambda_1$        &-                     &-                &-                &$0.0013$        &$0.65$\\
$\lambda_2$        &-                     &-                &-                &$0.56$       &$0.5$\\
$\lambda_3$         &-                      &-              &-                  &-          &$1.3$\\
$\eta_{noise}$     &$1.1$                   &$0.7$            &$1.3$            &$1.6$          &$1.6$\\
\end{tabular}}
\end{sc}
\end{table}

%3v2football
\begin{table}[ht!]
\centering
\begin{sc}
\caption[Best found hyperparameters for the football environment]{Best found hyperparameters for the \textit{3-vs-1-with-keeper} Google Football environment}
\resizebox{\textwidth}{!}{%
\begin{tabular}{lccccc}
\label{table:hp_soccer}
Hyperparameter         &MADDPG                 &MADDPG+Sharing         &MADDPG+TeamReg         &MADDPG+CoachReg   
\\ \hline
$\alpha_{\theta}$       &$1.6*10^{-6}$          &$3.4*10^{-5}$          &$3.5*10^{-6}$          &$9.4*10^{-5}$\\
$\omega_{\phi}$         &$3.1$                  &$13$                   &$0.96$                 &$2.9$\\
$\tau$                  &$0.004$                &$0.0014$               &$0.0066$               &$0.018$\\
$\lambda_1$             &-                      &-                      &$0.1$                  &$0.027$\\
$\lambda_2$             &-                      &-                      &$0.02$                 &$0.027$\\
$\lambda_3$             &-                      &-                      &-                      &$2.4$\\
\end{tabular}}
\end{sc}
\end{table}

% \clearpage

\tocless{\subsection{Selected hyperparameters (ablations)}}{}

Tables \ref{table:hp_spread_ablation}, \ref{table:hp_bounce_ablation}, \ref{table:hp_chase_ablation}, and \ref{table:hp_compromise_ablation} shows the best hyperparameters found by the random searches for each of the environments and each of the ablated algorithms.

%SPREAD
\begin{table}[ht!]
\caption[Ablations: best found hyperparameters for SPREAD]{Best found hyperparameters for the SPREAD environment}
\begin{sc}
\begin{center}
\resizebox{0.6\textwidth}{!}{%
\begin{tabular}{lccccc}
\label{table:hp_spread_ablation}
Hyperparameter                &MADDPG+Agent Modelling         &MADDPG+Policy Mask   
\\ \hline
$\alpha_{\theta}$        &$1.3*10^{-5}$      &$6.8*10^{-5}$ \\
$\omega_{\phi}$          &$85$               &$9.4$ \\
$\tau$                  &$0.055$            &$0.02$\\
$\lambda_1$             &$0.06$            &$0$\\
$\lambda_2$            &$0$             &$0$\\
$\lambda_3$          &-          &$0$\\
$\eta_{noise}$      &$1.0$              &$1.1$\\
\end{tabular}}
\end{center}
\end{sc}
\end{table}

%BOUNCE
\begin{table}[ht!]
\caption[Ablations: best found hyperparameters for BOUNCE]{Best found hyperparameters for the BOUNCE environment}
\begin{sc}
\begin{center}
\resizebox{0.6\textwidth}{!}{%
\begin{tabular}{lccccc}
\label{table:hp_bounce_ablation}
Hyperparameter                &MADDPG+Agent Modelling         &MADDPG+Policy Mask   
\\ \hline
$\alpha_{\theta}$        &$1.3*10^{-5}$      &$2.5*10^{-4}$ \\
$\omega_{\phi}$          &$85$               &$0.52$ \\
$\tau$                  &$0.055$            &$0.0077$\\
$\lambda_1$             &$0.06$            &$0$\\
$\lambda_2$            &$0$             &$0$\\
$\lambda_3$          &-          &$0$\\
$\eta_{noise}$      &$1.0$              &$1.3$\\
\end{tabular}}
\end{center}
\end{sc}
\end{table}

%CHASE
\begin{table}[ht!]
\caption[Ablations: best found hyperparameters for CHASE]{Best found hyperparameters for the CHASE environment}
\begin{sc}
\begin{center}
\resizebox{0.6\textwidth}{!}{%
\begin{tabular}{lccccc}
\label{table:hp_chase_ablation}
Hyperparameter                &MADDPG+Agent Modelling         &MADDPG+Policy Mask   
\\ \hline
$\alpha_{\theta}$        &$2.5*10^{-5}$      &$6.8*10^{-5}$ \\
$\omega_{\phi}$          &$42$               &$9.4$ \\
$\tau$                  &$0.098$            &$0.02$\\
$\lambda_1$             &$0.054$            &$0$\\
$\lambda_2$            &$0$             &$0$\\
$\lambda_3$          &-          &$0$\\
$\eta_{noise}$      &$1.2$              &$1.1$\\
\end{tabular}}
\end{center}
\end{sc}
\end{table}

%COMPROMISE
\begin{table}[ht!]
\caption[Ablations: best found hyperparameters for COMPROMISE]{Best found hyperparameters for the COMPROMISE environment}
\begin{sc}
\begin{center}
\resizebox{0.6\textwidth}{!}{%
\begin{tabular}{lccccc}
\label{table:hp_compromise_ablation}
Hyperparameter                &MADDPG+Agent Modelling         &MADDPG+Policy Mask   
\\ \hline
$\alpha_{\theta}$        &$1.2*10^{-4}$      &$2.5*10^{-4}$ \\
$\omega_{\phi}$          &$0.71$               &$0.52$ \\
$\tau$                  &$0.0051$            &$0.0077$\\
$\lambda_1$             &$0.0075$            &$0$\\
$\lambda_2$            &$0$             &$0$\\
$\lambda_3$          &-          &$0$\\
$\eta_{noise}$      &$1.8$              &$1.3$\\
\end{tabular}}
\end{center}
\end{sc}
\end{table}

% \clearpage

\tocless{\subsection{Hyperparameter search results}}{\label{app:CMADDPG:hyperparameter_search_results}}

The performance distributions across hyperparameters configurations for each algorithm on each task are depicted in Figure~\ref{fig:hyperparam_search_boxplots} using box-and-whisker plot. It can be seen that, while most algorithms can perform reasonably well with the correct configuration,  TeamReg, CoachReg as well as their ablated versions boost the performance of the third quartile, suggesting an increase in the robustness across hyperparameter compared to the baselines.

\begin{figure}[h!]
    \centering
    \makebox[\textwidth][c]{\includegraphics[width=0.88\textwidth]{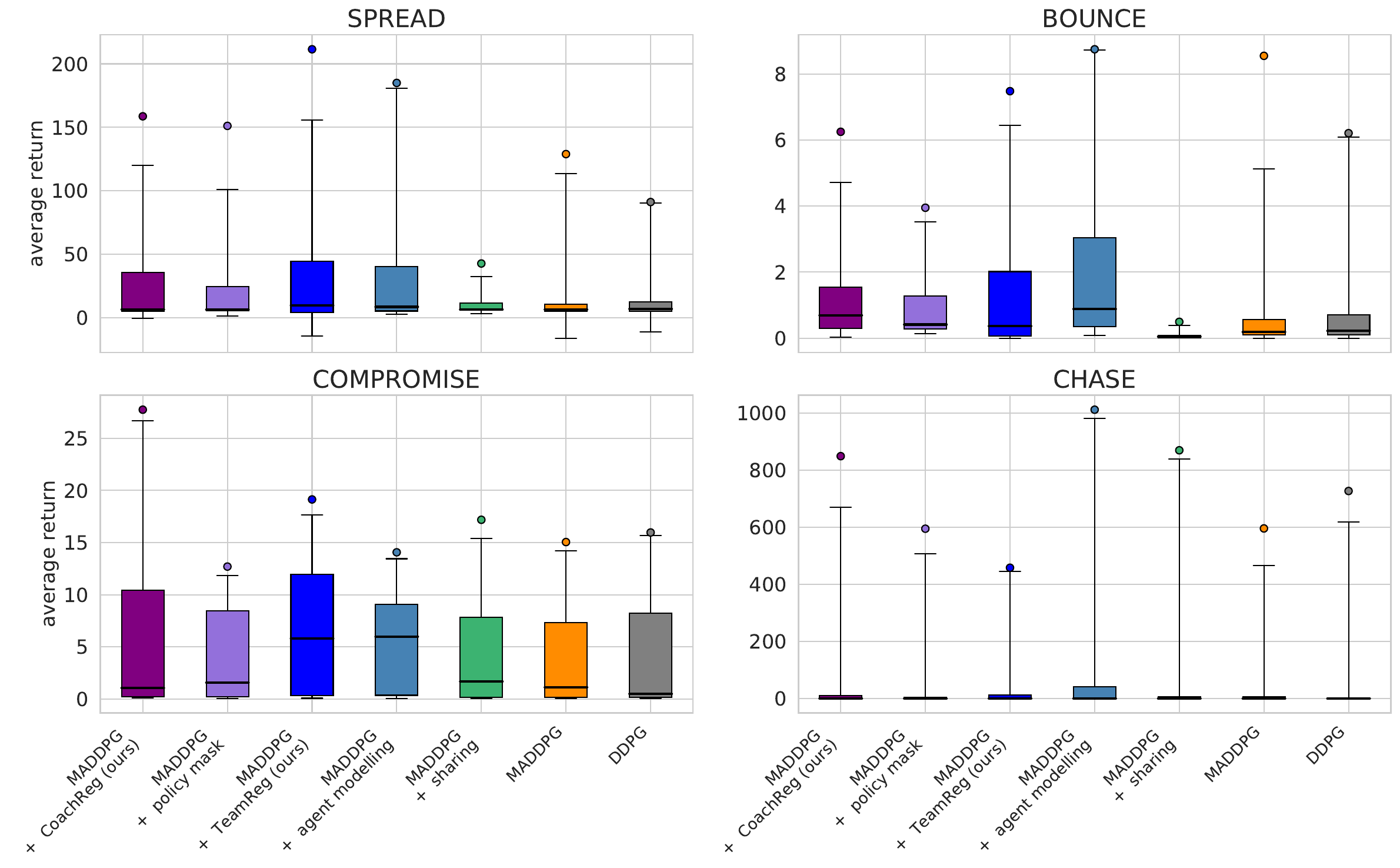}}
    \caption[Hyperparameter tuning results for coordination algorithms]{Hyperparameter tuning results for all algorithms. There is one distribution per \textit{(algorithm, environment)} pair, each one formed of 50 data-points (hyperparameter configuration samples). Each point represents the best model performance averaged over 100 evaluation episodes and averaged over the 3 training seeds for one sampled hyperparameters configuration. The box-plots divide in quartiles the 49 lower-performing configurations for each distribution while the score of the best-performing configuration is highlighted above the box-plots by a single dot.}
    \label{fig:hyperparam_search_boxplots}
\end{figure}

\clearpage
\tocless{\section{The effects of enforcing predictability (additional results)}}{\label{app:CMADDPG:effects_enforcing_predictability}}

The results presented in Figure~\ref{fig:learning_curves} show that MADDPG + TeamReg is outperformed by all other algorithms when considering average return across agents. In this section we seek to further investigate this failure mode. 

Importantly, COMPROMISE is the only task with a competitive component (i.e. the only one in which agents do not share their rewards). The two agents being strapped together, a good policy has both agents reach their landmark successively (e.g. by having both agents navigate towards the closest landmark). However, if one agent never reaches for its landmark, the optimal strategy for the other one becomes to drag it around and always go for its own, leading to a strong imbalance in the return cumulated by both agents. While such scenario doesn't occur for the other algorithms, we found TeamReg to often lead to cases of domination such as depicted in Figure~\ref{fig:performance_difference_compromise_learning_curves}.

\begin{figure}[b!]
    \centering
    \includegraphics[width=0.45\textwidth]{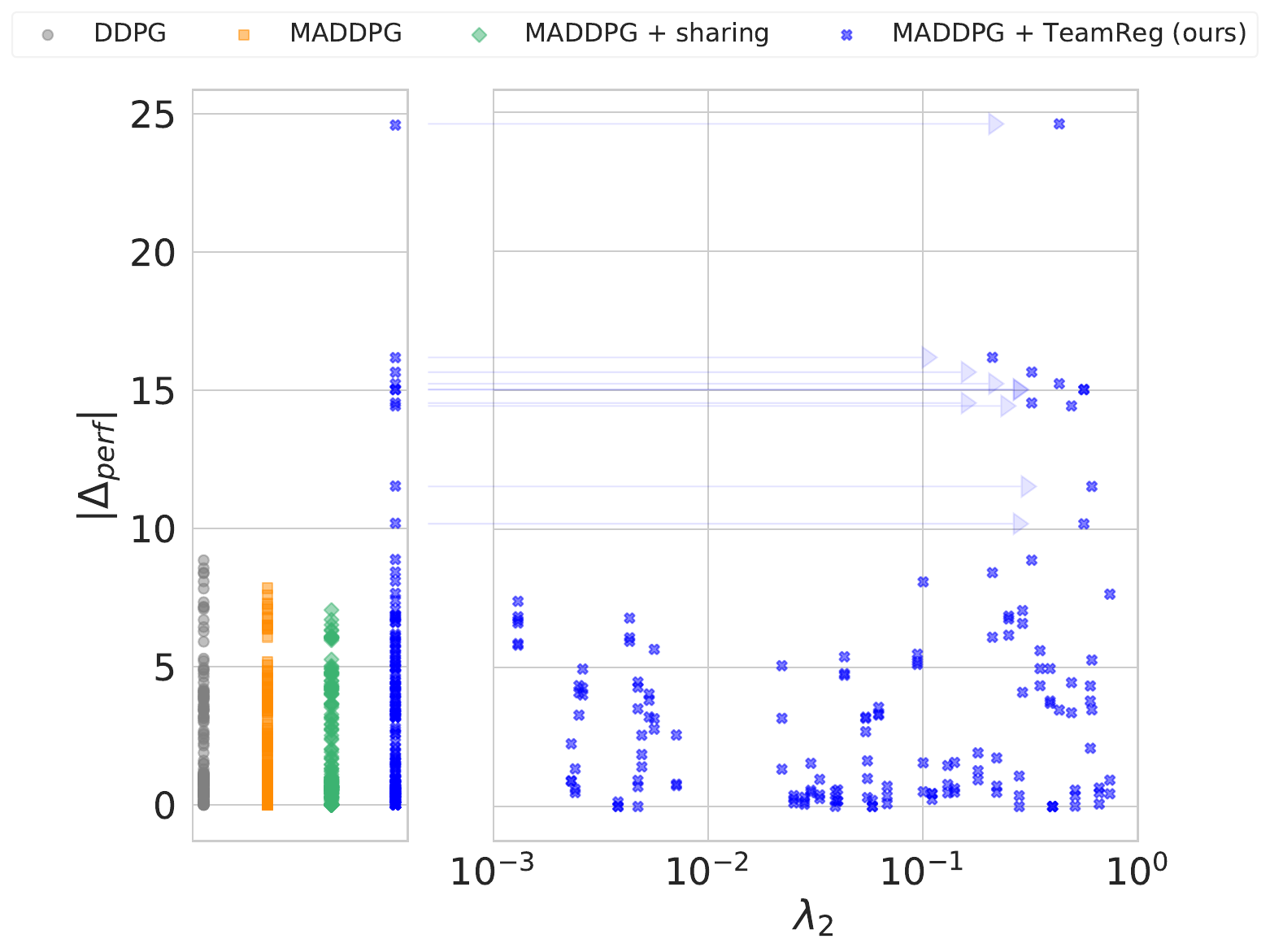}
    \caption[Average performance difference between the two agents]{Average performance difference ($\Delta_{perf}$) between the two agents in COMPROMISE for each 150 runs of the hyperparameter searches (left). All occurrences of abnormally high performance difference are associated with high values of $\lambda_2$ (right).}
    \label{fig:ts_compromise_analysis}
\end{figure}

% \begin{wrapfigure}{R}{0.45\textwidth}
%     \centering
%     \includegraphics[width=0.45\textwidth]{figures/CMADDPG/analysis_plot_perfDiffVSlamda2.pdf}
%     \caption{Average performance difference ($\Delta_{perf}$) between the two agents in COMPROMISE for each 150 runs of the hyperparameter searches (left). All occurrences of abnormally high performance difference are associated with high values of $\lambda_2$ (right).}
%     \label{fig:ts_compromise_analysis}
% \end{wrapfigure}

Figure~\ref{fig:ts_compromise_analysis} depicts the performance difference between the two agents for every 150 runs of the hyperparameter search for TeamReg and the baselines, and shows that (1) TeamReg is the only algorithm that leads to large imbalances in performance between the two agents and (2) that these cases where one agent becomes dominant are all associated with high values of $\lambda_2$, which drives the agents to behave in a predictable fashion to one another. 

Looking back at Figure~\ref{fig:performance_difference_compromise_learning_curves}, while these domination dynamics tend to occur at the beginning of training, the dominated agent eventually gets exposed more and more to sparse reward gathered by being dragged (by chance) onto its own landmark, picks up the goal of the task and starts pulling in its own direction, which causes the average return over agents to drop as we see happening midway during training in Figure~\ref{fig:learning_curves}. These results suggest that using a predictability-based team-regularization in a competitive task can be harmful; quite understandably, you might not want to optimize an objective that aims at making your behavior predictable to your opponent.

% \clearpage
\begin{figure}[ht]
    \centering
    \includegraphics[width=0.6\textwidth]{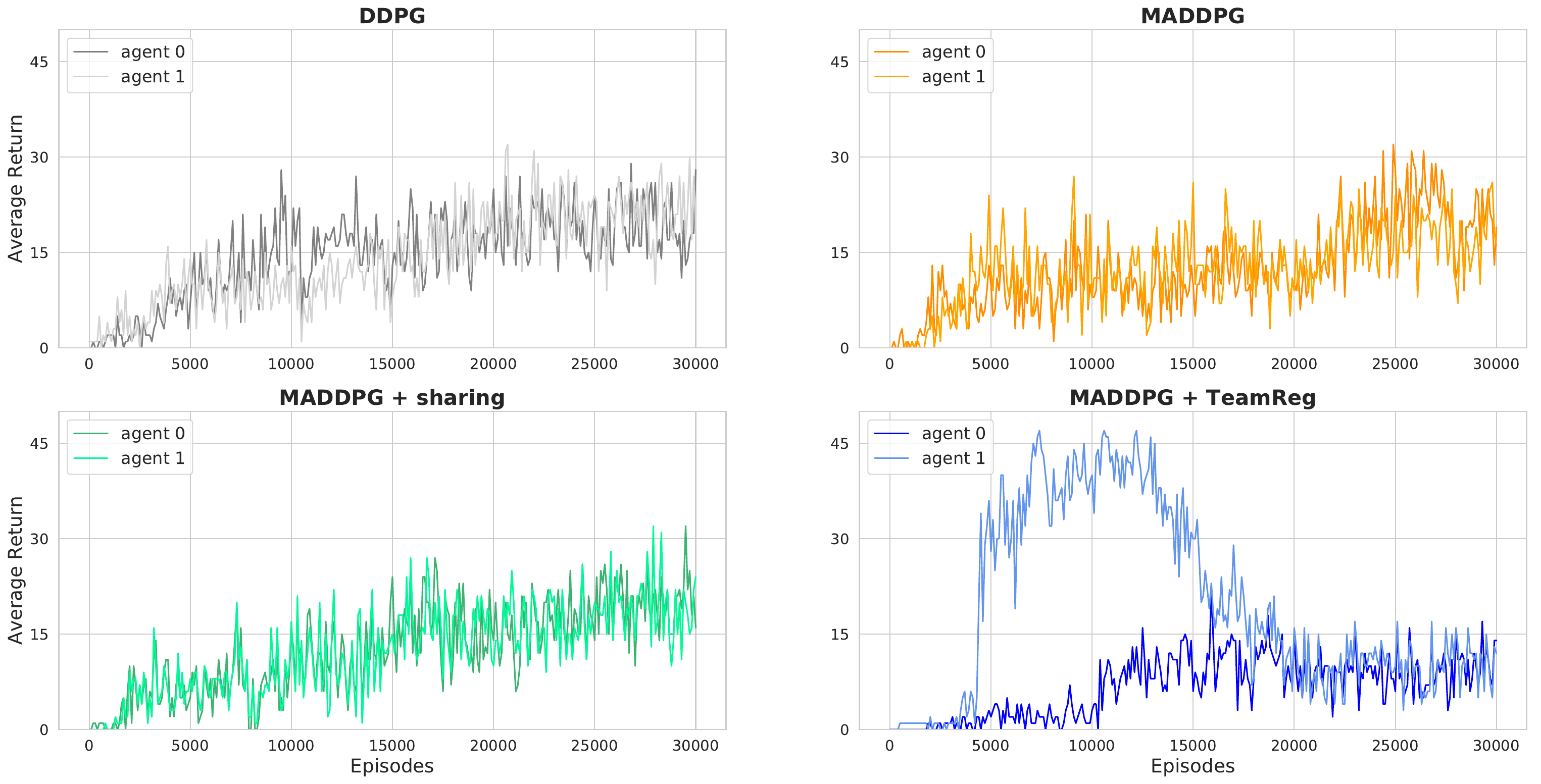}
    \caption[Learning curves for TeamReg and the baselines on COMPROMISE]{Learning curves for TeamReg and the three baselines on COMPROMISE. We see that while both agents remain equally performant as they improve at the task for the baseline algorithms, TeamReg tends to make one agent much stronger than the other one. This domination is optimal as long as the other agent remains docile, as the dominant agent can gather much more reward than if it had to compromise. However, when the dominated agent finally picks up the task, the dominant agent that has learned a policy that does not compromise see its return dramatically go down and the mean over agents overall then remains lower than for the baselines.}
    \label{fig:performance_difference_compromise_learning_curves}
\end{figure}

% \clearpage

\tocless{\section{Analysis of sub-policy selection (additional results)}}{}
\tocless{\subsection{Mask densities}}{}
We depict on Figure~\ref{fig:mask_distributions} the mask distribution of each agent for each \textit{(seed, environment)} experiment when collected on a 100 different episodes. Firstly, in most of the experiments, agents use at least 2 different masks. Secondly, for a given experiments, agents' distributions are very similar, suggesting that they are using the same masks in the same situations and that they are therefore synchronized. Finally, agents collapse more to using only one mask on CHASE, where they also display more dissimilarity between one another. This may explain why CHASE is the only task where CoachReg does not improve performance. Indeed, on CHASE, agents do not seem synchronized nor leveraging multiple sub-policies which are the priors to coordination behind CoachReg. In brief, we observe that CoachReg is less effective in enforcing those priors to coordination of CHASE, an environment where it does not boost nor harm performance.
\begin{figure}[t!]
    \centering
    \makebox[\textwidth][c]{\includegraphics[width=0.88\textwidth]{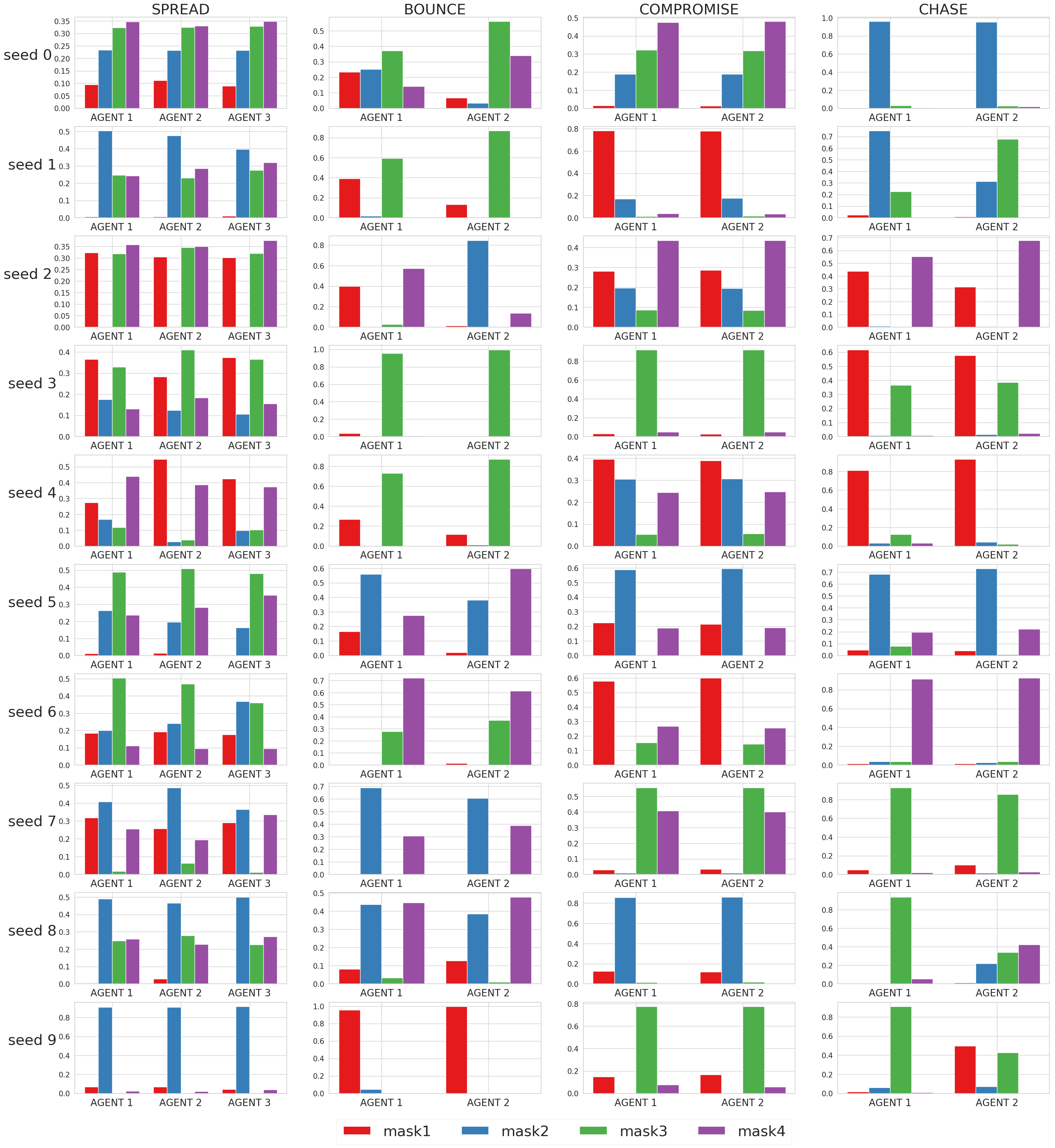}}
    \caption[Agent's policy mask distributions]{Agent's policy mask distributions. For each \textit{(seed, environment)} we collected the masks of each agents on 100 episodes.}
    \label{fig:mask_distributions}
\end{figure}

\FloatBarrier

\tocless{\subsection{Episodes rollouts with synchronous sub-policy selection}}{\label{app:CMADDPG:coach_subpolicy_selection}}
We display here and on \url{https://sites.google.com/view/marl-coordination/} some interesting sub-policy selection strategy evolved by CoachReg agents. 
On Figure~\ref{fig:BD_bounce}, the agents identified two different scenarios depending on the target-ball location and use the corresponding policy mask for the whole episode. Whereas on Figure~\ref{fig:BD_bounce}, the agents synchronously switch between policy masks during an episode. In both cases, the whole group selects the same mask as the one that would have been suggested by the coach.
\begin{figure}[h!]
    \centering
    \begin{tabular}{@{}c@{}}
    \footnotesize
    (a) BOUNCE: The ball is on the left side of the target, agents both select the purple policy mask\\
    \makebox[\textwidth][c]{\includegraphics[width=0.8\textwidth]{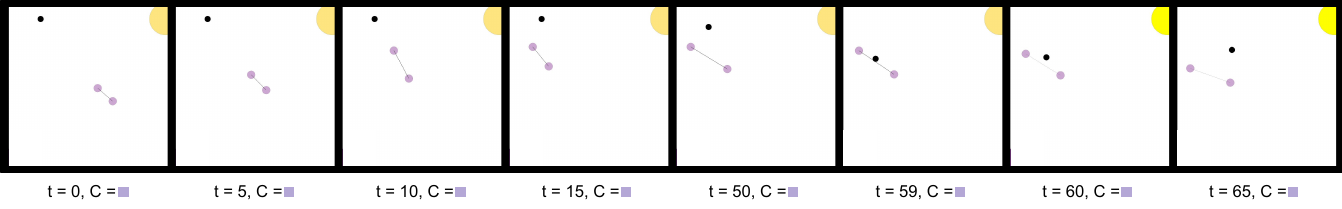}}\\
    \footnotesize
    (b) BOUNCE: The ball is on the right side of the target, agents both select the green policy mask \\
    \makebox[\textwidth][c]{\includegraphics[width=0.8\textwidth]{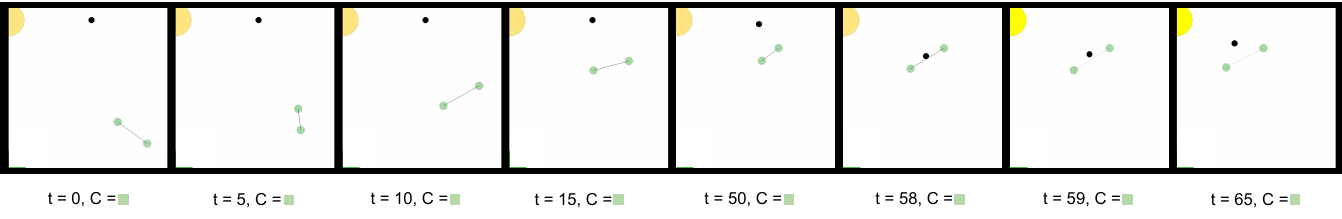}}
    \end{tabular}
    \caption[Visualization of two different BOUNCE evaluation episodes]{Visualization of two different BOUNCE evaluation episodes. Note that here, the agents' colors represent their chosen policy mask. Agents have learned to synchronously identify two distinct situations and act accordingly. The coach's masks (not used at evaluation time) are displayed with the timestep at the bottom of each frame.}
    \label{fig:BD_bounce}
\end{figure}
\begin{figure}[h!]
    \begin{tabular}{@{}c@{}}
    \footnotesize
    (a) SPREAD\\
    \makebox[\textwidth][c]{\includegraphics[width=0.8\textwidth]{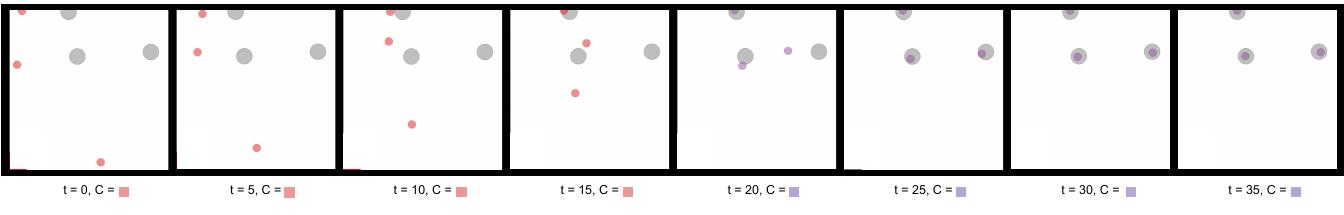}}\\
    (b) COMPROMISE\\
    \makebox[\textwidth][c]{\includegraphics[width=0.8\textwidth]{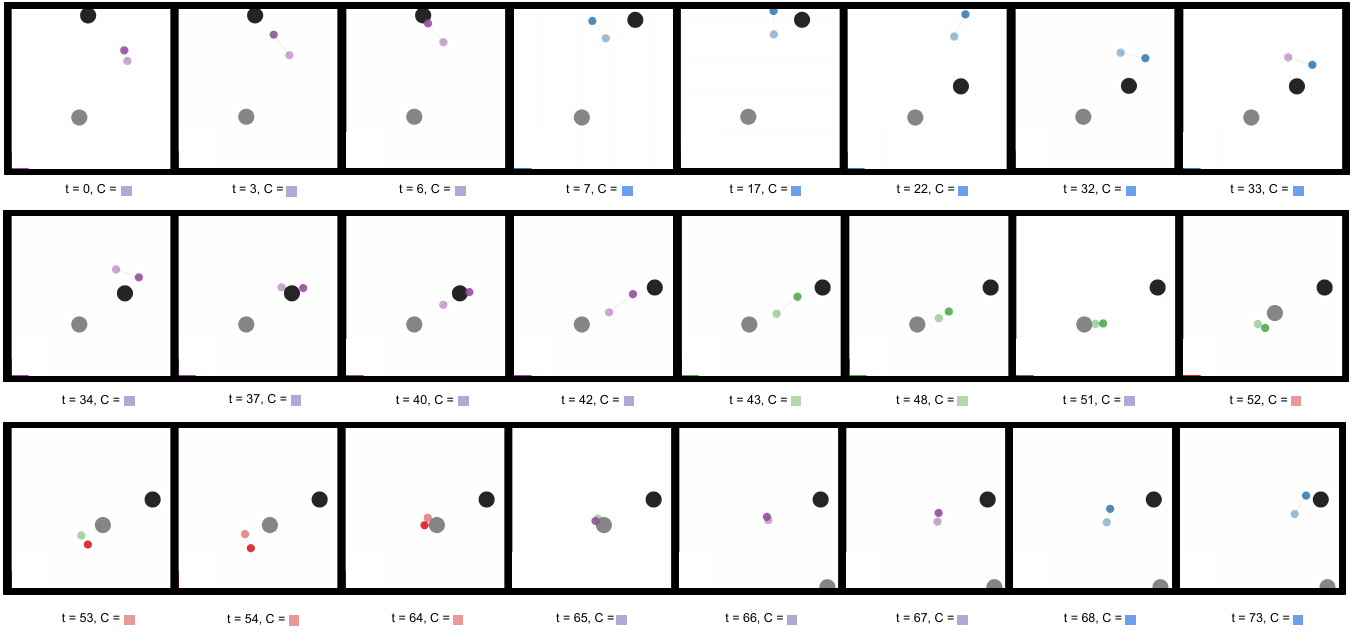}}
    \end{tabular}
    \caption[Visualization of sequences on two different environments]{Visualization of sequences on two different environments. An agent's color represent its current policy mask. The coach's masks (not used at evaluation time) are displayed with the timestep at the bottom of each frame. Agents synchronously switch between the available policy masks.}
    \label{fig:BD_envs}
\end{figure}
\clearpage

\tocless{\subsection{Mask diversity and synchronicity (ablation)}}{}
As in Subsection~\ref{subsec:CMADDPG:analysis_coach} we report the mean entropy of the mask distribution and the mean Hamming proximity for the ablated ``MADDPG + policy mask'' and compare it to the full CoachReg. With ``MADDPG + policy mask'' agents are not incentivized to use the same masks. Therefore, in order to assess if they synchronously change policy masks, we computed, for each agent pair, seed and environment, the Hamming proximity for every possible masks equivalence (mask 3 of agent 1 corresponds to mask 0 of agent 2, etc.) and selected the equivalence that maximised the Hamming proximity between the two sequences. 

We can observe that while ``MADDPG + policy mask'' agents display a more diverse mask usage, their selection is less synchronized than with CoachReg. This is easily understandable as the coach will tend to reduce diversity in order to have all the agents agree on a common mask, on the other hand this agreement enables the agents to synchronize their mask selection. To this regard, it should be noted that ``MADDPG + policy mask'' agents are more synchronized that agents independently sampling their masks from $k$-CUD, suggesting that, even in the absence of the coach, agents tend to synchronize their mask selection.
\begin{figure}[h!]
    \centering
    \begin{tabular}{@{}c@{}}
    \makebox[\textwidth][c]{\includegraphics[width=0.6\textwidth]{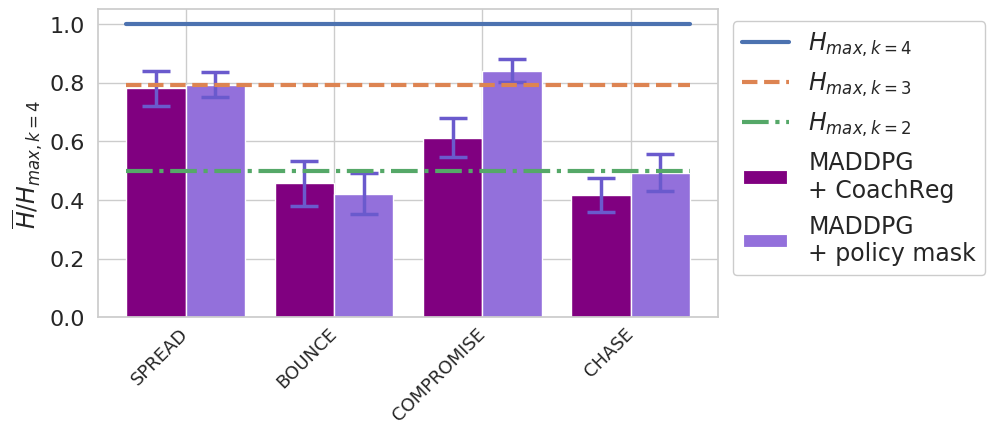}}\\
    \makebox[\textwidth][c]{\includegraphics[width=0.6\textwidth]{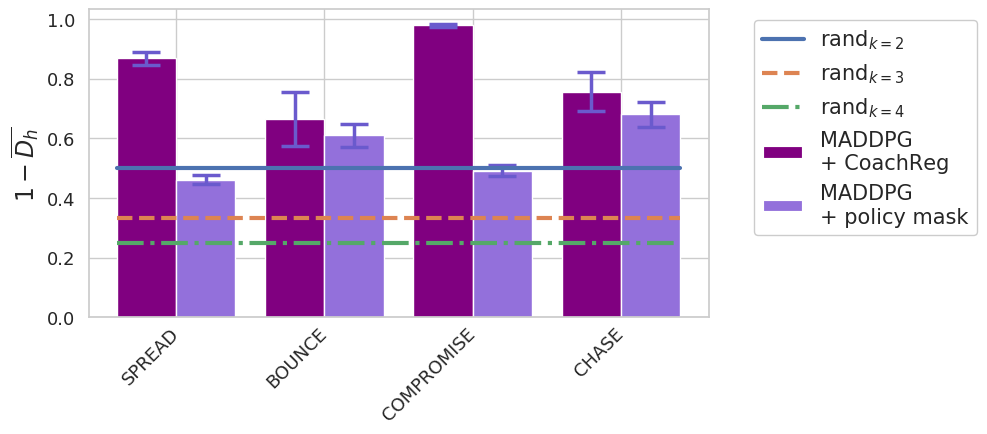}}
    \end{tabular}
    \caption[Analysis of the policy mask distributions]{(Left) Entropy of the policy mask distributions for each task and method, averaged over agents and training seeds. $H_{max,k}$ is the entropy of a $k$-CUD. (Right) Hamming Proximity between the policy mask sequence of each agent averaged across agent pairs and seeds. rand$_k$ stands for agents independently sampling their masks from $k$-CUD. Error bars are SE across seeds.}
    \label{fig:ablated_proxim}
\end{figure}

\clearpage
\tocless{\section{Scalability with the number of agents}}{}
\tocless{\subsection{Complexity}}{}
In this section we discuss the increases in model complexity that our methods entail. In practice, this complexity is negligible compared to the overall complexity of the CTDE framework. To that respect, note that (1) the critics are not affected by the regularizations, so our approaches only increase complexity for the forward and backward propagation of the actor, which consists of roughly half of an agent’s computational load at training time. Moreover, (2) efficient design choices significantly impact real-world scalability and performance: we implement TeamReg by adding only additional heads to the pre-existing actor model (effectively sharing most parameters for the teammates’ action predictions with the agent’s action selection model). CoachReg consists only of an additional linear layer per agent and a unique Coach entity for the whole team (which scales better than a critic since it only takes observations as inputs).
As such, only a small number of additional parameters need to be learned relatively to the underlying base CTDE algorithm. 
For a TeamReg agent, the number of parameters of the actor increases linearly with the number of agents (additional heads) whereas the critic model grows quadratically (since the observation size themselves usually depend on the number of agents). In the limit of increasing the number of agents, the proportion of added parameters by TeamReg compared to the increase in parameters of the centralised critic vanishes to zero.
On the SPREAD task for example, training 3 agents with TeamReg increases the number of parameters by about 1.25\% (with similar computational complexity increase). With 100 agents, this increase is only of 0.48\%.
For CoachReg, the increase in an agent's parameter is independent of the number of agent. 
Finally, any additional heads in TeamReg or the Coach in CoachReg are only used during training and can be safely removed at execution time, reducing the systems computational complexity to that of the base algorithm.

\tocless{\subsection{Robustness}}{}

To assess how the proposed methods scale to greater number of agents, we increase the number of agents in the SPREAD task from three to six agents. The results presented in Figure~\ref{fig:number_of_agents} show that the performance benefits provided by our methods hold when the number of agents is increased. Unsurprisingly, we also note how quickly learning becomes more challenging when the number of agents rises. Indeed, with each new agent, the coordination problem becomes more and more difficult, and that might explain why our methods that promote coordination maintain a higher degree of performance. Nonetheless, in the sparse reward setting, the complexity of the task soon becomes too difficult and none of the algorithms is able to solve it with six agents.

While these results show that our methods do not contribute to a quicker downfall when the number of agents is increased, they are not however aimed at tackling the problem of massively-multi-agent RL. Other approaches that use attention heads~\cite{iqbal2018actor} or restrict one agent perceptual field to its $n$-closest teammates are better suited to these particular challenges and our proposed regularisation schemes could readily be adapted to these settings as well.
\begin{figure}[ht]
    \centering
    \includegraphics[width=0.9\textwidth]{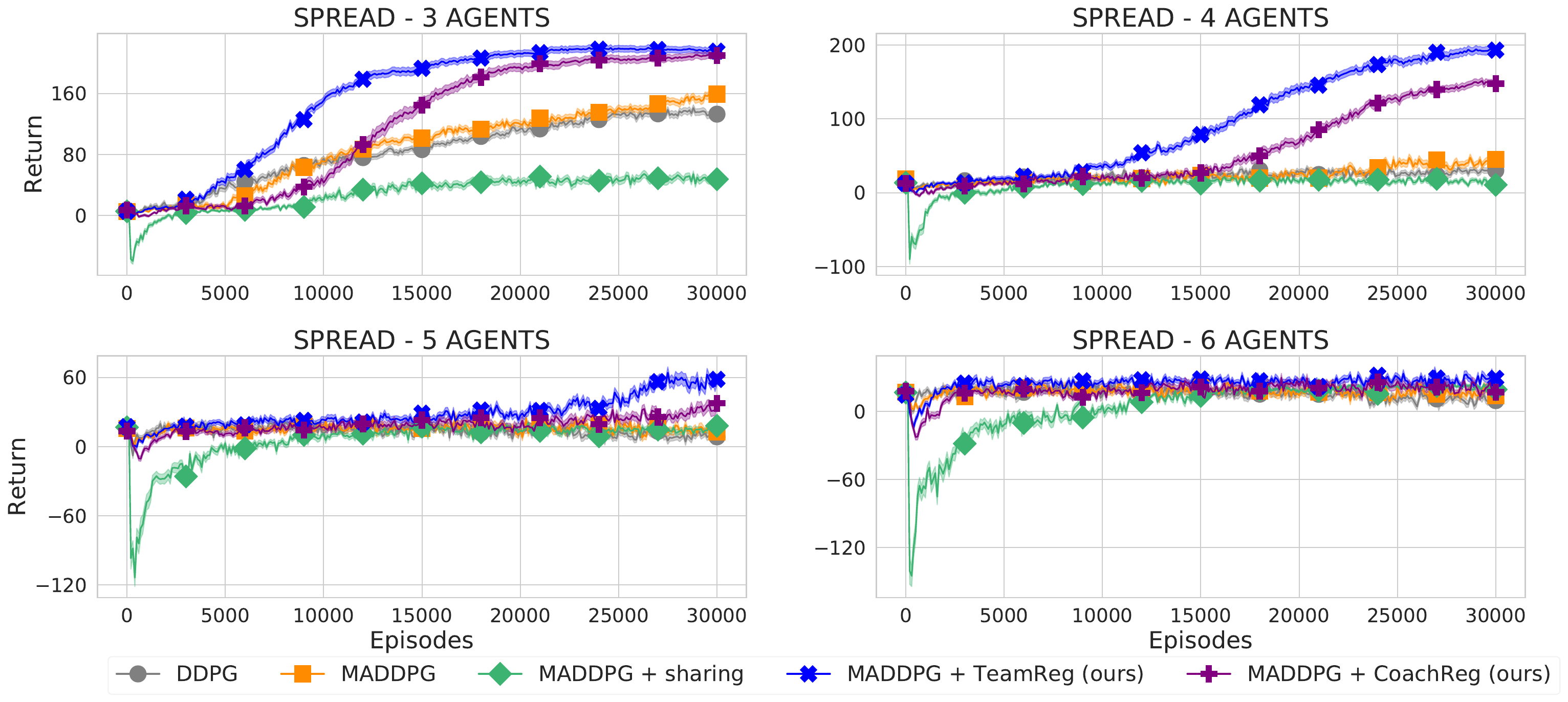}
    \caption[Learning curves for varying number of agents]{Learning curves (mean return over agents) for all algorithms on the SPREAD environment for varying number of agents. Solid lines are the mean and envelopes are the Standard Error (SE) across the 10 training seeds.}
    \label{fig:number_of_agents}
\end{figure}

% ---------------------------------------------
% ---------------------------------------------
% ----- APPENDIX 
% ---------------------------------------------
% ---------------------------------------------
\Annexe{Supplementary Material for Chapter~\ref{chap:article3_dbs}}
% \addcontentsline{toc}{compteur}{APPENDIX C\ \ \ \ SUPPLEMENTARY MATERIAL FOR CHAPTER~\ref{chap:article3_dbs}}

\tocless{\section{Algorithm}}{\label{sec:DBS:algorithm}}

Our implementation of the SAC-Lagrangian algorithm is presented below. The exact values of each hyperparameter for all of our experiments are listed in Tables~\ref{table:hyperparams_arena}~and~\ref{table:hyperparams_openWorld}. One notable difference between an unconstrained Soft-Actor Critic \cite{haarnoja2018soft} and our constrained version is that SAC is typically updated after every environment step to maximise the sample efficiency of the algorithm. In the constrained case however, since the constraints are optimized on-policy, updating the SAC agent at every environment step would only allow for one-sample estimates of the multiplier's objective. On the other hand, freezing the SAC-agent for as many environment steps as the Lagrange multiplier batch-size $N_\lambda$ makes the overall algorithm significantly less sample efficient. One \textit{could} disregard the ``on-policyness'' of the multiplier's objective but in preliminary experiments we found that, unsurprisingly, updating the Lagrange multipliers very frequently while using a large set of samples (many of which were collected using previous versions of the policy) lead to significant overshoot and harms the ability of the multipliers to converge to a stable behavior. There is thus a trade-off to make between the variance of the multiplier's objective estimate, the degree to which the multipliers are updated on-policy and the sample efficiency of the overall algorithm. In practice we found that the values for $M_\theta$ and $M_\lambda$ presented in Tables~\ref{table:hyperparams_arena}~and~\ref{table:hyperparams_openWorld} represented good compromises between these different characteristics. Another important detail is that we use $K+1$ separate critics to model the discounted expected sum of reward and costs. $Q^{(0)}$ is the critic that models the main objective and $Q^{(k)}, k=1,\dots,K+1$ are the critics that model the constraint components of the Lagrangian. Using separate critics allows to avoid fast changes in the scale of the objective, as seen by the critics, when the multipliers $\lambda_k$ get adjusted; they can solely focus on modeling the agent's changing behavior with respect to their respective function (reward or costs).

\begin{algorithm*}
\small
    \begin{algorithmic}
    \caption{\label{alg:SAC-Lagrangian}SAC-Lagrangian with Bootstrap Constraint}
    \REQUIRE learning rate $\beta$, replay buffer $\mathcal{B}$, entropy coefficient $\alpha$ and minibatch sizes $N_\theta$ and $N_\lambda$
    \REQUIRE Initialise the policy $\pi_\theta$ and value-functions $Q^{(k)}_{\phi}$ randomly, $k=0,\dots,K+1$
    \REQUIRE Initialise the Lagrange multiplier parameters $z_k$
    \REQUIRE Collect enough transitions to fill $\mathcal{B}$ with $max(N_\theta, N_\lambda)$ samples
    % \STATE
    \FOR{updates $u=1,...$ (until convergence)}
        % \STATE
        \STATE \textcolor{gray}{\textbf{\# Data collection}}
        % \STATE
        \STATE Sample from the current policy: $a\sim \pi_{\theta}(\cdot|s)$
        \STATE Query next state, reward and indicators $(s', r, \{e\}_{k=1}^{K+1})$  by interacting with the environment
        \STATE Append transition $(s, a, r, s', \{e\}_{k=1}^K+1)$ to the replay buffer $\mathcal{B}$
        % \STATE
        \STATE \textcolor{gray}{\textbf{\# Policy Gradient update}}
        % \STATE
        \IF{$u \, \% \, M_{\theta} == 0$}
        \STATE Sample a minibatch of $N_\theta$ transitions \textbf{uniformly} from the replay buffer
        \STATE Sample next actions: $\qquad a_i' \sim \pi_\theta(\cdot|s_i') \quad i=1,...,N_{\theta}$
        \FOR{$k=0,\dots,K+1$}
            \STATE Set the ``rewards'' to their corresponding values: $\qquad r^{(0)}_i = r_i\quad$ and $\quad r^{(k)}_i = e_i^{(k)}$
            \STATE Compute the Q-targets: $\qquad y_i^{(k)} = -\alpha \log \pi_\theta(a_i'|s_i') + \min_{j \in \{1,2\}} Q_{\phi_j}^{(k)}(s_i', a_i')$
            \STATE Adam descent on Q-nets with: $\qquad \nabla_{\phi_j}\frac{1}{N_\theta}\sum_{i=1}^{N_\theta}||Q_{\phi_j}^{(k)}(s_i, a_i) - \big(r_i^{(k)} + (1 - done) \gamma y_i^{(k)}\big)||_2$
        \ENDFOR
        \STATE Re-sample the current actions: $\qquad a_i \sim \pi_\theta(\cdot|s_i) \quad i=1,...,N_\theta$
        \STATE Adam ascent on policy with: \begin{align*}\qquad \nabla_{\theta} \frac{1}{N_\theta} &\sum_{i=1}^{N_{\theta}} - \alpha \log \pi_\theta(a_i|s_i) + \max(\lambda_0, \lambda_{K+1})\min_{j}Q_{\phi_j}^{(0)}(s_i, a_i) \\&+\lambda_{K+1} \min_{j}Q_{\phi_j}^{(K+1)}(s_i, a_i) 
        - \sum_{k=1}^{K} \lambda_k \min_{j}Q_{\phi_j}^{(k)}(s_i, a_i)\end{align*}
        \ENDIF
        % \STATE
        \STATE \textcolor{gray}{\textbf{\# Multipliers update}}
        % \STATE
        \IF{$u \, \% \, M_\lambda == 0$}
        \STATE Draw from the replay buffer a minibatch composed of \textbf{the last} $N_{\lambda}$ transitions
        \FOR{$k=0,\dots,K+1$}
        \STATE Compute average costs: $\qquad \tilde{J}_{C_k}(\pi) = \frac{1}{N_{\lambda}}\sum_{i=1}^{N_\lambda}e_i^{(k)}$
        \STATE Adam descent on multipliers with: $\qquad \nabla_{z_k} \lambda_k (\tilde{J}_{C_k}(\pi) - \tilde{d}_k) \, \, $ if $\, \, k = K+1 \, \,$ else $\, \, \nabla_{z_k} \lambda_k (\tilde{d}_k - \tilde{J}_{C_k}(\pi))$
        \ENDFOR
        \ENDIF
    \ENDFOR
  \end{algorithmic}
\end{algorithm*}
\normalsize

%  ----------
%  SECTION
% -----------
\clearpage
\tocless{\section{Details for experiments in the Arena environment}}{\label{sec:DBS:arena_env_details}}

\tocless{\subsection{Environment details}}{}

In the Arena Environment, the agent's main goal is to navigate to the green tile (see Figure~\ref{fig:DBS:envs}, left). The constraints that we explore in this environment are \{\textit{On-Ground}, \textit{Not-in-Lava}, \textit{Looking-At-Marker}, \textit{Under-Speed-Limit} and \textit{Above-Energy-Limit}\}. It receives as observations its XYZ position, direction and velocity, the relative XZ position of the goal, its distance to the goal, as well as an indicator for whether it is on the ground. For the looking-at constraint, it also receives the XZ vector for the direction it is looking at, its Y-angular velocity, the marker's relative XZ position and distance, the normalised angle between the agent's looking direction and the marker as well as an indicator for whether the marker is within its field of view (a fixed-angle cone in front of the agent). For the energy constraint, the agent receives the normalised value of its energy bar and an indicator for whether it is currently recharging. Finally for the lava constraint, the agent receives an indicator of whether it currently stands in lava as well as an indicator for 25 vertical raycast of its surrounding (0 indicating safe ground and 1 indicating lava). We also add to the agent's observations the per-episode rates of indicator cost functions to the agent observation for each of the constraint as well a normalised representation of the remaining time-steps before reaching the time limit condition, leading to a total dimensionality of 53 for the observation vector. The action space is composed of 5 continuous actions (clamped between -1 and 1) which represent its XZ velocity and Y-angular velocity, a jump action (jump is triggered when the agent outputs a value above 0 for that dimensionality) and a recharge action (also with threshold of 0). The reward function is simply 1 when the agent reaches the goal (causing termination), 0 otherwise, and augmented with a small shaping reward function \cite{ng1999policy} based on whether the agent got closer or further away from the goal location.

\tocless{\subsection{Hyperparameters}}{}

Most of the hyperparameters are the same as in the original unconstrained Soft Actor-Critic (SAC) \cite{haarnoja2018soft}. Some additional hyperparameters emerge from the constraint enforcement aspect of our version of SAC-Lagrangian and are described in the Algorithm section above. We use the Adam optimizer \cite{kingma2014adam} for all parameter updates (policy, critics and Lagrange multipliers). For all experiments taking place in the Arena Environment, the policy is parameterized as a a two layer neural networks that outputs the parameters of a Gaussian distribution with a diagonal covariance matrix. The hidden layers are composed of 256 units and followed by a $tanh$ activation function. The first hidden layer also uses layer-normalisation before the application of the $tanh$ function. We use $K + 1$ fully independent critic models to estimate the expected discount sum of each of the constraint and of the main reward function. The critic models are also parameterized with two-hidden-layers neural networks with the same size for the hidden layers as the policy but instead followed by $relu$ activation functions. Table \ref{table:hyperparams_arena} shows the hyperparameters used in our experiments conducted in the Arena environment.

\begin{table}[!hb]
\small
\centering
\begin{sc}
\caption{Hyperparameters for experiments in the Arena Environment.}
\scalebox{0.9}{
\begin{tabular}{llr}
\label{table:hyperparams_arena}
\\ \toprule
% \hspace{5mm} Hyperparameter                                         &Value
% \\ \midrule
\textbf{General} &
\hspace{5mm} Discount factor $\gamma$                          &{0.9} \\
&\hspace{5mm} Number of random exploration steps     &{10000} \\\vspace{1.2mm}
&\hspace{5mm} Number of buffer warmup steps          &{2560} \\
\textbf{SAC Agent} &
\hspace{5mm} Learning rate $\beta$                    &{0.0003} \\
&\hspace{5mm} Transitions between updates $M_\theta$              &{200} \\
&\hspace{5mm} Batch size $N_\theta$                    &{256} \\
&\hspace{5mm} Replay buffer size                       &{1,000,000} \\
&\hspace{5mm} Initial entropy coefficient $\alpha$             &{0.02} \\\vspace{1.2mm}
&\hspace{5mm} Target networks soft-update coefficient $\tau$             &{0.005} \\
\textbf{Lagrange Multipliers} &
\hspace{5mm} Learning rate $\beta$                    &{0.03} \\
&\hspace{5mm} Initial multiplier parameters value $z_k$                    &{0.02} \\
&\hspace{5mm} Transitions between updates $M_\lambda$               &{2000} \\\vspace{1.2mm}
&\hspace{5mm} Batch size $N_\lambda$                    &{2000} \\
\textbf{Constraint Thresholds} &
\hspace{5mm} Has reached goal (lower-bound)                       &0.99 \\
&\hspace{5mm} \textbf{NOT} looking at marker                       &0.10 \\
&\hspace{5mm} \textbf{NOT} on ground                               &0.40 \\
&\hspace{5mm} In lava                                              &0.01 \\
&\hspace{5mm} Above speed limit                                    &0.01 \\
&\hspace{5mm} Is under the minimum energy level                    &0.01 \\
\bottomrule
\end{tabular}
}
\end{sc}
\end{table}

%  ----------
%  SECTION
% -----------
\clearpage
\tocless{\section{Details for experiments in the OpenWorld environment}}{\label{sec:DBS:openworld_env_details}}

\tocless{\subsection{Environment details}}{}

% \textcolor{red}{Describe action space, observation space and main reward function.}
% \textcolor{red}{Describe SmartNav machinery (can cite the smartnav paper) about the raycast.}

The OpenWorld environment is a large environment (approximately $30,000$ times larger than the agent) that includes multiple multi-storey buildings with staircases, mountains, tunnels, natural bridges and lava.
In addition, the environment includes $50$ jump-pads that propel the agent into the air when it steps on one of them.
The agent is tasked with navigating towards a goal randomly placed in the environment at the beginning of every episode.
The agent controls include translation in the XY frame ($2$ inputs), a jumping action ($1$ input), a rotation action controlling where the agent is looking independent of its direction of travel ($1$ input), and a recharging action which allows the agent to recharge its energy level ($1$ input).
The recharging action immobilizes the agent, i.e., it does not allow the agent to progress towards its goal.
The environment also includes a look-at marker which we would like the agent to look at while it accomplishes its main navigation task.

At every timestep, the agent receives as observations its XYZ position relative to the goal as well as its normalized velocity and acceleration in the environment.
In addition, it receives its relative position to the nearest jump-pad in the environment.
For looking at the marker, as in the Arena environment, the agent receives the marker's relative XZ position and distance, the normalised angle between the agent's looking direction and the marker, as well as an indicator for whether the marker is within its field of view (a fixed-angle cone in front of the agent).
For the energy-limit constraint, the agent obtains the value of its energy level, a boolean describing if it is currently recharging and a Boolean indicating if it was recharging in the previous timestep.
The agent also receives a series of indicators denoting whether it is currently standing in lava, if it is touching the ground, and if the agent is currently below the minimum energy level.
In order for the agent to observe lava and other elements it can collide with in the environment (e.g., buildings, doors, mountains), the agent receives 2 channels of $8\times8$ raycasts around the agent.

\tocless{\subsection{Hyperparameters}}{}

The SAC agent in the OpenWorld environment uses the same architecture and similar hyperparameters as in \cite{alonso2020deep}.
The raycasts and raw state described above are processed using two separate embedding models.
For the raycasts, we employ a CNN with 3 convolutional layers, each with a corresponding ReLU layer.
The raw state is processed using a separate 3-layer MLP with $1024$ hidden units at each layer.
The two representations are concatenated into a single vector representing the current state.
The policy is parameterized by a 3-layer MLP that receives as input the concatenated representation and outputs the parameters of a Gaussian distribution with a diagonal covariance matrix. 
Each hidden layer is composed of $1024$ hidden units and is followed by a ReLU activation function.
The critic models are also parameterized by 3-layer MLP, are composed of $1024$ hidden units and use ReLU activation functions.
Table \ref{table:hyperparams_openWorld} shows some of these hyperparameters with a focus on the constrained enforcement aspect of our version of SAC-Lagrangian.

\begin{table}[h!]
\small
\centering
\begin{sc}
\caption{Hyperparameters for experiments in the OpenWorld Environment.}
\scalebox{0.9}{
\begin{tabular}{llr}
\label{table:hyperparams_openWorld}
\\ \toprule
% \hspace{3mm} Hyperparameter                                         &Value
% \\ \midrule
\textbf{General} &
\hspace{5mm} Discount factor $\gamma$                          &{0.99} \\
&\hspace{5mm} Number of random exploration steps $\beta$     &{200} \\\vspace{1.2mm}
&\hspace{5mm} Number of buffer warmup steps $\beta$          &{2560} \\
\textbf{SAC Agent} &
\hspace{5mm} Learning rate $\beta$                    &{0.0001} \\
&\hspace{5mm} Batch size $N_\theta$                    &{2560} \\
&\hspace{5mm} Replay buffer size                       &{4,000,000} \\
&\hspace{5mm} Initial entropy coefficient $\alpha$             &{0.005} \\\vspace{1.2mm}
&\hspace{5mm} Target networks soft-update coefficient $\tau$             &{0.005} \\ % polyak value is 0.995
\textbf{Lagrange Multipliers} &
\hspace{5mm} Learning rate $\beta$                    &{0.00005} \\
&\hspace{5mm} Initial multiplier parameters value $z_k$                    &{0.02} \\
&\hspace{5mm} Transitions between updates                & every timestep \\\vspace{1.2mm}
&\hspace{5mm} Batch size $N_\lambda$                    &{5000} \\
\textbf{Constraint Thresholds} &
\hspace{5mm} Has reached goal (lower-bound)                 &0.80 \\
&\hspace{5mm} \textbf{NOT} looking at marker                 &0.10 \\
&\hspace{5mm} \textbf{NOT} on ground                         &0.40 \\
&\hspace{5mm} In lava                                        &0.001 \\
&\hspace{5mm} Is under the minimum energy level              &0.01 \\
\bottomrule
\end{tabular}
}
\end{sc}
\end{table}

% ----------
%  SECTION
% ----------

\clearpage
\tocless{\section{Additional experiments on reward engineering}}{\label{sec:DBS:additional_experiments}}

See Section~\ref{sec:DBS:problem_with_reward_engineering} for the description of our experiments motivating against the use of reward engineering for behavior specification. Figure~\ref{fig:reward_engineering_3constraints} below shows the results for the biggest of the 3 grid searches performed to showcase the difficulty of finding a reward function that fits the behavioral requirements when the number of requirements grows.
\begin{figure}[b!]
    \centering
    \includegraphics[width=0.8\textwidth]{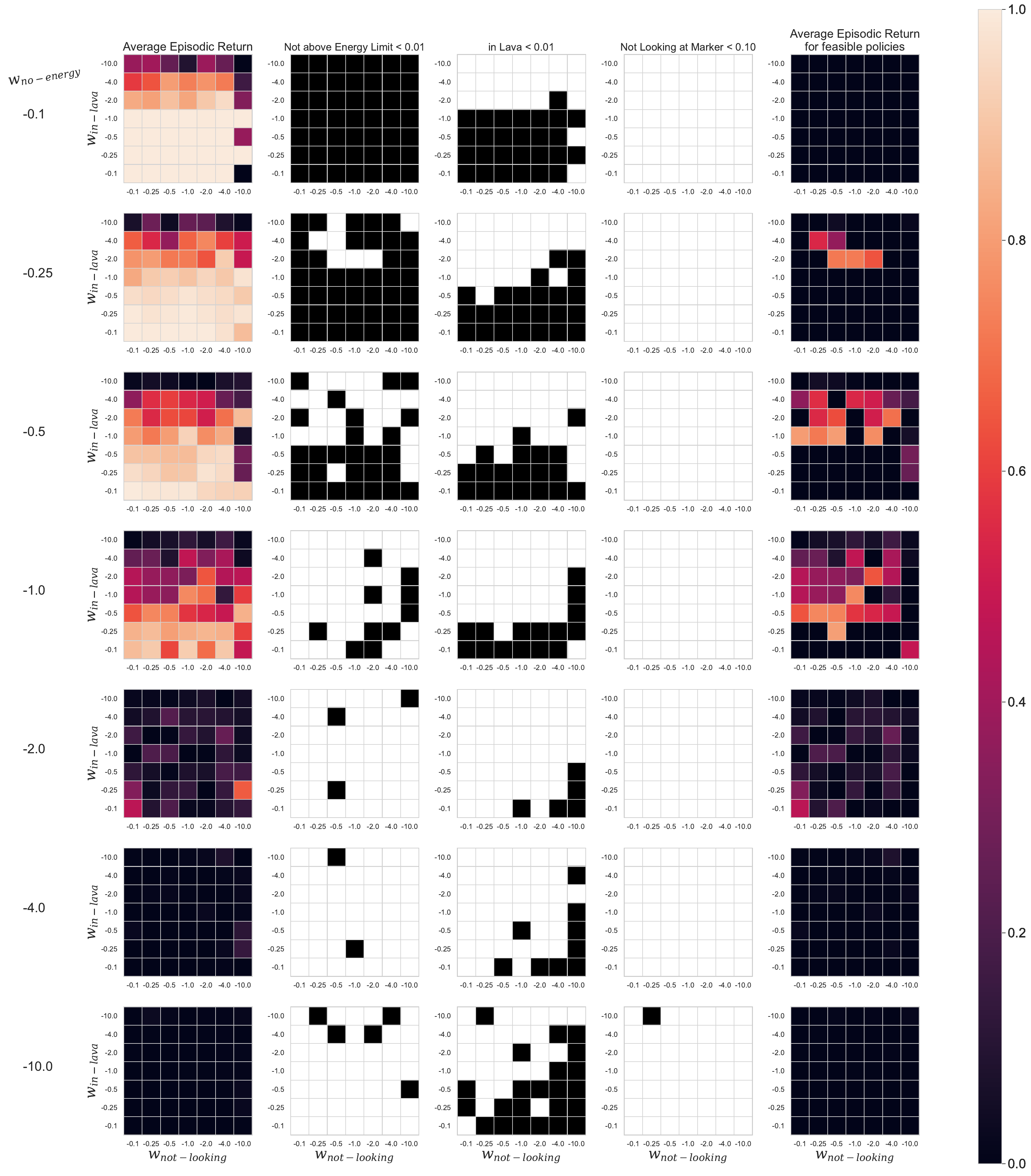}
    \caption[Reward engineering for 3 behavioral requirements]{Also see Figure~\ref{fig:reward_engineering}. When enforcing 3 behavioral requirements with reward engineering, an ever larger proportion of the experiments are wasted finding either low-performing policies or policies that do not satisfy the behavioral constraints. In this case, none of the 343 experiments yielded a feasible policy that also solves the task (success rate near 1.0), showcasing that reward engineering scales poorly with the number of constraints due to the curse of dimensionality and to the composing effect of the multiple constraints in narrowing the space of feasible policies.}
    \label{fig:reward_engineering_3constraints}
    \vspace{-10mm}
\end{figure}

% ----------
%  SECTION
% ----------

\clearpage
\tocless{\section{Additional experiments on TD3}}{\label{sec:DBS:additional_experiments_td3}}

We validate that our framework can be combined with any policy optimisation algorithm by applying it to the TD3 algorithm \cite{fujimoto2018addressing}. This leads to a TD3-Lagrangian formulation using our indicator cost functions, normalized multipliers and bootstrap constraint. As for our experiments with SAC (Figure~\ref{fig:many_constraints_experiments}-d), our TD3-Lagrangian agent performs well and all constraints are satisfied. The results are presented in Figure~\ref{fig:td3_all_constraints}.

\begin{figure}[h!b]
    \centering
    \includegraphics[trim={3cm 0 0 0},clip,width=\textwidth]{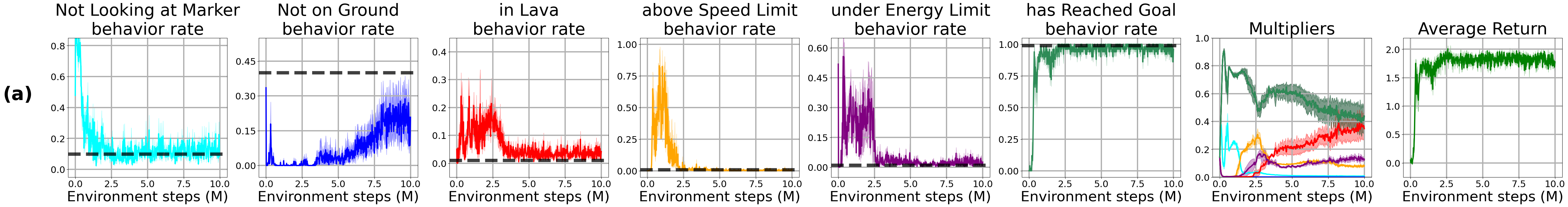}
    \caption[TD3-Lagrangian agent in the Arena environment]{TD3-Lagrangian agent in the Arena environment using normalised multipliers, indicator cost functions and using the success constraint as a bootstrap constraint. Training is halted after every $20,000$ environment steps and the agent is evaluated for 10 episodes. All curves show the average over 5 seeds and envelopes show the standard error around that mean.}
    \label{fig:td3_all_constraints}
\end{figure}

% ---------------------------------------------
% ---------------------------------------------
% ----- APPENDIX 
% ---------------------------------------------
% ---------------------------------------------
\Annexe{Supplementary Material for Chapter~\ref{chap:article4_gcgfn}}
% \addcontentsline{toc}{compteur}{APPENDIX D\ \ \ \ SUPPLEMENTARY MATERIAL FOR CHAPTER~\ref{chap:article4_gcgfn}}

\tocless{\section{Task and Training Details}}{\label{app:GCGFN:training_details}}

We use the GFlowNet framework \cite{bengio2021flow, bengio2023gflownet} to train discrete distribution samplers over the space of molecules that can be assembled from a set of pre-defined molecular fragments \cite{kumar2012fragment}. A state is represented as a graph in which each node represents a fragment from the fragment library and where each edge has two attributes representing the attachment point of each connected fragment to its neighbor. The state representation is augmented with a fully-connected virtual node, whose features are an embedding of the conditioning information computed from the conditioning vector that represents the preferences $w$ and/or the goal direction $d_g$. To produce the state-conditional distribution over actions, the model processes the state using a graph transformer architecture \cite{yun2019graph} for a predefined number of message-passing steps (number of layers). Our GFlowNet sampler thus starts from the initial state $s_0$ representing an empty graph. It iteratively constructs a molecule by either adding a node or an edge to the current state $s_t$ until it eventually selects the `STOP' action. 

To maintain some amount of exploration throughout training, at each construction step $t$, the model samples a random action with probability $\epsilon$ and otherwise samples from its forward transition distribution. The model is trained using the trajectory balance criterion \cite{malkin2022trajectory} and thus is parameterised by a forward action distribution $P_F$ and an estimation of the partition function $Z:= \sum_x R(x)$. Forbidden actions are masked out from the forward transition distribution (for example, the action of adding an edge to the empty state). We use a uniform distribution for the backward policy $P_B$. To prevent the sampling distribution from changing too abruptly, we collect new trajectories from a sampling model $P_F(\, \cdot \,|\theta_{\text{sampling}})$ which uses a soft update with hyperparameter $\tau$ to track the learned GFN at update $k$: $\theta_{\text{sampling}}^{(k)} \leftarrow \tau \cdot \theta_{\text{sampling}}^{(k-1)} + (1 - \tau) \cdot \theta^{(k)}$. This is akin to the target Q-functions and target policies used in actor-critic frameworks \cite{mnih2015human, fujimoto2018addressing}. 

The hyperparameters used for training both methods are listed in Table~\ref{tab:hyperparameters}.

\begin{table}[ht!]
    \centering
    \caption{Hyperparameters used in our conditional-GFN training pipeline}
    \scriptsize
    \renewcommand{\arraystretch}{1.3}
    \scalebox{0.8}{
    \begin{tabular}{|l|c|c|}
        \hline
        \multirow{2}{*}{\textbf{Hyperparameters}} & \multicolumn{2}{c|}{\textbf{Values}} \\ 
        \cline{2-3} & \textbf{Goal-conditioned GFN} & \textbf{Preference-conditioned GFN} \\ 
        \hline
        Batch size & 64 & 64 \\
        GFN temperature parameter $\beta$ & 60 & 60 \\
        Number of training steps & 40,000 & 40,000 \\
        Number of GNN layers & 2 & 2 \\
        GNN node embedding size & 256 & 256 \\
        Learning rate for GFN's $P_F$ & $10^{-4}$ & $10^{-4}$ \\
        Learning rate for GFN's $Z$-estimator & $10^{-3}$ & $10^{-3}$ \\
        Sampling moving average $\tau$ & 0.95 & 0.95 \\
        Random action probability $\epsilon$ & 0.01 & 0.01 \\
        Focus region cosine similarity threshold $c_g$ & 0.98 & - \\
        Limit reward coefficient $m_g$ & 0.20 & - \\
        Replay buffer length & 100,000 & - \\
        Number of replay buffer trajectory warmups & 1,000 & - \\
        Hindsight ratio & 0.30 & - \\
        Conditioning-vector sampling distribution &
        $d_g \sim 
        \begin{cases}
        \text{Uniform-GS} &\text{ (Sec~\ref{sec:GCGFN:difficult_landscapes})} \\
        \text{Tab-GS} &\text{ (Sec~\ref{sec:GCGFN:increasing_number_of_objectives})}
        \end{cases}$ & $w \sim Dirichlet(1)$ \\
        \rule{0pt}{1pt} & & \\
        \hline
        \end{tabular}
        }
        \label{tab:hyperparameters}
\end{table}

% \clearpage
\tocless{\section{Failure Modes and Filtering}}{}

While using goal regions as hard constraints offers a more precise tool for controllable generation, it faces the additional challenge that not all goals may be feasible (or that reaching some goals may be much easier to learn than others). When a model is conditioned with an infeasible goal, all the samples that it will observe will have a reward $R(x)=0$. The proper behavior, in that case, is to sample any possible molecule with equal weight, thus sampling uniformly across the entire molecular state space. Such molecules generally won't be of any interest and can be discarded. Thus, in our experiments, we filter out such \textit{out-of-focus} samples (molecules falling outside the focus region) and evaluate the candidates that were inside their prescribed focus region. Figure~\ref{fig:goal_conditioning} shows the conditional distributions learned by a single model trained on the 2-objective task. The picture on the last row, second column showcases such an occurrence of difficult focus region which results in many samples simply belonging to the uniform distribution over the state space.
\begin{figure}[ht]
    \centering
    \includegraphics[width=0.65\textwidth]{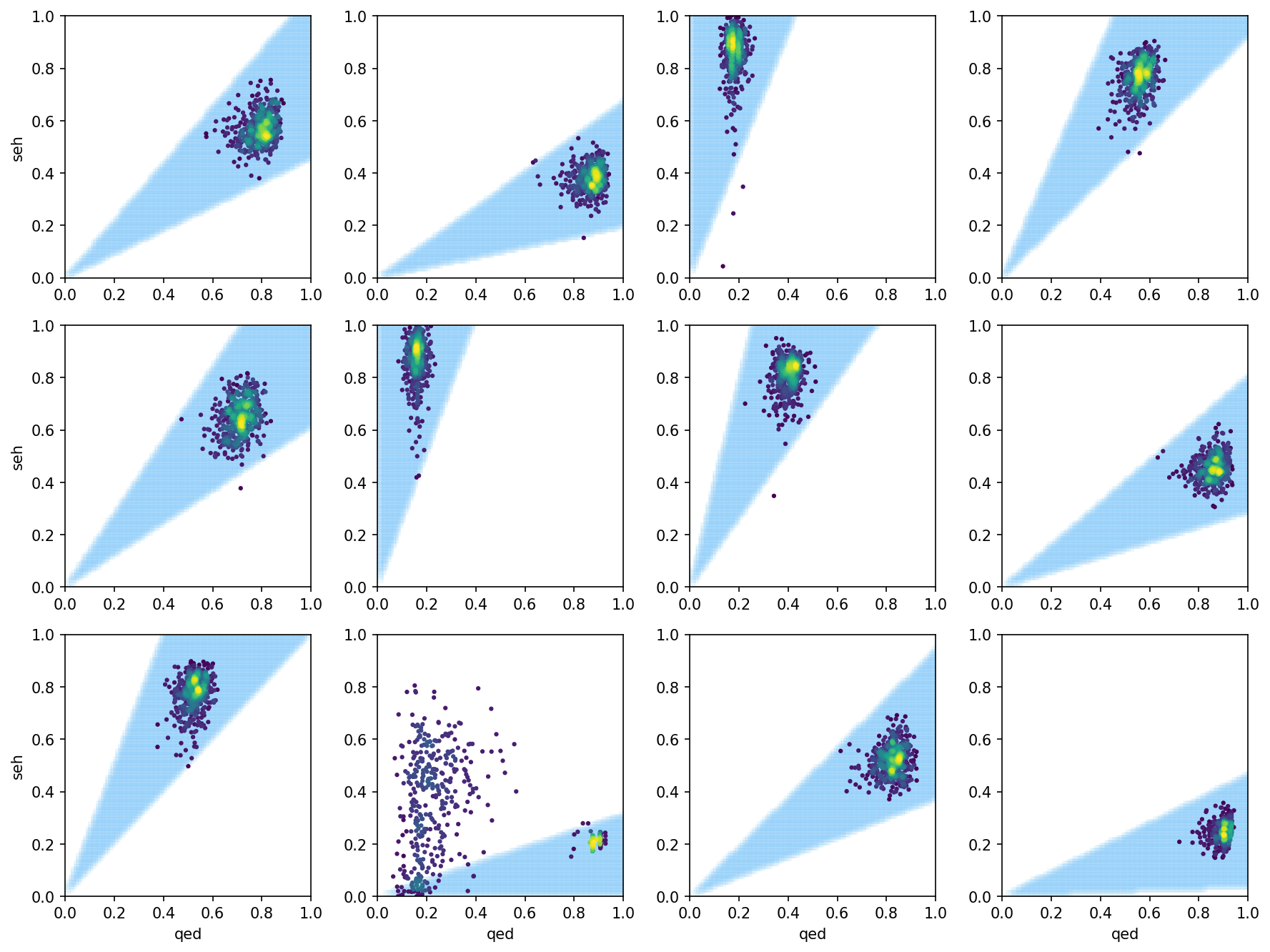}
    \caption[Learned conditional-distributions for different focus regions]{Learned conditional-distributions for different focus regions passed as input to the same model. Each dot marks the image of a generated molecule in the objective space. The colors indicate how densely populated a particular area of the objective space is (brighter is denser). The focus regions (goal regions) are depicted in light blue. The distribution on the last row, second column, showcases a focus region which seems difficult to reach and may not contain as large a population of molecules in the state space. In such cases, the model cannot learn to consistently produce samples from that goal region when conditioned on this goal direction $d_g$ and will instead produce several samples very similar to the sampling distribution of an untrained model (uniform across the state space).}
    \label{fig:goal_conditioning}
\end{figure}

% \clearpage
\tocless{\section{Ablations}}{\label{app:GCGFN:ablations}}
\tocless{\subsection{Replay Buffer}}{\label{app:GCGFN:replay_buffer}}

While both the un-conditional and the preference-conditioned GFN models are learning stably even in a purely on-policy setting, we found that the goal-conditioned models were more prone to instabilities and mode-collapse when employed purely on-policy (see Figure~\ref{fig:replay_buffer_effect}). This could be because imposing these hard constraints on the generative behavior of the model drastically changes the reward landscape from one set of goals to another. While larger batches could potentially alleviate this problem, sampling uniformly from a replay buffer of the last trajectories proved effective, as observed in many works stemming from \citet{mnih2015human}. 
As described in Section~\ref{sec:GCGFN:goal-conditioned-gfn}, we also use hindsight experience replay \citep{andrychowicz2017hindsight}. Specifically, for every batch of data, we randomly select a subset of trajectories (hindsight-ratio * batch-size), among which we re-label both the goal direction $d_g$ and the corresponding reward for the examples that didn't reach their goal. 

\begin{figure}[h!]
    \centering
    \includegraphics[width=1.\textwidth]{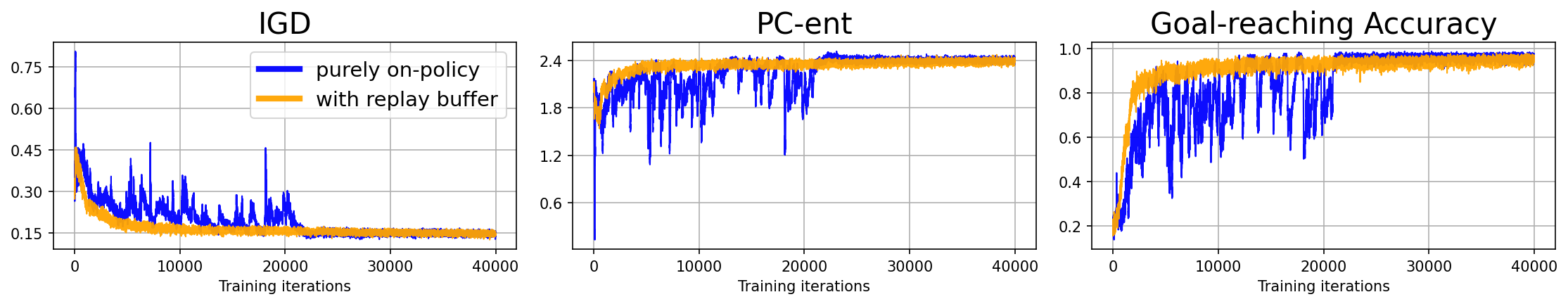}
    \caption[Effect of the replay buffer on goal-conditioned GFNs]{Learning curves for goal-conditioned models either trained purely on-policy (in blue) or using a replay buffer of past trajectories (in orange) on the 2-objective (seh, qed) task.}
    \label{fig:replay_buffer_effect}
\end{figure}

% \clearpage
\tocless{\subsection{Limit Reward Coefficient}}{\label{app:GCGFN:focus_limit_coef}}

While the GFN model is given the goal direction $d_g$ as input, the width of the goal region, which depends on the cosine-similarity threshold $c_g$ is fixed, and the model adapts to producing samples within the region over time by trial-and-error. One can trade off the level of controllability of the goal-conditioned model with the difficulty of reaching those goals by increasing or reducing $c_g$. Another way to increase the controllability \textit{and} goal-reaching accuracy without drastically affecting the difficulty of reaching such goals is to make the model preferentially generate samples near the center of the focus region, thus reducing the risk of producing an out-of-focus sample due to epistemic uncertainty. To do so, we modify Equation~\ref{eq:goal-reward} and add a reward-coefficient $\alpha_g$, which further modulates the magnitude of the scalar reward based on how close to the center of the focus region the sample was generated. While many shaping functions could be devised, we choose the following form:
\begin{equation}
\label{eq:goal-reward-extended}
    R_g(x) = 
    \begin{cases}
    \alpha_g \sum_k r_k ,& \text{if } r \in g\\
    0, & \text{otherwise}
    \end{cases}
    \quad,\qquad \quad \alpha_g = \left( \frac{r \cdot d_g}{||r||\cdot||d_g||} \right) ^{\frac{\log m_g}{\log c_g}}
\end{equation}

\begin{figure}[hb!]
    \centering
    \includegraphics[width=1.\textwidth]{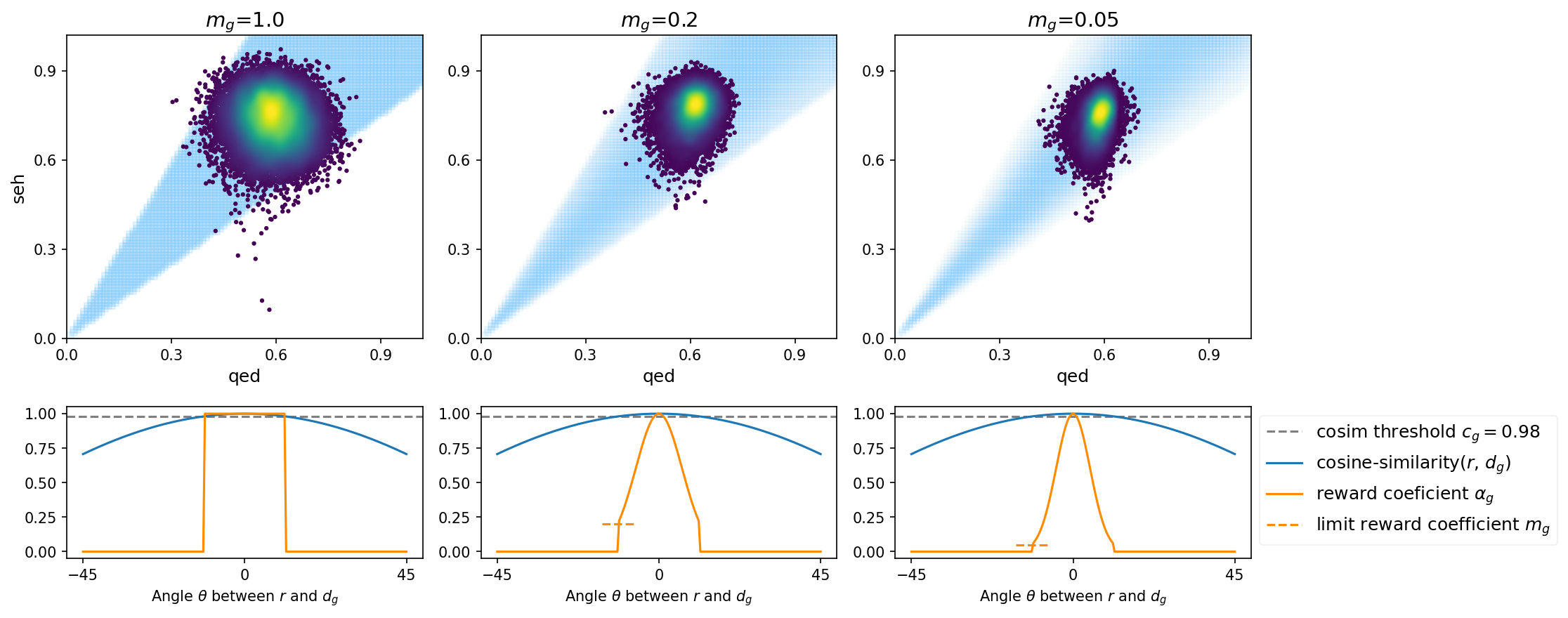}
    \caption[Effect of the hyperparameter $m_g$ on the reward profile]{Effect of the hyperparameter $m_g$ on the profile of the reward coefficient $\alpha_g$ and the learned sampling distribution (top row).}
    \label{fig:focus_limit_coef}
\end{figure}

In words, the reward coefficient $\alpha_g$ is equal to the cosine similarity between the reward vector $r$ and the goal direction $d_g$ exponentiated in such a way that $\alpha_g=m_g$ at the limit of the focus region. So for example, setting $m_g=0.2$ means that the reward is maximal at the center of the focus region, is at 20\% of that magnitude at the limit of the focus region, and follows a sharp sigmoid-like profile in between. Figure~\ref{fig:focus_limit_coef} showcases the reward coefficient as a function of the angle between $r$ and $d_g$ for different values of $m_g$ and the corresponding distributions learned by the model. We can see that a smaller value of $m_g$ encourages the model to produce samples in a more focused way towards the center of the goal region. Importantly, with a large enough value of $m_g$, this design preserves the notion of a well-defined goal region (positive reward inside the region and zero reward outside) and thus also preserves our ability to reason about goal-reaching accuracy, a beneficial concept for monitoring the model, filtering out-of-focus samples, etc.

% \clearpage
\tocless{\subsection{Tabular Goal-Sampler}}{\label{app:GCGFN:learned_goal_model}}

To cope with the problem of infeasible goal regions described in Section~\ref{sec:GCGFN:learned_goal_distribution}, we explore the idea of sampling the goal directions $d_g$ from a learned goal distribution rather than sampling all directions uniformly. The idea is that, as the model learns about which goal directions point towards infeasible regions of the objective space, we can attribute a much lower sampling likelihood to these regions in order to focus on more fruitful goals. 

We implement a first version of this idea as a tabular goal-sampler (Tab-GS). We first build a dataset of goal directions $\mathcal{D}_G$. This could be done in many different ways such as sampling a large number of positive vectors at the surface of the unit hypersphere in objective space. In our case, we discretise the extreme faces of the unit hypercube and normalize them. At training time, for each direction vector $d_g \in \mathcal{D}_G$, we keep a count of the number of samples which have landed closest to it (closer than any other direction $d_g'$) and follow this very simple scheme: from the beginning up to 25\% of the training iterations, we sample batches of goal directions $\{d_g\}_{i=1}^N$ uniformly over $\mathcal{D}_G$. Then starting at 25\% of the training iterations, while we keep updating each direction's count, we sample batches of goal directions according to the following (unnormalized) likelihoods:

\begin{equation}
\label{eq:goal-sampling-likelihood}
    f(d_g) = 
    \begin{cases}
    1 & \text{if } d_g \text{ has never been sampled}\\
    1 & \text{if there has been a sample } r \text{ closer to } d_g \text{ than any other goal direction in } \mathcal{D}_G\\
    0.1 & \text{otherwise}
    \end{cases}
\end{equation}

Finally, at 75\% of the training, we stop updating the goal direction counts to allow the model to fine-tune itself to a now stationary goal-distribution Tab-GS($f$). At test time we also sample from that same stationary distribution. 

Figure~\ref{fig:effect_of_tabgs} shows the effect of our learned tabular goal-sampler (Tab-GS) on the model's performance and learning dynamics. While the 2-objective problem does not contain a lot of infeasible goal directions, resulting in very similar behaviors for both methods, we can see that in the case of 3 and 4 objectives, the model experiences an important immediate improvement in goal-reaching accuracy at 25\% of training when we start sampling $d_g$'s according to our learned goal-sampler and that this improved focus helps the model further improves on these more fruitful goal directions, resulting in an increase IGD and PC-ent scores.

\begin{figure}[ht!]
    \centering
    \includegraphics[width=1.\textwidth]{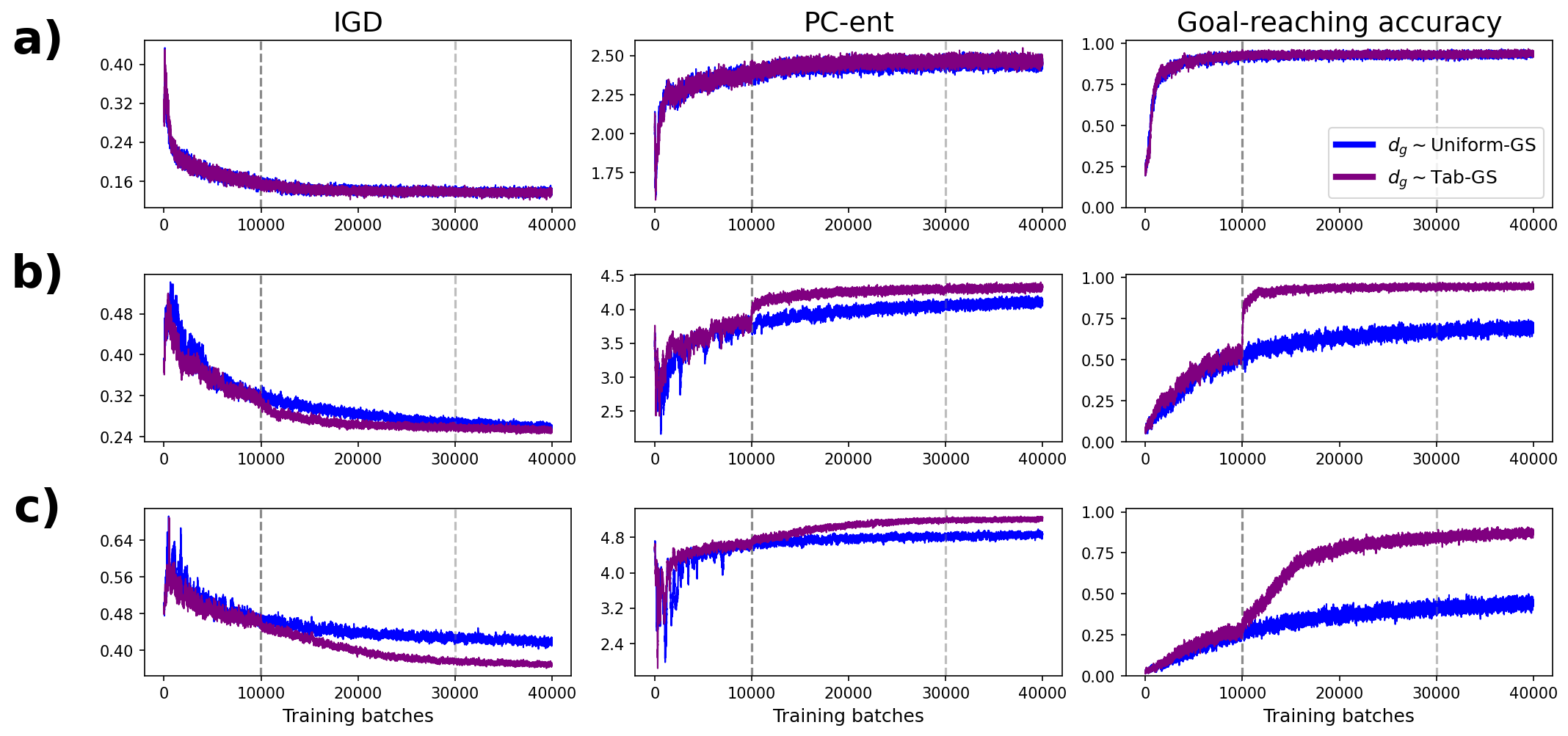}
    \caption[Learning curves with and without a tabular goal sampler]{Learning curves for our goal-conditioned model trained by sampling goal directions $d_g$ either uniformly on the positive quadrant of a $K$-dimensional hypersphere (Uniform-GS) in blue or according to our learned tabular goal-sampler (Tab-GS) in purple on \textbf{a)} 2 objectives, \textbf{b)} 3 objectives and \textbf{c)} 4 objectives. Vertical dotted lines indicate 25\% and 75\% of training when we start sampling goal directions according to Equation~\ref{eq:goal-sampling-likelihood} and when we stop updating the learned goal-sampler, respectively.}
    \label{fig:effect_of_tabgs}
\end{figure}

\clearpage
\tocless{\section{Additional Results}}{\label{app:GCGFN:additional_results}}

In this section, we present additional plots for experiments on 2, 3 and 4 objectives (Figures~\ref{fig:all_plots_2_objectives}, \ref{fig:all_plots_3_objectives} \& \ref{fig:all_plots_4_objectives}).

\begin{figure}[ht!]
    \centering
    \begin{minipage}{.5\textwidth}
        \centering
        \includegraphics[width=\textwidth]{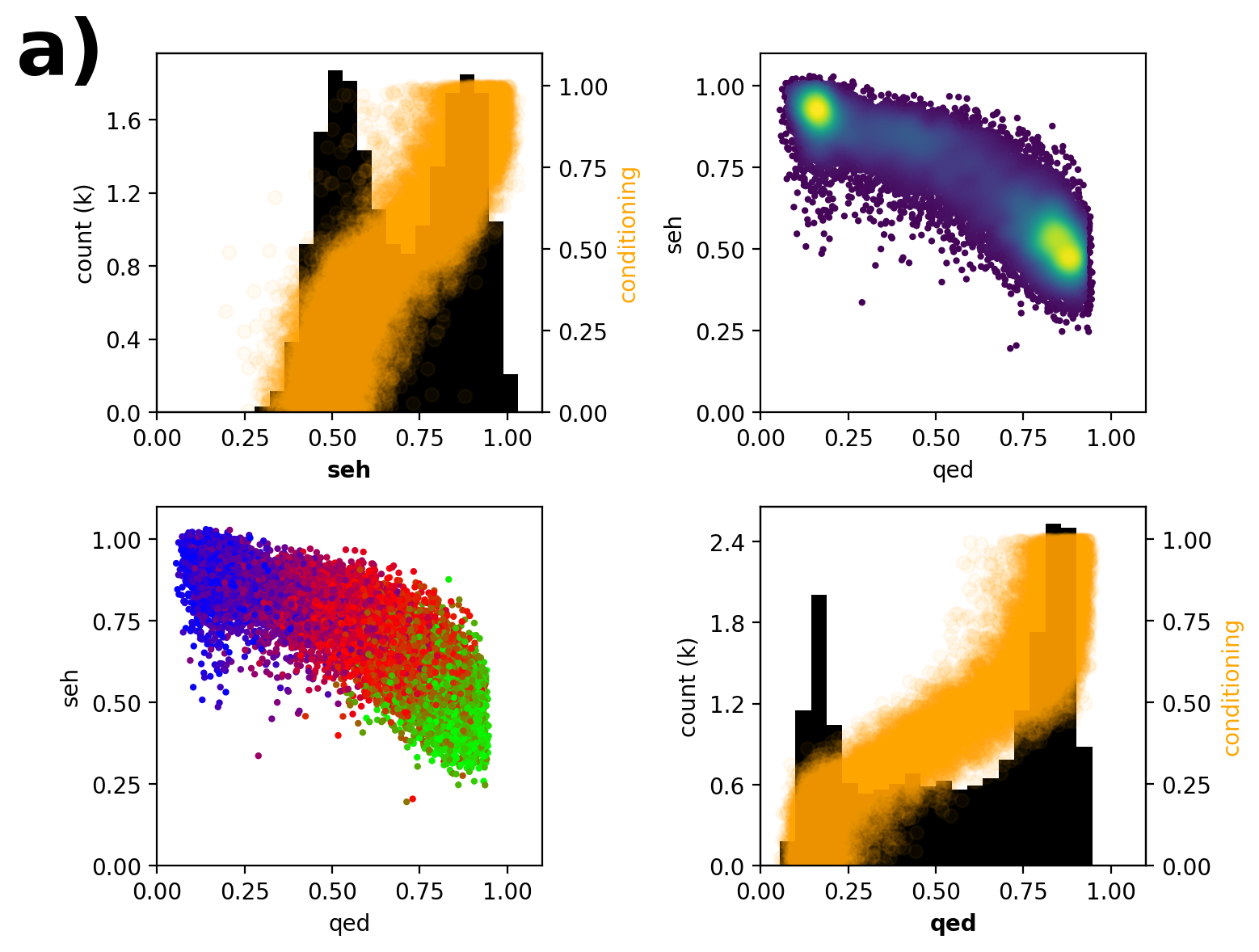}
    \end{minipage}%
    \begin{minipage}{0.5\textwidth}
        \centering
        \includegraphics[width=\textwidth]{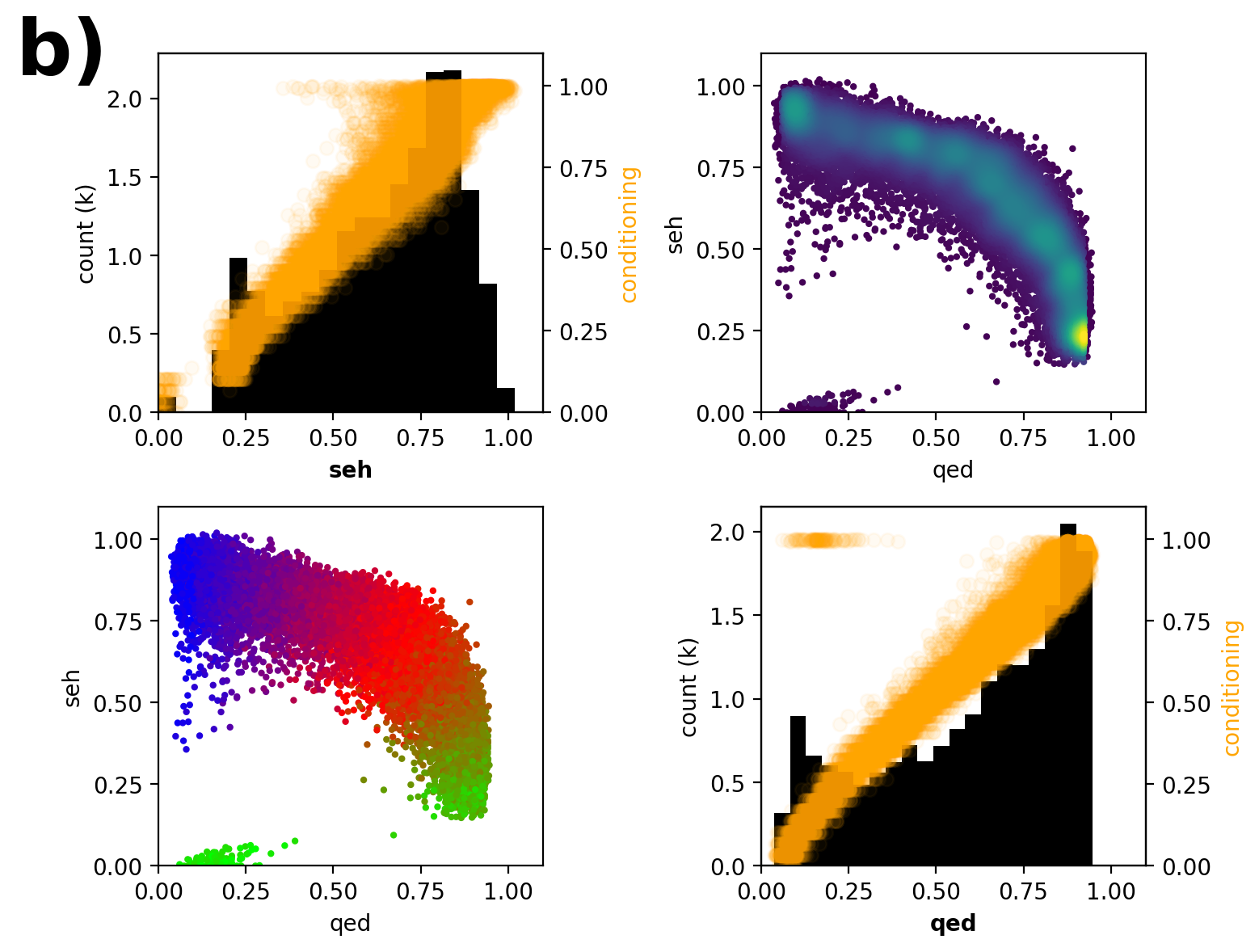}
    \end{minipage}
    \caption[Comparison between conditioning methods for 2 objectives]{Comparison between \textbf{a)} preference-conditioned and \textbf{b)} goal-conditioned models trained on the 2-objective problem (seh, qed). Each panel is an assemblage of $K \times K$ plots where $K$ is the number of objectives. \textbf{On the diagonal}, each plot focuses on a single objective. They each show a histogram (dark) of the samples' scores $r_{\cdot,k}$ for that objective, overlayed with a scatter plot (orange) in which each point is a distinct sample $i$ with coordinates $(x,y)=(r_{i,k},c_{i,k})$, where $r_{i,k}$ is the reward attributed to sample $i$ for objective $k$ and $c_{i,k}$ is the corresponding value of the conditioning vector that was used to generate that sample. The histogram thus showcases the distribution and span of our set of samples for a given dimension in objective space while the scatter plot allows us to visualise the correlation between the conditioning vectors and the resulting rewards for that dimension. \textbf{Above the diagonal}, each plot shows the density of the learned distribution on the plane corresponding to a pair of objectives. Brighter colors indicate that a region is more densely populated. \textbf{Below the diagonal}, each plot shows the controllability of the learned distribution where BRG colors represent the angle between the vector $[1, 0]$ and \textbf{a)} the preference-vector $w$ or \textbf{b)} the goal direction $d_g$ (this is the same as in Figure~\ref{fig:difficult_pareto_fronts_alignment}). Overall, we can see that on the density plot and on the histograms that the goal-conditioned approach produces a more uniformly distributed set of samples while the orange scatter plot and the BRG-colored plots show that they also provide a finer control over the generated samples.}
    \label{fig:all_plots_2_objectives}
\end{figure}
\begin{figure}[ht!]
    \centering
    \begin{minipage}{.5\textwidth}
        \centering
        \includegraphics[width=\textwidth]{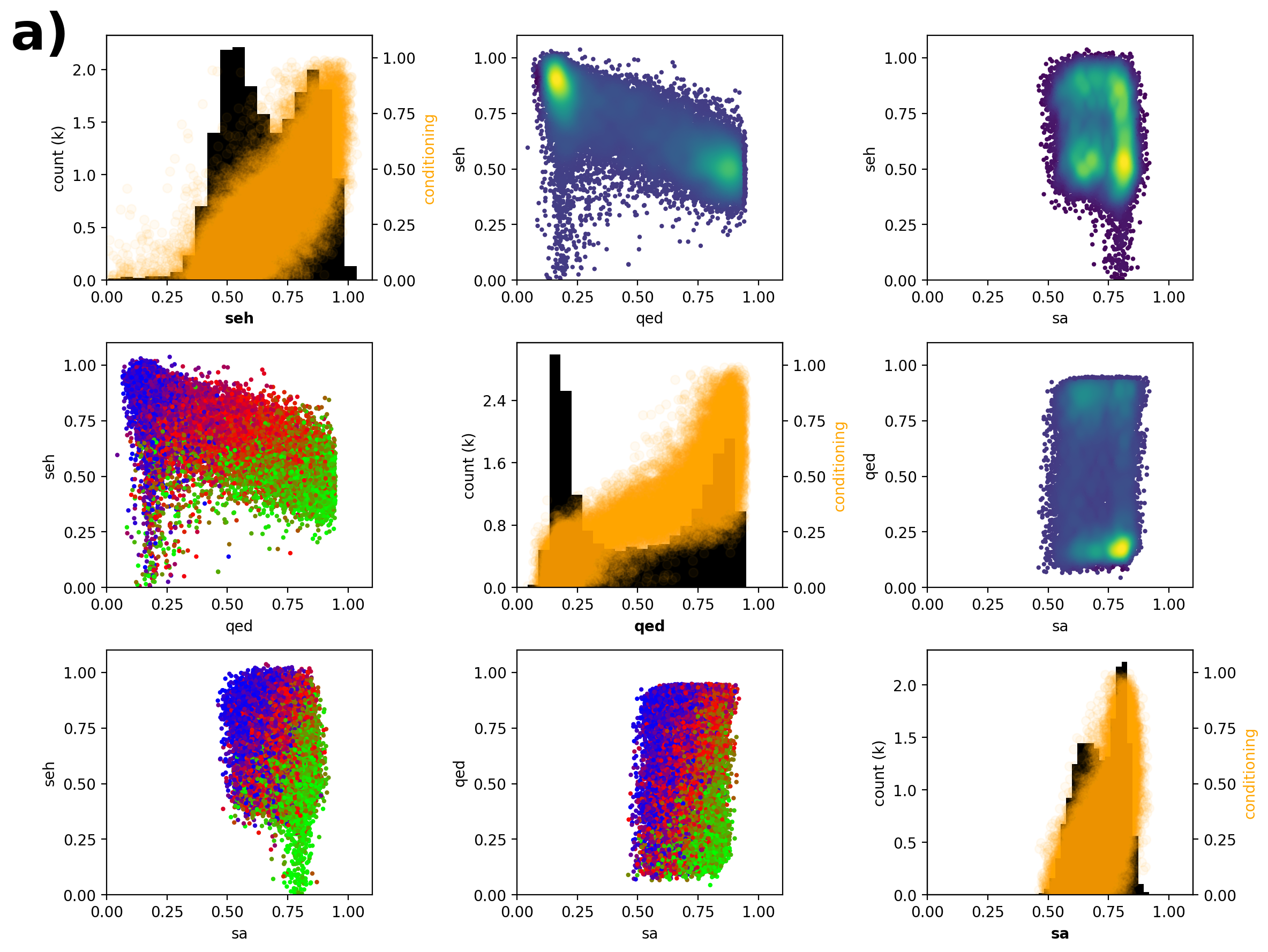}
    \end{minipage}%
    \begin{minipage}{0.5\textwidth}
        \centering
        \includegraphics[width=\textwidth]{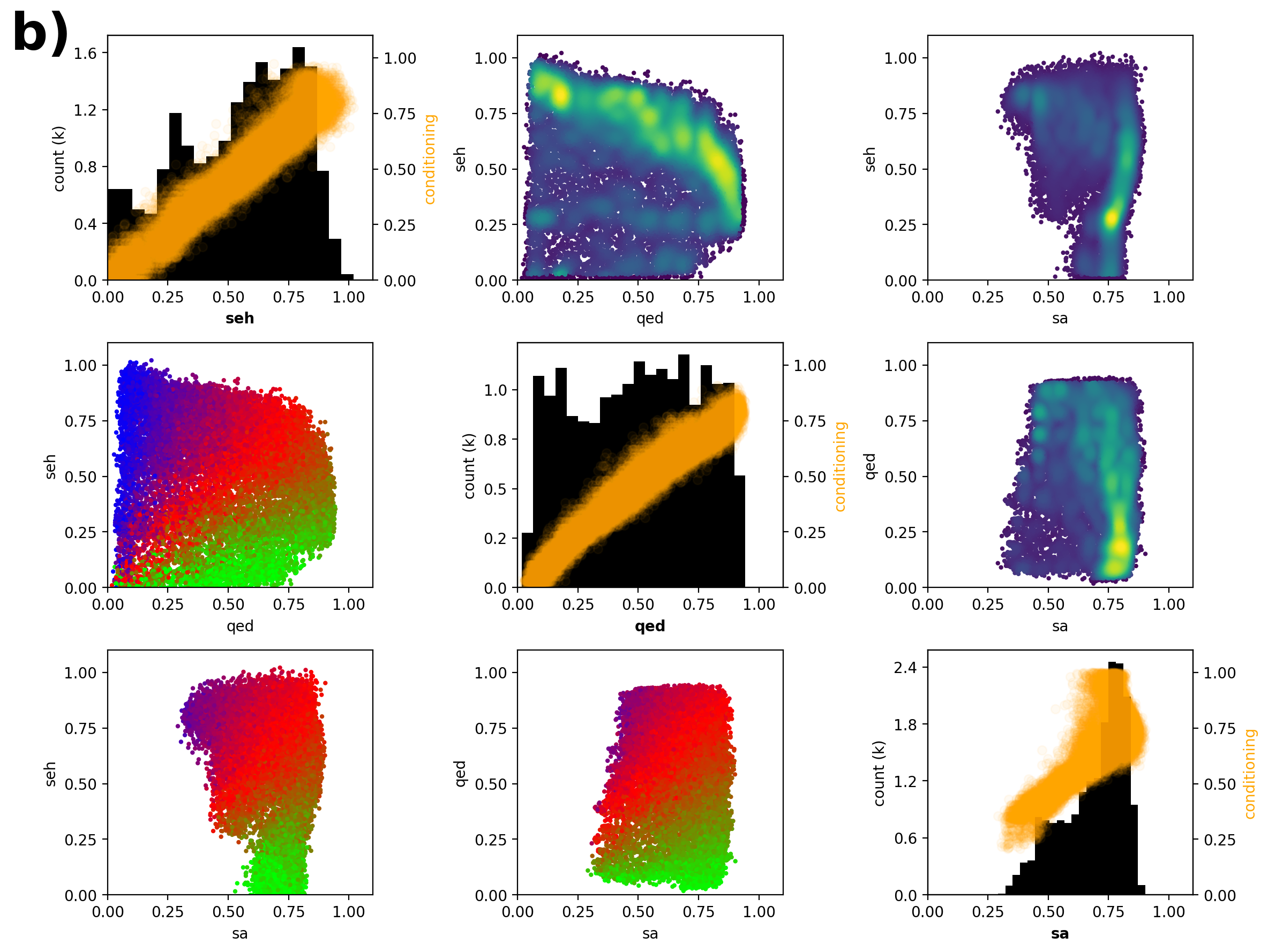}
    \end{minipage}
    \caption[Comparison between conditioning methods for 3 objectives]{Idem to Figure~\ref{fig:all_plots_3_objectives} but with 3 objectives: seh, qed, sa.}
    \label{fig:all_plots_3_objectives}
\end{figure}
\begin{figure}[ht!]
    \centering
    \begin{minipage}{.5\textwidth}
        \centering
        \includegraphics[width=\textwidth]{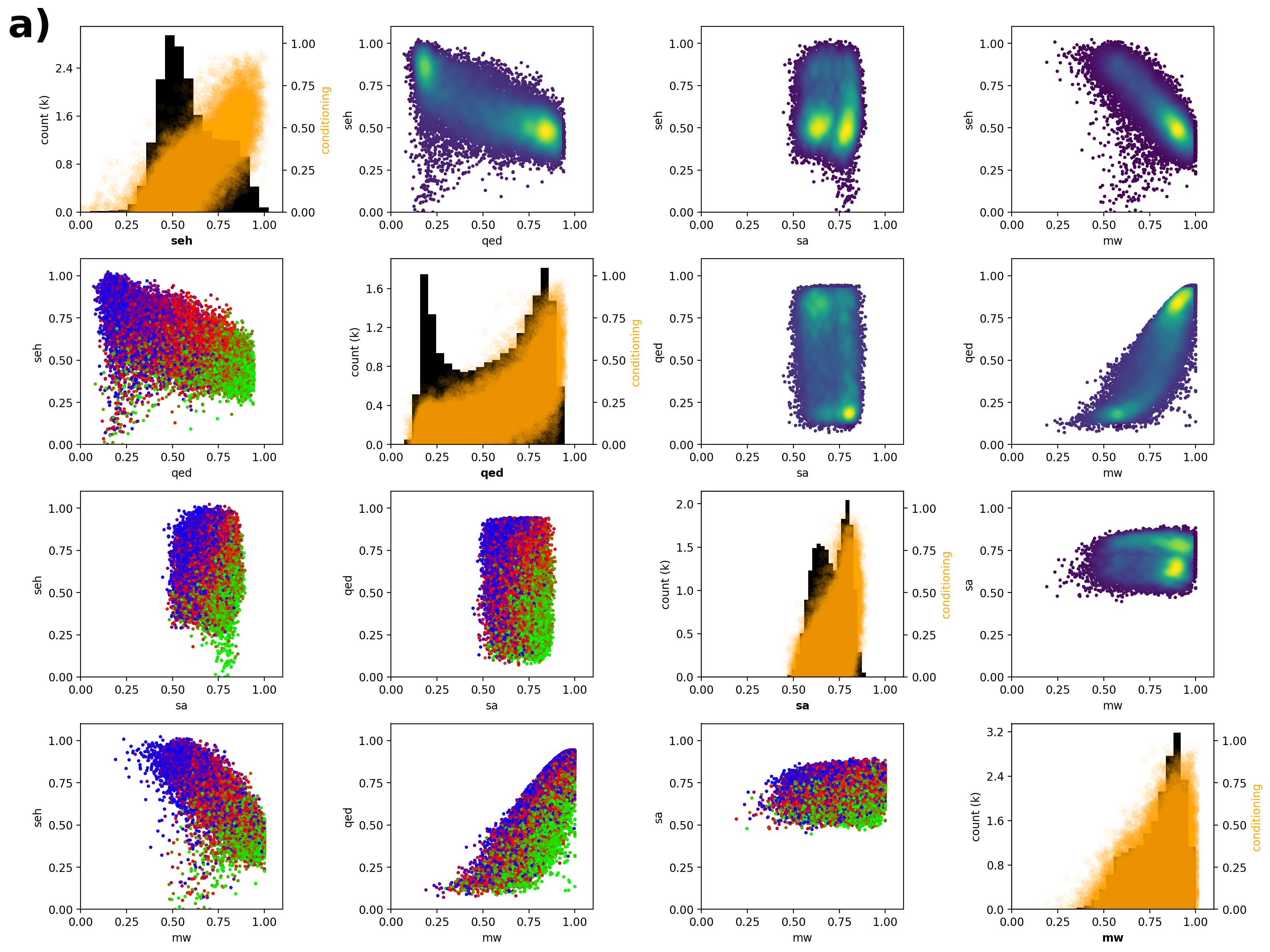}
    \end{minipage}%
    \begin{minipage}{0.5\textwidth}
        \centering
        \includegraphics[width=\textwidth]{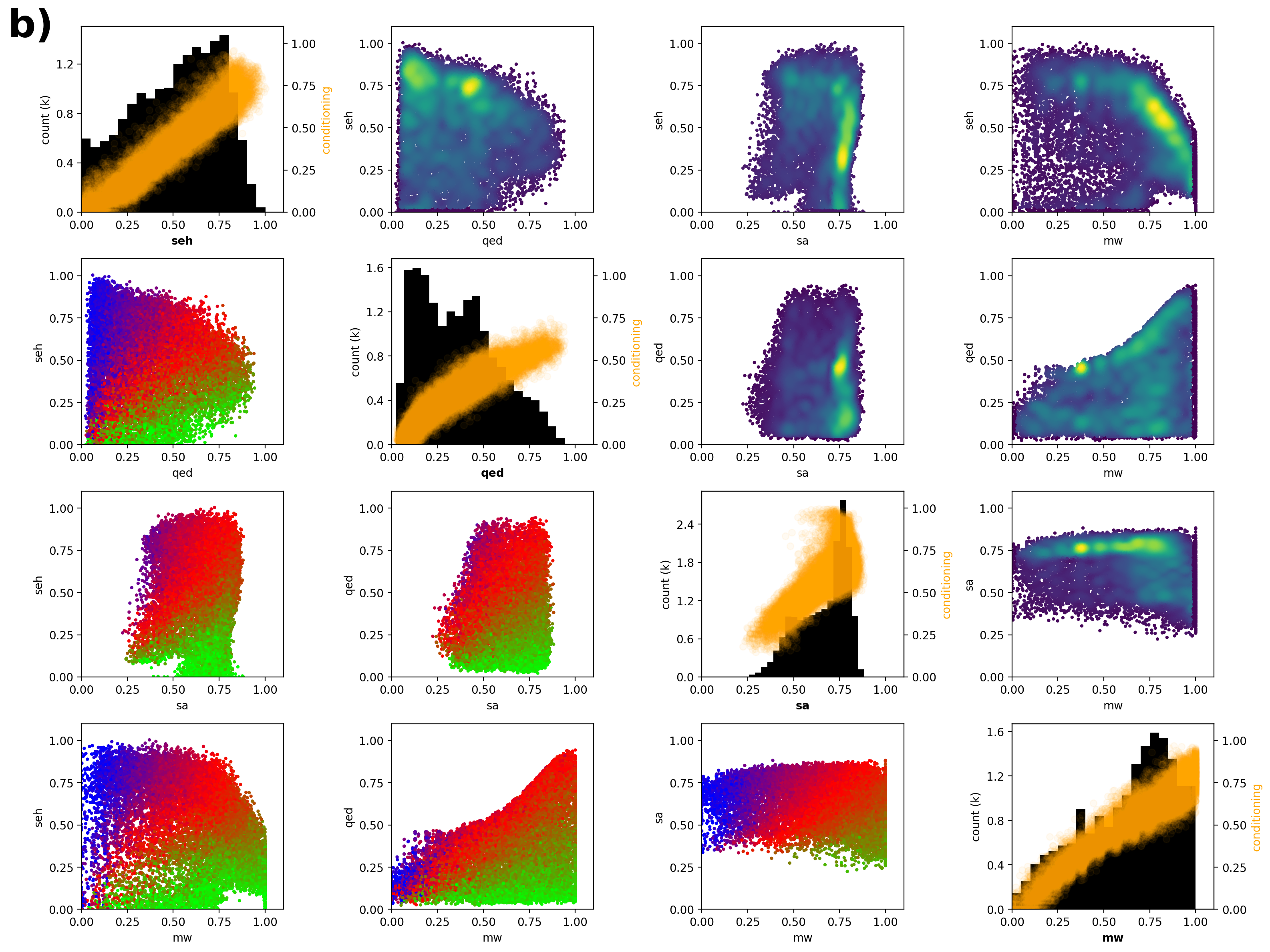}
    \end{minipage}
    \centering
    \caption[Comparison between conditioning methods for 4 objectives]{Idem to Figure~\ref{fig:all_plots_2_objectives} but with 4 objectives: seh, qed, sa, mw.}
    \label{fig:all_plots_4_objectives}
\end{figure}

}  % RESTORE ME
{}  % RESTORE ME
\end{document}